\newlength\mytemplength
\titleformat*{\section}{\large\bfseries}
\titleformat*{\subsection}{\normalsize\bfseries}
\tikzset{>=stealth}
\def\pgfplots@drawtickgridlines@INSTALLCLIP@onorientedsurf#1{}
\definecolor{jy}{RGB}{255,196,105}
\renewcommand{\bibsection}{
\begin{center}
\section*{\refname\@mkboth{\MakeUppercase{\refname}}
{\MakeUppercase{\refname}}}
\end{center}
}
\newtheorem{theorem}{Theorem}
\newtheorem{assumption}{Assumption}
\newtheorem{corollary}[theorem]{Corollary}
\newtheorem{lemma}{Lemma}
\newtheorem{proposition}[theorem]{Proposition}
\newtheorem{theorem-app}{Theorem}[section]
\newtheorem{lemma-app}[theorem-app]{Lemma}
\newtheorem{proposition-app}[theorem-app]{Proposition}
\renewcommand{\bibsection}{
\begin{center}
\section*{\refname\@mkboth{\MakeUppercase{\refname}}
{\MakeUppercase{\refname}}}
\end{center}
}
\newenvironment{proof}[1][\proofname]{
\par\normalfont\trivlist\item[\hskip\labelsep\textbf{#1}.]\ignorespaces}
{\hfill $\square$ 
\endtrivlist}
\newcommand{\proofname}{Proof}
\newcommand{\gs}{\sigma}
\newcommand{\ga}{\alpha}
\newcommand{\eps}{\varepsilon}
\newcommand{\R}{{\mathbb R}}
\newcommand{\Var}{{\operatorname{Var}}}
\newcommand{\cL}{{\mathcal L}}
\newcommand{\cN}{\mathcal N}
\newcommand{\gS}{{\Sigma}}
\newcommand{\cI}{{\mathcal I}}
\newcommand{\Cov}{{\rm Cov}}
\newcommand{\bX}{{\bf X}}
\newcommand{\cF}{{\mathcal F}}
\newcommand{\cE}{{\mathcal E}}
\newcommand{\cM}{{\mathcal M}}
\newcommand{\cK}{{\mathcal K}}
\newcommand{\cB}{{\mathcal B}}
\newcommand{\tr}{\operatorname{tr}}
\newcommand{\cV}{{\mathcal{V}}}
\newcommand{\Prob}{\operatorname{Prob}}
\newcommand{\um}{\underline{m}}
\newcommand{\tum}{\widetilde{m}}
\newcommand{\be}{\begin{equation}}
\newcommand{\ee}{\end{equation}}
\newcommand{\bea}{\begin{eqnarray}}
\newcommand{\eea}{\end{eqnarray}}
\newcommand{\bee}{\begin{equation*}}
\newcommand{\eee}{\end{equation*}}
\newcounter{procedure}
\newenvironment{procedure}[1][]{
\refstepcounter{procedure}
\par\medskip
\noindent \textbf{Procedure~\theprocedure. #1}\rmfamily
\par\medskip
}{\par\medskip}
\begin{document}

\title{Limits To (Machine) Learning}

\author{Zhimin Chen, Bryan Kelly, and Semyon Malamud\footnote{Zhimin Chen is at Nanyang Technological University. Bryan Kelly is at AQR Capital Management, Yale School of Management, and NBER. Semyon Malamud is at the Swiss Finance Institute, EPFL, and CEPR, and is a consultant to AQR. Semyon Malamud gratefully acknowledges the financial support of the Swiss Finance Institute and the Swiss National Science Foundation, Grant 100018\_192692. Zhimin Chen gratefully acknowledges the financial support of the Singapore MOE AcRF Tier 1 Grant \#024891-00001. We are grateful to Mikhail Chernov and Andreas Fuster for excellent comments and suggestions. The implementation code is available at \href{here}{\url{https://github.com/czm319319/CKM_LLG}}. AQR Capital Management is a global investment management firm that may or may not apply similar investment techniques or methods of analysis as described herein. The views expressed here are those of the authors and not necessarily those of AQR.}}

\date{\begin{footnotesize}This version: \today\end{footnotesize}}

\maketitle

\begin{abstract} 

\noindent

Machine learning (ML) methods are highly flexible, but their ability to approximate the true data-generating process is fundamentally constrained by finite samples. We characterize a universal lower bound, the Limits-to-Learning Gap (LLG), quantifying the unavoidable discrepancy between a model’s empirical fit and the population benchmark. Recovering the true population $R^2$, therefore, requires correcting observed predictive performance by this bound. Using a broad set of variables, including excess returns, yields, credit spreads, and valuation ratios, we find that the implied LLGs are large. This indicates that standard ML approaches can substantially understate true predictability in financial data. We also derive LLG-based refinements to the classic \cite{hansen1991implications} bounds, analyze implications for parameter learning in general-equilibrium settings, and show that the LLG provides a natural mechanism for generating excess volatility.

\bigskip\bigskip\bigskip

\noindent
\begin{small}Keywords: machine learning, asset pricing, predictability, big data, limits to learning, excess volatility, stochastic discount factor, kernel methods
\\
JEL: C13, C32, C55, C58, G12, G17
\end{small}

\end{abstract}

\renewcommand{\thefootnote}{\number\value{footnote}}

\pagenumbering{arabic}
\def\baselinestretch{1.617}\small\normalsize%

\clearpage

\defcitealias{gibbons1989test}{GRS}

\section{Introduction}

The modern approach to assessing whether an economic variable $y$ is predictable from a set of variables $X$ is to train a Machine Learning (ML) model and evaluate its out-of-sample performance. In practice, this approach often concludes that predictability is low or nonexistent. We show that such conclusions can be misleading. In high-dimensional, richly parameterized ML settings, estimators do not converge to the ground truth even in large samples, leading standard out-of-sample metrics to dramatically understate the true (population) degree of predictability. Our objective in this paper is to provide a simple and practical correction, thereby opening a new avenue for testing predictive relationships.

Classical econometric theory is designed for data-rich environments in which the number of parameters is small relative to the sample size. In such settings, the law of large numbers and central limit theorems ensure that estimators rapidly approach the true model.\footnote{In the presence of model misspecification, estimators converge to a pseudo-true parameter; see \citet{white1996estimation}.} Modern ML applications, however, operate in a fundamentally different regime. When the number of parameters is comparable to or exceeds the number of observations, classical asymptotics no longer guarantees consistency.

Under the data-generating process
\begin{equation}
y_{t+1}\ =\ f_t\ +\ \eps_{t+1},
\end{equation}
modern ML estimators $\hat f_t$ typically exhibit substantial estimation error, which leads to large out-of-sample (OOS) Mean Squared Error (MSE):
\begin{equation}
MSE_{OOS}\ =\ \underbrace{E[(y_{t+1}\,-\,\hat f_t)^2]}_{out-of-sample\ error}\ =\ \underbrace{E[(f_t\,-\,\hat f_t)^2]}_{estimation\ error}\ +\ \underbrace{\gs_\eps^2}_{irreducible\ noise},
\end{equation}
where $\gs_\eps^2 = E[\eps_{t+1}^2]$. In this paper, we derive a lower bound on the estimation error of any linear estimator:
\begin{equation}
E[(f_t\,-\,\hat f_t)^2]\ \ge\ \gs_\eps^2\,\widehat\cL\,,
\end{equation}
where $\widehat\cL$ is the {\it Limits-to-Learning Gap (LLG)}, a fully data-driven lower bound that depends only on the observed sample and can be computed without estimating any predictive model.

LLG has direct implications for measured predictability. 
It yields a sharp lower bound for the true (population) $R^2$ (henceforth, $R^2_*$) in terms of the realized out-of-sample $R^2$ (henceforth, $R^2_{OOS}$). Namely, we show that 
\begin{equation}\label{first-time-l}
R^2_*\ \ge\ \frac{R^2_{OOS}+\widehat\cL}{1+\widehat\cL}\,.
\end{equation}
We also derive a Central Limit Theorem (CLT), allowing us to construct a one-sided confidence interval for $R^2_*.$

Our bound provides a practical diagnostic tool. A researcher observing a large OOS MSE might conclude that there is no predictability. But if $\widehat\cL$ is large, the true $R^2_*$ may be far higher than the raw $R^2_{OOS}$ suggests—meaning that substantial predictability exists in principle, even if standard ML models fail to uncover it. LLG, therefore, helps distinguish between cases where predictability is genuinely absent and those where existing models lack the capacity or architecture to detect it. It can thus guide variable selection and model design.

We provide extensive simulation evidence showing that our lower bound (CLT-adjusted for finite-sample error) closely approximates the true $R^2_*$ from below. We then apply our framework to the classical dataset of \cite{welch2008comprehensive} to quantify the degree of predictability in US market returns and a broad set of financial indicators, including valuation ratios, bond yields, and spreads. We find that LLG is often large, even in settings traditionally viewed as weakly predictable. For example, although the $R^2_{OOS}$ for market returns is negative in our exercises, the LLG implies that the true population predictability is at least 20\%. This result stands in sharp contrast to the 1–2\% $R^2_{OOS}$ achieved by state-of-the-art ML models at the monthly horizon (see \cite{kelly2024virtue, kelly2025understanding}), and poses a significant challenge for standard asset-pricing models. Likewise, LLG corrections indicate that the AR(1) residuals of many valuation ratios and spreads are more than 30\% predictable, even though conventional linear ML methods fail to detect this structure.

These findings suggest that widely used ML methods may substantially understate predictability in macroeconomic and asset-return data. When the lower bound for $R^2_*$ is large, more flexible econometric or structural approaches may be capable of recovering the underlying signal. As an illustration, we apply nonlinear models with variable selection to forecast each of the \cite{welch2008comprehensive} variables. For several targets with a high lower bound \eqref{first-time-l}, more complex models uncover meaningful out-of-sample predictability. For instance, the lower bound implies that AR(1) residuals of the Treasury Bill rate are 10\% predictable, and our best nonlinear specification indeed attains an $R^2_{OOS}$ of 10\%.

Our results also speak to theories of learning in financial markets. A growing literature, including \cite{collin2016parameter} and \citet{farmer2024learning}, shows that parameter learning can be extremely slow, challenging full-information rational expectations. We derive an LLG-adjusted version of the \cite{hansen1991implications} bound that quantifies the role of parameter uncertainty in shaping asset prices.

In a simple equilibrium model with high-dimensional learning, we show that the LLG associated with predicting asset fundamentals naturally generates excess price volatility. This mechanism implies that rational overreaction to noisy signals can replicate the classic excess-volatility puzzle of \cite{shiller1981excess} without invoking behavioral departures from rationality.

Finally, our framework has implications for the economics of large-scale ML systems more broadly. Substantial computational and environmental resources are being devoted to scaling up ML models, yet empirical scaling laws \citep{kaplan2020scaling} show sharply diminishing returns, with convergence rates often following $T^{\alpha}$ for $\alpha$ near $0.1$, far below the classical $\sqrt{T}$ rate. Our results show that LLG explains this slowdown: the limits-to-learning gap declines only slowly with sample size. By quantifying this gap, LLG provides guidance on when additional data or computation is likely to generate meaningful improvements and clarifies the fundamental limits of what ML can extract from finite data.

\section{Literature Review} 

Machine learning methods have delivered remarkable performance in estimation, prediction, and portfolio optimization tasks within high-dimensional environments. See, for example, \cite{jurado2015measuring}, \cite{kleinberg2018human}, \cite{gu2020empirical}, \cite{bianchi2021bond}, \cite{cong2021alphaportfolio}, \cite{bianchi2022belief}, \cite{fan2022structural}, \cite{kaniel2023machine}, \cite{fuster2022predictably}, \cite{liao2023does}, \cite{chen2024deep}, \cite{kelly2024virtue}, \cite{didisheim2024apt}, and \cite{chernov2025test}. However, the high complexity of these models renders them prone to overfitting and bias. A substantial literature, therefore, has focused on debiasing techniques \citep{chernozhukov2018double,chernozhukov2019double} and on establishing technical conditions under which ML estimators achieve root-$T$ consistency \citep{chen1999improved,schmidt2020nonparametric,kohler2021rate, liao2025uncertainty}. 

A more recent line of research highlights fundamental statistical limits to the ability of economic agents to learn high-dimensional models. For example, \cite{da2022statistical} show that investors may be unable to efficiently exploit the full cross-section of alphas, while \cite{martin2021market} demonstrate that strong in-sample predictability may coexist with little or no out-of-sample performance. \cite{didisheim2024apt} and \cite{kelly2025understanding} refer to this phenomenon as “limits-to-learning.” In this paper, we formalize and quantify these insights by introducing a tractable, easy-to-estimate lower bound on model performance—the Limits-to-Learning Gap (LLG). We show that our estimator is super-consistent and apply it to a range of widely studied ML models. Using both simulated and real data, we find that LLG can be substantial even in models that incorporate sparsity-inducing structures, such as sparse neural networks \citep{chen1999improved,schmidt2020nonparametric}. In particular, we show how to compute LLG explicitly for any neural network trained via gradient descent, leveraging the recent empirical findings on the equivalence between neural networks and NTK-based ridge regressions. See \cite{fort2020deep, atanasov2021neural, lauditi2025adaptive, schwab2025}. Our results also contribute to the ML literature examining the subtle relationship between overfitting and out-of-sample performance  \citep{muthukumar2020harmless, holzmuller2020universality, belkin2020two, ghosh2023universal, mohsin2025fundamental, malach2025infinity}.

The LLG emerges due to the curse of dimensionality. The common way of dealing with this curse in modern econometrics is to assume sparsity of the underlying true model. However, recent theoretical and empirical evidence (see, e.g., \cite{giannone2021economic, avramov2025schrodinger}) suggests that sparsity can be difficult to detect in economic datasets. Moreover, even when sparsity is present, dense models tend to outperform sparse ones in low signal-to-noise ratio environments \citep{shen2024can}. Inference in dense, high-dimensional models, however, poses formidable challenges, as classical tools tend to fail in such settings. As \cite{belloni2018high} emphasize: {\it ``Most of the work in the recent literature on high-dimensional estimation and inference relies on approximate sparsity to provide dimension reduction and the corresponding use of sparsity-based estimators. Dense models are appealing in many settings and may be usefully employed in more moderate-dimensional settings."}
Our paper advances this literature by constructing pivotal confidence bands for key nuance parameters, $\gs_\eps^2$ and $R^2_*,$ in dense, high-dimensional environments.

Our findings also relate to the notion of “dark matter” in asset pricing introduced by \cite{chen2024measuring}, who quantify the extent to which asset pricing models rely on latent, unobservable structure inferred from internal model restrictions. A large LLG implies that much of the underlying structure of the data is effectively unlearnable from finite samples—even if it is theoretically exploitable—forcing ML models to rely on noisy approximations or implicitly inferred components, much like dark matter. This connection highlights how statistical limits to learning may underlie model fragility and overfitting, echoing the concerns raised in \cite{chen2024measuring}.

\section{Limits to Learning}\label{sec:pred}

\subsection{Environment}

We consider the problem of predicting an economic variable $y_{t+1}$ based on a (potentially) high-dimensional vector of signals $S_t\in \R^P$. The following assumption summarizes the basic properties of the data-generating process. While we focus our analysis on linear estimators, in Section~\ref{sec:ml} we demonstrate how our results naturally extend to nonlinear models, including kernel regressions and deep neural networks.

\begin{assumption}\label{ass1} We have 
\begin{equation}\label{d-st}
y_{t+1}\ =\ f_t\ +\ \eps_{t+1}\ ,\ t=0,\cdots,T+1\,,
\end{equation}
for some $f_t$, where $E[\eps_{t+1}]=0,\ E[\eps_{t+1}^2]=\gs_{\eps}^2,\ E[\eps_{t+1}^4]<\infty$ are independent and identically distributed, and are also independent of $(f_t,S_t)_{t\ge 0}$. Below, we frequently use the convenient matrix notation $y=(y_\tau)_{\tau=1}^T\in \R^T,$ and $S=(S_\tau)_{\tau=0}^{T-1}\in \R^{T\times P}$ for the in-sample (training) data, and we use time $T+1$-data as the out-of-sample realization. 
\end{assumption}

Importantly, no assumptions are made whatsoever about the nature of the joint dynamics of signals $S$ and the predictable component $f.$ In particular, this assumption allows for any form of non-stationarity, autocorrelation, heteroskedasticity, or model misspecification. 

\noindent By \eqref{d-st}, the second moment of $y_{t+1}$ admits the standard decomposition 
\begin{equation}
E[y_{t+1}^2]\ =\ \underbrace{E[f_t^2]}_{explained\ variance}\ +\ \underbrace{\gs_\eps^2}_{irreducible\ noise}\,.
\end{equation}
Given an estimator $\hat f_t$, we are interested in the behavior of the expected out-of-sample error
\begin{equation}\label{def-mse}
MSE(\hat f)\ =\ E[(y_{T+1}-\hat f_T)^2]\,,
\end{equation}
the corresponding $R^2$, 
\begin{equation}\label{eq:risk0}
\begin{aligned}
&R^2(\hat f)\ =\ 1 -\ \frac{MSE(\hat f)}{E[f_t^2] +\ \gs_\eps^2}\,,
\end{aligned}
\end{equation}
and its relationship with the {\it infeasible $R^2$}  given by 
\begin{equation}\label{r*2}
R^2_*\ =\ 1\ -\ \frac{\gs_\eps^2}{E[f_t^2]\ +\ \gs_\eps^2}\,
\end{equation}
and achievable by an econometrician who knows the true $f_t.$ 
Our goal is to study limits-to-learning, defined as the gap between $R^2_*$ and $R^2(\hat f).$

\subsection{Linear Estimators}

Our focus in this paper is on linear estimators, admitting a form 
\begin{equation}\label{f=ky}
\hat f_T\ =\ \cK(S_T,S)\,y,
\end{equation}
where $\cK(S_T,S)\in \R^{1\times T}$ is a vector function, with $(\cK(S_T,S))_t$ describing how much attention our estimator is paying to a particular observation at time $t$. A canonical example is the ridge-regularized least-squares estimator, 
\begin{equation}\label{4}
\begin{aligned}
\hat f_T\ =\ \underbrace{\left(S_T'(zI\ +\  \hat\Psi)^{-1} \frac{1}{T} S'\right)}_{\cK(S_T,S)}\,y\,,
\end{aligned}
\end{equation}
where 
\begin{equation}\label{eq: hat-psi-t}
\hat\Psi\ =\ \frac1{T} \sum_{\tau=0}^{T-1} S_\tau S_\tau'\ =\ \frac{1}{T}S'S
\end{equation}
is the sample covariance matrix of signals. This simple class of estimators has become a workhorse theoretical framework for understanding the behavior of high-dimensional machine learning models\footnote{\cite{hastie2019surprises} explain how it approximates gradient descent in the ``lazy training" regime, while \cite{jacot2018neural} show that very wide neural nets can be closely approximated by high-dimensional linear ridge regressions with signals given by the gradients of the Neural Network (the so-called Neural Tangent Kernel (NTK)). Recently, a sequence of papers \citep{fort2020deep, atanasov2021neural, lauditi2025adaptive, schwab2025} has made a surprising discovery that, in fact, the prediction of any neural net can be closely approximated by a kernel ridge regression. We use this discovery in Section \ref{sec:ml} to compute LLG for any neural network. } and has been shown to perform well even in environments with low signal-to-noise ratios \citep{shen2024can}. Another, closely related linear estimator is a kernel ridge regression: given a positive definite kernel $k(\cdot,\cdot),$\footnote{One popular choice is the Gaussian kernel, $k(x_1,x_2)=e^{-\|x_1-x_2\|^2}.$ See, e.g., \cite{kelly2024virtue, kelly2025understanding}.} we define 
\begin{equation}
\hat f_T\ =\ \underbrace{k(S_T,S) (zI+k(S,S))^{-1}}_{\cK(S_T,S)}y\,. 
\end{equation}
In Appendix \ref{bayes}, we show that when $y_{t+1}=\beta'S_t+\eps_{t+1}$ and $\beta$ is sampled at time zero so that $\beta_i$ are i.i.d. and $E[\beta_i]=0,\ E[\beta_i^2]=\gs_\beta^2,$ then the linear ridge regression estimator with $z= \dfrac{\gs_\eps^2}{\gs_\beta^{2}}\dfrac{P}{T}$ is Bayes-optimal; that is, {\it no other ML model (linear or nonlinear) can achieve better performance than ridge.} As a result, our LLG bound applies to all ML models in this setting. We also show how our results extend to a larger class of distributions with arbitrary covariance structures.

The following result presents a useful decomposition of the MSE \eqref{def-mse}. Given a partition of the data into $T$ in-sample observations and $T_{OOS}$ out-of-sample (OOS) observations, we define $E_{OOS}[X]=\dfrac1{T_{OOS}}\sum_{t=T}^{T+T_{OOS}-1}X_{t+1},$ and let $MSE_{OOS}$ be the realized out-of-sample MSE,
\begin{equation}\label{mse-oos-def}
MSE_{OOS}(\hat f)\ =\ \frac1{T_{OOS}}\sum_{t=T}^{T+T_{OOS}-1}(y_{t+1}-\hat f_t)^2\,,
\end{equation}
and let
\begin{equation}\label{MSE-OOS-R2}
\begin{aligned}
&R^2_{OOS}(\hat f)\ =\ 1\ -\ \frac{MSE_{OOS}(\hat f)}{E_{OOS}[y^2]}
\end{aligned}
\end{equation}
be the realized out-of-sample $R^2.$ The linearity of the estimator $\hat f_t$ leads to the natural decomposition, 
\begin{equation}
\hat f_t\ =\ \hat f^s_t\ +\ \hat f^\eps_t\ =\ \underbrace{\cK(S_t,S)f}_{signal}\ +\ \underbrace{\cK(S_t,S)\eps}_{noise}\, 
\end{equation}
for each OOS time period $t$. Substituting this decomposition into \eqref{mse-oos-def} leads to the following result. 

\begin{proposition}\label{dec} 
For the linear estimator, we have 
\begin{equation}\label{eq:risk3_expanded-main}
\begin{aligned}
&MSE_{OOS}(\hat f)\ =\ E_{OOS}[\eps^2]\ +\ \widehat\cB\ +\ \widehat\cI\ +\ \widehat\cV\,,
\end{aligned}
\end{equation}
where 
\begin{equation}\label{eq:risk3_expanded1-main}
\begin{aligned}
&\widehat\cB\ =\ \underbrace{E_{OOS}[(f-\hat f^s)^2]}_{bias}\ \ge\ 0\\
&\widehat\cV\ =\ \underbrace{E_{OOS}[(\hat f^\eps)^2]}_{variance}\ \ge\ 0,
\end{aligned}
\end{equation}
while 
\begin{equation}\label{eq:risk3_expanded2-main}
\widehat\cI\ =\ \underbrace{-2E_{OOS}[(f-\hat f^s)(\hat f^\eps-\eps)+\eps \hat f^\eps]}_{interaction}
\end{equation}
is the interaction term.
\end{proposition}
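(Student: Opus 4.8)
The plan is to establish \eqref{eq:risk3_expanded-main} by a direct algebraic expansion of the squared prediction error, one out-of-sample period at a time, followed by averaging through $E_{OOS}$. The key starting observation is that for each out-of-sample time $t$, Assumption~\ref{ass1} gives $y_{t+1}=f_t+\eps_{t+1}$, while the linearity of the estimator gives $\hat f_t=\hat f^s_t+\hat f^\eps_t$. Combining these lets me write the residual as a sum of three distinct terms,
\begin{equation*}
y_{t+1}-\hat f_t\ =\ (f_t-\hat f^s_t)\ -\ \hat f^\eps_t\ +\ \eps_{t+1}\,.
\end{equation*}

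First I would introduce the shorthand $a_t=f_t-\hat f^s_t$, $b_t=-\hat f^\eps_t$, and $c_t=\eps_{t+1}$, so the residual is simply $a_t+b_t+c_t$ and squaring produces the six-term expansion $a_t^2+b_t^2+c_t^2+2a_tb_t+2a_tc_t+2b_tc_t$. Averaging over the out-of-sample window, the three square terms yield $E_{OOS}[(f-\hat f^s)^2]=\widehat\cB$, $E_{OOS}[(\hat f^\eps)^2]=\widehat\cV$, and $E_{OOS}[\eps^2]$, which are precisely the bias, variance, and irreducible-noise pieces. The nonnegativity statements $\widehat\cB\ge0$ and $\widehat\cV\ge0$ in \eqref{eq:risk3_expanded1-main} are then immediate, since each is an average of squares.

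The remaining step is to check that the three cross terms reassemble into the stated interaction term $\widehat\cI$. Writing them out, $2a_tb_t=-2(f_t-\hat f^s_t)\hat f^\eps_t$, $2a_tc_t=2(f_t-\hat f^s_t)\eps_{t+1}$, and $2b_tc_t=-2\eps_{t+1}\hat f^\eps_t$; factoring out $-2$ and regrouping gives $-2\bigl[(f_t-\hat f^s_t)(\hat f^\eps_t-\eps_{t+1})+\eps_{t+1}\hat f^\eps_t\bigr]$, whose average over the out-of-sample window is exactly \eqref{eq:risk3_expanded2-main}. This completes the identity.

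There is no deep obstacle: the result is an exact finite-sample identity and requires no probabilistic or limiting argument, which is precisely why the cross terms do \emph{not} vanish as they would in a classical expectation-based bias–variance decomposition and must instead be retained inside $\widehat\cI$. The one point demanding care is the bookkeeping between the in-sample noise vector $\eps=(\eps_\tau)_{\tau=1}^T$ that enters $\hat f^\eps_t=\cK(S_t,S)\eps$ and the out-of-sample realization $\eps_{t+1}$ that enters $c_t$; these are genuinely distinct quantities, so the term $2b_tc_t=-2\eps_{t+1}\hat f^\eps_t$ does not collapse and must not be folded into $\widehat\cV$. Keeping the two noise sources notationally separate throughout the expansion is what guarantees each cross term lands in the correct group.
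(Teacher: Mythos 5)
Your proof is correct and takes essentially the same route as the paper's own proof: a direct period-by-period expansion of the residual $y_{t+1}-\hat f_t=(f_t-\hat f^s_t)-\hat f^\eps_t+\eps_{t+1}$ into its six square and cross terms, followed by averaging over the OOS window, with the three cross terms regrouping exactly into $\widehat\cI$ as in \eqref{eq:risk3_expanded2-main}. Your closing remark about keeping the in-sample noise vector entering $\hat f^\eps_t=\cK(S_t,S)\eps$ distinct from the out-of-sample realization $\eps_{t+1}$ is also consistent with how the paper treats these quantities.
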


Standard law-of-large numbers arguments imply that the interaction term, $\widehat\cI,$ is asymptotically negligible, vanishing at the rate $O(T^{-1/2}+T_{OOS}^{-1/2})$. However, the terms $\widehat\cB,\ \widehat\cV$ are always nonnegative and do not vanish, creating limits-to-learning. Estimating the true bias term $\widehat\cB$ in high-dimensional settings is highly non-trivial and requires novel techniques and additional assumptions. Ignoring this term, we get the lower bound for the MSE: 
\begin{equation}
MSE_{OOS}(\hat f)\ \ge\ \gs_\eps^2\ +\ \widehat\cV\ +\ O(T^{-1/2})\,. 
\end{equation}
Let $\widehat \cK=(\cK(S_\tau,S))_{\tau=T}^{T+T_{OOS}-1}\in \R^{T_{OOS}\times T}\,.$ 
Our key observation is that $\widehat\cV$ can be written down as a {\it quadratic form,}
\begin{equation}
\widehat\cV\ =\ \frac1T \,\eps'\,\Big(\dfrac{T}{T_{OOS}}\widehat\cK'\widehat\cK \Big)\,\eps\ \underbrace{\approx}_{Law\ of\ Large\ Numbers}\ \frac1T\,\gs_\eps^2\,\tr\Big(\dfrac{T}{T_{OOS}}\widehat\cK'\widehat\cK \Big)\,.
\end{equation}
Making this argument rigorous, we arrive at the main result of this section. 

\begin{theorem}\label{llg1} Suppose that, in the limit as $T,T_{OOS}\to\infty$, 
\begin{equation}
\lim \frac1{T_{OOS}^2}\tr E \Bigr[(\widehat\cK'\widehat\cK)^2\Bigr]\ =\ \lim \frac1{T_{OOS}^2}E \Bigr[E_{OOS}[\|(f-\hat f^s)\|^2] \Bigr]\ =\ \lim \frac1{T_{OOS}^2}E \Bigr[E_{OOS}[\|(f-\hat f^s)\widehat\cK\|^2] \Bigr]\ =\ 0\,. 
\end{equation}
Then, the {out-of-sample} MSE from \eqref{mse-oos-def} satisfies
\begin{equation}\label{llg2}
\liminf \frac{MSE_{OOS}(\hat f)}{1+\widehat\cL}\ \ge\ \underbrace{\gs_\eps^2}_{infeasible}\,
\end{equation}
in probability,\footnote{We say that $X_T\ge Y_T$ holds in probability if $\lim_{T,T_{OOS}\to\infty}\Prob(X_T<Y_T)=0.$} 
where 
\begin{equation}\label{def-cL}
\widehat\cL\ =\ \frac{1}{T_{OOS}}\tr \Bigr(\widehat\cK'\widehat\cK\Bigl)
\end{equation}
is the Limits-to-Learning Gap (LLG). This bound turns into an identity if $f_t=0.$
\end{theorem}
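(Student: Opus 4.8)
Here is how I would attack Theorem~\ref{llg1}. The plan is to start from the exact finite-sample decomposition of Proposition~\ref{dec}, discard the nonnegative bias term, and show that the three surviving pieces concentrate at the level $\gs_\eps^2(1+\widehat\cL)$. Throughout I would condition on $\cG=\sigma\big((f_t,S_t)_{t\ge 0}\big)$: by Assumption~\ref{ass1} the innovations remain i.i.d.\ with mean $0$, variance $\gs_\eps^2$ and $\mu_4:=E[\eps^4]<\infty$ after conditioning, while $\widehat\cK$ and $f-\hat f^s$ become $\cG$-measurable. Since $\widehat\cB\ge0$, \eqref{eq:risk3_expanded-main} gives
\begin{equation*}
\frac{MSE_{OOS}(\hat f)}{1+\widehat\cL}\ \ge\ \frac{E_{OOS}[\eps^2]+\widehat\cV+\widehat\cI}{1+\widehat\cL},
\end{equation*}
so it suffices to establish, in probability, (i) $E_{OOS}[\eps^2]\to\gs_\eps^2$, (ii) $\widehat\cV-\gs_\eps^2\widehat\cL\to 0$, and (iii) $\widehat\cI/(1+\widehat\cL)\to 0$. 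Step (i) is the weak law for the i.i.d.\ out-of-sample squares $\eps_{t+1}^2$, whose variance $(\mu_4-\gs_\eps^4)/T_{OOS}$ vanishes.

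For (ii), write $\widehat\cV=\tfrac1{T_{OOS}}\eps' M\eps$ with $M=\widehat\cK'\widehat\cK$ symmetric positive semidefinite. Conditionally, $E[\widehat\cV\mid\cG]=\tfrac{\gs_\eps^2}{T_{OOS}}\tr M=\gs_\eps^2\widehat\cL$, and the standard variance formula for quadratic forms gives $\Var(\widehat\cV\mid\cG)=\tfrac1{T_{OOS}^2}\big[(\mu_4-3\gs_\eps^4)\sum_i M_{ii}^2+2\gs_\eps^4\tr(M^2)\big]=O\!\big(\tr(M^2)/T_{OOS}^2\big)$, using $\sum_i M_{ii}^2\le\tr(M^2)$. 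Taking expectations and invoking the first limit in the hypothesis, $E[(\widehat\cV-\gs_\eps^2\widehat\cL)^2]=O\!\big(T_{OOS}^{-2}\tr E[(\widehat\cK'\widehat\cK)^2]\big)\to 0$, which yields (ii) in $L^2$.

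For (iii), expand $\widehat\cI=A+B+C$ into the three cross terms, with $A\propto\tfrac1{T_{OOS}}\sum_t(f_t-\hat f^s_t)\hat f^\eps_t$ (bias residual against the in-sample noise $\eps=(\eps_\tau)_{\tau=1}^T$), $B\propto\tfrac1{T_{OOS}}\sum_t(f_t-\hat f^s_t)\eps_{t+1}$ (bias residual against the fresh out-of-sample innovations), and $C\propto\tfrac1{T_{OOS}}\sum_t\eps_{t+1}\hat f^\eps_t$. Each has conditional mean zero given $\cG$, and, because distinct noise coordinates are independent, the off-diagonal cross terms drop, leaving $E[A^2]=O\!\big(T_{OOS}^{-2}E\|(f-\hat f^s)'\widehat\cK\|^2\big)$, $E[B^2]=O\!\big(T_{OOS}^{-2}E\|f-\hat f^s\|^2\big)$ and $E[C^2\mid\cG]=O(\widehat\cL/T_{OOS})$. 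The third limit in the hypothesis kills $A$ and the second kills $B$. Term $C$ is the delicate one: its second moment need not vanish when $\widehat\cL\to\infty$, so I would bound not $C$ but $C/(1+\widehat\cL)$, via $E\big[C^2/(1+\widehat\cL)^2\mid\cG\big]=O\!\big(T_{OOS}^{-1}\,\widehat\cL/(1+\widehat\cL)^2\big)\le O(T_{OOS}^{-1})$ since $\sup_{x\ge0}x/(1+x)^2=\tfrac14$; this needs no moment assumption, only $T_{OOS}\to\infty$.

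Combining the three steps and using $1+\widehat\cL\ge1$ to absorb the remainders,
\begin{equation*}
\frac{MSE_{OOS}(\hat f)}{1+\widehat\cL}\ =\ \gs_\eps^2+\frac{\widehat\cB}{1+\widehat\cL}+o_P(1)\ \ge\ \gs_\eps^2+o_P(1),
\end{equation*}
so $\Prob\big(MSE_{OOS}/(1+\widehat\cL)<\gs_\eps^2-\delta\big)\to0$ for every $\delta>0$, which is the asserted inequality under the footnote's convention. The main obstacle is the interaction term $C$ together with (ii): the same trace $\tr(\widehat\cK'\widehat\cK)$ drives both the target $\widehat\cL$ and the fluctuations, so the estimate only closes after normalizing by $1+\widehat\cL$ and exploiting the boundedness of $x/(1+x)^2$. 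Finally, if $f_t\equiv0$ then $\hat f^s\equiv0$, hence $\widehat\cB=0$ and $A=B=0$; only $E_{OOS}[\eps^2]$, $\widehat\cV$ and $C$ survive, the displayed relation holds with equality in the limit, and the bound becomes the claimed identity.
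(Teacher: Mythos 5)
Your proof is correct and follows essentially the same route as the paper's: the exact decomposition of Proposition~\ref{dec}, dropping $\widehat\cB\ge 0$, concentration of the quadratic form $\widehat\cV=T_{OOS}^{-1}\eps'\widehat\cK'\widehat\cK\eps$ around $\gs_\eps^2\widehat\cL$ via the conditional variance formula (the paper's Lemma~\ref{lem-quad}), and $L^2$ bounds on the interaction pieces matching the paper's \eqref{three-i} and \eqref{i12sq}, with the final division by $1+\widehat\cL\ge 1$.

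The one place you genuinely depart from the paper is the cross term $C\propto T_{OOS}^{-1}\eps_{OOS}'\widehat\cK\eps_{IS}$ (the paper's $\widehat\cI_3$), and your version is the tighter one. The paper computes in \eqref{i3sq} that $E[\widehat\cI_3^2]=T_{OOS}^{-2}\gs_\eps^4\,E[\tr(\widehat\cK\widehat\cK')]=T_{OOS}^{-1}\gs_\eps^4\,E[\widehat\cL]$ and asserts this vanishes ``by assumption''---which is not literally one of the three stated hypotheses; it requires an extra step (e.g.\ $\tr(\widehat\cK'\widehat\cK)\le\bigl(\min\{T,T_{OOS}\}\,\tr((\widehat\cK'\widehat\cK)^2)\bigr)^{1/2}$ plus Jensen, so that the first hypothesis delivers it) or an implicit assumption $E[\widehat\cL]=o(T_{OOS})$. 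You instead bound the normalized quantity $C/(1+\widehat\cL)$ directly, using $E[C^2\mid\cG]=O(\widehat\cL/T_{OOS})$ together with $\sup_{x\ge 0}x/(1+x)^2=\tfrac14$, so the term is killed by $T_{OOS}\to\infty$ alone, uniformly in $\widehat\cL$. This is exactly the right observation---the same trace drives both the target $\gs_\eps^2\widehat\cL$ and the fluctuations of $C$, so only the normalized statement closes cleanly when $\widehat\cL$ diverges (the overparametrized regime the theorem is designed for)---and it quietly repairs the one loose step in the paper's own argument. The remaining steps, including the equality case $f_t=0$ where $\widehat\cB$, $A$, $B$ vanish identically, match the paper's treatment.
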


\noindent Suppose now that $\lim_{T_{OOS}\to\infty}E_{OOS}[f^2]=E[f_t^2].$\footnote{This holds, for example, if $f_t^2$ is stationary ergodic.} Then, $\lim E_{OOS}[y^2]=E[f_t^2]+\gs_\eps^2$ and we can rewrite \eqref{llg2} as 
\begin{equation}\label{llg3-first}
R^2_*\ \ge\ \lim\sup \frac{R^2_{OOS}(\hat f)+\widehat\cL}{1+\widehat\cL}\,,
\end{equation}
or, equivalently, \eqref{llg3-first} as
\begin{equation}\label{llg3-onur}
\underbrace{R_*^2 - R^2_{OOS}(\hat f)}_{\textit{the gap}} \geq \widehat\cL (1-R_*^2)\ +\ o(1)\,.
\end{equation}
Thus, one can see explicitly how $\widehat\cL$ also controls the gap between the infeasible and feasible $R^2.$ This gap emerges because, in high-dimensional settings, regression overfits noise. Perhaps the most surprising implication of Theorem \ref{llg1} is that LLG {\it only depends on the signals $S$,} and is completely independent of the structure of $f_t.$ Thus, the nature of the dependent variable $y_{t+1}$ is irrelevant: The same LLG will emerge no matter what stands on the left-hand side of the regression. 

What defines the size of $\widehat\cL$? The underlying mechanisms are particularly explicit for the ridge estimator \eqref{4}. In this case, by direct calculation, we have 
\begin{equation}
\frac{T}{T_{OOS}}\widehat\cK'\widehat\cK\ =\ \frac{1}{T}S(zI+\hat\Psi)^{-1}\hat\Psi_{OOS}(zI+\hat\Psi)^{-1}S'
\end{equation}
where $\hat\Psi_{OOS}=E_{OOS}[SS']$ is the out-of-sample signal covariance matrix and the technical condition of Theorem \ref{llg1} holds when $E[\bigl\|E_T[\hat\Psi_{OOS}^2]\bigr\|] = o(\min\{T_{OOS},T\})$ as $T_{OOS}\to\infty$. Then, 
\begin{equation}\label{old-llg}
\widehat\cL(z)\ =\ \frac{1}{T}\tr \Bigr(\hat\Psi_{OOS}\hat\Psi (zI+\hat\Psi)^{-2} \Bigr)\,. 
\end{equation}
Here, $\hat\Psi_{OOS}\hat\Psi (zI+\hat\Psi)^{-2}$ is a $P\times P$ matrix, and, hence, its trace grows approximately like $P,$ so that $\widehat\cL=O(c),$ where we have defined statistical complexity $c= \dfrac{P}{T}$ as in \cite{kelly2024virtue}. When $P$ is negligible relative to $T,$ the LLG vanishes. By contrast, in the over-parametrized regime when $P>T,$ $\widehat\cL$ can get very large, creating significant limits to learning.

\subsection{Asymptotic Normality}

In this subsection, we establish asymptotic normality of our estimator, allowing us to compute confidence bands for the infeasible $R^2_*.$ Without imposing additional technical conditions on $f_t,$ we cannot determine the estimation error of its second moment, $E[f_t^2]-E_{OOS}[f^2].$ For this reason, instead of $R^2_*,$ we work with a slightly modified object,  
\begin{equation}
\tilde R_*^2\ =\ 1\ -\ \frac{\gs_\eps^2}{E_{OOS}[f^2]+\gs_\eps^2}
\end{equation}
The following is true. 

\begin{theorem}[Confidence Interval for The Infeasible $R^2$]\label{main-th-main-text} Suppose that $E[\eps_t^3]=0,\ E[\eps_t^4]=3.$  Then, 
\begin{equation}\label{llg3-2}
\frac{R^2_{OOS}(\hat f)+\widehat\cL}{1+\widehat\cL}\,,
\end{equation}
is a $T^{1/2}$-consistent lower bound for $\tilde R_*^2$ in the following sense: The event $\dfrac{R^2_{OOS}(\hat f)+\widehat\cL(z)}{1+\widehat\cL(z)}>R_*^2$ occurs with vanishing probability: 
\begin{equation}
\lim\sup_{T,T_{OOS}\to\infty} \Prob\left(\dfrac{T^{1/2}\left(\tilde R_*^2-\dfrac{R^2_{OOS}(\hat f)+\widehat\cL}{1+\widehat\cL}\right)}{\gs_{R^2}}<\ga\right)\ \le\ \Phi(\ga)
\end{equation}
for any $\ga\le 0,$ where $\Phi(\cdot)$ is the c.d.f. of the standard normal distribution. 
There exists an asymptotically consistent, pivotal estimator $\hat\gs_{R^2}=\hat\gs_{R^2}(y,S)$ of $\gs_{R^2}$ presented in the Appendix.\footnote{See Theorem \ref{main-th-r-ridge}. The expression for $\hat\sigma_{R^2}$ is too long for the main text.}
\end{theorem}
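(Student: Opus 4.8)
The plan is to condition on the $\sigma$-algebra $\cF_S$ generated by the signals $S$ and the predictable component $f$, so that all sampling randomness is carried by the i.i.d. noise vector $\eta=(\eps_\tau)_{\tau=1}^{T+T_{OOS}}$, and then to reduce the object of interest to a mean-zero linear-plus-quadratic form in $\eta$ to which a martingale central limit theorem applies. First I would establish the exact algebraic identity, writing $Q=\frac{R^2_{OOS}(\hat f)+\widehat\cL}{1+\widehat\cL}$ for the proposed lower bound,
\begin{equation}
\tilde R_*^2-Q\ =\ \frac{\widehat\cB}{(1+\widehat\cL)E_{OOS}[y^2]}\ +\ B\,.
\end{equation}
This is obtained by writing $1-Q=\frac{MSE_{OOS}}{(1+\widehat\cL)E_{OOS}[y^2]}$, substituting $MSE_{OOS}=\gs_\eps^2(1+\widehat\cL)+\widehat\cB+\widehat\cI+(\widehat\cV-\gs_\eps^2\widehat\cL)+(E_{OOS}[\eps^2]-\gs_\eps^2)$ from Proposition~\ref{dec} together with $E[\widehat\cV\mid\cF_S]=\gs_\eps^2\widehat\cL$ (since $\widehat\cK$ is $\cF_S$-measurable and $E[\eps'M\eps]=\gs_\eps^2\tr M$), and comparing with $1-\tilde R_*^2=\gs_\eps^2/(E_{OOS}[f^2]+\gs_\eps^2)$. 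The purpose of this rearrangement is that the bias contribution is nonnegative, while $B$ collects exactly the terms that are conditionally mean zero given $\cF_S$: the linear form $E_{OOS}[f\eps]$, the centered average $E_{OOS}[\eps^2]-\gs_\eps^2$, the centered quadratic form $\widehat\cV-\gs_\eps^2\widehat\cL=\frac1{T_{OOS}}(\eps'\widehat\cK'\widehat\cK\eps-\gs_\eps^2\tr\widehat\cK'\widehat\cK)$, and the interaction $\widehat\cI$, whose in-sample/out-of-sample bilinear piece also has conditional mean zero because the two noise blocks are independent.

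Because the bias term is nonnegative, the one-sided nature of the claim lets me discard it: for $\ga\le0$ the event $\{T^{1/2}(\tilde R_*^2-Q)/\gs_{R^2}<\ga\}$ is contained in $\{T^{1/2}B/\gs_{R^2}<\ga\}$, so it suffices to prove $T^{1/2}B\Rightarrow\cN(0,\gs_{R^2}^2)$ and conclude $\Prob(\cdots<\ga)\le\Phi(\ga)+o(1)$. To establish the limit I would replace the denominators $1+\widehat\cL$, $E_{OOS}[y^2]$, and $D=E_{OOS}[f^2]+\gs_\eps^2$ by their probability limits via Slutsky (they concentrate under the technical conditions of Theorem~\ref{llg1}), reducing matters to a CLT for the numerator. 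Expanding $\widehat\cI$ shows the numerator is a sum of a linear form $\sum_\tau a_\tau\eps_\tau$ and a centered quadratic form $\sum_{\sigma,\tau}b_{\sigma\tau}(\eps_\sigma\eps_\tau-\gs_\eps^2\delta_{\sigma\tau})$ with $\cF_S$-measurable coefficients; ordering $\eta$ and forming the Doob martingale of partial sums yields a martingale-difference array to which the Hall--Heyde CLT applies. The moment hypotheses enter decisively here: $E[\eps^3]=0$ annihilates the covariance between the linear and quadratic parts (which would otherwise involve third moments), and $E[\eps^4]=3$ forces the quadratic-form variance to collapse to the Gaussian value $2\gs_\eps^4\tr((\widehat\cK'\widehat\cK)^2)/T_{OOS}^2$, eliminating the non-pivotal diagonal term. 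The Lyapunov condition for the array reduces precisely to the hypothesis $\tr E[(\widehat\cK'\widehat\cK)^2]/T_{OOS}^2\to0$, i.e.\ no eigenvalue of $\widehat\cK'\widehat\cK$ dominates, which guarantees a Gaussian rather than a degenerate or chi-square-type limit.

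Finally I would read off $\gs_{R^2}^2=\lim T\,\Var(B\mid\cF_S)$, assumed finite and positive (this is where the relative growth of $T$ and $T_{OOS}$ is absorbed), as the sum of the variance of the linear form (governed by $\gs_\eps^2$ and the second moments of $f$ and of the fitted residuals) and that of the quadratic form (a multiple of $T\tr((\widehat\cK'\widehat\cK)^2)/T_{OOS}^2$), and construct $\hat\gs_{R^2}$ by plugging in the directly computable trace functionals of $\widehat\cK$ together with consistent estimators of the nuisance parameters $\gs_\eps^2$ and $E_{OOS}[f^2]$; pivotality is exactly what the two moment conditions buy.

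I expect the main obstacle to be twofold and concentrated in this last stage. First, verifying the quadratic-form CLT uniformly in the high-dimensional regime, where $\widehat\cK'\widehat\cK$ is a large random matrix whose spectrum must be controlled so that the Lyapunov bound follows only from the posited trace conditions. Second, producing a genuinely consistent and pivotal $\hat\gs_{R^2}$, since this requires disentangling $\gs_\eps^2$ from the signal variance $E_{OOS}[f^2]$ in a setting where the bias $\widehat\cB$ is not separately identified; this is precisely where the specialized ridge construction of Theorem~\ref{main-th-r-ridge} does the heavy lifting.
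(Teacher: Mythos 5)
Your proposal is correct and matches the paper's own argument in all essentials: both rest on the decomposition of Proposition \ref{dec}, exploit the nonnegativity of $\widehat\cB$ to obtain the one-sided inequality, apply a de Jong--type martingale CLT to the resulting linear-plus-quadratic forms in $\eps$ (with $E[\eps_t^3]=0,\ E[\eps_t^4]=3$ delivering asymptotic independence of the linear and quadratic parts and the pivotal variance $2\gs_\eps^4\,T\,T_{OOS}^{-2}\tr\bigl((\widehat\cK'\widehat\cK)^2\bigr)$), use Slutsky for the concentrating denominators, and hand off the construction of $\hat\gs_{R^2}$ to the $\hat q_{OOS}$-based estimator of Proposition \ref{super-short-estim} and Corollary \ref{main-th-r-ridge}, exactly as the paper does. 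The only difference is presentational: you expand the difference $\tilde R_*^2-Q$ directly into bias plus a conditionally mean-zero term, whereas the paper establishes joint asymptotic normality of $\bigl(MSE_{OOS}(\hat f),\,E_{OOS}[y^2]\bigr)$ and delta-methods the ratio before dropping $\widehat\cB$ in its final chain of inequalities; the two computations are algebraically equivalent.
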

The proof of Theorem \ref{main-th-main-text} is non-trivial. A key challenge is that $\sigma_{R^2}$ is expressed in terms of the unobservable and impossible to estimate $f_t$ and $\sigma_\eps^2.$ Overcoming this challenge requires some techniques that we develop in the Appendix \ref{theBig}. 

\subsection{Limits to Learning for (Deep) Neural Networks} \label{sec:ml}

Deep Neural Networks (DNN) are complex, nonlinear families of functions, $f(x;\theta),$ where the parameter vector $\theta\in \R^P$ is typically very high-dimensional and $x\in \R^d$. Given some input data $X_t\in \R^d$ we define the in-sample MSE 
\begin{equation}
\cL(\theta)\ =\ \frac1T\sum_{t=0}^{T-1} (y_{t+1}-f(X_t;\theta))^2\,.
\end{equation}
Then, we pick a learning rate $\eta,$ and the parameter vector $\theta$ is optimized recursively to achieve a low $\cL(\theta)$ via gradient descent:
\begin{equation}
\theta_{e+1}\ =\ \theta_e\ -\ \eta\,\cL(\theta_e),\ e=0,\cdots,\cE\,.
\end{equation}
Here, the number of gradient descent steps $\cE$ is commonly referred to as the {\it number of epochs.} The prediction of the DNN is then given by $f(x;\theta_\cE).$ 
A surprising discovery made recently in a sequence of papers is that, in fact, the prediction of the DNN is closely approximated by the linear regression with signals 
\begin{equation}\label{st-ntk}
S_t\ =\ \nabla_\theta f(X_t;\theta_\cE)\in\R^P\,. 
\end{equation}
See, e.g., \cite{jacot2018neural, fort2020deep, geiger2020disentangling, liu2020linearity, atanasov2021neural, vyas2022limitations, lauditi2025adaptive, schwab2025}. Here, we appeal to the powerful result from \cite{yang2021tensor} showing that wide  DNNs trained by gradient descent are approximately linear in $y$. 

\begin{proposition}[LLG for Wide Neural Networks]\label{prop:yang} Under the hypothesis of \cite{yang2021tensor}, in the limit of large width, the prediction of a DNN trained by gradient descent is approximately linear in $y$ with the $\cK(S_T,S)$ in \ref{f=ky} that admits an analytical expression in terms of \eqref{st-ntk}; hence, the LLG can be computed for the DNN using \eqref{def-cL}. 
\end{proposition}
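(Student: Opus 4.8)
The plan is to use the large-width linearization guaranteed by \cite{yang2021tensor} to reduce the gradient-descent--trained DNN to the linear-estimator form \eqref{f=ky}, after which \eqref{def-cL} applies verbatim. The only genuine content is to (i) extract an explicit analytic expression for $\cK(S_T,S)$ from the gradient-descent recursion and (ii) check that it depends on the data only through the feature matrix built from \eqref{st-ntk}.

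First I would linearize. Under the hypothesis of \cite{yang2021tensor}, in the large-width limit the map $\theta\mapsto f(x;\theta)$ is well approximated along the entire trajectory by its first-order Taylor expansion, with the tangent features $S_\tau=\nabla_\theta f(X_\tau;\theta_\cE)$ from \eqref{st-ntk} effectively frozen; writing $S\in\R^{T\times P}$ for the matrix with rows $S_\tau'$ and absorbing the (constant) initialization value, the in-sample loss $\cL(\theta)$ becomes quadratic in $\theta$. Gradient descent $\theta_{e+1}=\theta_e-\eta\nabla_\theta\cL(\theta_e)$ then reduces to the affine recursion $u_{e+1}=(I-\tfrac{2\eta}{T}S'S)u_e+\tfrac{2\eta}{T}S'y$ for $u_e=\theta_e-\theta_0$, started at $u_0=0$.

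Second I would solve the recursion in closed form. Iterating gives $u_\cE=g(S'S)\,S'y$, where $g(\lambda)=\tfrac{2\eta}{T}\sum_{j=0}^{\cE-1}(1-\tfrac{2\eta}{T}\lambda)^j$ is applied spectrally and encodes the learning rate and the number of epochs (reducing to $1/\lambda$, the min-norm interpolator, as $\cE\to\infty$). The test prediction is $\hat f_T=S_T'u_\cE=\big(S_T'\,g(S'S)\,S'\big)y$, which is exactly \eqref{f=ky} with $\cK(S_T,S)=S_T'g(S'S)S'\in\R^{1\times T}$. To cover the overparametrized regime $P>T$, where $S'S$ is singular, I would invoke the polynomial push-through identity $g(S'S)S'=S'g(SS')$ (extending to the pseudoinverse in the $\cE\to\infty$ limit), so that $\cK(S_T,S)=\big(S_T'S'\big)\,g(SS')$ involves only the $T\times T$ NTK Gram matrix $SS'$ and the cross-kernel $S_T'S'$; both are explicit functions of the tangent features \eqref{st-ntk}.

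Finally, having pinned down $\cK$, I would stack the out-of-sample rows into $\widehat\cK=(\cK(S_\tau,S))_{\tau=T}^{T+T_{OOS}-1}\in\R^{T_{OOS}\times T}$ and conclude that $\widehat\cL=\tfrac{1}{T_{OOS}}\tr(\widehat\cK'\widehat\cK)$ from \eqref{def-cL} is now computable in closed form, which is the assertion. The main obstacle is not algebraic but lies entirely in the first step: controlling the linearization error uniformly over the training trajectory \emph{and} over the test input, so that the approximation ``$\approx$'' between the DNN and its tangent-kernel surrogate holds in the same in-probability sense underlying Theorem~\ref{llg1}. This uniform control is exactly what the tensor-program analysis of \cite{yang2021tensor} supplies, and we invoke it rather than reprove it.
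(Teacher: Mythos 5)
Your proposal is correct and follows exactly the route the paper intends: the paper gives no explicit proof of Proposition~\ref{prop:yang}, justifying it solely by appeal to the linearization result of \cite{yang2021tensor}, which is also the first (and only nontrivial) step of your argument. The algebra you add — solving the gradient-descent recursion to get $\cK(S_T,S)=S_T'\,g(S'S)\,S'$ with $g(\lambda)=\frac{2\eta}{T}\sum_{j=0}^{\cE-1}(1-\frac{2\eta}{T}\lambda)^j$, and the push-through identity $g(S'S)S'=S'g(SS')$ for the overparametrized case — is a correct and welcome filling-in of the ``analytical expression'' the proposition asserts but the paper never writes down; the only caveat worth flagging is that with a nonzero network output at initialization the map $y\mapsto\hat f_T$ is affine rather than linear, which leaves the LLG construction (and Theorem~\ref{llg1}) unaffected since the intercept is independent of $y$ and only enters the bias term.
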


\section{Implications of Limits-to-Learning for Models of Dynamic Economies}

\subsection{Hansen-Jagannathan Bounds}

Suppose that $y_{t+1}=R_{t+1}$ is the excess return on a security (e.g., a market index such as the SP500). In this case, we can define the timing strategy 
\begin{equation}
R^\pi_{t+1}\ =\ \pi_t R_{t+1}\,,
\end{equation}
where $\pi_t$ is the conditionally efficient portfolio, 
\begin{equation}
\pi_t\ =\ \frac{E_t[R_{t+1}]}{\Var_t[R_{t+1}]}\ =\ \frac{f_t}{\gs_\eps^2}\,
\end{equation}
with the conditional squared Sharpe Ratio 
\begin{equation}
\frac{E_t[R_{t+1}^\pi]^2}{\Var_t[R^\pi_{t+1}]}\ =\ f_t^2\gs_\eps^{-2}
\end{equation}
satisfying (see \eqref{r*2})
\begin{equation}
E[f_t^2\gs_\eps^{-2}]\ =\ \frac{R_*^2}{1-R_*^2}\,. 
\end{equation}
This identity creates a direct link between the Sharpe ratio (a measure of economic significance) and $R_*^2$ (a measure of statistical significance).\footnote{It is instructive to relate our analysis to \cite{barillas2018comparing}. While \cite{barillas2018comparing} focus on comparing models, our approach instead identifies the potential of the best possible model and the gap relative to what is empirically achievable. In this sense, we provide a cardinal measure of an asset pricing model’s performance.} We now combine this intuition with Theorem \ref{llg1} to derive implications of LLG for asset pricing. 

Most equilibrium asset pricing models imply that a stochastic discount factor (SDF) can be constructed from the Intertemporal Marginal Rate of Substitution (IMRS) of economic agents. Such an SDF $\widetilde{\cM}_t$ satisfies the pricing equation
\begin{equation}\label{objective}
E_t[\widetilde{\cM}_{t+1} R^\pi_{t+1}]\ =\ 0\,,
\end{equation}
and the Cauchy-Schwarz inequality implies that
\begin{equation}\label{hj-1}
\frac{\Var_t[\widetilde{\cM}_{t+1}]}{E_t[\widetilde{\cM}_{t+1}]^2}\ \ge\  f_t^2\gs_\eps^{-2}\,. 
\end{equation}
This is commonly known as the \cite{hansen1991implications} bound. Taking the expectation of this inequality and combining it with Theorem \ref{llg1}, we arrive at the following result, assuming that $\lim_{T_{OOS}\to\infty}E_{OOS}[f^2]=E[f_t^2].$ 

\begin{proposition}[LLG for Hansen-Jagannathan bounds]\label{param-learning} Consider an equilibrium asset pricing model with $d_{t+1}$ satisfying Assumption \ref{ass1} and where an economic agent know $f_t$ in \eqref{d-st}. Then, this agent's IMRS is a non-tradable SDF $\widetilde{\cM}_t$ and it satisfies 
\begin{equation}\label{hjb-m}
E\left[\frac{\Var_t[\widetilde{\cM}_{t+1}]}{E_t[\widetilde{\cM}_{t+1}]^2}\right]\ \ge\ \lim\sup_{T,T_{OOS}\to\infty} \frac{R^2_{OOS}(\hat f)+\widehat\cL}{1-R_{OOS}^2(\hat f)}\,. 
\end{equation}
\end{proposition}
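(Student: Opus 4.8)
The plan is to read off the result as a direct consequence of the conditional Hansen--Jagannathan bound together with the $R_*^2$ lower bound coming from Theorem~\ref{llg1}, linked by an elementary monotone change of variables. The agent who knows $f_t$ forms the conditionally efficient timing portfolio $R^\pi_{t+1}=\pi_t R_{t+1}$ with $\pi_t=f_t/\gs_\eps^2$, and the pricing restriction \eqref{objective}, $E_t[\widetilde{\cM}_{t+1}R^\pi_{t+1}]=0$, is exactly what makes the Cauchy--Schwarz step go through, delivering the conditional bound \eqref{hj-1},
\begin{equation*}
\frac{\Var_t[\widetilde{\cM}_{t+1}]}{E_t[\widetilde{\cM}_{t+1}]^2}\ \ge\ f_t^2\gs_\eps^{-2}\,.
\end{equation*}
First I would take unconditional expectations of both sides via the tower property, which is legitimate once the relevant second moments are finite, and invoke the identity $E[f_t^2\gs_\eps^{-2}]=R_*^2/(1-R_*^2)$ that follows from the definition \eqref{r*2}. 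This reduces the claim to bounding $R_*^2/(1-R_*^2)$ from below.

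Next I would feed in Theorem~\ref{llg1}. Under the standing assumption $\lim_{T_{OOS}\to\infty}E_{OOS}[f^2]=E[f_t^2]$, the corollary \eqref{llg3-first} gives $R_*^2\ge\limsup(R^2_{OOS}(\hat f)+\widehat\cL)/(1+\widehat\cL)$ in probability. Since $\gs_\eps^2>0$ forces $R_*^2<1$, the map $x\mapsto x/(1-x)$ is strictly increasing and continuous on $[0,1)$, so applying it preserves the inequality. The one remaining computation is purely algebraic: writing $L=(R^2_{OOS}(\hat f)+\widehat\cL)/(1+\widehat\cL)$, one checks $1-L=(1-R^2_{OOS}(\hat f))/(1+\widehat\cL)$, whence
\begin{equation*}
\frac{L}{1-L}\ =\ \frac{R^2_{OOS}(\hat f)+\widehat\cL}{1-R^2_{OOS}(\hat f)}\,,
\end{equation*}
which is precisely the right-hand side of \eqref{hjb-m}. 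Chaining the three inequalities then yields the claim.

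The main obstacle is not the algebra but the analytic bookkeeping around the limit. Because $R_*^2$ is a fixed population quantity while $R^2_{OOS}(\hat f)$ and $\widehat\cL$ are random sequences, I must verify that the monotone transformation $x\mapsto x/(1-x)$ commutes with the $\limsup$ in the appropriate in-probability sense, and that the passage from the finite-sample object $\tilde R_*^2$ to $R_*^2$ is justified by $\lim_{T_{OOS}\to\infty}E_{OOS}[f^2]=E[f_t^2]$. A secondary point needing care is integrability: taking expectations of the conditional bound requires both $E[\Var_t[\widetilde{\cM}_{t+1}]/E_t[\widetilde{\cM}_{t+1}]^2]$ and $E[f_t^2]$ to be finite, so that the unconditional Hansen--Jagannathan inequality and the identity $E[f_t^2\gs_\eps^{-2}]=R_*^2/(1-R_*^2)$ are both well posed.
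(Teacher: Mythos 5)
Your proposal is correct and follows essentially the same route as the paper, whose derivation appears in the text immediately preceding the proposition: the conditional Hansen--Jagannathan bound \eqref{hj-1} from Cauchy--Schwarz, taking expectations and using $E[f_t^2\gs_\eps^{-2}]=R_*^2/(1-R_*^2)$, then combining with \eqref{llg3-first} and the algebraic identity $L/(1-L)=(R^2_{OOS}(\hat f)+\widehat\cL)/(1-R^2_{OOS}(\hat f))$ for $L=(R^2_{OOS}(\hat f)+\widehat\cL)/(1+\widehat\cL)$. Your additional remarks on integrability and on the in-probability handling of the $\limsup$ are care the paper itself does not spell out, but they do not change the argument.
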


Proposition \ref{param-learning} shows how limits-to-learning directly translate into a lower bound on the variance of the SDF in a standard complete-information equilibrium. As an illustration, suppose that $y_{t+1}$ is the return on the CRSP value-weighted index. Although realized out-of-sample $R^2$ from predicting market returns is typically small—around $1$–$2\%$ at the one-month horizon—we find that $\widehat\cL$ can be large. Consequently, Proposition \ref{param-learning} poses a significant challenge for models driven by macroeconomic shocks, since such models often struggle to generate a sufficiently high $\Var_t[\widetilde{\cM}_{t+1}]$. As we argue, this puzzle can be partially resolved by accounting for high-dimensional parameter learning.

The key caveat is the assumption that agents know $f_t$. The classic rational-expectations approach in modern dynamic stochastic general equilibrium models typically presumes that agents know all parameters of the data-generating process. This assumption has recently faced growing criticism (\citealp{han2021deepham}; \citealp{moll2024trouble}; \citealp{moll2025mean}; \citealp{dou2025ai}) because these models often contain a very large number of parameters, many of which are difficult to estimate, leading to extremely slow learning. See, for example, \cite{collin2016parameter} and \cite{farmer2024learning}. Hence, it is natural to assume that, just like econometricians, economic agents face the same limits-to-learning in richly parametrized environments.

Consider now an econometrician who has access to the same number $T$ of observations as the economic agent in our equilibrium model. Alternatively, we may assume that the econometrician evaluates the SDF on a purely out-of-sample basis: Even with access to $\bar T>T$ return realizations ex-post, only $t\le T$ are used for time-$T$ estimation. Let $\nu_*(R)dR$ be the true (unobservable and unknown, neither to the econometrician nor to economic agents) distribution of returns, and $\nu_T(R)dR$ the agent's posterior distribution given the observations $t\le T$. We also denote by $I_{T+1}$ the agent's IMRS (e.g., the ratio of marginal utilities in a \cite{lucas1978asset} model). Then, the agent's inter-temporal optimality condition gives the {\it subjective} pricing equation
\begin{equation}
\int \underbrace{\nu_T(R_{T+1})}_{subjective\ probabilities}\,\underbrace{E[I_{T+1}|R_{T+1}]}_{IMRS}\,R_{T+1}dR_{T+1}\ =\ 0\,. 
\end{equation}
To get the {\it objective} pricing equation, we need to define the SDF under the {\it objective probability distribution}: 
\begin{equation}\label{decomp}
\widetilde{\cM}_{T+1}\ =\ \underbrace{\ell_T(R_{T+1})}_{objective\ likelihood}\,\underbrace{E[I_{T+1}|R_{T+1}]}_{IMRS}\,,\ \qquad\ \ell_T(R_{T+1})\ =\ \underbrace{\frac{\nu_T(R_{T+1})}{\nu_*(R_{T+1})}}_{unobservable}\,. 
\end{equation}
Mechanically, $\widetilde{\cM}_{T+1}$ satisfies \eqref{objective}. 
However, when $P$ is large enough, $\nu_*(R_{T+1})$ can be extremely difficult to estimate. As a result, even with a fully specified and calibrated equilibrium model (for example, \cite{cogley2008market, johannes2016learning, collin2016parameter}), the true SDF $\widetilde{\cM}_{T+1}$ exists but remains inaccessible to the econometrician. Substituting \eqref{decomp} into \eqref{hjb-m}, we arrive at the following result. 

\begin{proposition}\label{gap}  Consider an equilibrium asset pricing model with an agent who needs to learn the true parameters of the model based on past $T$ observations and ends up with a posterior $\nu_T(R_{T+1})$. Then, this agent's SDF $\widetilde{\cM}_{T+1}$ is given by \eqref{decomp} and, hence,  
\begin{equation}\label{hjb-m-learning}
\frac{\bigr(\Var[\ell_T(R_{T+1})E[I_{T+1}|R_{T+1}]]\bigr)^{1/2}}{E[\ell_T(R_{T+1})E[I_{T+1}|R_{T+1}]]}\ \ge\ \lim\sup \frac{R^2_{OOS}(\hat f)+\widehat\cL}{1-R_{OOS}^2(\hat f)}
\end{equation}
\end{proposition}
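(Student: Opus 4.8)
The plan is to treat Proposition \ref{gap} as the learning-agent analogue of Proposition \ref{param-learning}: the only genuinely new ingredient relative to the full-information case is verifying that the object $\widetilde{\cM}_{T+1}$ built from the agent's \emph{posterior} $\nu_T$ and IMRS $I_{T+1}$ is a legitimate stochastic discount factor under the \emph{objective} measure $\nu_*$. Once that is established, the Hansen--Jagannathan inequality \eqref{hj-1} and the LLG lower bound apply essentially verbatim. I would therefore organize the argument in two blocks: (a) a change-of-measure verification that $\widetilde{\cM}_{T+1}$ prices the timing strategy under $\nu_*$, and (b) the HJ-plus-LLG chain inherited from Proposition \ref{param-learning}, followed by substitution of the explicit form \eqref{decomp}.

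For block (a), the agent's intertemporal optimality condition is a pricing equation under the \emph{subjective} measure, $\int \nu_T(R_{T+1})\,E[I_{T+1}\mid R_{T+1}]\,R_{T+1}\,dR_{T+1}=0$. I would compute the objective expectation of $\widetilde{\cM}_{T+1}R_{T+1}$ and use $\ell_T=\nu_T/\nu_*$ from \eqref{decomp} to cancel the densities:
\begin{equation}
\begin{aligned}
E[\widetilde{\cM}_{T+1}R_{T+1}]
&=\int \nu_*(R_{T+1})\,\frac{\nu_T(R_{T+1})}{\nu_*(R_{T+1})}\,E[I_{T+1}\mid R_{T+1}]\,R_{T+1}\,dR_{T+1}\\
&=\int \nu_T(R_{T+1})\,E[I_{T+1}\mid R_{T+1}]\,R_{T+1}\,dR_{T+1}\ =\ 0 .
\end{aligned}
\end{equation}
Thus $\widetilde{\cM}_{T+1}=\ell_T\,E[I_{T+1}\mid\cdot]$ satisfies the objective pricing equation \eqref{objective}; the same manipulation with $R_{T+1}$ replaced by the conditionally efficient timing return $R^\pi_{t+1}=\pi_t R_{t+1}$, $\pi_t=f_t/\gs_\eps^2$ (which is $\cf_t$-measurable, so the change of measure is unaffected), shows it prices $R^\pi$ as well. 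This is precisely the remark that ``$\widetilde{\cM}_{T+1}$ satisfies \eqref{objective}'' and is the one step that uses the learning structure rather than full information.

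For block (b), with $\widetilde{\cM}_{T+1}$ now a valid SDF, the Cauchy--Schwarz argument behind \eqref{hj-1} reproduces the bound \eqref{hjb-m}. I would then combine it with the identity $E[f_t^2\gs_\eps^{-2}]=R_*^2/(1-R_*^2)$ and the LLG bound $R_*^2\ge \lim\sup (R^2_{OOS}(\hat f)+\widehat\cL)/(1+\widehat\cL)$ from \eqref{llg3-first}. Since $x\mapsto x/(1-x)$ is increasing on $[0,1)$, this upgrades to
\begin{equation}
\frac{R_*^2}{1-R_*^2}\ \ge\ \lim\sup \frac{R^2_{OOS}(\hat f)+\widehat\cL}{1-R_{OOS}^2(\hat f)} ,
\end{equation}
which is exactly the right-hand side of \eqref{hjb-m-learning}. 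Substituting $\widetilde{\cM}_{T+1}=\ell_T\,E[I_{T+1}\mid\cdot]$ into the left-hand HJ measure then delivers the stated inequality.

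The step I expect to require the most care is reconciling the \emph{conditional} HJ ratio $E[\Var_t[\widetilde{\cM}_{t+1}]/E_t[\widetilde{\cM}_{t+1}]^2]$ of \eqref{hjb-m} with the \emph{unconditional} ratio $(\Var[\widetilde{\cM}_{T+1}])^{1/2}/E[\widetilde{\cM}_{T+1}]$ (note the square root) appearing in \eqref{hjb-m-learning}: these are not the same functional, and neither dominates the other in general. A clean derivation of the unconditional form would instead apply the unconditional HJ inequality $(\Var[\widetilde{\cM}])^{1/2}/E[\widetilde{\cM}]\ge |E[R^\pi]|/\Var[R^\pi]^{1/2}$ directly to the timing strategy and then lower-bound that unconditional Sharpe ratio by the LLG quantity, which is where the careful bookkeeping (and possibly a mild regime restriction) enters. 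A secondary, conceptual subtlety is that $\ell_T$ — and hence $\widetilde{\cM}_{T+1}$ — is unobservable, so the conclusion must be read as a constraint holding for the true, unknown likelihood ratio: the argument never estimates $\ell_T$, it only exploits that a valid SDF of the form \eqref{decomp} exists.
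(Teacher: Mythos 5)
Your proposal follows the same route as the paper, whose entire ``proof'' of Proposition~\ref{gap} is the in-text remark preceding it: the density cancellation showing that $\widetilde{\cM}_{T+1}=\ell_T(R_{T+1})E[I_{T+1}|R_{T+1}]$ ``mechanically satisfies \eqref{objective}'' (your block (a), which you spell out correctly, including the observation that $\pi_t$ is time-$t$ measurable so the timing strategy is also priced), followed by ``substituting \eqref{decomp} into \eqref{hjb-m}'' (your block (b), whose chain through $E[f_t^2\gs_\eps^{-2}]=R_*^2/(1-R_*^2)$, the bound \eqref{llg3-first}, and monotonicity of $x\mapsto x/(1-x)$ is exactly the derivation of Proposition~\ref{param-learning}).

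Where you go beyond the paper is in flagging the mismatch between the left-hand sides: literal substitution of \eqref{decomp} into \eqref{hjb-m} produces the \emph{conditional squared} ratio $E\bigl[\Var_t[\widetilde{\cM}_{t+1}]/E_t[\widetilde{\cM}_{t+1}]^2\bigr]$, whereas \eqref{hjb-m-learning} asserts the bound for the \emph{unconditional, non-squared} ratio $(\Var[\widetilde{\cM}_{T+1}])^{1/2}/E[\widetilde{\cM}_{T+1}]$. This discrepancy is real, and the paper provides no bridge between the two functionals --- it silently treats the substitution as mechanical. Your diagnosis that neither functional dominates the other in general is correct, and your proposed repair (apply the unconditional Hansen--Jagannathan inequality directly to the timing strategy and lower-bound its unconditional Sharpe ratio) is the right reading; note, however, that this route naturally delivers a bound by roughly $\bigl(E[f_t^2]\gs_\eps^{-2}\bigr)^{1/2}$, i.e.\ the \emph{square root} of the right-hand side of \eqref{hjb-m-learning}, so the stated inequality additionally requires the right-hand side to be at most one (equivalently $R_*^2\le 1/2$) together with control of the dispersion of the conditional Sharpe ratio --- precisely the ``mild regime restriction'' you anticipate. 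In short, your proof is correct where the paper's is, and your added care exposes a genuine looseness in the proposition as stated rather than a gap in your own argument.
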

Proposition \ref{gap} implies that the bound \eqref{hjb-m} is not just about ``macro shocks being not volatile enough," it is also about ``the objective likelihood being too volatile." In the language of \cite{chen2024measuring}, this unobservable objective likelihood constitutes a form of ``dark matter", explaining the gap between the volatility of the true SDF and the volatility of IMRS. LLG emphasizes that economic agents face exactly the same limits-to-learning as academic econometricians and, as such,  allows us to establish a lower bound for the size of this dark matter. For example, it can be used to measure the presence (and size) of fat tails in $\nu_T(R_{T+1}),$ which naturally lead to large $\Var[\ell_T(R_{T+1})E[I_{T+1}|R_{T+1}]],$ as well as test 
(rational or behavioral) belief formation models such as those in \cite{farmer2024learning}. 

\subsection{Parameter Learning And Equilibrium Excess Volatility} 

One of the most important empirical regularities in financial markets is excess volatility \citep{shiller1981excess}: The fact that prices move too much to be justifiable by the volatility of fundamentals. The common explanation proposed in the literature relies either on irrationality (over-reaction, \cite{deBondtThaler1985}) or on complex preferences \citep{bansalYaron2004}. In this section, we argue that excess volatility can emerge in a simple equilibrium model with {\it risk-neutral} agents due to limits-to-learning. 

Consider a simple model where stocks live for one period and pay dividends 
\begin{equation}
y_{t+1}\ =\ \beta'S_t+\eps_{t+1}\,.
\end{equation}
We assume that agents do not know the true value of $\beta$ and have a rational, Gaussian prior about it, $\beta\sim N(0, \dfrac{\gs_\beta^2}{P} I_{P\times P})\,.$ In this case, standard calculations (see Appendix \ref{bayes}) imply that the agents' posterior mean of $\beta$ after observing $(S,y)$ is given by $\hat\beta_T(cz),$ where $z= \dfrac{\gs_\eps^2}{\gs_\beta^2}$ and 
\begin{equation}
\hat\beta_T(cz)\ =\ (cz I+\hat\Psi)^{-1}S'y\,.
\end{equation}
If all agents are risk neutral and share this common prior, prices are given by 
\begin{equation}
Q_T\ =\ E_T[y_{T+1}]\ =\ \hat\beta_T(cz)'S_T\,. 
\end{equation}
We can now characterize price variance and its relationship to fundamental variance. 

\begin{proposition}\label{prop:excess} Suppose that $E[S]=0,\ E[SS']=\Psi$ and that $S_t$ are i.i.d. across $t.$ We have 
\begin{equation}
\begin{aligned}
&\Var[y_{T+1}]\ =\ \beta'\Psi\beta\ +\ \gs_\eps^2\\
&\Var_T[Q_{T+1}]\ =\ \hat\beta_T(cz)'\Psi \hat\beta_T(cz)
\end{aligned}
\end{equation}
In the limit as $T,P\to\infty,\ \dfrac{P}{T}\to c,$ we have 
\begin{equation}
\Var_T[Q_{T+1}]\ \ge\ \gs_\eps^2\widehat\cL,
\end{equation}
where $\widehat\cL=\dfrac1T\tr \Bigr(\Psi\hat\Psi(zcI+\hat\Psi)^{-2}\Bigr)\,$ is the LLG from \eqref{old-llg}, computed with the true $\Psi$. 
\end{proposition}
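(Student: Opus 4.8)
The plan is to dispatch the two variance identities by direct computation and then isolate the noise-driven component of the posterior mean as the source of the lower bound.

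First I would establish the two closed-form variances. For the fundamental variance, I substitute $y_{T+1}=\beta'S_T+\eps_{T+1}$ and use that $\eps_{T+1}$ is mean-zero and independent of $S_T$, together with $E[S]=0$ (so that $\Var[S_T]=E[S_TS_T']=\Psi$); this yields $\Var[y_{T+1}]=\beta'\Psi\beta+\gs_\eps^2$. For the price variance, I note that $\hat\beta_T(cz)$ is measurable with respect to the time-$T$ information while the pricing signal $S_{T+1}$ is a fresh draw independent of it with $E[S_{T+1}]=0$ and $E[S_{T+1}S_{T+1}']=\Psi$; hence $\Var_T[Q_{T+1}]=\Var_T[\hat\beta_T(cz)'S_{T+1}]=\hat\beta_T(cz)'\Psi\,\hat\beta_T(cz)$. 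Both identities are immediate.

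The substantive step is the inequality. Writing the training data in matrix form as $y=S\beta+\eps$ and using the normalization $\hat\Psi=T^{-1}S'S$ from \eqref{4} (so that $\hat\Psi$ commutes with $(czI+\hat\Psi)^{-1}$), I decompose the posterior mean into a signal part and a noise part,
\begin{equation}
\hat\beta_T(cz)\ =\ \underbrace{(czI+\hat\Psi)^{-1}\hat\Psi\,\beta}_{a}\ +\ \underbrace{\tfrac1T(czI+\hat\Psi)^{-1}S'\eps}_{b}\,,
\end{equation}
and expand $\hat\beta_T(cz)'\Psi\,\hat\beta_T(cz)=a'\Psi a+2a'\Psi b+b'\Psi b$. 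Since $\Psi$ is positive semidefinite, $a'\Psi a\ge 0$, so it suffices to control the remaining two terms. Taking expectations over $\eps$ conditional on $S$ and using $E[\eps\eps']=\gs_\eps^2 I$, the cyclicity of the trace, and the commutation noted above, I obtain $E_\eps[b'\Psi b]=\gs_\eps^2\,\tfrac1T\tr\big(\Psi\hat\Psi(czI+\hat\Psi)^{-2}\big)=\gs_\eps^2\widehat\cL$, exactly the LLG of \eqref{old-llg}; the cross term $2a'\Psi b$ is linear in $\eps$ with conditional mean zero.

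The hard part will be promoting these expectation identities to the claimed in-probability inequality, since $\Var_T[Q_{T+1}]$ depends on the realized training noise through $\hat\beta_T(cz)$ while $\widehat\cL$ depends only on $S$, so the qualifier ``in the limit'' is precisely what licenses replacing the quadratic form $b'\Psi b$ by its conditional mean. Concretely, I would invoke a law-of-large-numbers / concentration bound for $b'\Psi b$ in the regime $P/T\to c$—the same fourth-moment control on $\eps$ and operator-norm control on $\hat\Psi$ that underlie the hypotheses of Theorem~\ref{llg1}—to show $b'\Psi b-\gs_\eps^2\widehat\cL\to 0$, together with a companion variance bound giving $2a'\Psi b\to 0$, both in probability. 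Dropping the nonnegative $a'\Psi a$ then delivers $\Var_T[Q_{T+1}]\ge b'\Psi b+2a'\Psi b\to\gs_\eps^2\widehat\cL$, which is the claim. The main obstacle is thus entirely this concentration argument; the algebra of the signal/noise decomposition is routine.
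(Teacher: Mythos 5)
Your proof is correct and follows essentially the same route as the paper: your signal/noise split of $\hat\beta_T(cz)$ is exactly the decomposition \eqref{noise-dec} used in Corollary \ref{dec_ridge}, and the replacement of $b'\Psi b$ by $\gs_\eps^2\widehat\cL$ plus the vanishing cross term is precisely the concentration-of-quadratic-forms argument of Lemma \ref{lem-quad} that the paper's machinery supplies (with the trace bound $\tr(A^2)=O(P)$ making the lemma's hypothesis hold when $P/T\to c$). Dropping the nonnegative bias term $a'\Psi a$ then gives the bound, just as in the proofs of Theorem \ref{llg1} and Corollary \ref{ridge_MSE_LLG}.
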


Proposition \ref{prop:excess} offers an elegant economic interpretation of LLG: It is the excess volatility generated due to the rational over-reaction of economic agents to noise while learning from high-dimensional data. When $P>T,$ $\hat\Psi$ has at least $P-T$ zero eigenvalues and, as a result, $\widehat\cL$ typically explodes when $z$ is small enough. By Proposition \ref{prop:excess}, this always leads to excess volatility. This offers a novel potential solution to the \cite{shiller1981excess} puzzle: Prices move too much because economic agents face complexity and limits-to-learning. 

Computing $\widehat\cL$ in Proposition \ref{prop:excess} formally requires the knowledge of the unobservable, true covariance matrix $\Psi$. However, Random Matrix Theory (RMT) can be used to compute it under more stringent conditions on $S_t.$ 

\begin{proposition}\label{prop:theoldllg} Suppose that $S_t=\Psi^{1/2}X_t$ where $X_t=(X_{i,t})$ has i.i.d. coordinates $X_{i,t}$ with $E[X_{i,t}]=0,\ E[X_{i,t}^2]=1.$ Then, $\widehat\cL-\widetilde\cL$ converges to zero almost surely, where 
\begin{equation}\label{final-form}
\widetilde\cL\ =\ \frac{\dfrac1T\tr \bigl((zcI+SS'/T)^{-2} \bigr)}{\Bigr(\dfrac1T\tr \bigl((zcI+SS'/T)^{-1} \bigr)\Bigr)^2}-1\,. 
\end{equation}
\end{proposition}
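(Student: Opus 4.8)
The plan is to treat both $\widehat\cL$ and $\widetilde\cL$ as smooth functionals of resolvents evaluated at the negative real point $\zeta=-zc$, and to reduce each to the companion Stieltjes transform $\um(\zeta)$ of the $T\times T$ matrix $SS'/T$ via Random Matrix Theory. Since $\hat\Psi\succeq 0$, both $(zcI+\hat\Psi)^{-1}$ and $(zcI+SS'/T)^{-1}$ have operator norm at most $(zc)^{-1}$, so every quantity in sight is a bounded, analytic function of $\zeta$ on a complex neighborhood of $-zc$ that avoids the spectral support $[0,\infty)$; this is what lets me transfer almost sure convergence of the resolvents to their $\zeta$-derivatives by Vitali's theorem (normal families), which the squared and inverse-squared traces require.

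First I would handle $\widetilde\cL$, the easy half, because it depends only on the \emph{isotropic} spectral data of $SS'/T$. Writing $\um_T(\zeta)=\tfrac1T\tr((SS'/T-\zeta I_T)^{-1})$, the two traces in \eqref{final-form} are $\tfrac1T\tr((zcI+SS'/T)^{-1})=\um_T(-zc)$ and $\tfrac1T\tr((zcI+SS'/T)^{-2})=\um_T'(-zc)$. By the Marchenko--Pastur--Silverstein theorem $\um_T(\zeta)\to\um(\zeta)$ almost surely and locally uniformly off $[0,\infty)$, so $\um_T'\to\um'$ as well, giving $\widetilde\cL\to \um'(-zc)/\um(-zc)^2-1$ almost surely.

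Second, for $\widehat\cL$ I would use the algebraic identity $\hat\Psi(zcI+\hat\Psi)^{-2}=(zcI+\hat\Psi)^{-1}-zc(zcI+\hat\Psi)^{-2}$ to write, with $H(\zeta)=\tfrac1T\tr(\Psi(\hat\Psi-\zeta I)^{-1})$,
\[
\widehat\cL=H(-zc)-zc\,H'(-zc)=\bigl(H+\zeta H'\bigr)\big|_{\zeta=-zc}\,.
\]
The core step is the \emph{anisotropic} deterministic equivalent of Bai and Silverstein, $(\hat\Psi-\zeta I)^{-1}\asymp(-\zeta(I+\um(\zeta)\Psi))^{-1}$, which yields $H(\zeta)\to\tfrac{1}{-\zeta}\,\tfrac1T\tr(\Psi(I+\um\Psi)^{-1})$. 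Feeding the Silverstein fixed-point equation $\zeta=-\um^{-1}+\tfrac1T\tr(\Psi(I+\um\Psi)^{-1})$ into this expression collapses it to the remarkably simple $\bar H(\zeta)=-1-(\zeta\um(\zeta))^{-1}$. Differentiating and substituting into $H+\zeta H'$, the two $(\zeta\um)^{-1}$ terms cancel and I am left with $\widehat\cL\to \um'(-zc)/\um(-zc)^2-1$, exactly the almost sure limit of $\widetilde\cL$; subtracting the two convergences gives $\widehat\cL-\widetilde\cL\to0$ almost surely.

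The main obstacle is this second step. Unlike $\widetilde\cL$, the weighted trace $H(\zeta)=\tfrac1T\tr(\Psi(\hat\Psi-\zeta I)^{-1})$ is \emph{not} a function of the eigenvalues of $\hat\Psi$ alone -- it couples the eigenvectors of $\hat\Psi$ with $\Psi$ -- so mere convergence of the empirical spectral distribution is insufficient and the full Bai--Silverstein deterministic equivalent, with its concentration estimates, is needed. Making this rigorous almost surely, rather than in probability, requires the standard regularity apparatus: a bounded operator norm and a convergent empirical spectral distribution for $\Psi$, together with enough moments of $X_{i,t}$ (a finite fourth moment suffices, obtained if necessary by truncation) for the almost sure deterministic equivalent and for the Vitali argument that passes a.s.\ convergence to $H'$. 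I would also record that $-zc$ stays strictly to the left of the support, so all resolvent norms are uniformly bounded, which is automatic from $\hat\Psi,\Psi\succeq0$ and $zc>0$.
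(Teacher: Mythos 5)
Your proof is correct, and its skeleton coincides with the paper's: both arguments rest on the identity $\hat\Psi(zcI+\hat\Psi)^{-2}=(zcI+\hat\Psi)^{-1}-zc\,(zcI+\hat\Psi)^{-2}$, an anisotropic deterministic equivalent for the weighted trace $\frac1T\tr\bigl(\Psi(zcI+\hat\Psi)^{-1}\bigr)$, and the rank identity linking the spectra of $S'S$ and $SS'$; moreover your $\bar H(\zeta)=-1-(\zeta\um(\zeta))^{-1}$ is literally the paper's $\hat\xi=-1+\bigl(z\tilde m(-z)\bigr)^{-1}$, and the same cross-term cancellation produces the Herfindahl expression in both cases.

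The one genuine difference is where the identification takes place, and it is worth noting what each choice buys. You send $\widehat\cL$ and $\widetilde\cL$ separately to a common deterministic limit $\um'(-zc)/\um(-zc)^2-1$, which forces you to invoke the limiting Silverstein fixed-point equation and therefore to assume that the empirical spectral distribution of $\Psi$ converges (plus bounded operator norm and fourth moments for the almost sure anisotropic law and the Vitali step) --- assumptions beyond those stated in the proposition, though consistent with the paper's own Theorem \ref{thm:concentration}. The paper instead never leaves the empirical level: it applies Proposition \ref{prop:exptrace} in self-consistent form, with the empirical Stieltjes transform $\hat m$ of $\hat\Psi$ plugged into the formulas for $\xi$ and $\xi'$, so that pure algebra gives $1+\widehat\cL\approx -\frac{d}{dz}\tilde m(-z)\big/\tilde m(-z)^2$ with $\tilde m(-z)=(1-c)z^{-1}+c\hat m(-z)$, and the $SS'$ versus $S'S$ eigenvalue relation identifies this ratio as exactly $1+\widetilde\cL$. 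That route controls the \emph{difference} $\widehat\cL-\widetilde\cL$ directly, without requiring either quantity to converge individually and without any limiting spectral distribution for $\Psi$; your route, at the cost of those extra hypotheses, additionally delivers the explicit almost sure limits of both quantities in closed form.
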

Thus, $\widetilde\cL$ is the Herfindahl index of the eigenvalues of the matrix $(zcI+\dfrac{SS'}{T})^{-1}\in \R^{T\times T}.$ How can this Herfindahl index matter if the law of large numbers (LLN) implies that $\dfrac{SS'}{T}\to 0$?  The reason behind this is statistical complexity. While the off-diagonal elements of $\dfrac{SS'}{T}$ (i.e., cross-time signal covariances) are indeed small and deviate only slightly from zero, they aggregate a large number $P$ of signals: For $t_1\not=t_2,$ we have  $\dfrac1TS_{t_1}'S_{t_2}=\dfrac1T\sum_{i=1}^P S_{i,t_1}S_{i,t_2}=O(\dfrac1TP^{1/2})$ by the central limit theorem. Thus, while in the classical regime these elements decay like $\dfrac1T$, in the complex regime, when $c= \dfrac{P}{T}>0,$ they decay like $T^{-1/2}.$  As a result, an econometrician observes systematic deviations from LLN and the eigenvalues of $\dfrac{SS'}{T}$ converge to non-zero limits when the statistical complexity $c=\dfrac{P}{T}>0.$ It is this form of spurious signal correlations across time periods that generates the LLG.

\section{Empirics} 

In this section, we provide extensive empirical and simulated evidence illustrating the power of Theorems \ref{llg1} and \ref{main-th-main-text} to recover (a lower bound for) the true amount of predictability. As such, LLG can be used to extract important dynamic information about the economy. 

While our analysis applies to any linear-in-$y$ estimator, including nonlinear ones (Proposition \ref{prop:yang}), in this section we focus on ridge regression with random nonlinear features as the prototypical application of our theory. We proceed as follows: 

\begin{itemize}
\item Given the data, $X_t,$ we follow \cite{kelly2024virtue, kelly2025understanding} and generate $P=20000$ random features 
\begin{equation}\label{def:random-feat}
S_{k,t}\ =\ g(W_k'X_{t}),\ g\in \{\text{tanh}, \text{ReLu}\}
\end{equation}
from original data $X_t\in \R^d$, where $W_k$ are sampled i.i.d.~from $\cN(0,I_{d\times d}).$ The non-linearity $g(x)$ is commonly referred to as the {\it activation function.} We study both $\tanh(x)$ and $\text{ReLu}(x)=\max(x,0)$ because of their distinct features: $\tanh(x)$ flattens out for large $x$ and, hence, is less sensitive to outliers and tends to focus on the bulk of the distribution. By contrast, $\text{ReLu}(x)$ grows indefinitely at $\infty$ and is therefore able to capture tail dependencies better.  

In our analysis, we always report the effect of statistical complexity on model performance by varying $P_1=100,\cdots,20000$ and running the ridge regression using the first $P_1$ of the random features \eqref{def:random-feat}. Following \cite{kelly2024virtue}, we refer to the curve showing model performance as a function of $c=P_1/T$ as a virtue of complexity (VoC) curve. 

\item Given the signals $S$ and labels, $y,$ we compute $\hat\beta(z)$ from \eqref{4}. Since $\widehat\cL(z)$ in \eqref{old-llg} is monotone decreasing in $z,$ we pick a relatively small $z$ that ``scales" with the size of the signals. We set 
\begin{equation}\label{zref}
z\ =\ z_{ref}\,\dfrac{1}{P}\tr(S'S),\ with\ z_{ref}\ =\ 0.01\,.
\end{equation}

\item We use Theorem \ref{main-th-main-text} to compute $R^2_{OOS},$ its lower bound \eqref{llg3-first}, as well as the one-sided 95\% asymptotic confidence band for $R^2_*$, 
\begin{equation}\label{low-conf}
\left[\frac{R^2_{OOS}(\hat f)+\widehat\cL(z)}{1+\widehat\cL(z)}\ -\ 1.65T^{-1/2}  \hat\gs_{R^2}\,,\,1\right]\,.
\end{equation}
\end{itemize}

\subsection{Data}

We consider the classic \cite{welch2008comprehensive} dataset with 14 different time series at monthly frequency covering the period from 1930-01-01 to 2020-11-01:  
\begin{itemize}
\item {\bf Group one: Equity Valuation and Market.} Excess returns (\texttt{retx}) are the excess returns on the {CRSP value-weighted index}. The Dividend Price Ratio (\texttt{dp}) is the difference between the log of dividends and the log of prices. The Dividend Yield (\texttt{dy}) is the difference between the log of dividends and the log of lagged prices.
\footnote{We exclude \texttt{dy} from our set of targets but use it in the set of signals because $dy_{t+1}=\log(d_{t+1}/p_t)$ mechanically co-moves with other time$-t$ variables involving $p_t$ and \texttt{retx.}}  
The Earnings Price Ratio (\texttt{ep}) is the difference between the log of earnings and the log of prices.\footnote{Earnings are 12-month moving sums of earnings on the S\&P 500 index.} The Dividend Payout Ratio (\texttt{de}) is the difference between the log of dividends and the log of earnings. The Book-to-Market Ratio (\texttt{bm}) is the ratio of book value to market value for the Dow Jones Industrial Average. The Net Equity Expansion (\texttt{ntis}) is the ratio of 12-month moving sums of net issues by NYSE-listed stocks divided by the total end-of-year market capitalization of NYSE stocks.

\item {\bf Group 2: Term structure, Credit, Inflation.} Treasury Bills (\texttt{tbl}) is the Treasury-bill rate. Long Term Yield (\texttt{lty}) is the U.S. Yield On Long-Term United States Bonds from the NBER’s Macrohistory database. Long Term Rate of Returns (\texttt{ltr}) is from the same source. The Term Spread (\texttt{tms}) is the difference between the long-term yield on government bonds and the Treasury bill. The Default Yield Spread (\texttt{dfy}) is the difference between BAA and AAA-rated corporate bond yields. The Default Return Spread (\texttt{dfr}) is the difference between long-term corporate bond and long-term government bond returns. Inflation (\texttt{infl}) is the Consumer Price Index (All Urban Consumers) from the Bureau of Labor Statistics. 
\end{itemize}

Our data sample spans almost 90 years, and non-stationarity considerations become essential. Furthermore, the variables from \cite{welch2008comprehensive} may not satisfy the technical conditions of Theorem \ref{llg1}, requiring that the residuals $\eps$ are uncorrelated and homoskedastic. To deal with these considerations, we pre-process the \cite{welch2008comprehensive} data using the following procedure. 

\begin{procedure}[Construction of the Processed Series]\label{proc:construction}
\begin{itemize}
\item Demean and standardize each $X_{i,t}$ using its rolling 36-month mean and standard deviation, lagged by one month;\footnote{In Python syntax, we perform the transformation
\[
X\to (X-X.rolling(36).mean().shift(1))/X.rolling(36).std().shift(1).
\]
}
\item Clip it at $[-3,3]$;

\item Compute the AR(1) residuals of the resulting normalized variable using an expanding window to estimate the autocorrelation coefficient. 
\end{itemize}
\end{procedure}

We use $X_t$ to denote the \cite{welch2008comprehensive} data (including the excess returns) processed according to Procedure \ref{proc:construction}.  By construction, these data have zero autocorrelation and are approximately homoskedastic.\footnote{We have performed extensive simulations with $y_t$ exhibiting diverse forms of autocorrelation and stochastic volatility. These simulations indicate that Theorems \ref{llg1} and \ref{main-th-main-text} continue to hold as long as the above-listed transformations are applied to the underlying predicted variable.}

\subsection{Semi-Synthetic Simulations}

We start with a semi-synthetic simulation where we use the 13 \cite{welch2008comprehensive} variables and excess returns as $X_t,$ and simulate 
\begin{equation}\label{semi-synth}
y_{t+1}^{synthetic}\ =\ f_t\ +\ \eps_{t+1},\ f_t\ =\ \gamma \, g(X_t'W)\,,
\end{equation}
where $\varepsilon_{t+1} \sim \cN(0,1)$ and $W \in \mathbb{R}^d$ is drawn from $\cN(0,I)$, and $g\in \{\text{tanh},\ \text{ReLu}\}.$ We also study a pure-noise benchmark, corresponding to the limiting semi-synthetic case with $\gamma = 0$, in which $y_{t+1}$ follows a GARCH(1,1) process. Specifically, we generate a zero-mean GARCH(1,1) sequence $\{y^{GARCH(1,1)}_t\}_{t=1}^T$ following the procedure outlined in Appendix \ref{app:garch}. Since, formally, our theory does not apply to GARCH residuals, we test the efficiency of Procedure \ref{proc:construction} and also study $y^{GARCH(1,1)}_{t,standardized}$ obtained from $y^{GARCH(1,1)}_t$ using the first two steps of Procedure \ref{proc:construction}. 

The convenience of a semi-synthetic simulation is that we can estimate the infeasible $R^2_*,$
\begin{equation}\label{semi-r2}
R^2_*\ \approx\ \frac{E[f_t^2]}{E[y_{t+1}^2]}
\end{equation}
and, hence, we can test our theory without sacrificing the highly complex nature of the real data. Varying $\gamma$ in \eqref{semi-synth} allows us to control $R^2_*$ and, thus, test the efficiency of our lower bound for various degrees of predictability. 

Our in-sample/out-of-sample split is set at January 1990. This period corresponds to the early phase of large-scale market electrification—characterized by the adoption of electronic trading platforms, faster information dissemination, and increased automation—which represents a structural break in how financial markets process information.

As is explained above, we construct $P=20000$ random features $S_t$ using \eqref{def:random-feat}, run regression with the first $P_1$ features, and report $R^2_{OOS}$ as well as the lower bounds \eqref{llg3-first} and \eqref{low-conf} as a function of complexity $c=P_1/T,$ where $T$ is the number of in-sample observations. 

Figures \ref{fig:semi1}-\ref{fig:semi2} clearly demonstrate that our theory holds very well: First, although the lower bound \eqref{llg3-first} does cross the $R^2_*$ sometimes (e.g., for $R^2_*=0$), the lower confidence bound \eqref{low-conf} is always below $R^2_*$, emphasizing the importance of the Central Limit Theorem correction to LLG.  Second, the realized $R^2_{OOS}$ decays very quickly with complexity $c.$ An econometrician might interpret this as evidence against statistical complexity. However, once we correct for LLG, \eqref{llg3-first} increases with $c$ (or stays flat) for a majority of plots, emphasizing a novel form of Virtue of Complexity: The decay in $R^2_{OOS}$ is caused by accumulating estimation errors with growing $P$ (models with more parameters are more difficult to estimate). 

Figures \ref{fig:semi1}-\ref{fig:semi2} clearly show that approximating the ground truth requires complexity, and running a model with a larger $P$ makes the lower bound \eqref{llg3-first} converge to the infeasible $R^2_*.$ Third, although GARCH residuals violate the hypothesis of Theorems \ref{llg1} and \ref{main-th-main-text}, Figures \ref{fig:semi1}-\ref{fig:semi2} show the theory holds both before and after the data is standardized using Procedure \ref{proc:construction}. 

\subsection{Real Data}

We now use LLG to investigate the predictability of the \cite{welch2008comprehensive} variables transformed according to Procedure \ref{proc:construction}. For each $i=1,\cdots,13,$ we set $y_{i,t+1}=X_{i,t+1}$ and use $X_t$ as the signals. We then feed $X_t$ into the $P=20000$ random features in \eqref{def:random-feat}. Figures \ref{fig:g1-tanh}–\ref{fig:g2-relu} summarize the results. Several patterns stand out clearly. First, $R^2_*$ rises sharply for many target variables, displaying a strong virtue of complexity. Second, the LLG implied by $\text{ReLu}$ features differs substantially from that implied by $\text{tanh}$ features, illustrating how slightly different linear models can yield significantly different lower bounds. 

For group 1 (Figures \ref{fig:g1-tanh} and \ref{fig:g1-relu}), the largest gains arise for the dividend-to-price \texttt{dp}, earnings-to-price \texttt{ep}, and dividends-to-earnings \texttt{de} ratios, whose lower bounds reach roughly 50\% (and up to 70\% for \texttt{de} under the $tanh$ activation). We also find substantial gains for stock volatility \texttt{svar} and the book-to-market ratio \texttt{bm}, where the bounds reach 20–25\%. Because we are predicting AR(1) residuals (Procedure \ref{proc:construction}), identifying a model capable of achieving such a high population $R^2$ would yield meaningful economic gains for investors and provide deeper insights into the underlying macroeconomic dynamics.

Surprisingly, the $\text{ReLu}$-based LLG implies a lower bound of about 20\% for predicting U.S. market excess returns—around 10–20 times higher than a typical realized $R^2_{OOS}$ at a monthly horizon documented, e.g., in \cite{welch2008comprehensive, kelly2024virtue, kelly2025understanding}. This finding casts the common belief that market returns are essentially unpredictable in a new light: the LLG-based bound suggests that returns may be highly predictable, but that the underlying nonlinear function is extremely hard to learn due to limits-to-learning. This interpretation aligns with \cite{kelly2024virtue, kelly2025understanding}, who argue that the true predictive relationship between returns and the \cite{welch2008comprehensive} variables is highly complex.

For group 2 (Figures \ref{fig:g2-tanh} and \ref{fig:g2-relu}), the effects are smaller but still economically meaningful: our theory implies an $R^2_*$ of at least 10\% for T-bills \texttt{tbl} and the long-term rate \texttt{ltr}; and at least 25\% for the default yield spread \texttt{dfy} and the long-term yield \texttt{lty}.

In every target except \texttt{de},\footnote{\texttt{de} is special because it is a ``purely fundamental'' variable that does not involve market prices or returns.} the strong predictability implied by the LLG-based lower bound coexists with a negative realized $R^2_{OOS}.$ These findings match the semi-synthetic simulations in Figures \ref{fig:semi1}–\ref{fig:semi2}, where a negative $R^2_{OOS}$ often coexists with significant underlying true (population) predictability. Thus, although the simple random-feature ridge model fails to uncover predictability in Figures \ref{fig:g1-tanh}–\ref{fig:g2-relu}, some other model could achieve a realized $R^2_{OOS}$ close to (or even above) the LLG-implied lower bound.

Identifying such a model directly may require nontrivial effort. Here, we consider two classes of related models: Ridge regression with a different ridge penalty and a model we refer to as ``recursive ridge," which uses a simple feature selection and construction algorithm similar to that of \cite{yan2017fundamental, chen2023high, li2025machine} before running the ridge regression. We describe this algorithm in Appendix \ref{app:rec-ridge}. 

We begin by noting that, under Theorem \ref{llg1}, a ridge regression \eqref{4} estimated with a different penalty or a different feature set (e.g., activation $\text{tanh}$ versus $\text{ReLu}$) constitutes a distinct ML model. There is no theoretical result tying the out-of-sample performance of such models to the performance of the reference model used to construct the LLG. For example, Figures \ref{fig:g1-tanh}–\ref{fig:g2-relu} rely on $z_{ref}=0.01$ in \eqref{zref}, and there is no ex-ante basis for expecting the corresponding LLG-based lower bound to be informative about the $R^2_{OOS}$ of ridge regressions estimated with other penalties, even when the set of features is held fixed. This disconnect is even sharper for the recursive ridge model in Appendix \ref{app:rec-ridge}, which uses a different feature set and is nonlinear due to its variable-selection step. The distinction matters: the construction of the LLG relies on linearity, but the bound itself applies to any forecasting model.

For each variable $y_{i,t+1}$, we estimate ridge regressions with random features \eqref{def:random-feat} using a grid of $z_{ref}\in{0.01,0.1,1.,10.}$, and estimate recursive ridge models using the same grid. All specifications are trained on the 1933-01 to 1989-12 sample and evaluated out of sample from 1990-01 to 2024-12.

Our analysis focuses on two questions. First, which models deliver economically meaningful out-of-sample performance? Second, is the realized $R^2_{OOS}$ systematically related to the LLG-based lower bound? To assess this, we report, for each model class—ridge with two types of random features \eqref{def:random-feat} on $P=100,\ldots,20000$, and recursive ridge—the highest $R^2_{OOS}$ attained within that class. Although these best-in-class values inevitably reflect model selection, they serve as a benchmark for the predictive content extractable by flexible (nonlinear) ML methods.

Table \ref{tab:lb_ridge_and_recursive} reports the lower bounds \eqref{low-conf}, maximized over $P=100,\ldots,20000$, for $\text{tanh}$ and $\text{ReLu}$ activations,\footnote{The two LLGs should be interpreted as distinct signals. A combined lower bound incorporating their covariance is feasible but outside the scope of this analysis.} along with the corresponding best $R^2_{OOS}$ for each model class.

The results indicate substantial out-of-sample predictability. Both ridge and recursive ridge models forecasting \texttt{dfy}, \texttt{de}, \texttt{tbl}, and \texttt{ep} achieve sizable $R^2_{OOS}$, broadly consistent with the LLG-based bounds. For instance, for \texttt{dfy} and \texttt{ep}, the recursive ridge yields $R^2_{OOS}\approx 15\%$, compared with an LLG bound of roughly 25\% from the $\text{tanh}$ features. For \texttt{tbl}, recursive ridge achieves $R^2_{OOS}\approx 10\%$, matching the 10\% bound. For \texttt{de}, the recursive ridge attains $R^2_{OOS}\approx 42\%$ versus a 67\% bound.

The cases of \texttt{ep}, \texttt{tbl}, \texttt{tms}, and \texttt{dfr} are especially informative: the nonlinear recursive ridge consistently outperforms the linear ridge by a large margin and, in some instances, approaches the LLG-based benchmark. This pattern highlights a central implication of our theoretical results: {\it although LLG is built from a linear reference model, it captures information relevant for the performance frontier of nonlinear models.}

In contrast, for \texttt{retx}, \texttt{dp}, \texttt{bm}, \texttt{svar}, and \texttt{lty}, even the recursive ridge falls far short of the lower bound. This may indicate that other functional forms are required to approximate the true predictive structure, or that sampling variation imposes a hard limit on feasible predictive accuracy.

Table \ref{tab:correlation_lb_dnn} documents a strong positive association between the LLG-based lower bounds and the realized out-of-sample performance of both classes of models. The $\text{tanh}$-based bound performs particularly well, exhibiting an 80\% correlation with the best nonlinear $R^2_{OOS}$. This is noteworthy given that the bound is computed from a ridge model with a small penalty $z_{ref}=0.01$ \eqref{zref} that itself performs poorly out of sample. Yet, Theorem \ref{llg1} successfully extracts information about the underlying degree of predictability.

Overall, the evidence suggests that LLG provides a powerful, model-agnostic diagnostic for detecting genuine structure in predictive regressions, even in long samples marked by structural breaks. It offers researchers a principled tool for identifying promising forecasting relationships and guiding subsequent theoretical and empirical inquiry.

\section{Conclusion}

This paper demonstrates that the apparent weakness of predictability in financial data is often a statistical illusion. In high-dimensional environments, even state-of-the-art ML estimators face intrinsic limits: finite samples prevent them from recovering the true signal, causing conventional out-of-sample metrics to systematically understate population predictability. We formalize this observation by deriving the Limits-to-Learning Gap (LLG)—a universal, data-driven lower bound on the discrepancy between empirical and population fit. The LLG provides a sharp, model-agnostic correction to measured $R^2$, yields confidence bounds for population predictability, and quantitatively identifies when weak OOS performance reflects a lack of signal versus the inability of an estimator to learn it.

Empirically, we find that LLGs are large across a broad set of classical financial predictors, implying that true $R^2$ values are often many times higher than their raw estimates. These results overturn the conventional view that return predictability is negligible. They further show that nonlinear or more expressive models can, in several cases, recover economically meaningful predictive structure precisely when the LLG indicates that such structure must exist.

Beyond empirical applications, the LLG has implications for asset-pricing theory. By refining the \citet{hansen1991implications} bounds, our framework quantifies the role of parameter uncertainty in shaping equilibrium outcomes. In a simple general-equilibrium model, we show that high LLGs naturally generate excess volatility and slow learning, offering a rational benchmark for phenomena often attributed to behavioral biases or misperceptions. More broadly, the LLG sheds new light on why ML scaling laws exhibit sharply diminishing returns: when the limits-to-learning gap closes only slowly, additional data or computation cannot eliminate residual estimation error.

Taken together, these findings suggest a reorientation of empirical practice. Instead of evaluating predictability solely through realized OOS performance, researchers should assess whether the observed performance is informative about population fit at all. The LLG offers precisely this diagnostic. It clarifies when predictability is truly absent, when existing ML architectures are inadequate, and when economic signals are fundamentally obscured by finite-sample noise. As economic data continue to grow in dimensionality and complexity, incorporating limits-to-learning into empirical design, model evaluation, and asset-pricing theory will be essential. The framework introduced here provides a tractable and broadly applicable foundation for doing so.

\clearpage 

\begin{figure}
\centering
\includegraphics[width=1\linewidth]{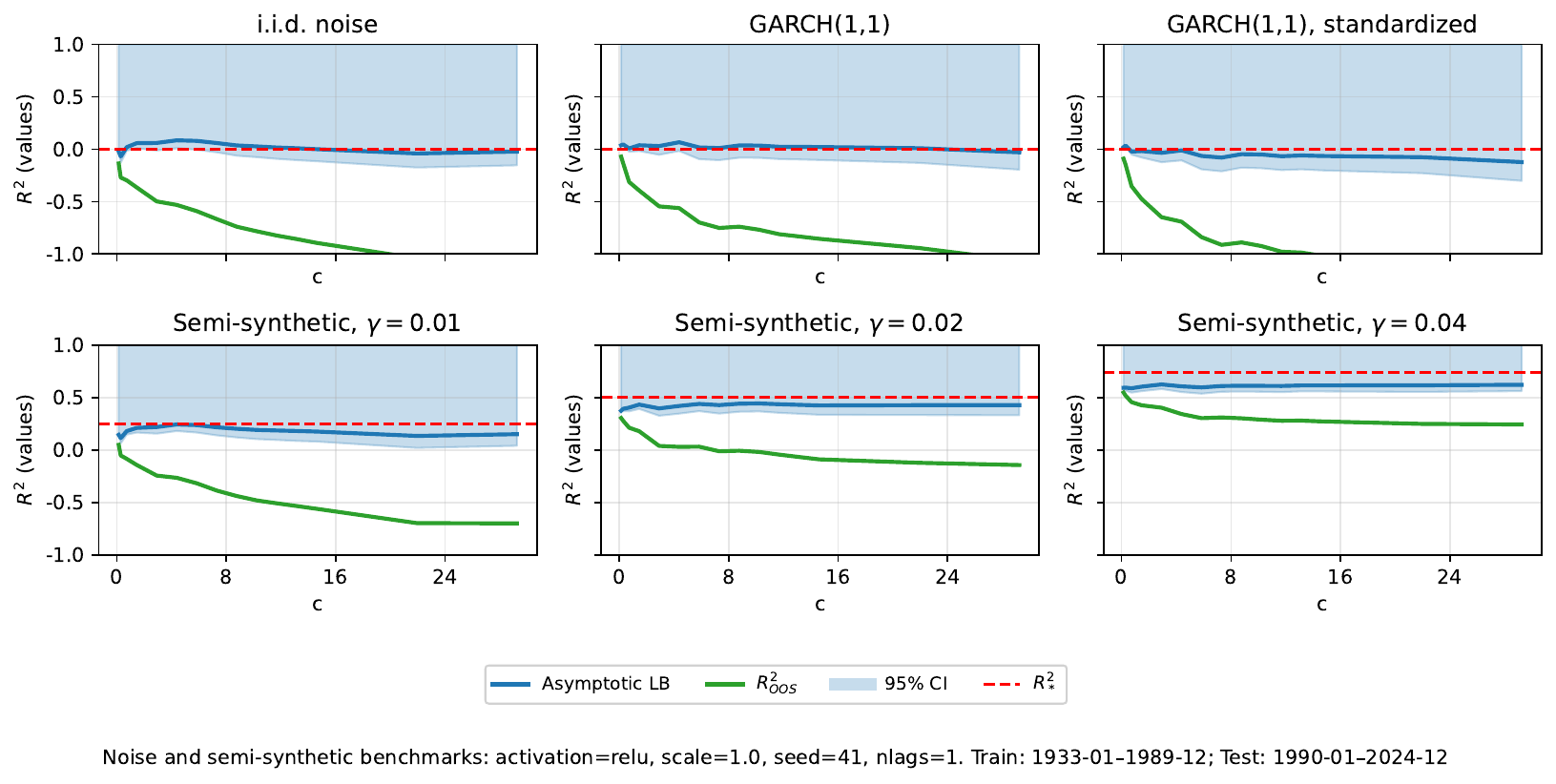}
\caption{Semi-synthetic simulation \eqref{semi-synth}, with activation=$\text{ReLu}$. In-sample period is 1933-01 to 1989-12; OOS period is 1990-01 to 2024-12. i.i.d. noise has $y_{t+1}=\eps_{t+1}\sim \cN(0,1)$. GARCH(1,1) has $y_{t+1}=\eps_{t+1}$ being a GARCH(1,1) noise defined in Appendix \ref{app:garch}. Asymptotic Lower Bound is given by \eqref{llg3-first}. The shaded region is the one-sided confidence band for $R^2_*.$ The lower bound of the shaded region is \eqref{low-conf}. Horizontal axis is statistical complexity $c=P_1/T,$ where $P_1$ is the number of random features \eqref{def:random-feat}, increasing from $P_1=100$ to $P_1=20000.$ 
$R^2_*$ is computed in \eqref{semi-r2}. Values of $R^2_{OOS}<-1$ are not shown. $\gamma$ values are selected to achieve $R^2_*$ of $0,\ 0.25,\ 0.5,\ 0.75$, respectively.}
\label{fig:semi1}
\end{figure}

\begin{figure}
\centering
\includegraphics[width=1\linewidth]{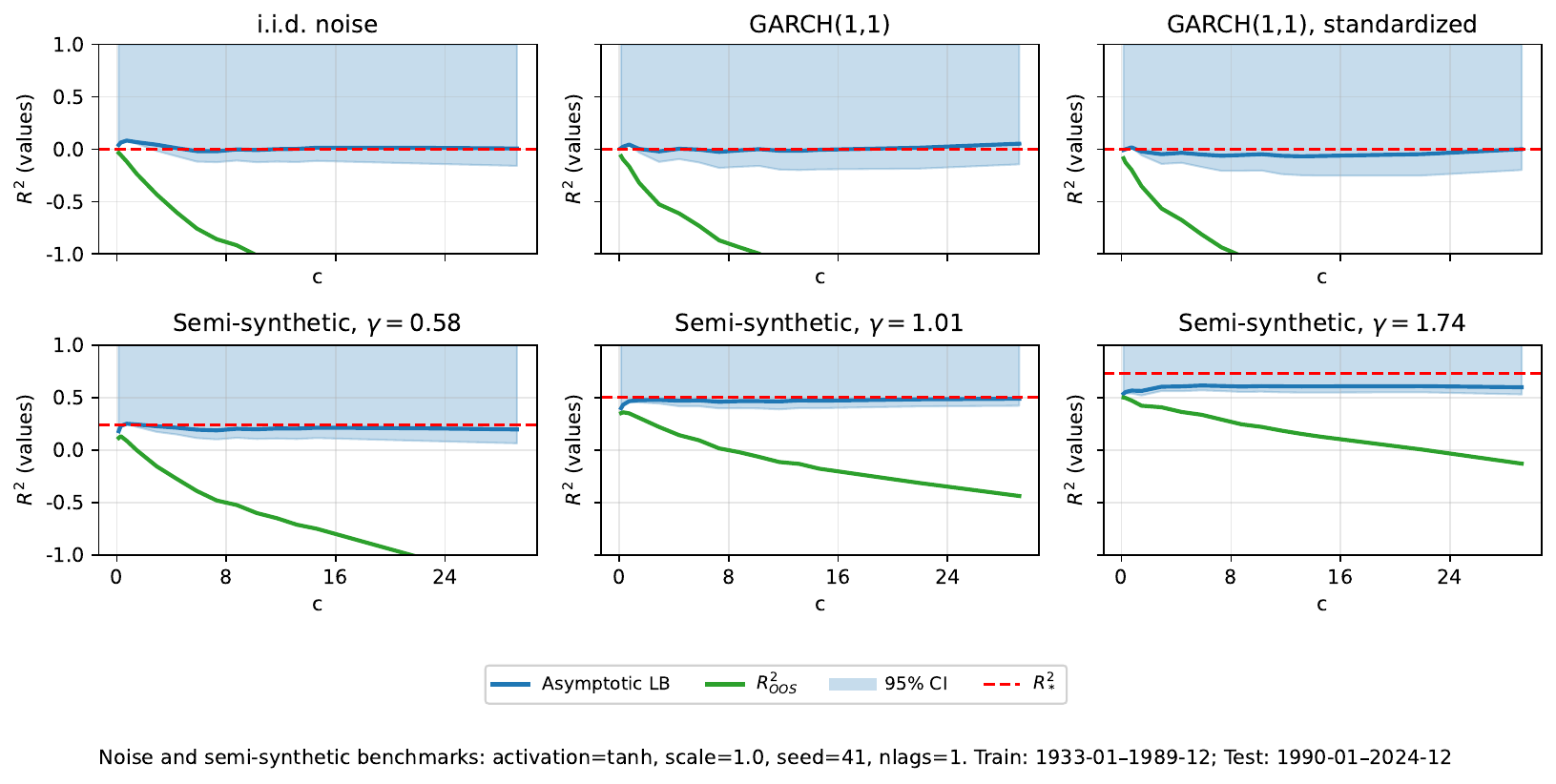}
\caption{Semi-synthetic simulation \eqref{semi-synth}, with activation=$\text{tanh}$. In-sample period is 1933-01 to 1989-12; OOS period is 1990-01 to 2024-12. i.i.d. noise has $y_{t+1}=\eps_{t+1}\sim \cN(0,1)$. GARCH(1,1) has $y_{t+1}=\eps_{t+1}$ being a GARCH(1,1) noise defined in Appendix \ref{app:garch}. Asymptotic Lower Bound is given by \eqref{llg3-first}. The shaded region is the one-sided confidence band for $R^2_*.$ The lower bound of the shaded region is \eqref{low-conf}. Horizontal axis is statistical complexity $c=P_1/T,$ where $P_1$ is the number of random features \eqref{def:random-feat}, increasing from $P_1=100$ to $P_1=20000.$ 
$R^2_*$ is computed in \eqref{semi-r2}. Values of $R^2_{OOS}<-1$ are not shown. $\gamma$ values are selected to achieve $R^2_*$ of $0,\ 0.25,\ 0.5,\ 0.75$, respectively.}
\label{fig:semi2}
\end{figure}

\begin{figure}
\centering
\includegraphics[width=1.\linewidth]{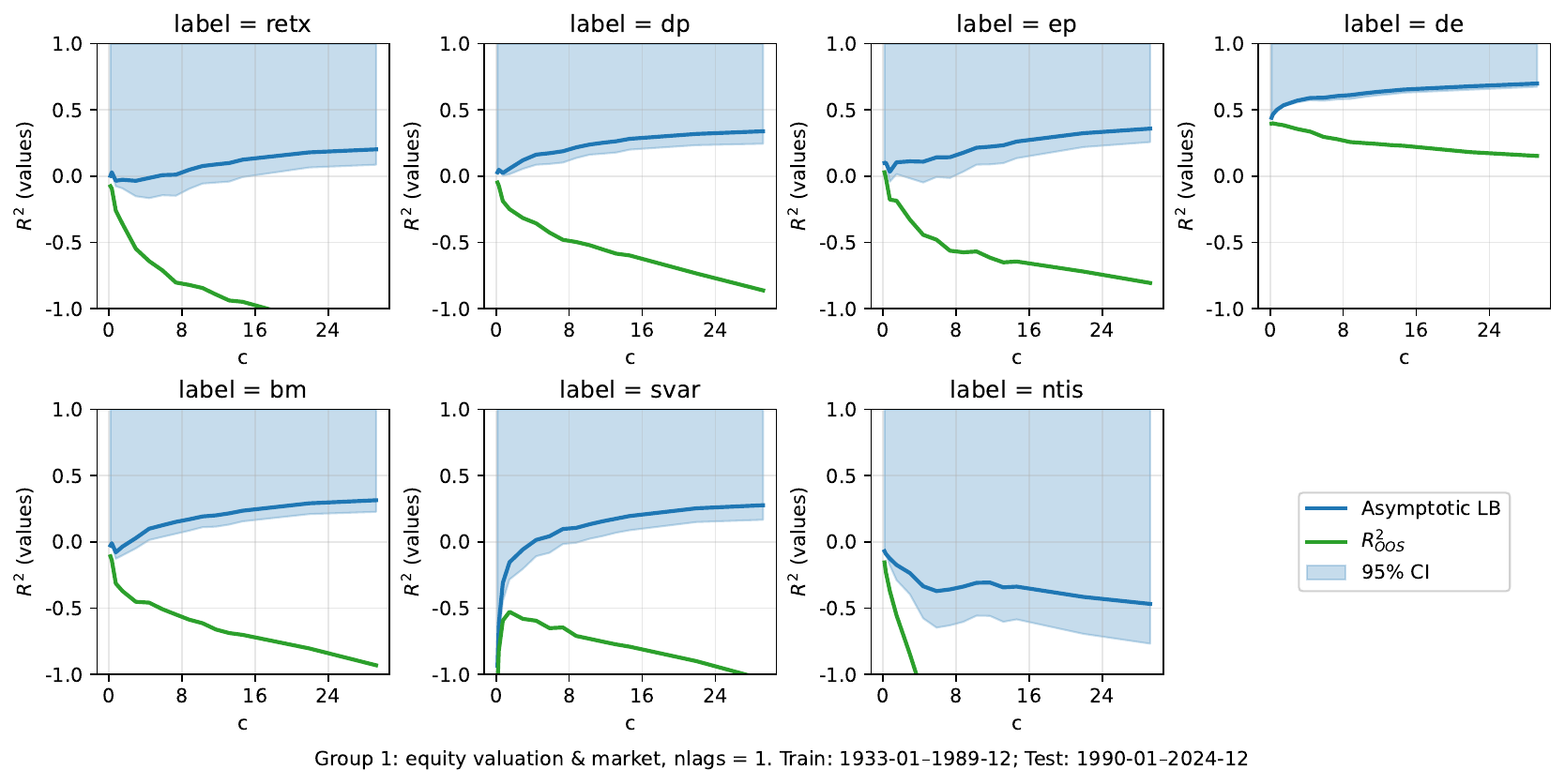}
\caption{Predicting \cite{welch2008comprehensive} variables from {\bf Group one} processed according to Procedure \ref{proc:construction}. Signals are the \cite{welch2008comprehensive} 14 variables and excess returns.  In-sample period is 1933-01 to 1989-12; OOS period is 1990-01 to 2024-12. Asymptotic Lower Bound is given by \eqref{llg3-first}. The shaded region is the one-sided confidence band for $R^2_*.$ The lower bound of the shaded region is \eqref{low-conf}. Horizontal axis is statistical complexity $c=P_1/T,$ where $P_1$ is the number of random features \eqref{def:random-feat} with {\bf activation=}$\text{tanh}$, increasing from $P_1=100$ to $P_1=20000.$ Values of $R^2_{OOS}<-1$ are not shown.}
\label{fig:g1-tanh}
\end{figure}

\begin{figure}
\centering
\includegraphics[width=1.\linewidth]{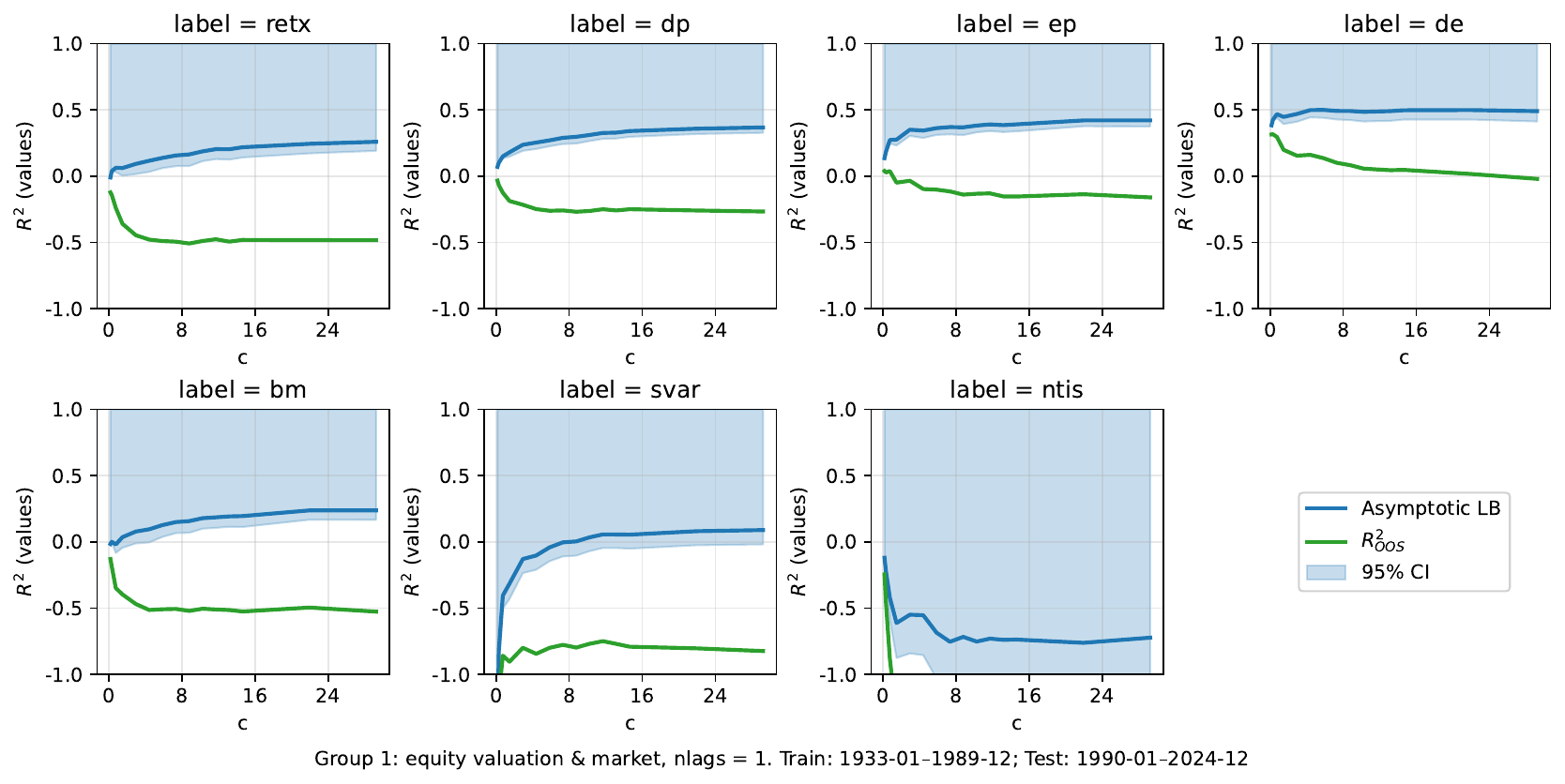}
\caption{Predicting \cite{welch2008comprehensive} variables from {\bf Group one} processed according to Procedure \ref{proc:construction}.  Signals are the \cite{welch2008comprehensive} 14 variables and excess returns.  In-sample period is 1933-01 to 1989-12; OOS period is 1990-01 to 2024-12. Asymptotic Lower Bound is given by \eqref{llg3-first}. The shaded region is the one-sided confidence band for $R^2_*.$ The lower bound of the shaded region is \eqref{low-conf}. Horizontal axis is statistical complexity $c=P_1/T,$ where $P_1$ is the number of random features \eqref{def:random-feat} with {\bf activation=}$\text{ReLu}$, increasing from $P_1=100$ to $P_1=20000.$ Values of $R^2_{OOS}<-1$ are not shown.}
\label{fig:g1-relu}
\end{figure}

\begin{figure}
\centering
\includegraphics[width=1.\linewidth]{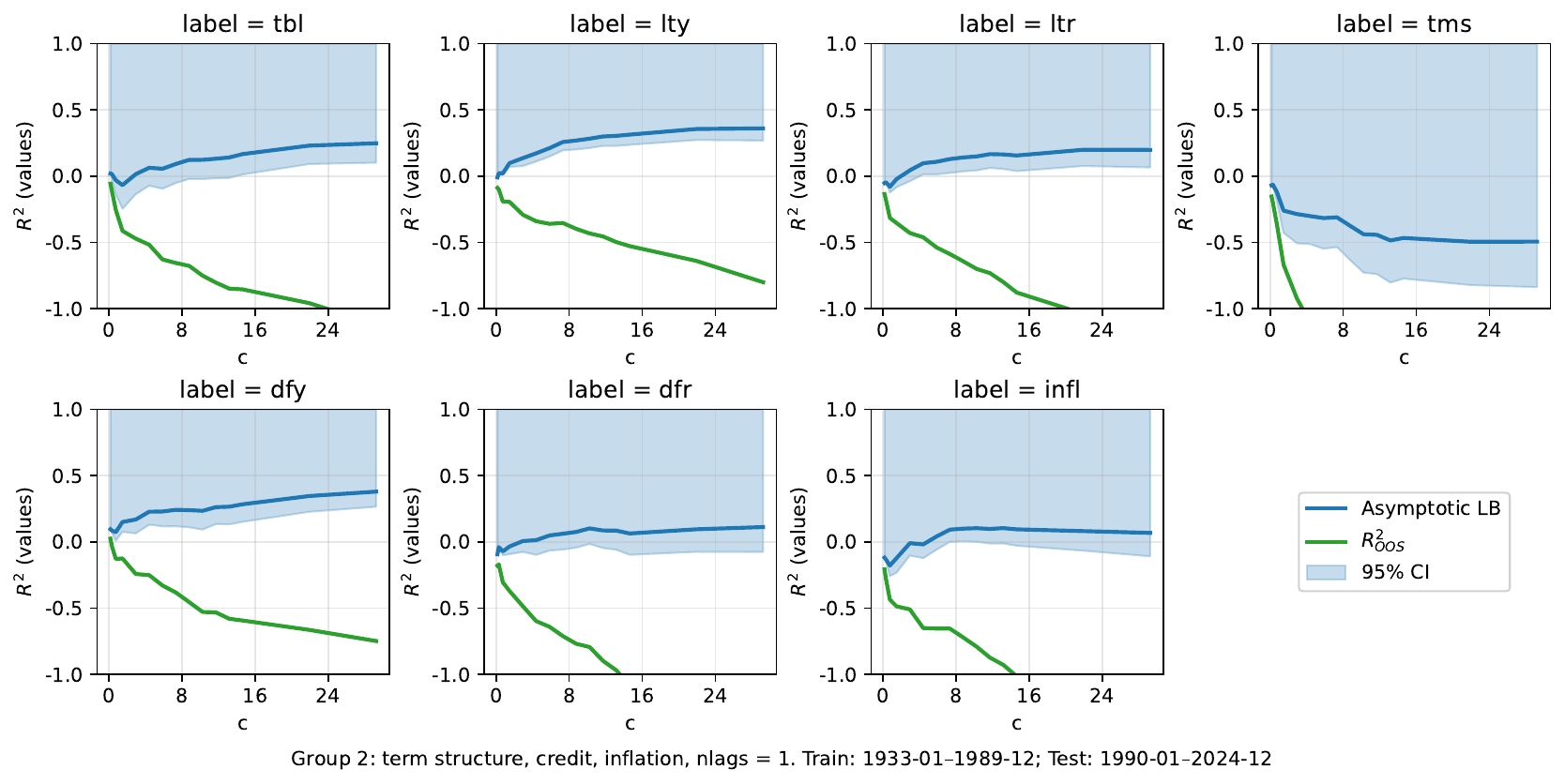}
\caption{Predicting \cite{welch2008comprehensive} variables from {\bf Group two} processed according to Procedure \ref{proc:construction}.  Signals are the \cite{welch2008comprehensive} 14 variables and excess returns.  In-sample period is 1933-01 to 1989-12; OOS period is 1990-01 to 2024-12. Asymptotic Lower Bound is given by \eqref{llg3-first}. The shaded region is the one-sided confidence band for $R^2_*.$ The lower bound of the shaded region is \eqref{low-conf}. Horizontal axis is statistical complexity $c=P_1/T,$ where $P_1$ is the number of random features \eqref{def:random-feat} with {\bf activation=}$\text{tanh}$, increasing from $P_1=100$ to $P_1=20000.$ Values of $R^2_{OOS}<-1$ are not shown.}
\label{fig:g2-tanh}
\end{figure}

\begin{figure}
\centering
\includegraphics[width=1.\linewidth]{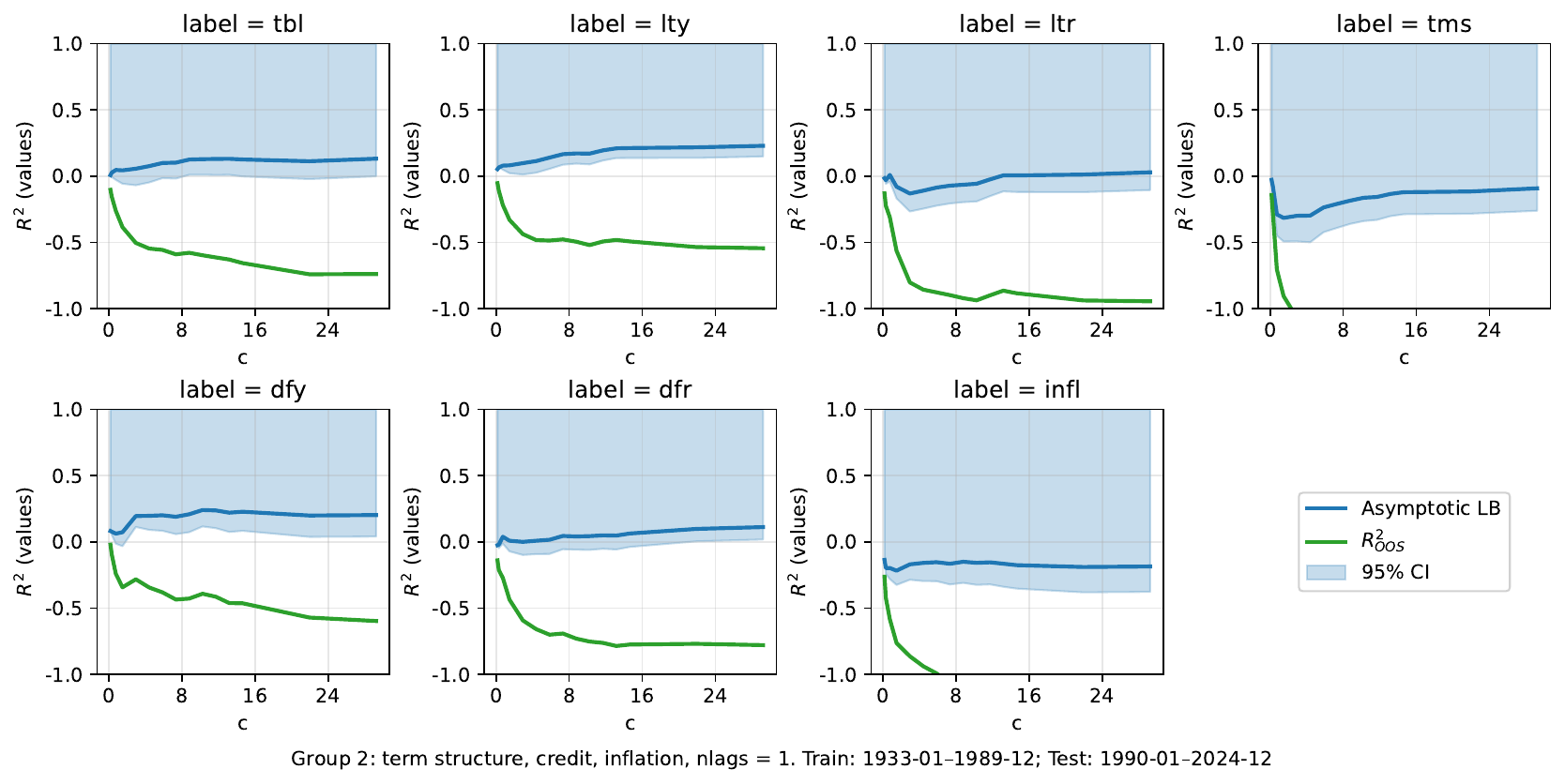}
\caption{Predicting \cite{welch2008comprehensive} variables from {\bf Group two} processed according to Procedure \ref{proc:construction}.  Signals are the \cite{welch2008comprehensive} 14 variables and excess returns.  In-sample period is 1933-01 to 1989-12; OOS period is 1990-01 to 2024-12. Asymptotic Lower Bound is given by \eqref{llg3-first}. The shaded region is the one-sided confidence band for $R^2_*.$ The lower bound of the shaded region is \eqref{low-conf}. Horizontal axis is statistical complexity $c=P_1/T,$ where $P_1$ is the number of random features \eqref{def:random-feat} with {\bf activation=}$\text{ReLu}$, increasing from $P_1=100$ to $P_1=20000.$ Values of $R^2_{OOS}<-1$ are not shown.}
\label{fig:g2-relu}
\end{figure}

\begin{table}[htbp]
\centering
\caption{95\% probability lower bound \eqref{low-conf} vs $R^2_{OOS}$ from best benchmarks}
\label{tab:lb_ridge_and_recursive}
\begin{tabular}{lcccc}
\toprule
& {tanh} 
& {ReLu} 
& \multicolumn{2}{c}{{Best}} \\
\cmidrule(lr){2-2}\cmidrule(lr){3-3}\cmidrule(lr){4-5}
{Label} 
& 95\% prob.\ bound 
& 95\% prob.\ bound 
& Ridge
& Recursive Ridge \\
\midrule
\texttt{retx} &  9\% & 19\% &  1\% &  2\% \\
\texttt{dp}   & 24\% & 33\% &   2\% &  2\% \\
\texttt{ep}   & 26\% & 38\% &   10\% & 15\% \\
\texttt{de}   & 67\% & 41\% &  42\% & 42\% \\
\texttt{bm}   & 23\% & 17\% &  0\% &  0\% \\
\texttt{svar} & 17\% &  0\% & 0\% &  0\% \\
\texttt{ntis} &  0\% &  0\% &  0\% &  1\% \\
\texttt{tbl}  & 10\% &  0\% &   5\% &  10\% \\
\texttt{lty}  & 27\% & 15\% &   1\% &  1\% \\
\texttt{ltr}  &  7\% &  0\% &   1\% &  1\% \\
\texttt{tms}  &  0\% &  0\% &   5\% &  8\% \\
\texttt{dfy}  & 27\% &  4\% &  13\% & 14\% \\
\texttt{dfr}  &  0\% &  2\% &  0\% &  3\% \\
\texttt{infl} &  0\% &  0\% & 0\% &  0\% \\
\bottomrule
\end{tabular}
\end{table}

\begin{table}[htbp]
\centering
\caption{Correlation matrix of 95\% probability lower bound \eqref{low-conf} and $R^2_{OOS}$ from best benchmarks}
\label{tab:correlation_lb_dnn}
\begin{tabular}{llcccc}
\toprule
&  & \multicolumn{1}{c}{tanh} & \multicolumn{1}{c}{ReLu}& \multicolumn{2}{c}{Best} \\
\cmidrule(lr){3-3} \cmidrule(lr){4-4} \cmidrule(lr){5-6}
&  & 95\% prob. bound & 95\% prob. bound & Ridge & Recursive Ridge\\
\midrule
\multirow{1}{*}{tanh}
& 95\% prob.\ bound   & 1.00 &        &        &        \\
\multirow{1}{*}{ReLu}
& 95\% prob.\ bound   & 0.76 & 1.00   &        &        \\
\multirow{2}{*}{Best}
& Ridge                     & 0.67 & 0.59   & 1.00   &        \\
& Recursive Ridge           & 0.80 & 0.57   & 0.86   & 1.00   \\
\bottomrule
\end{tabular}
\end{table}

\clearpage

\bibliographystyle{aer}
\bibliography{OLS}

\appendix 

\numberwithin{equation}{section}
\numberwithin{theorem}{section}
\numberwithin{lemma}{section}

\renewcommand{\thesubsection}{\thesection.\arabic{subsection}}

\renewcommand{\theequation}{\thesection.\arabic{equation}}
\renewcommand{\thetheorem}{\thesection.\arabic{theorem}}
\renewcommand{\theproposition}{\thesection.\arabic{proposition}}
\renewcommand{\thelemma}{\thesection.\arabic{lemma}}

\newpage

\section{Preliminaries on Random Matrix Theory }
\subsection{Concentration of Quadratic Forms}

Our key assumption in the main text is that the residuals, $\eps_t,$ are i.i.d. Hence, for any matrix $A$ independent of $\eps,$ we have 
\begin{equation}
\begin{aligned}
\frac1T\eps'A \eps\ &=\ \frac1T\sum_{t_1,t_2}\eps_{t_1}\eps_{t_2} A_{t_1,t_2}\\ 
&=\ \underbrace{\frac1T\sum_{t=1}^T \eps_{t}^2A_{t,t}}_{\approx \frac1T\gs_{\eps}^2\sum_{t=1}^T A_{t,t}\ because\ E[\eps_{t}^2]=\gs_{\eps}^2} +\ \underbrace{\frac1T\sum_{t_1\not=t_2}\eps_{t_1}\eps_{t_2}A_{t_1,t_2}}_{\approx 0\ because\ E[\eps_{t_1}\eps_{t_2}]=0}\\
&\approx\ \frac1T\gs_{\eps}^2\sum_{i=1}^N A_{t,t}\\
&=\ \frac1T\gs_{\eps}^2\tr(A)\,. 
\end{aligned}
\end{equation}
Many proofs in this paper are based on the classic ``concentration of quadratic forms" lemma that makes the above argument rigorous. 

\begin{lemma}\label{lem-quad} Suppose that $u=(u_i)_{i=1}^P$ where $u_i$ are i.i.d., with $E[u_i]=0,\ E[u_i^2]=\gs^2,\ E[u_i^4]<\infty.$ Suppose also $A_P$ is a sequence of symmetric random matrices that is independent of $u$. Then, 
\begin{equation}
\begin{aligned}
E[(\frac1P u'A_P u)^2-(P^{-1}\gs^2\tr(A_P))^2]\ =\ & E[(\frac1P u'A_P u-(P^{-1}\gs^2\tr(A_P)))^2]\\
 \le\ & (E[u_i^4]-\gs^4)P^{-2}E[\tr(A_P^2)]
\end{aligned}
\end{equation}
In $E[u_i^4]=3\gs^2,$ then this identity is exact: 
\begin{equation}
\begin{aligned}
E[(\frac1P u'A_P u)^2-(P^{-1}\gs^2\tr(A_P))^2]\ =\ & E[(\frac1P u'A_P u-(P^{-1}\gs^2\tr(A_P)))^2]\\
 =\ & 2\gs^4 P^{-2}E[\tr(A_P^2)]\,.
\end{aligned}
\end{equation}
If $\lim P^{-2}E[\tr(A_P^2)]=0,$ then  
\begin{equation}
\frac1P u'A_P u\ -\ P^{-1}\gs^2\tr(A_P)\ \to\ 0
\end{equation}
in $L_2$ and, hence, in probability. 
\end{lemma}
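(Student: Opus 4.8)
The plan is to condition on $A_P$ throughout and reduce everything to a computation of the first two conditional moments of the quadratic form $X=\frac1P u'A_P u$, exploiting that $u$ and $A_P$ are independent. First I would establish the first displayed equality, which is just the variance decomposition. Since the $u_i$ are i.i.d.\ with mean zero and $E[u_i^2]=\gs^2$, the only surviving terms in $E[u'A_P u\mid A_P]=\sum_{i,j}E[u_iu_j]\,(A_P)_{ij}$ are the diagonal ones, giving $E[X\mid A_P]=P^{-1}\gs^2\tr(A_P)=:m$. Because $m$ is $A_P$-measurable, the tower property yields $E[Xm]=E[m^2]$, whence $E[(X-m)^2]=E[X^2]-E[m^2]=E[X^2-m^2]$.

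The heart of the argument is the conditional second moment. Writing $X-m=\frac1P\bigl(\sum_i (u_i^2-\gs^2)(A_P)_{ii}+\sum_{i\neq j}u_iu_j(A_P)_{ij}\bigr)$, I would compute $E[(X-m)^2\mid A_P]$ by expanding the square and using independence to kill every term whose index pattern leaves an unpaired factor $u_i$. Three contributions remain: the diagonal part contributes $(E[u_i^4]-\gs^4)\sum_i (A_P)_{ii}^2$; the off-diagonal part, via the two pairings $i=k,j=l$ and $i=l,j=k$ together with the symmetry of $A_P$, contributes $2\gs^4\sum_{i\neq j}(A_P)_{ij}^2$; and the diagonal--off-diagonal cross term vanishes because it always leaves an odd power of some $u_i$. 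This produces the exact conditional identity $E[(X-m)^2\mid A_P]=P^{-2}\bigl[(E[u_i^4]-\gs^4)\sum_i(A_P)_{ii}^2+2\gs^4\sum_{i\neq j}(A_P)_{ij}^2\bigr]$, equivalently $P^{-2}\bigl[(E[u_i^4]-3\gs^4)\sum_i(A_P)_{ii}^2+2\gs^4\tr(A_P^2)\bigr]$ after substituting $\tr(A_P^2)=\sum_i(A_P)_{ii}^2+\sum_{i\neq j}(A_P)_{ij}^2$.

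The stated bound then follows from this identity. The elementary inequality $\sum_i (A_P)_{ii}^2\le \tr(A_P^2)$ upgrades the diagonal coefficient, so that $(E[u_i^4]-3\gs^4)\sum_i(A_P)_{ii}^2+2\gs^4\tr(A_P^2)\le (E[u_i^4]-\gs^4)\tr(A_P^2)$; taking expectation over $A_P$ via the tower property delivers the claimed inequality. When $E[u_i^4]=3\gs^4$ the diagonal term drops out entirely and the identity collapses to the exact value $2\gs^4P^{-2}\tr(A_P^2)$, yielding the stated equality after taking expectations. Finally, under $\lim P^{-2}E[\tr(A_P^2)]=0$ the bound forces $E[(X-m)^2]\to 0$, i.e.\ $L_2$ convergence, and convergence in probability follows immediately from Markov's inequality applied to $(X-m)^2$.

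The main obstacle is the fourth-moment bookkeeping in the conditional second moment: correctly enumerating which index patterns in $E[u_iu_ju_ku_l]$ survive, cleanly separating diagonal from off-diagonal contributions, and tracking the symmetry of $A_P$ across the two surviving off-diagonal pairings. A secondary subtlety worth flagging is that the clean bound by $(E[u_i^4]-\gs^4)\tr(A_P^2)$ uses $\sum_i(A_P)_{ii}^2\le\tr(A_P^2)$ in the direction that requires the excess-kurtosis factor $E[u_i^4]-3\gs^4$ to be nonnegative; for the Gaussian-kurtosis case invoked throughout the paper this holds with equality and the exact identity applies.
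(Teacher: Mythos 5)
Your proof is correct, and it is essentially the canonical argument: the paper states Lemma \ref{lem-quad} without proof (invoking it as the classic ``concentration of quadratic forms'' result), and your conditional second-moment expansion is precisely the rigorous version of the informal diagonal/off-diagonal sketch that precedes the lemma in the appendix. The variance decomposition, the index bookkeeping (including the vanishing of the diagonal--off-diagonal cross terms against $E[u_l]=0$, so no third-moment assumption is needed), and the Markov/Chebyshev step for convergence in probability are all sound.

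The subtlety you flag at the end is a genuine defect of the lemma as stated, not of your argument, and you have diagnosed it exactly. Your identity $E[(X-m)^2\mid A_P]=P^{-2}\bigl[(E[u_i^4]-\sigma^4)\sum_i (A_P)_{ii}^2+2\sigma^4\sum_{i\neq j}(A_P)_{ij}^2\bigr]$ shows that the stated bound with constant $E[u_i^4]-\sigma^4$ holds if and only if the excess kurtosis $E[u_i^4]-3\sigma^4$ is nonnegative: for Rademacher $u_i$ (so $E[u_i^4]=\sigma^4$) and any $A_P$ with a nonzero off-diagonal entry, the claimed upper bound is $0$ while the true variance is $2P^{-2}\sum_{i\neq j}(A_P)_{ij}^2>0$. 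The universally valid version replaces the constant by $\max\bigl(E[u_i^4]-\sigma^4,\,2\sigma^4\bigr)$, which follows immediately from your identity since both sums are nonnegative and add up to $\tr(A_P^2)$. This repair is harmless for the paper: the lemma is invoked either with an unspecified constant $C$ (proof of Theorem \ref{llg1}) or under the assumption $E[\eps_t^4]=3\sigma_\eps^4$ (the CLT results), and the $L_2$-convergence conclusion under $\lim P^{-2}E[\tr(A_P^2)]=0$ is unaffected. Finally, note the statement's typo: ``In $E[u_i^4]=3\gs^2$'' should read ``If $E[u_i^4]=3\sigma^4$'', which is the Gaussian-kurtosis case where your identity is exact.
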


\subsection{Spectral Concentration for Sub-Gaussian Designs}

In this section we collect a concentration result for empirical feature
covariance matrices generated by high-dimensional sub-Gaussian designs.
This lemma is used repeatedly in later proofs to control higher-order
spectral terms.

A real-valued random variable $Z$ is called sub-Gaussian if there exists a
finite constant $K>0$ such that $\mathbb{E}\bigl[e^{Z^2/K^2}\bigr] \le 2.$
We then write $\|Z\|_{\psi_2}\le K$. A random vector $x\in\mathbb{R}^P$ is called
sub-Gaussian with parameter $K$ if every one-dimensional projection is
sub-Gaussian, that is, $\sup_{u\in\mathbb{S}^{P-1}} \|\langle x,u\rangle\|_{\psi_2} \le K.$ This implies in particular that all coordinates of $x$ are sub-Gaussian with
parameter bounded by a constant multiple of $K$.

Let $T,P\in\mathbb{N}$ and let $S\in\mathbb{R}^{T\times P}$ denote a random matrix whose rows $s_t'\in\mathbb{R}^P$ (the feature vectors at time $t$)
satisfy the following assumptions. We define the empirical feature covariance as $\hat{\Psi} = \dfrac{1}{T}S'S \in \mathbb{R}^{P\times P}.$
\begin{itemize}
  \item[(A1)] The rows $(s_t)_{t=1}^T$ are independent.
  \item[(A2)] Each row is isotropic: $\mathbb{E}[s_t]=0$ and
  $\mathbb{E}[s_t s_t'] = I_P$.
  \item[(A3)] Each $s_t$ is sub-Gaussian with parameter $K$ in the above sense.
\end{itemize}
\begin{lemma}[Spectral concentration for sub-Gaussian designs]\label{lem:SubG-spectral}
Under {\rm(A1)--(A3)}, there exist constants $a>0$ and $C_\Psi>0$, depending only on $K$,
such that for all $t\ge 0$,
\begin{equation}
\mathbb{P}\Bigl(\bigl\|\hat{\Psi}-I_P\bigr\| >
C_\Psi\Bigl(\sqrt{P/T} + t/\sqrt{T}\Bigr)\Bigr)
\;\le\; 2 e^{-a t^2}.
\label{eq:SubG-spectral}
\end{equation}
Consequently, for any fixed integer $k\ge 1$, there exists $C_k<\infty$
depending only on $k$ and $K$ such that
\begin{equation}
\mathbb{E}\Bigl[\bigl\|S'S\bigr\|^{k}\Bigr]
\;\le\; C_k T^k,
\label{eq:SubG-moment}
\end{equation}
and hence
\begin{equation}
\mathbb{E}\bigl[\|\hat\Psi\|^k\bigr]
= \mathbb{E}\Bigl[\Bigl\|\frac{1}{T}S'S\Bigr\|^k\Bigr]
\;\le\; C_k.
\end{equation}
\end{lemma}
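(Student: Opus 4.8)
The plan is to prove the tail bound \eqref{eq:SubG-spectral} by the classical $\epsilon$-net plus Bernstein argument for isotropic sub-Gaussian covariance estimation (as in the covariance-concentration results of the high-dimensional probability literature, e.g.\ Vershynin's matrix concentration framework), and then deduce the moment bounds \eqref{eq:SubG-moment} by integrating the resulting sub-Gaussian tail. First I would reduce the operator norm to a supremum over a net: since $\hat\Psi - I_P$ is symmetric, $\|\hat\Psi - I_P\| = \sup_{u\in\mathbb{S}^{P-1}}|u'(\hat\Psi-I_P)u|$, and the standard discretization lemma gives $\|\hat\Psi - I_P\|\le 2\max_{u\in\mathcal{N}}|u'(\hat\Psi-I_P)u|$ for any $1/4$-net $\mathcal{N}$ of the sphere, with $|\mathcal{N}|\le 9^P$. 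For a fixed unit vector $u$ one writes $u'(\hat\Psi - I_P)u = \frac1T\sum_{t=1}^T(\langle s_t,u\rangle^2 - 1)$; by (A2) each summand is centered, and by (A3) the scalar $\langle s_t,u\rangle$ is sub-Gaussian with norm $\le K$, so $\langle s_t,u\rangle^2 - 1$ is a centered sub-exponential variable with norm $O(K^2)$. Bernstein's inequality then yields $\mathbb{P}(|u'(\hat\Psi - I_P)u|>\tau)\le 2\exp\bigl(-cT\min(\tau^2/K^4,\tau/K^2)\bigr)$ for an absolute constant $c$.

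Next I would take a union bound over $\mathcal{N}$. Combining the entropy factor $e^{P\log 9}$ with the per-vector bound gives $\mathbb{P}(\|\hat\Psi - I_P\|>2\tau)\le 2\exp\bigl(P\log 9 - cT\min(\tau^2/K^4,\tau/K^2)\bigr)$. Choosing $\tau = C_\Psi K^2(\sqrt{P/T}+t/\sqrt{T})$ with $C_\Psi$ large enough to dominate $\log 9$ places us (in the paper's complexity regime where $P/T$ is bounded, $P/T\to c$) in the sub-Gaussian branch of the minimum, so the exponent becomes $\le -a t^2$. After relabeling $2\tau\mapsto\tau$ and absorbing $K^2$ into $C_\Psi$, this is exactly \eqref{eq:SubG-spectral}.

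For the moment bounds I would integrate the tail. Since $\|\hat\Psi\|\le 1 + \|\hat\Psi - I_P\|$, it suffices to control $\mathbb{E}[\|\hat\Psi - I_P\|^k]$. Writing $\mathbb{E}[\|\hat\Psi - I_P\|^k]=\int_0^\infty k r^{k-1}\,\mathbb{P}(\|\hat\Psi - I_P\|>r)\,dr$ and inserting \eqref{eq:SubG-spectral}, the contribution from $r\le 2C_\Psi\sqrt{P/T}$ is at most $(2C_\Psi\sqrt{P/T})^k$, while the Gaussian tail beyond that scale contributes $O(T^{-k/2})$ after the substitution $v=r\sqrt{T}$; both are bounded by a constant depending only on $k$, $K$, and the upper bound on $P/T$. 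Hence $\mathbb{E}[\|\hat\Psi\|^k]\le 2^{k-1}(1+\mathbb{E}[\|\hat\Psi - I_P\|^k])\le C_k$, and multiplying through by $T^k$ using $\|S'S\|=T\|\hat\Psi\|$ gives $\mathbb{E}[\|S'S\|^k]\le C_k T^k$.

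The main obstacle is the interplay between the entropy factor $e^{P\log 9}$ coming from the net and the concentration exponent from Bernstein: one must verify that the sub-Gaussian branch $\tau^2/K^4$ (rather than the heavier sub-exponential branch $\tau/K^2$) governs the relevant range of $\tau$, which is precisely what forces the boundedness of $P/T$ to enter the constant $C_k$. Making $C_\Psi$ explicit in terms of $K$ and the absolute Bernstein constant, while tracking which branch of the minimum is active across the integration, is the delicate bookkeeping step; everything else is routine.
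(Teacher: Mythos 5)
Your overall route matches the paper's: the paper establishes \eqref{eq:SubG-spectral} by invoking Theorem 4.6.1 of Vershynin (2018) as a black box and then obtains \eqref{eq:SubG-moment} by exactly the tail integration you describe (split the integral at a scale of order $C_\Psi T$, trivial bound below, concentration bound above, change of variables in the tail). What you do differently is re-derive the covariance concentration from scratch via the $1/4$-net, Bernstein's inequality, and a union bound; since that is precisely the standard proof of the cited theorem, this buys self-containedness but no genuinely new idea.

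There is, however, one concrete flaw in your branch bookkeeping. You claim that choosing $C_\Psi$ large enough places you in the sub-Gaussian branch of Bernstein's minimum for \emph{all} $t\ge 0$ because $P/T$ is bounded. That fails for $t\gtrsim \sqrt{T}$: then $\tau = C_\Psi K^2\bigl(\sqrt{P/T}+t/\sqrt{T}\bigr)\gtrsim 1$, the sub-exponential branch is active, and the union bound yields an exponent of order $-c\,t\sqrt{T}$, not $-a t^2$. Indeed, at the linear level $\delta=\sqrt{P/T}+t/\sqrt{T}$ the tail genuinely decays only like $e^{-c\,t\sqrt{T}}$ for large $t$ (take $P=1$ with Gaussian entries: $\mathbb{P}(\hat\Psi-1>u)\asymp e^{-cTu}$ for $u\ge 1$), so the tail bound can only be justified either for $t\lesssim\sqrt{T}$ or with the level $C_\Psi\max(\delta,\delta^2)$ --- which is the two-branch form Vershynin actually proves, and the form the paper's own proof uses when it inverts the tail for $s>C_\Psi T$ by solving $\delta=\sqrt{s/C_\Psi}$ in the $\delta\ge 1$ regime. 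Your moment bounds are unaffected by this: for $r\ge 1$ the sub-exponential tail $e^{-cTr}$ is even more integrable than the Gaussian tail you substituted, so the contribution beyond the split point remains bounded (in fact exponentially small), and $\mathbb{E}[\|\hat\Psi\|^k]\le C_k$ and $\mathbb{E}[\|S'S\|^k]\le C_k T^k$ follow as you say. To complete the argument, restate the deviation inequality in the $\max(\delta,\delta^2)$ form (or restrict the displayed bound to $t\le \sqrt{T}$) and track which branch is active across the integration, as the paper does.
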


\begin{proof}[Proof of Lemma \ref{lem:SubG-spectral}] By (A1)--(A3), the rows $s_t\in\mathbb{R}^P$ are independent, mean-zero,
isotropic, sub-Gaussian with $\|s_t\|_{\psi_2}\le K$. Since $S$ is a $T\times P$ matrix with independent isotropic sub-Gaussian rows,
by Theorem~4.6.1 of \cite{vershynin2018high}, there exist constant $C_\Psi>0$, depending only on $K$, such that for all $t\ge 0$,
\begin{equation}\label{eq:versh-46-rows}
\mathbb{P}\!\left(
\left\| \frac{1}{T} S'S - I_P \right\|
> C_\Psi \max\!\left( \delta,\delta^2 \right)
\right)
\;\le\; 2 e^{- t^2},
\qquad
\delta = \sqrt{\frac{P}{T}} + \frac{t}{\sqrt{T}} .
\end{equation}
Our first goal is to convert \eqref{eq:versh-46-rows} into a deviation
inequality stated directly in the level~$s$.  Observe that if $\Bigl\|\dfrac1T S'S - I_P\Bigr\|>s,$ then necessarily $s>C_\Psi\max(\delta,\delta^2)$.
For $s>C_\Psi$, this forces $\delta\ge 1$, hence $\delta = \sqrt{\dfrac{s}{C_\Psi}}$ and $    t
=
\sqrt{T}\Bigl(
\sqrt{\dfrac{u}{C_\Psi}}
-
\sqrt{\dfrac{P}{T}}
\Bigr). $ Since $\|SS'\|\ge 0$, we
may use the standard representation
\begin{equation}
\mathbb{E}[Z^k] = \int_0^\infty k s^{k-1}\,\mathbb{P}(Z>s)\,ds,
\qquad Z\ge 0.    
\end{equation}
Let $Z = \|XX'\|$. For $0\le s\le C_\Psi T$ (with $C_\Psi$ depending only on $K$), the
trivial bound $\mathbb{P}(Z>s)\le 1$ implies that
\begin{equation}\label{eq: part1}
\int_0^{C_\Psi T} k s^{k-1}\,\mathbb{P}(Z>s)\,ds
\;\le\; \int_0^{C_\Psi T} k s^{k-1}\,ds
\;=\; C^k_\Psi T^k,    
\end{equation}
which is of order $T^k$. For $s>C_\Psi T$, the above concentration bound gives
\begin{equation}
\mathbb{P}(Z>s)
\;=\; \mathbb{P}\Bigl(\Bigl\|\frac{1}{T}SS'\Bigr\|>s/T\Bigr)
\;\le\; 2\exp\bigl[-T\,(s/T-C_\Psi)^2\bigr]
\;=\; 2\exp\bigl[-T^{-1}(s-C_\Psi T)^2\bigr].    
\end{equation}
Hence the tail contribution satisfies
\begin{equation}
\int_{C_\Psi T}^\infty k s^{k-1}\,\mathbb{P}(Z>s)\,ds
\;\le\; 2k\int_{C_\Psi T}^\infty s^{k-1} \exp\bigl[-T^{-1}(s-C_\Psi T)^2\bigr]\,ds.    
\end{equation}
With the change of variables $u = (s-C_\Psi T)/\sqrt{T}$, so that
$s = C_\Psi T + \sqrt{T}\,u$ and $ds=\sqrt{T}\,du$, we obtain
\begin{equation}\label{eq: part2}
\begin{aligned}
\int_{C_\Psi T}^\infty s^{k-1} \exp\bigl[-T^{-1}(s-C_\Psi P)^2\bigr]\,ds
=\ & \sqrt{T}\int_0^\infty (C_\Psi T+\sqrt{T}\,u)^{k-1} e^{- u^2}\,du\\
=\ & T^{k - \frac12} \int_0^\infty (C_\Psi+ \frac{u}{\sqrt{T}})^{k-1} e^{- u^2}\,du\\
\le\ & C^*_\Psi T^{k - \frac12}
\end{aligned}
\end{equation}
with the constant $C^*_\Psi$ depending
only on $k$ and $K$. The inequality comes from the fact that Gaussian decay term shrinks much faster than any polynomial grows. Combining the two parts of the integral \eqref{eq: part1} and \eqref{eq: part2}, we arrive at
\begin{equation}
\mathbb{E}\bigl[\|SS'\|^{k}\bigr]
\;\le\; C_k T^k,  \qquad  \mathbb{E}\bigl[\|\hat\Psi\|^k\bigr]
= \mathbb{E}\Bigl[\Bigl\|\frac{1}{T}S'S\Bigr\|^k\Bigr]
\;\le\; C_k.
\end{equation}
for some finite constant $C_k$ depending only on $k$ and $K$. The proof of Lemma \ref{lem:SubG-spectral} is complete.
\end{proof}

\subsection{Main RMT Asymptotic Results}

We will need the Stieltjes transform and its derivative:
\begin{equation}
\begin{aligned}
&m(-z)\ =\ \lim_{P\to\infty} P^{-1}\tr((zI+\Psi_P)^{-1})\\
&m'_\Psi(-z)\ =\ \lim_{P\to\infty} P^{-1}\tr((zI+\Psi_P)^{-2})
\end{aligned}
\end{equation}
and 
\begin{equation}
\begin{aligned}
&\hat m(-z)\ =\ P^{-1}\tr((zI+\hat\Psi)^{-1})\\
&\hat m'(-z)\ =\ P^{-1}\tr((zI+\hat\Psi)^{-2})
\end{aligned}
\end{equation}
Let also $m(-z;c)$ be the unique, positive solution to the fixed point equation\footnote{See, for example, \cite{chernov2025test} for a proof that this equation indeed has a unique solution.}
\begin{equation}\label{fixed-point-master-app}
m(-z;c)\ =\ \frac1{1-c+cz m(-z;c)}m_{\Psi}\left(
\frac{-z}{1-c+cz m(-z;c)}
\right)\,. 
\end{equation}

We will now need the following result from \cite{kelly2024virtue}. 

\begin{proposition}\label{prop:exptrace} We have 
\begin{equation}\label{xi-def}
\lim_{T\to\infty}\frac{1}{T}\tr ((zI+\hat\Psi)^{-1}\Psi)\ \to\ \xi(z;c)\,
\end{equation}
almost surely 
and 
\begin{equation}\label{xi-def-q}
\lim_{T\to\infty}\frac1{T}S_{T+1}'(zI+\hat\Psi)^{-1}S_{T+1}\ \to\ \xi(z;c)\,
\end{equation}
in probability, where 
\begin{equation}
\xi(z;c) = \frac{1-zm(-z;c)}{c^{-1}-1+zm(-z;c)}.
\end{equation}
Similarly, 
\begin{equation}\label{xi-def1}
\lim_{T\to\infty}\frac1{T}\tr ((zI+\hat\Psi)^{-2}\Psi)\ \to\ -\xi'(z;c)\,
\end{equation}
almost surely, where 
\begin{equation}
\begin{aligned}
\xi'(z;c) &= \frac{d}{dz}\left(\frac{1-zm(-z;c)}{c^{-1}-1+zm(-z;c)}\right)\\
&=\ \frac{d}{dz}\left(-1+\frac{1}{1-c+czm(-z;c)}\right)\\
&=\ -\ \frac{c\Bigr(m(-z;c)-zm'(-z;c)\Bigr)}{\Bigr(1-c+czm(-z;c)\Bigr)^2}
\end{aligned}
\end{equation}
\end{proposition}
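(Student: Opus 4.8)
The plan is to derive all three limits from a single \emph{deterministic equivalent} for the resolvent $Q(z)=(zI+\hat\Psi)^{-1}\in\R^{P\times P}$, and then read off the stated formulas by algebra. Throughout I use the structural assumption under which the proposition is applied, $\hat\Psi=\frac1T S'S=\Psi^{1/2}\bigl(\frac1T X'X\bigr)\Psi^{1/2}$ with $X=(X_t)$ having i.i.d.\ isotropic rows, so that Lemma~\ref{lem:SubG-spectral} applied to $\frac1T X'X$ together with $\|\Psi\|=O(1)$ supplies the moment bounds $\|Q\|\le z^{-1}$ (deterministically) and $E[\|\hat\Psi\|^k]\le C_k$ needed below.

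\textbf{Step 1 (deterministic equivalent via leave-one-out).} Let $Q_{(t)}=(zI+\hat\Psi-\frac1T S_tS_t')^{-1}$, which is independent of $S_t$. Sherman--Morrison gives $QS_t=Q_{(t)}S_t/(1+\frac1T S_t'Q_{(t)}S_t)$, hence
\[
zQ \;=\; I-\hat\Psi Q \;=\; I-\frac1T\sum_{t}\frac{S_tS_t'Q_{(t)}}{1+\frac1T S_t'Q_{(t)}S_t}.
\]
Since $S_t=\Psi^{1/2}X_t$ is independent of $Q_{(t)}$, the concentration-of-quadratic-forms Lemma~\ref{lem-quad} yields $\frac1T S_t'Q_{(t)}S_t-\frac1T\tr(\Psi Q_{(t)})\to0$, while the rank-one resolvent bound $|\tr(\Psi(Q-Q_{(t)}))|\le\|\Psi\|/z$ shows $\frac1T\tr(\Psi Q_{(t)})-\frac1T\tr(\Psi Q)=O(1/T)$. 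Writing $\delta(z):=\frac1T\tr(\Psi Q)$ and replacing the random scalars by $\delta$ and $S_tS_t'$ by $\Psi$ in the deterministic-equivalent sense, the display collapses to $z\bar Q=I-\frac{\Psi\bar Q}{1+\delta}$, i.e.
\[
\bar Q(z)=\Bigl(zI+\tfrac{1}{1+\delta}\,\Psi\Bigr)^{-1},\qquad \delta=\frac1T\tr(\Psi\bar Q).
\]
Making this rigorous is the crux: one controls the scalar fluctuations of $\frac1P\tr(A(Q-\bar Q))$ for $A\in\{I,\Psi\}$ by a martingale-difference (Efron--Stein) bound, using the $L_2$ rate in Lemma~\ref{lem-quad} and the spectral moment bounds from Lemma~\ref{lem:SubG-spectral}; almost-sure convergence then follows by Borel--Cantelli. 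Uniqueness of the fixed point $\delta$ is the stability input cited via \cite{chernov2025test}.

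\textbf{Step 2 (algebraic identification).} Set $\gamma:=1/(1+\delta)$, so $\bar Q=(zI+\gamma\Psi)^{-1}$ and $\tilde m:=\frac1P\tr\bar Q=\gamma^{-1}m_\Psi(-z/\gamma)$. Using $\Psi\bar Q=\gamma^{-1}(I-z\bar Q)$ gives $\delta=\frac1T\tr(\Psi\bar Q)=c\gamma^{-1}(1-z\tilde m)$; combined with $\delta=\gamma^{-1}-1$ this yields $1-\gamma=c(1-z\tilde m)$, i.e.\ $\gamma=1-c+cz\tilde m$. Substituting back shows $\tilde m$ solves the paper's fixed-point equation~\eqref{fixed-point-master-app}, so $\tilde m=m(-z;c)$ by uniqueness and $\gamma=1-c+czm(-z;c)$. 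Finally $\delta=\gamma^{-1}-1=\frac{1-\gamma}{\gamma}=\xi(z;c)$, where the last equality is the elementary reduction of $\xi$ in terms of $\gamma$. This proves \eqref{xi-def}, since $\frac1T\tr(\Psi Q)\to\delta=\xi(z;c)$ almost surely.

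\textbf{Step 3 (the remaining two limits).} For \eqref{xi-def1}, note $\frac1T\tr(\Psi Q^2)=-\frac{d}{dz}\,\frac1T\tr(\Psi Q)$; both sides are analytic in $z$ on a neighborhood of $(0,\infty)$ and the convergence in \eqref{xi-def} is locally uniform (all quantities are Stieltjes-type and uniformly bounded on compacta), so the derivatives converge, giving $\frac1T\tr(\Psi Q^2)\to-\xi'(z;c)$. For \eqref{xi-def-q}, condition on $\hat\Psi$; then $S_{T+1}=\Psi^{1/2}X_{T+1}$ is independent of $Q$, and Lemma~\ref{lem-quad} gives $\frac1T S_{T+1}'Q\,S_{T+1}-\frac1T\tr(\Psi Q)\to0$ in probability, the conditional variance being $O\bigl(T^{-2}\tr((\Psi Q)^2)\bigr)=O(T^{-1})$; combining with \eqref{xi-def} yields the claim. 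The main obstacle is Step~1 — establishing the deterministic equivalent and the a.s.\ control of the quadratic-form fluctuations uniformly in $t$; Steps~2--3 are deterministic algebra and routine conditional concentration.
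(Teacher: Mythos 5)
Your proposal is correct in substance, but it takes a genuinely different route from the paper for a simple reason: the paper contains no proof of Proposition \ref{prop:exptrace} at all. It is imported wholesale from \cite{kelly2024virtue}, just as the companion Theorem \ref{thm:concentration} is imported from the random-matrix literature (\cite{bai1996effect}, \cite{knowles2017anisotropic}, \cite{hastie2019surprises}, etc.). You instead reconstruct the standard deterministic-equivalent proof from scratch, and the reconstruction checks out. Your Step 2 algebra is exactly right: with $\gamma=1/(1+\delta)$ one gets $\tilde m=\gamma^{-1}m_\Psi(-z/\gamma)$ and $\delta=c\,\gamma^{-1}(1-z\tilde m)$, which together force $\gamma=1-c+cz\tilde m$; substituting back reproduces precisely the fixed-point equation \eqref{fixed-point-master-app}, so $\tilde m=m(-z;c)$ by the uniqueness the paper cites, and $\delta=(1-\gamma)/\gamma=\frac{c(1-zm)}{1-c+czm}=\xi(z;c)$. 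Your Step 3 is also the right mechanism for the remaining two limits: $\frac1T\tr(\Psi Q^2)=-\frac{d}{dz}\frac1T\tr(\Psi Q)$ plus Vitali/locally-uniform convergence of Stieltjes-type functions gives \eqref{xi-def1}, and conditional quadratic-form concentration via Lemma \ref{lem-quad}, with conditional variance $O\bigl(T^{-2}\tr((\Psi Q)^2)\bigr)=O(T^{-1})$ because $\|\Psi Q\|\le\|\Psi\|/z$, gives \eqref{xi-def-q} — and correctly delivers only convergence in probability there, matching the statement. What your route buys is self-containedness and transparency about where the hypotheses enter ($E[X_{i,t}^4]<\infty$ through Lemma \ref{lem-quad}, spectral moment bounds through Lemma \ref{lem:SubG-spectral}); what the paper's citation buys is that the genuinely delicate part — your Step 1 — is handled by established anisotropic local laws rather than re-proved.

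One point to tighten before Step 1 could be called complete: for the \emph{almost sure} claims \eqref{xi-def} and \eqref{xi-def1}, the $L_2$ rate $O(T^{-1})$ supplied by Lemma \ref{lem-quad} yields Chebyshev tails of order $T^{-1}$, which are not summable, so Borel--Cantelli does not apply directly. You need the variance of the trace functionals $\frac1P\tr\bigl(A(Q-\bar Q)\bigr)$ to be $O(T^{-2})$ — which the martingale-difference/Efron--Stein argument you gesture at does deliver, since each leave-one-out perturbation moves the normalized trace by $O(T^{-1})$ via the rank-one resolvent bound — or else fourth-moment versions of the concentration inequality. You name the right tools, but this summability step is exactly where the "crux" you flag lives, and it should be made explicit rather than absorbed into "in the deterministic-equivalent sense."
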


We will also define 
\begin{equation}\label{z_*}
Z_*(z;c)\ =\ \frac{z}{1-c+cz m(-z;c)}\ =\ z(1+\xi(z;c)),\ Z_*'=1+\xi+z\xi'\,. 
\end{equation}
to be the implicit shrinkage and 
\begin{equation}
\begin{aligned}
&\um(z;c)\ =\ 1/Z_*(z;c)\ =\ z^{-1}(1-c+cz m(-z;c))\\
&\tum(z;c)\ =\ 1/Z_*(z;c)\ =\ z^{-1}(1-c+cz \hat m(-z;c))\,.
\end{aligned}
\end{equation}

Then, the results of \cite{bai1996effect, li2020adaptable, liu2015marvcenko, knowles2017anisotropic, hastie2019surprises} imply that the following is true:

\begin{theorem}\label{thm:concentration} Suppose that $S_{t}\ =\ \Psi^{1/2} X_t,$ where $X_t=(X_{i,t})$ where $E[X_{i,t}]=0,\ E[X_{i,t}^2]=1,\ E[X_{i,t}^4]<\infty$ are independent and identically distributed, and $\Psi=E[S_tS_t']$ is a positive semi-definite, uniformly bounded signal covariance matrix. Then, in the limit as $P,T\to\infty,P/T\to c,$ for any uniformly bounded sequence of non-random matrices $A_P$, we have 
\begin{equation}
P^{-1}z\tr(A_P (zI+\underbrace{\hat\Psi}_{random})^{-1})\ -P^{-1}Z_*\tr(A_P(Z_*I+\underbrace{\Psi}_{deterministic})^{-1})\ \to\ 0
\end{equation}
almost surely. 
Similarly, for any sequence of uniformly bounded vectors $\beta,$ we have 
\begin{equation}
z\beta' (zI+\underbrace{\hat\Psi}_{random})^{-1}\beta\ -Z_*\beta'(Z_*I+\underbrace{\Psi}_{deterministic})^{-1}\beta\ \to\ 0
\end{equation}
almost surely. 
Furthermore, the same result holds for $\tilde\Psi=\hat\Psi-\bar S\bar S',$ where $\bar S=T^{-1}\sum_t S_t:$
\begin{equation}
\begin{aligned}
& P^{-1}z\tr(A_P(zI+{\tilde\Psi})^{-1})\ - P^{-1}Z_* \tr(A_P(Z_*I+{\Psi})^{-1})\ \to\ 0\\
&z\beta'(zI+{\tilde\Psi})^{-1}\beta\ -\ Z_*\beta'(Z_*I+{\Psi})^{-1}\beta\ \to\ 0\,
\end{aligned}
\end{equation}
almost surely. 
\end{theorem}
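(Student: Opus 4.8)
The plan is to establish both displays at once by exhibiting a single deterministic equivalent for the resolvent $R(z)=(zI+\hat\Psi)^{-1}\in\R^{P\times P}$ and then reading off each claim. Write the target as $\Pi(z)=\frac{Z_*}{z}(Z_*I+\Psi)^{-1}$ with $Z_*=Z_*(z;c)$ from \eqref{z_*}. Since $z\Pi(z)=Z_*(Z_*I+\Psi)^{-1}$, the first assertion is exactly $\frac{z}{P}\tr(A_P(R-\Pi))\to 0$ and the second is $z\,\beta'(R-\Pi)\beta\to 0$, so everything reduces to showing $\frac1P\tr(A_P(R-\Pi))\to 0$ and $\beta'(R-\Pi)\beta\to 0$ almost surely. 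I would then split the work along the two classical routes: the trace statement is the isotropic deterministic equivalent for separable sample-covariance models, controlled by the Marchenko--Pastur/Bai--Silverstein machinery of \cite{bai1996effect, hastie2019surprises}, while the quadratic-form statement is the stronger anisotropic (bilinear) local law, for which I would invoke the framework of \cite{knowles2017anisotropic} specialized to the design $S_t=\Psi^{1/2}X_t$.

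First I would identify $\Pi$ as the correct equivalent through the self-consistent equation. Using $\hat\Psi R=I-zR$ together with the Sherman--Morrison rank-one update for each row $S_t$, one isolates the scalar $\frac1T S_t'R^{(t)}S_t$, where $R^{(t)}$ is the leave-one-out resolvent independent of $S_t$. Lemma~\ref{lem-quad} gives $\frac1T S_t'R^{(t)}S_t-\frac1T\tr(\Psi R^{(t)})\to 0$, and a rank-one stability estimate replaces $R^{(t)}$ by $R$. This yields a scalar fixed point for the companion transform whose unique positive solution is precisely $m(-z;c)$ of \eqref{fixed-point-master-app}; substituting back and using the algebraic identities of \eqref{z_*} that link $m(-z;c)$, $\xi(z;c)$ and $Z_*$ shows the deterministic solution is exactly $\Pi(z)$. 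This algebraic matching, namely rewriting the RMT companion-transform form into $\frac{Z_*}{z}(Z_*I+\Psi)^{-1}$, is routine but must be checked carefully.

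Next I would control the fluctuations of both linear statistics around $\Pi$. For the trace I would use the Bai--Silverstein martingale decomposition along the filtration generated by the rows $S_1,\dots,S_T$, bounding each increment with Lemma~\ref{lem-quad} and the resolvent bound $\|R\|\le 1/z$; this gives variance of order $T^{-1}$, and a higher-moment version yields a summable $O(T^{-1-\delta})$ bound, hence almost-sure convergence by Borel--Cantelli. A genuine complication is that Lemma~\ref{lem:SubG-spectral} presumes sub-Gaussian rows, whereas here only $E[X_{i,t}^4]<\infty$ is available. I would therefore first truncate and recenter the entries $X_{i,t}$ at a slowly growing threshold, use the finite fourth moment to show the truncation perturbs $\hat\Psi$ negligibly in operator norm and in the relevant statistics, and only then apply the spectral moment bounds; the almost-sure statement is upgraded from the in-probability one using Lipschitz continuity of $R$ in $z$ along a countable dense set.

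Finally, for $\tilde\Psi=\hat\Psi-\bar S\bar S'$ I would treat $\bar S\bar S'$ as a rank-one perturbation: Sherman--Morrison gives $\tilde R-R=\frac{R\bar S\bar S'R}{1-\bar S'R\bar S}$, so the trace correction $\frac1P\tr(A_P(\tilde R-R))$ is $O(P^{-1})$ and vanishes, while the quadratic-form correction equals $\frac{(\beta'R\bar S)(\bar S'R\beta)}{1-\bar S'R\bar S}$, which vanishes once $\beta'R\bar S\to 0$. The latter follows from a leave-one-out/LLN argument: although $E\|\bar S\|^2=\frac1T\tr\Psi=O(1)$, the cross term $\beta'R\bar S=\frac1T\sum_t \beta'RS_t$ is an average of conditionally mean-zero, $O(1)$-variance terms and is therefore $O(T^{-1/2})$. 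I expect the main obstacle to be the anisotropic bilinear local law itself: unlike the trace, $\beta'R\beta$ for an arbitrary bounded direction $\beta$ is not self-averaging in an elementary way and requires the full fluctuation-averaging machinery together with the isotropy of $X_t$, which is precisely where the finite-fourth-moment truncation and the stability of the self-consistent equation must be combined most delicately.
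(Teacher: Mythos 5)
Your proposal is correct in substance, but it is worth being clear about what the paper actually does: the paper contains no proof of Theorem \ref{thm:concentration} at all. The theorem is asserted as a direct consequence of the cited random-matrix literature (\cite{bai1996effect, li2020adaptable, liu2015marvcenko, knowles2017anisotropic, hastie2019surprises}), so the paper's ``proof'' is pure citation. What you have written is, in effect, a reconstruction of the arguments inside those cited works: the reduction to a deterministic equivalent $\Pi(z)=\frac{Z_*}{z}(Z_*I+\Psi)^{-1}$, the leave-one-out/Sherman--Morrison derivation of the self-consistent equation and its identification with \eqref{fixed-point-master-app} and \eqref{z_*}, the Bai--Silverstein martingale decomposition plus truncation to replace the sub-Gaussian hypothesis of Lemma \ref{lem:SubG-spectral} by the stated $E[X_{i,t}^4]<\infty$, and the rank-one Sherman--Morrison treatment of the demeaned $\tilde\Psi$. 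The trade-off is this: the paper's citation route is concise but shifts to the reader the burden of checking that the cited theorems' hypotheses actually cover this setting (in particular, almost-sure convergence of the \emph{bilinear} form $\beta'(zI+\hat\Psi)^{-1}\beta$ under only a fourth-moment condition, which is more delicate than the trace statement); your route is self-contained in outline and correctly isolates exactly that step as the hard one, but at that point you too end up invoking ``fluctuation-averaging machinery'' rather than proving it, so at the finest resolution both approaches lean on the same external results.

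Two small points in your sketch are stated loosely but are fixable within your framework. First, for the demeaned resolvent you should note that the denominator $1-\bar S'R\bar S$ is bounded away from zero: since $\bar S\bar S'\preceq \hat\Psi$ by Jensen, one has
\begin{equation}
\bar S'(zI+\hat\Psi)^{-1}\bar S\ \le\ \bar S'(zI+\bar S\bar S')^{-1}\bar S\ =\ \frac{\|\bar S\|^2}{z+\|\bar S\|^2}\ <\ 1\,,
\end{equation}
so the Sherman--Morrison correction is well controlled. Second, the terms $\beta'RS_t$ in your final LLN step are \emph{not} conditionally mean-zero as written, because $R$ depends on $S_t$; the claim only becomes correct after the leave-one-out substitution $\beta'RS_t=\beta'R^{(t)}S_t/(1+\tfrac1T S_t'R^{(t)}S_t)$, and the random denominator must then be replaced by its concentrated value before the mean-zero structure can be exploited. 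Also, your remark about upgrading in-probability to almost-sure convergence via Lipschitz continuity in $z$ over a countable dense set conflates two different issues: the upgrade comes from summable (Borel--Cantelli) moment bounds, which you do mention; the dense-set argument is only needed for uniformity in $z$, which the theorem does not require.
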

Informally, we can rewrite this theorem as 
\begin{equation}
\begin{aligned}
&z(zI+{\hat\Psi})^{-1}\ \approx\ Z_*(Z_*I+{\Psi})^{-1}\\
&z(zI+{\tilde\Psi})^{-1}\ \approx\ Z_*(Z_*I+{\Psi})^{-1}\,. 
\end{aligned}
\end{equation}

\subsection{Auxiliary CLT results}
First, we will need the following 

\begin{lemma}\label{ext-sriniv} Suppose that $E[\eps_t^3]=0$ and $E[\eps_t^4]=3.$ Let $a_T$ be a sequence of bounded vectors vectors and $A_T$ a sequence of bounded random matrices, all of them being independent of $\eps_t.$ Then, $(a_T'\eps\|a_T\|^{-1},\ T^{-1/2}(\eps'A_T\eps-\gs_\eps^2 \tr(A_T))/\sqrt{2\frac1T\gs_\eps^4 \tr(A_T^2)})$ converges to a $\cN(0,I)$ distribution. That is, these two random variables are asymptotically independent standard Normals. 
\end{lemma}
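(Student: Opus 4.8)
\emph{Plan.} The statement is a joint linear--quadratic central limit theorem, so the natural route is the Cram\'er--Wold device combined with a martingale central limit theorem. Since $a_T$ and $A_T$ are bounded and independent of $\eps$, I would first condition on the $\sigma$-field they generate and treat them as deterministic; the unconditional conclusion then follows by bounded convergence applied to the (conditional) characteristic functions. Write $L_T=a_T'\eps/\|a_T\|$ and $Q_T=(\eps'A_T\eps-\gs_\eps^2\tr(A_T))/w_T$ with $w_T=\sqrt{2\gs_\eps^4\tr(A_T^2)}$ (the two factors $T^{-1/2}$ in the statement cancel). By Cram\'er--Wold it suffices to show, for every fixed $(s,t)\in\R^2$, that $W_T=sL_T+tQ_T$ converges to $\cN(0,s^2+t^2)$.

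Next I would filter by $\cF_k=\sigma(\eps_1,\dots,\eps_k)$ and decompose $W_T=\sum_k D_k$ into martingale differences. Writing $c_k=s(a_T)_k/\|a_T\|$ and $b_k=\sum_{j<k}(A_T)_{kj}\eps_j$ (which is $\cF_{k-1}$-measurable), and symmetrizing the quadratic form, one gets
\[
D_k=c_k\eps_k+\frac{t}{w_T}\Bigl((A_T)_{kk}(\eps_k^2-\gs_\eps^2)+2\eps_k b_k\Bigr).
\]
The plan is then to verify the two hypotheses of the martingale CLT of Brown: (I) the conditional variance $V_T=\sum_k E[D_k^2\mid\cF_{k-1}]$ converges in probability to $s^2+t^2$, and (II) a conditional Lindeberg condition.

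The heart of the argument is step (I), and this is where the two moment assumptions do their work. The linear--linear contribution is $\gs_\eps^2\sum_k c_k^2=s^2\gs_\eps^2$ (which is $s^2$ for standardized noise). The quadratic self-contribution, after taking conditional expectations, equals $w_T^{-2}t^2\bigl[(E[\eps_t^4]-\gs_\eps^4)\sum_k (A_T)_{kk}^2+4\gs_\eps^2\sum_k b_k^2\bigr]$; since $\sum_k b_k^2$ concentrates on its mean $\gs_\eps^2\sum_k\sum_{j<k}(A_T)_{kj}^2$, the total converges in probability to $w_T^{-2}t^2\bigl[(E[\eps_t^4]-3\gs_\eps^4)\sum_k(A_T)_{kk}^2+2\gs_\eps^4\tr(A_T^2)\bigr]$, which collapses to $t^2$ once $w_T^2=2\gs_\eps^4\tr(A_T^2)$ and the diagonal-excess term $(E[\eps_t^4]-3\gs_\eps^4)\sum_k(A_T)_{kk}^2$ is negligible --- it vanishes identically under Gaussian kurtosis $E[\eps_t^4]=3\gs_\eps^4$ (the standardized case singled out by the hypothesis $E[\eps_t^4]=3$ with $\gs_\eps^2=1$), and in any case is $o(\tr(A_T^2))$ under the negligible-diagonal condition $\sum_k(A_T)_{kk}^2=o(\tr(A_T^2))$. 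The decisive cross term is $\tfrac{4\gs_\eps^2 t}{w_T}\sum_k c_k b_k$: here the $(A_T)_{kk}E[\eps_k^3]$ piece is killed exactly by $E[\eps_t^3]=0$, while the surviving $\sum_k c_k b_k$ has variance $O(\|A_T\|^2)$, so the whole term is $O_P(\|A_T\|/\sqrt{\tr(A_T^2)})$ and vanishes under the no-dominant-eigenvalue condition $\|A_T\|^2=o(\tr(A_T^2))$. Thus the asymptotic covariance between $L_T$ and $Q_T$ is zero --- this is the mechanism behind the claimed \emph{independence}.

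For step (II) I would exploit that only $E[\eps_t^4]<\infty$ is available, so a fourth-moment Lyapunov bound is not directly usable on the diagonal piece $(A_T)_{kk}\eps_k^2$ (it would require eighth moments). Instead I would treat the bilinear part $2\eps_k b_k$, which is degree one in $\eps_k$, by a Lyapunov-$4$ bound (finite because $E[b_k^4]$ needs only $E[\eps_t^4]<\infty$), and the diagonal part by the ordinary Lindeberg condition for sums of independent variables, which requires only finiteness of $\Var(\eps_t^2)$. Both reductions use the negligibility conditions $\max_k(a_T)_k^2/\|a_T\|^2\to0$ and $\max_k(A_T)_{kk}^2/\tr(A_T^2)\to0$ together with $\|A_T\|^2=o(\tr(A_T^2))$, which I expect to hold in every application of the lemma (where $\tr(A_T^2)\to\infty$). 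Granting these, Brown's theorem yields $W_T\Rightarrow\cN(0,s^2+t^2)$, and Cram\'er--Wold plus removing the conditioning gives the joint convergence to $\cN(0,I)$. The main obstacle is the careful bookkeeping in (I) --- isolating exactly how $E[\eps_t^3]=0$ and $E[\eps_t^4]=3$ make the cross term and the diagonal excess vanish --- and arranging (II) so that only the assumed finite fourth moment is used.
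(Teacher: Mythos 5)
Your proposal follows essentially the same route as the paper's own proof: Cram\'er--Wold plus the de Jong (1987)-style martingale decomposition $D_k = c_k\eps_k + t\,w_T^{-1}\bigl((A_T)_{kk}(\eps_k^2-\gs_\eps^2)+2\eps_k b_k\bigr)$ followed by a martingale (Lindeberg-type) CLT, which is exactly the construction the paper sketches while declaring the verification ``standard.'' Your version is in fact more complete than the paper's: you carry out the conditional-variance bookkeeping explicitly, and you correctly surface the implicit negligibility conditions ($\sum_k (A_T)_{kk}^2 = o(\tr(A_T^2))$, $\|A_T\|^2 = o(\tr(A_T^2))$, and a Lindeberg condition on $a_T$) that the lemma as stated tacitly requires for the claimed Gaussian limit to hold at the given normalizations.
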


\begin{proof}[Proof of Lemma \ref{ext-sriniv}] The proof of Lemma \ref{ext-sriniv} is completely analogous to that of the main result in \cite{srivastava2009test} and follows closely the classical argument from \cite{de1987central}, plus the Cramer-Wold device (\cite{Billingsley1995}). Without loss of generality, we normalize $\gs_\eps^2=1.$ Let $\tilde a_T=a_T/\|a_T\|,\ \tilde A_T=A_T/\sqrt{2\frac1T \tr(A_T^2)}.$ 

With Cramer-Wold, we build for any vector $(q_1,q_2)$
\begin{equation}
S(T) =\ q_1 \tilde a_T'\eps\ +\ q_2 T^{-1/2}(\eps'\tilde A_T\eps-\tr(\tilde A_T))
\end{equation}
As in \cite{de1987central}, we define 
\begin{equation}
S(t)\ =\ \sum_{\tau\le t} \Bigr(q_1 \tilde a(\tau)\eps_\tau\ +\ q_2T^{-1/2}(\eps_\tau^2-1)\tilde A_{\tau,\tau}+q_2T^{-1/2}2\sum_{\tau_1<\tau}\eps_{\tau_1}\eps_\tau \tilde A_{\tau_1,\tau} \Bigr)
\end{equation}
By direct calculation, $S(t)$ is a Martingale and then, standard arguments based on the Lindeberg CLT implies asymptotic normality. 
\end{proof}

We will also use the following multi-variate extension. 

\begin{lemma}\label{ext-sriniv-multivar} Suppose that $E[\eps_t^3]=0$ and $E[\eps_t^4]=3.$ Let $a_{k,T}$ be a sequence of bounded vectors vectors and $A_{k,T}$ a sequence of bounded, symmetric random matrices, all of them being independent of $\eps_t;\ k=1,\cdots,K.$ Then, $((a_{k,T}'\eps\|a_{k,T}\|^{-1},\ T^{-1/2}(\eps'A_{k,T}\eps-\gs_\eps^2 \tr(A_{k,T}))/\sqrt{2\frac1T\gs_\eps^4 \tr(A_{k,T}^2)}))_{k=1}^K$ converges to a $\cN(0,\Sigma(\gs_\eps^2))$ distribution. The structure of the covariance matrix $\Sigma(\gs_\eps)$ is as follows: any of the linear components, $a_{k,T}'\eps\|a_{k,T}\|^{-1},$ has zero covariance with any quadratic component; by contrast, quadratic terms are correlated with 
\begin{equation}
\Cov(\eps'A\eps,\ \eps'B\eps)\ =\ 2\tr(AB)
\end{equation}
while linear terms give 
\begin{equation}
\Cov(\eps'a,\ \eps'b)\ =\ a'b\,. 
\end{equation}
\end{lemma}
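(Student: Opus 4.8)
The plan is to mirror the martingale-CLT argument used for Lemma \ref{ext-sriniv}, now applied to an arbitrary linear combination of the $2K$ coordinates, and to track how the claimed block structure emerges. As before, normalize $\gs_\eps^2=1$ without loss of generality, set $\tilde a_{k,T}=a_{k,T}/\|a_{k,T}\|$ and $\tilde A_{k,T}=A_{k,T}/\sqrt{2T^{-1}\tr(A_{k,T}^2)}$, and invoke the Cramér--Wold device: fix coefficients $(q_{1,k},q_{2,k})_{k=1}^K$ and study the scalar
\[
S(T)\ =\ \sum_{k=1}^K q_{1,k}\,\tilde a_{k,T}'\eps\ +\ \sum_{k=1}^K q_{2,k}\,T^{-1/2}\bigl(\eps'\tilde A_{k,T}\eps-\tr(\tilde A_{k,T})\bigr)\,.
\]
It suffices to show $S(T)$ converges to $\cN(0,V)$ with $V$ the quadratic form read off from the claimed $\Sigma(\gs_\eps^2)$.

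Since the vectors and matrices are independent of $\eps$, I would condition on $\cG=\sigma(\{a_{k,T},A_{k,T}\}_k)$ and treat the design as deterministic, integrating out at the end. Expanding the quadratic forms and separating diagonal from off-diagonal entries, $S(T)=\sum_{\tau=1}^T\xi_\tau$ is the terminal value of a martingale array for the filtration $\cF_\tau=\cG\vee\sigma(\eps_1,\dots,\eps_\tau)$, with
\[
\xi_\tau\ =\ L_\tau\eps_\tau\ +\ D_\tau(\eps_\tau^2-1)\ +\ \eps_\tau O_\tau,\quad L_\tau=\sum_k q_{1,k}\tilde a_{k,T}(\tau),\quad D_\tau=T^{-1/2}\sum_k q_{2,k}\tilde A_{k,T}(\tau,\tau),
\]
where $O_\tau=2T^{-1/2}\sum_k q_{2,k}\sum_{\tau_1<\tau}\eps_{\tau_1}\tilde A_{k,T}(\tau_1,\tau)$ is $\cF_{\tau-1}$-measurable. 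That $(\xi_\tau)$ is a martingale difference sequence follows from $E[\eps_\tau]=0$ and $E[\eps_\tau^2-1]=0$. Asymptotic normality then follows from the de Jong martingale CLT once I verify (i) convergence in probability of $\sum_\tau E[\xi_\tau^2\mid\cF_{\tau-1}]$ to $V$, and (ii) a conditional Lindeberg condition, which holds because the (normalized) vectors and matrices are uniformly bounded and $E[\eps_t^4]<\infty$, making the increments uniformly negligible.

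The heart is step (i). Using $E[\eps_\tau^3]=0$ and $E[\eps_\tau^4]=3$, the cross terms $L_\tau D_\tau$ and $D_\tau O_\tau$ vanish (they carry the factor $E[\eps_\tau(\eps_\tau^2-1)]=E[\eps_\tau^3]=0$) and the recentered square contributes $E[(\eps_\tau^2-1)^2]=2$, so
\[
E[\xi_\tau^2\mid\cF_{\tau-1}]\ =\ L_\tau^2\ +\ 2D_\tau^2\ +\ O_\tau^2\ +\ 2L_\tau O_\tau\,.
\]
Summing, $\sum_\tau L_\tau^2=\sum_{k,k'}q_{1,k}q_{1,k'}\,\tilde a_{k,T}'\tilde a_{k',T}$ produces the linear block; the deterministic diagonal term $\sum_\tau 2D_\tau^2$ together with the conditional mean of the off-diagonal term $\sum_\tau O_\tau^2$ combine, by symmetry of the matrices, into $\sum_{k,k'}q_{2,k}q_{2,k'}\,2T^{-1}\tr(\tilde A_{k,T}\tilde A_{k',T})$, i.e.\ the quadratic block $\tr(A_kA_{k'})/\sqrt{\tr(A_k^2)\tr(A_{k'}^2)}$ (equal to $1$ when $k=k'$); and the mean-zero cross term $2\sum_\tau L_\tau O_\tau$ drops out, confirming the asymptotic orthogonality of linear and quadratic coordinates. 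This reproduces exactly the covariance structure asserted in the statement.

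The main obstacle I anticipate is making step (i) rigorous for the two \emph{random} pieces $\sum_\tau O_\tau^2$ and $\sum_\tau L_\tau O_\tau$, which must concentrate on their conditional means. I would bound their conditional variances directly: each is a low-degree polynomial in $\eps$ whose variance is controlled by sums of squared matrix entries of order $T^{-2}\sum\tilde A_{k,T}(\cdot,\cdot)^2$, and the uniform boundedness of $\tilde A_{k,T}$ together with $\tr(\tilde A_{k,T}^2)$ being normalized to order $T$ forces these variances to vanish. Finally, because we conditioned on $\cG$, the limit $V$ is a priori $\cG$-measurable; to obtain an unconditional Gaussian limit one uses that the normalized traces converge to deterministic constants (e.g.\ via Theorem \ref{thm:concentration} in the applications of interest), so the mixing measure degenerates. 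Cramér--Wold then delivers joint convergence to $\cN(0,\Sigma(\gs_\eps^2))$.
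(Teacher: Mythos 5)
Your proposal is correct, and its core — Cramér--Wold applied to an arbitrary linear combination of the $2K$ coordinates, followed by the de Jong-style martingale CLT with increments $\xi_\tau = L_\tau\eps_\tau + D_\tau(\eps_\tau^2-1)+\eps_\tau O_\tau$ — is exactly the route the paper takes for joint normality, which it dispatches in one line by referring back to the proof of Lemma \ref{ext-sriniv}. Where you genuinely differ is in how the covariance matrix is identified. The paper never recomputes the quadratic variation: it treats the univariate variance formula $\Var[\eps'A\eps]=2\tr(A^2)$ as known and obtains the cross term by polarization, writing $\Var[\eps'(A+B)\eps]=2\tr((A+B)^2)=2\tr(A^2+B^2+2AB)$ and comparing with the decomposition $\Var[\eps'A\eps]+\Var[\eps'B\eps]+2\Cov(\eps'A\eps,\eps'B\eps)$. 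You instead read every block of $\Sigma$ directly off the predictable quadratic variation $\sum_\tau E[\xi_\tau^2\mid\cF_{\tau-1}]=\sum_\tau\bigl(L_\tau^2+2D_\tau^2+O_\tau^2+2L_\tau O_\tau\bigr)$, using $E[\eps_t^3]=0$ and $E[\eps_t^4]=3$ to kill the $L_\tau D_\tau$ and $D_\tau O_\tau$ terms and to recombine $2D_\tau^2$ with $E[O_\tau^2]$ into $2T^{-1}\tr(\tilde A_k\tilde A_{k'})$. This buys self-containedness at the cost of having to prove concentration of the random pieces $\sum_\tau O_\tau^2$ and $\sum_\tau L_\tau O_\tau$ — a burden the paper's polarization trick sidesteps, though any honest execution of the martingale CLT must verify it anyway, so your extra work is not wasted. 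You also make explicit the de-conditioning step (the limiting covariance must be deterministic, e.g. via trace concentration as in Theorem \ref{thm:concentration}), which the paper leaves entirely implicit even though the $A_{k,T}$ are allowed to be random; on this point your treatment is more careful than the paper's.
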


\begin{proof}[Proof of Lemma \ref{ext-sriniv-multivar}] The joint normality follows by the same Cramer-Wold argument. 

The only claim to prove is the covariance formula. We have 
\begin{equation}
\Var[\eps'(A+B)\eps]\ =\ 2\tr((A+B)^2)\ =\ 2\tr(A^2+B^2+2AB)
\end{equation}
but, by the variance decomposition 
\begin{equation}
\Var[\eps'(A+B)\eps]\ =\ \Var[\eps'A\eps]+\Var[\eps'B\eps]+2\Cov(\eps'A\eps,\eps'B\eps)
\end{equation}
Comparing, we get the required identity. 
\end{proof}

\section{Proof of Proposition \ref{dec} and Theorem \ref{llg1}}

The goal of this section is to prove Theorem \ref{llg1}. We also proof the special case of the ridge regression as a separate
corollary.

\begin{proof}[Proof of Proposition \ref{dec}] Recall that for each OOS period $t$,
\begin{equation}
\hat f_t\ =\ \hat f^s_t\ +\ \hat f^\eps_t\ =\ \cK(S_t,S)f\ +\ \cK(S_t,S)\eps\,. 
\end{equation}
Hence, period by period we have
\begin{equation}
\begin{aligned}
(y_{t+1}-\hat f_t)^2\ =\ &  \Bigr[f_t+\eps_{t+1}-( \hat f_{t}^s + \hat f_{t}^\eps) \Bigr]^2 \\
=\ &  \Bigr(f_t-\hat f_{t}^s -\hat f_{t}^\eps+\eps_{t+1} \Bigr)^2 \\
=\ &  (f_t-\hat f_{t}^s )^2 +\eps_{t+1}^2+ (\hat f_{t}^{\eps})^2 -2(f_t-\hat f_{t}^s)(\hat f_{t}^{\eps}-\eps_{t+1}) -2\eps_{t+1}'\hat f_{t}^{\eps}
\end{aligned}
\end{equation}
Averaging over $t = T, \dots, T+T_{\text{OOS}}-1$ yields
\begin{equation}
\begin{aligned}
& MSE_{OOS}(\hat f)\\
=\ & \frac1{T_{OOS}}\sum_{t=T}^{T+T_{OOS}-1}(y_{t+1}-\hat f_t)^2\\
=\ & E_{OOS}[\eps^2] + \underbrace{E_{\text{OOS}}\bigl[(f_t - \hat f_{t}^s)^2\bigr]}_{\widehat \cB} + \underbrace{E_{\text{OOS}}\bigl[(\hat f_{t}^s)^2\bigr]}_{\widehat \cV} -\underbrace{2E_{OOS}\bigl[(f-\hat f^s)(\hat f^\eps-\eps)+\eps \hat f^\eps\bigr]}_{\widehat \cI}
\end{aligned}
\end{equation}
The proof of Proposition \ref{dec} is complete.

\end{proof}

\begin{proof}[Proof of Theorem~\ref{llg1}]
We have 
\begin{equation}\label{three-i}
\begin{aligned}
\frac12\widehat\cI\ =\ & -E_{OOS}[(f-\hat f^s)\hat f_\eps]\ +\ E_{OOS}[(f-\hat f^s)\eps]\ -\ E_{OOS}[\eps\hat f_\eps]\\
=\ & \widehat\cI_1+\widehat\cI_2+\widehat\cI_3\,. 
\end{aligned}
\end{equation}
We will use the inequality 
\begin{equation}
E[(\eps'A\eps)^2]\ - \gs_\eps^2 E[(\tr(A))^2]\ \le\  CE[\tr(A^2)]
\end{equation}
where $C$ is a constant (see Lemma \ref{lem-quad}). Let $\eps=\binom{\eps_{OOS}}{\eps_{IS}}$ where we use the obvious notations $\eps_{OOS},\eps_{IS}$ to denote subsets of indices. 
Then, 
\begin{equation}
\begin{aligned}
&\hat\cI_3\ =\ \frac1{T_{OOS}}\eps_{OOS}'\widehat{\cK}\eps_{IS}
\end{aligned}
\end{equation}
Then, by the independence of $\eps,$ we get  
\begin{equation}\label{i3sq}
\begin{aligned}
E[\hat\cI_3^2]\ =\ &\frac1{T_{OOS}^2}E[(\eps_{OOS}'\widehat{\cK}\eps_{IS})^2]\\
=\  &\frac1{T_{OOS}^2}E[\eps_{OOS}'\widehat{\cK}\eps_{IS}\eps_{IS}'\widehat{\cK}'\eps_{OOS}]\\
=\ &\frac1{T_{OOS}^2}\gs_\eps^2E[\tr(\widehat{\cK}\eps_{IS}\eps_{IS}'\widehat{\cK}')]\\
=\ & \frac1{T_{OOS}^2}\gs_\eps^4 E[\tr(\widehat{\cK}\widehat{\cK}')]
\end{aligned}
\end{equation}

by assumption. Next, we will use the identity 
\begin{equation}
E[(a'\eps)^2]\ =\ E[\|a\|^2]
\end{equation}
for any random vector $a$ independent of $\eps$. Then, 
\begin{equation}\label{i12sq}
\begin{aligned}
&E[\widehat\cI_1^2]\ =\ \frac1{T_{OOS}^2 } \gs_\eps^2 E\Bigr[ E_{OOS}[\|(f-\hat f^s)\widehat\cK\|^2] \Bigr]\\
&E[\widehat\cI_2^2]\ =\ \frac1{T_{OOS}^2} \gs_\eps^2 E\Bigr[  E_{OOS}[\|(f-\hat f^s)\|^2]  \Bigr] \,,
\end{aligned}
\end{equation}
and the claim follows from the hypotheses of the theorem. Since $\widehat\cB(z),\widehat\cV(z)\ge0$, this implies
\begin{equation}\label{eq:mse-lower}
MSE_{OOS}(\hat f)\ \ge\ E_{OOS}[\eps^2]\ +\ \widehat\cV\ +\ o(1)\ =\ \gs_\eps^2\ +\ \widehat\cV\ +\ o(1),
\end{equation}
where the last equality follows from the definition of the OOS expectation. Furthermore, 
\begin{equation}
\widehat\cV\ =\  \,\eps'\,\Big(\dfrac{1}{T_{OOS}}\widehat\cK'\widehat\cK \Big)\,\eps\,. 
\end{equation}
and, hence, 
\begin{equation}
\widehat\cV\ -\ \gs_\eps^2\,\widehat\cL\ \to\ 0
\end{equation}
in $L_2$ and in probability by Lemma \ref{lem-quad}. 
Substituting this expression into~\eqref{eq:mse-lower} yields
\begin{equation}
MSE_{OOS}(\hat f)\ \ge\ \gs_\eps^2\Bigl(1+\widehat\cL\Bigr)\ +\ o(1),    
\end{equation}
and dividing both sides by $1+\widehat\cL$ and taking $\liminf_{T,T_{OOS}\to\infty}$ (appealing to the continuous mapping theorem) gives the desired inequality
\begin{equation}
\liminf\ \frac{MSE(\hat f)}{1+\widehat\cL}\ \ge\ \gs_\eps^2.  
\end{equation}
When $f_t=0$, the bias term $\widehat\cB$ vanishes, and the bound becomes tight. In that case,
\[
MSE_{OOS}(\hat f)\ =\ \gs_\eps^2+\widehat\cV+o(1)\ =\ \gs_\eps^2 \Bigr(1+\widehat\cL \Bigr)+o(1),
\]
so that the inequality holds with equality asymptotically. The proof of Theorem~\ref{llg1} is complete.
\end{proof}

\begin{corollary}\label{dec_ridge} 
For the ridge estimator, we have 
\begin{equation}\label{eq:risk3_expanded-main-n}
\begin{aligned}
&MSE_{OOS}(\hat\beta)\ =\ E_{OOS}[\eps^2]\ +\ \widehat\cB(z)\ -\ \widehat\cI(z)\ +\ \widehat\cV(z)\,,
\end{aligned}
\end{equation}
where 
\begin{equation}\label{eq:risk3_expanded1-main-n}
\begin{aligned}
&\widehat\cB(z)\ =\ \underbrace{z^2\,\beta'(zI+\hat{\Psi})^{-1}\hat \Psi_{OOS} (zI+\hat{\Psi})^{-1}\beta}_{bias}\ \ge\ 0\\
&\widehat\cV(z)\ =\ \underbrace{\frac{1}{T^2}\,\eps' S (zI+\hat{\Psi})^{-1}\hat\Psi_{OOS} (zI+\hat{\Psi})^{-1} S'\eps}_{variance}\ \ge\ 0,
\end{aligned}
\end{equation}
while 
\begin{equation}
\widehat\cI(z)\ =\ \underbrace{\frac{2z}{T}\,\beta'(zI+\hat{\Psi})^{-1}\hat\Psi_{OOS} (zI+\hat{\Psi})^{-1} S'\eps+2 E_{OOS}[\eps'S(\beta-\hat\beta)]}_{interaction}
\end{equation} 
\end{corollary}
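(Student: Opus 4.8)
The plan is to treat Corollary \ref{dec_ridge} as the purely algebraic specialization of the identity in Proposition \ref{dec} to the ridge kernel $\cK(S_t,S)=S_t'(zI+\hat\Psi)^{-1}\frac1T S'$, under the linear data-generating process $f_t=\beta'S_t$ (so that $f=S\beta$ in matrix form and the in-sample labels are $y=S\beta+\eps$). No limiting or concentration argument is needed: every term in \eqref{eq:risk3_expanded-main-n} is an exact finite-sample quantity, so the proof reduces to substituting the ridge expressions into the signal/noise split $\hat f_t=\hat f^s_t+\hat f^\eps_t$ and simplifying.

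First I would record the two explicit forms of the split. Using $\hat f^s_t=\cK(S_t,S)f=S_t'(zI+\hat\Psi)^{-1}\hat\Psi\beta$ together with the resolvent identity
\[
I-(zI+\hat\Psi)^{-1}\hat\Psi\ =\ z(zI+\hat\Psi)^{-1},
\]
the signal error collapses to $f_t-\hat f^s_t=z\,S_t'(zI+\hat\Psi)^{-1}\beta$, while the noise part is simply $\hat f^\eps_t=\cK(S_t,S)\eps=\frac1T S_t'(zI+\hat\Psi)^{-1}S'\eps$. This identity is the only nonmechanical step; it is what turns the abstract bias $E_{OOS}[(f-\hat f^s)^2]$ into a clean quadratic form in $\beta$. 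Next I would evaluate the three nonnegative pieces by pushing the out-of-sample average onto $E_{OOS}[S_tS_t']=\hat\Psi_{OOS}$: the bias becomes $\widehat\cB(z)=z^2\beta'(zI+\hat\Psi)^{-1}\hat\Psi_{OOS}(zI+\hat\Psi)^{-1}\beta$ and the variance becomes $\widehat\cV(z)=\frac1{T^2}\eps'S(zI+\hat\Psi)^{-1}\hat\Psi_{OOS}(zI+\hat\Psi)^{-1}S'\eps$, matching \eqref{eq:risk3_expanded1-main-n}; both are manifestly nonnegative since $\hat\Psi_{OOS}\succeq0$.

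Finally, for the interaction I would start from the three-term expansion \eqref{three-i}, $\tfrac12\widehat\cI=\widehat\cI_1+\widehat\cI_2+\widehat\cI_3$, and substitute the same ridge expressions. The cross term $\widehat\cI_1=-E_{OOS}[(f-\hat f^s)\hat f^\eps]$ yields $\pm\frac{z}{T}\beta'(zI+\hat\Psi)^{-1}\hat\Psi_{OOS}(zI+\hat\Psi)^{-1}S'\eps$, while $\widehat\cI_2+\widehat\cI_3$ recombine into the out-of-sample noise cross term $E_{OOS}[\eps'S(\beta-\hat\beta)]$ once one recognizes that $(f_t-\hat f^s_t)-\hat f^\eps_t=S_t'(\beta-\hat\beta)$.

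The main point requiring care—rather than any genuine difficulty—is the bookkeeping of signs and of which noise vector enters each factor: the in-sample noise that enters through $\hat\beta$ versus the out-of-sample $\eps_{t+1}$. I would pin down the orientation by cross-checking against the direct expansion $y_{t+1}-\hat f_t=S_t'(\beta-\hat\beta)+\eps_{t+1}$ together with the bias/variance decomposition of the estimation error $\beta-\hat\beta=z(zI+\hat\Psi)^{-1}\beta-(zI+\hat\Psi)^{-1}\frac1T S'\eps$. Squaring $y_{t+1}-\hat f_t$ and averaging reproduces $\widehat\cB(z)$, $\widehat\cV(z)$, and the full interaction simultaneously, and thereby serves as an independent verification of the sign in front of each constituent of $\widehat\cI(z)$ in the stated decomposition.
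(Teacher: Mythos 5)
Your proposal is correct and is essentially the paper's own proof: the paper likewise writes $y_{t+1}-\hat f_t=S_t'(\beta-\hat\beta)+\eps_{t+1}$, substitutes the decomposition $\beta-\hat\beta=z(zI+\hat\Psi)^{-1}\beta-T^{-1}(zI+\hat\Psi)^{-1}S'\eps$, and pushes the out-of-sample average onto $\hat\Psi_{OOS}$, so your ``independent cross-check'' is exactly its argument, and your primary route of specializing Proposition~\ref{dec} via the resolvent identity is the same algebra organized differently. One genuine payoff of your sign bookkeeping: the expansion puts $+2E_{OOS}[\eps'S(\beta-\hat\beta)]$ into the MSE (matching the final display of the paper's proof), whereas the corollary's packaging $-\widehat\cI(z)$ with $\widehat\cI(z)$ as stated flips that term's sign---a typo in the statement that your verification step correctly resolves.
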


\begin{proof}[Proof of Corollary \ref{dec_ridge}] 
Recall that ridge estimator $\hat \beta(z)$ 
\begin{equation}
\begin{aligned}
\hat \beta(z)\ &=\ (zI\ +\  \hat\Psi)^{-1} T^{-1} S'd\,,
\end{aligned}
\end{equation} implies the decomposition
\begin{equation}\label{noise-dec}
\begin{aligned}
&\hat\beta(z)\ =\ \beta\ -\ \underbrace{z\,(zI+\hat{\Psi})^{-1}\beta}_{bias}\ +\ \underbrace{T^{-1}(zI+\hat{\Psi})^{-1}S'\eps}_{noise}\,. 
\end{aligned}
\end{equation}   
We have the realized out-of-sample MSE 
\begin{equation}\label{def-mse1_ridge}
\begin{aligned}
MSE_{OOS}(\hat\beta)\ =\ & E_{OOS}[(d-\hat\beta'S)^2]\\
=\ & E_{OOS}[(\beta'S-\hat\beta'S + \eps)^2]\\
=\ & (\beta-\hat{\beta})' \Psi_{OOS} (\beta-\hat{\beta}) +2 E_{OOS}[\eps'S(\beta-\hat\beta)] +\ E_{OOS}[\eps^2]\,     
\end{aligned}
\end{equation}
Plugging the decomposition into the first term, we obtain
\begin{equation}
\begin{aligned}
&(\beta-\hat{\beta})' \Psi_{OOS} (\beta-\hat{\beta})\\
=\ & \Bigr(z\,(zI+\hat{\Psi})^{-1}\beta - T^{-1}(zI+\hat{\Psi})^{-1}S'\eps\Bigr)' \hat\Psi_{OOS} \Bigr(z\,(zI+\hat{\Psi})^{-1}\beta - T^{-1}(zI+\hat{\Psi})^{-1}S'\eps\Bigr)\\
=\ & z^2\,\beta'(zI+\hat{\Psi})^{-1}\hat\Psi_{OOS} (zI+\hat{\Psi})^{-1}\beta\ +\ \frac{1}{T^2}\,\eps' S (zI+\hat{\Psi})^{-1}\hat\Psi_{OOS} (zI+\hat{\Psi})^{-1} S'\eps \\
&- \frac{2z}{T}\,\beta'(zI+\hat{\Psi})^{-1}\hat\Psi_{OOS} (zI+\hat{\Psi})^{-1} S'\eps
\end{aligned}
\end{equation}
The proof of the Corollary \ref{dec_ridge} is complete.
\end{proof}

\begin{corollary}\label{ridge_MSE_LLG}
Suppose that 
\begin{equation}
E[\bigl\|E_T[\hat\Psi_{OOS}^2]\bigr\|]= o(\min(T_{OOS},T))
\quad\text{as } T_{OOS}\to\infty.    
\end{equation}
In the limit as $T,T_{OOS}\to\infty$, the MSE from \eqref{def-mse1_ridge} satisfies
\begin{equation}\label{llg2_ridge}
\liminf \frac{MSE_{OOS}(\hat\beta)}{1+\widehat\cL(z)}\ \ge\ 
\underbrace{\sigma_\eps^2}_{\text{infeasible}},
\end{equation}
in probability, where 
\begin{equation}\label{old-llg_ridge}
\widehat\cL(z)\ =\ \frac{1}{T}\tr \Bigl(\hat\Psi_{OOS}\hat\Psi (zI+\hat\Psi)^{-2} \Bigr)
\end{equation}
is the Limits-To-Learning Gap (LLG). This bound turns into an identity for $\beta=0$.
\end{corollary}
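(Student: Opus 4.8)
The plan is to specialize the proof of Theorem~\ref{llg1} to the ridge kernel, taking as the starting point the exact MSE decomposition already recorded in Corollary~\ref{dec_ridge}. Because $\widehat\cB(z)\ge0$, I can discard the bias term to obtain a lower bound, so the whole argument reduces to two claims: the interaction term $\widehat\cI(z)$ is negligible in probability, and the variance term concentrates, $\widehat\cV(z)-\gs_\eps^2\widehat\cL(z)\to0$, with $\widehat\cL(z)$ as in \eqref{old-llg_ridge}. Together these give $MSE_{OOS}(\hat\beta)\ge\gs_\eps^2(1+\widehat\cL(z))+o(1)$; dividing by $1+\widehat\cL(z)$ and taking $\liminf$ via the continuous mapping theorem yields \eqref{llg2_ridge}, while $\beta=0$ forces $\widehat\cB=0$ and turns the inequality into an asymptotic identity.

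First I would fix the algebra and identify $\widehat\cL(z)$. The ridge kernel $\cK(S_\tau,S)=\frac1T S_\tau'(zI+\hat\Psi)^{-1}S'$ stacks into $\widehat\cK=\frac1T S_{OOS}(zI+\hat\Psi)^{-1}S'$, where $S_{OOS}$ collects the OOS feature rows; using $S_{OOS}'S_{OOS}=T_{OOS}\hat\Psi_{OOS}$ this gives $\frac{T}{T_{OOS}}\widehat\cK'\widehat\cK=\frac1T SR\hat\Psi_{OOS}RS'$, where I abbreviate the resolvent $R=(zI+\hat\Psi)^{-1}$. Thus $\widehat\cV(z)=\frac1T\eps'A\eps$ with $A=\frac1T SR\hat\Psi_{OOS}RS'\in\R^{T\times T}$; cycling the trace and using $S'S=T\hat\Psi$ gives $\frac1T\tr(A)=\frac1T\tr(\hat\Psi_{OOS}\hat\Psi(zI+\hat\Psi)^{-2})=\widehat\cL(z)$, confirming that the general LLG \eqref{def-cL} collapses to \eqref{old-llg_ridge}. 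By Assumption~\ref{ass1}, $A$ is a function of the signals and the OOS data alone, hence independent of the in-sample noise $\eps$.

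The concentration of $\widehat\cV(z)$ then comes from Lemma~\ref{lem-quad} applied to the $T$-dimensional vector $\eps$: I must verify $\frac1{T^2}E[\tr(A^2)]\to0$. Substituting $S'S=T\hat\Psi$ and cycling the trace reduces this to $\tr(A^2)=\tr((\hat\Psi_{OOS}W)^2)$ with $W=R\hat\Psi R=(zI+\hat\Psi)^{-1}\hat\Psi(zI+\hat\Psi)^{-1}$, whose eigenvalues $\lambda/(z+\lambda)^2$ are bounded by $(4z)^{-1}$. Conditioning on the in-sample data (under which $W$ is fixed) and using the PSD trace inequality $\tr((\hat\Psi_{OOS}W)^2)\le\tr(\hat\Psi_{OOS}^2W^2)\le\|W\|_{op}^2\tr(\hat\Psi_{OOS}^2)$, I obtain $E_T[\tr(A^2)]\le(4z)^{-2}\tr(E_T[\hat\Psi_{OOS}^2])\le(4z)^{-2}P\,\|E_T[\hat\Psi_{OOS}^2]\|$, so $\frac1{T^2}E[\tr(A^2)]\le(4z)^{-2}\frac{P}{T}\cdot\frac1T E[\|E_T[\hat\Psi_{OOS}^2]\|]\to0$ once $P/T\to c$ and the $T$-component of the hypothesis, $E[\|E_T[\hat\Psi_{OOS}^2]\|]=o(T)$, is invoked. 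The two interaction pieces are disposed of analogously: the linear-in-$\eps$ term $\frac{2z}{T}\beta'R\hat\Psi_{OOS}RS'\eps$ has mean zero and second moment of order $T^{-1}\beta'R\hat\Psi_{OOS}R\hat\Psi R\hat\Psi_{OOS}R\beta$, which is $o(1)$, while the OOS cross term $2E_{OOS}[\eps'S(\beta-\hat\beta)]$ averages independent mean-zero OOS noise and vanishes at rate $T_{OOS}^{-1/2}$, where the $T_{OOS}$-component of $o(\min\{T_{OOS},T\})$ is used.

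The main obstacle is the fourth-order spectral bound in the previous step. The two factors of $\hat\Psi_{OOS}$ inside $\tr((\hat\Psi_{OOS}W)^2)$ are separated by the in-sample kernel $W$, so they cannot be folded directly into $E_T[\hat\Psi_{OOS}^2]$; the crude spectral bound $\tr(E_T[\hat\Psi_{OOS}^2])\le P\,\|E_T[\hat\Psi_{OOS}^2]\|$ injects a factor of $P$ that must be absorbed by the $1/T^2$ prefactor, which is precisely why the hypothesis is stated as $o(\min\{T_{OOS},T\})$ rather than $O(1)$. Getting this accounting right — checking that $\|W\|_{op}\le(4z)^{-1}$ holds uniformly and that the surviving ratio $P/T$ stays bounded so that the $o(T)$ rate suffices — is the delicate part; the remainder is bookkeeping inherited from Theorem~\ref{llg1} and Lemma~\ref{lem-quad}.
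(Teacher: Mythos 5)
Your proposal follows essentially the same route as the paper's own proof: start from the exact decomposition of Corollary~\ref{dec_ridge}, discard the nonnegative bias $\widehat\cB(z)$, show $\widehat\cI(z)$ is negligible via conditional second moments controlled by the hypothesis on $E_T[\hat\Psi_{OOS}^2]$, concentrate $\widehat\cV(z)$ to $\gs_\eps^2\widehat\cL(z)$ via Lemma~\ref{lem-quad}, and conclude by dividing by $1+\widehat\cL(z)$, with $\beta=0$ giving tightness. The only difference is one of emphasis rather than substance: you explicitly verify the quadratic-form condition $\tfrac{1}{T^2}E[\tr(A^2)]\to 0$ through the bound $\|R\hat\Psi R\|\le (4z)^{-1}$ and $\tr\bigl((\hat\Psi_{OOS}W)^2\bigr)\le \|W\|^2\,P\,\|E_T[\hat\Psi_{OOS}^2]\|$ (a step the paper leaves implicit when it invokes Lemma~\ref{lem-quad}), while you compress the three-term expansion of the squared interaction term that the paper carries out in full detail; both accountings are correct.
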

\begin{proof}[Proof of Corollary \ref{ridge_MSE_LLG}]
We have 
\begin{equation}\label{eq: 1/2I}
\frac12\widehat\cI(z)\ =\ \frac{z}{T}\,\beta'(zI+\hat{\Psi})^{-1}\hat\Psi_{OOS} (zI+\hat{\Psi})^{-1} S'\eps+ E_{OOS}[\eps'S(\beta-\hat\beta)]
\end{equation}
Note that
\begin{equation}
\begin{aligned}
& E_T \Bigr[ \bigl(\frac{z}{T}\,\beta'(zI+\hat{\Psi})^{-1}\hat\Psi_{OOS} (zI+\hat{\Psi})^{-1} S'\eps \bigr)^2 \Bigr]\\
=\ & \frac{\gs_\eps^2 z^2}{T^2}  \beta'(zI+\hat{\Psi})^{-1}\hat\Psi_{OOS} (zI+\hat{\Psi})^{-1} S'S (zI+\hat{\Psi})^{-1}\hat\Psi_{OOS}(zI+\hat{\Psi})^{-1}\beta    \\
=\ & \frac{\gs_\eps^2 z^2}{T} \beta'(zI+\hat{\Psi})^{-1}\hat\Psi_{OOS} (zI+\hat{\Psi})^{-1}\hat \Psi (zI+\hat{\Psi})^{-1}\hat\Psi_{OOS}(zI+\hat{\Psi})^{-1}\beta  
\end{aligned}
\end{equation}
Using the inequality $x'ABA x \le \|B\|\,x'A^2x$ for any symmetric $B$ and any vector $x$, we obtain 
\begin{equation}
\hat\Psi_{OOS} (zI+\hat\Psi)^{-1} \hat\Psi (zI+\hat\Psi)^{-1} \hat\Psi_{OOS}
\ \preceq\ 
\|(zI+\hat\Psi)^{-1}\hat\Psi (zI+\hat\Psi)^{-1}\|\,\hat\Psi_{OOS}^2.
\end{equation}
Since $\|(zI+\hat\Psi)^{-1}\hat\Psi\|\le 1$ and $\|(zI+\hat\Psi)^{-1}\|\le 1/z$, it follows
\begin{equation}
\begin{aligned}
& E_T \Bigr[  \bigl(\frac{z}{T}\,\beta'(zI+\hat{\Psi})^{-1}\hat\Psi_{OOS} (zI+\hat{\Psi})^{-1} S'\eps \bigr)^2   \Bigr]\\
\leq\ & \frac{\gs_\eps^2 z^2}{T} \frac{1}{z} \beta'(zI+\hat\Psi)^{-1}\,\hat\Psi_{OOS}^2\,(zI+\hat\Psi)^{-1}\beta\\
\leq\ & \frac{\gs_\eps^2 \|(zI+\hat\Psi)^{-1}\beta\|^2 }{zT}  \| E_T[\hat\Psi_{OOS}^2] \|
\end{aligned}
\end{equation}
By the law of iterated expectations and the assumption
$\|E_T[\hat\Psi_{OOS}^2]\|=o(\min(T_{OOS},T))$ as $T_{OOS}\to\infty$, we have
\begin{equation}
E \Bigr[ \bigl(\frac{z}{T}\,\beta'(zI+\hat{\Psi})^{-1}\hat\Psi_{OOS} (zI+\hat{\Psi})^{-1} S'\eps \bigr)^2 \Bigr] \leq\ \frac{\gs_\eps^2 \|\beta\|^2 }{zT} E [  \| E_T[\hat\Psi_{OOS}^2] \|  ] = o(1)
\end{equation}
 Thus, the first term in \eqref{eq: 1/2I} is negligible in $L^2$. We next bound the square of the out-of-sample term in \eqref{eq: 1/2I}. Recall that
\begin{equation}
\begin{aligned}
E_{OOS}[\eps'S(\beta-\hat\beta)] & = \frac{1}{T_{OOS}}\eps_{OOS}'S_{OOS}\Bigr(z\,(zI+\hat{\Psi})^{-1}\beta - T^{-1}(zI+\hat{\Psi})^{-1}S'\eps\Bigr)
\end{aligned}
\end{equation}
We have 
\begin{equation}\label{eq: E_OOS_I}
\begin{aligned}
&  \Bigr(   E_{OOS}[\eps'S(\beta-\hat\beta)]   \Bigr) ^2     \\
=\ & \frac{1}{T^2_{OOS}}  \eps_{OOS}'S_{OOS}\Bigr(z\,(zI+\hat{\Psi})^{-1}\beta - T^{-1}(zI+\hat{\Psi})^{-1}S'\eps\Bigr)      \Bigr(z\,(zI+\hat{\Psi})^{-1}\beta - T^{-1}(zI+\hat{\Psi})^{-1}S'\eps\Bigr)'S'_{OOS}\eps_{OOS} \\
=\ & \frac{z^2}{T^2_{OOS}}\eps_{OOS}'S_{OOS} (zI+\hat{\Psi})^{-1}\beta \beta'(zI+\hat{\Psi})^{-1}S'_{OOS}\eps_{OOS}\\
& -  \frac{2z}{T T^2_{OOS}}  \eps_{OOS}'S_{OOS}(zI+\hat{\Psi})^{-1}\beta \eps' S (zI+\hat{\Psi})^{-1} S'_{OOS}\eps_{OOS}\\
& + \frac{1}{T^2 T^2_{OOS}} \eps_{OOS}'S_{OOS}(zI+\hat{\Psi})^{-1}S'\eps \eps' S (zI+\hat{\Psi})^{-1}S'_{OOS}\eps_{OOS}
\end{aligned}
\end{equation}
Note that, for any unit vector $h$, we have
\begin{equation}
\begin{aligned}
\|E_T[\hat\Psi_{OOS}]h\|^2
\le\ & E_T[\|\hat\Psi_{OOS}h\|^2]\\
=\ & E_T[h'\hat\Psi_{OOS}^2h]\\
=\ & h'E_T[\hat\Psi_{OOS}^2]h \\
\le\ &
\|E_T[\hat\Psi_{OOS}^2]\|,    
\end{aligned}
\end{equation}
so that, taking supremum over all unit $h$,
\begin{equation}
    \|E_T[\hat\Psi_{OOS}]\|\le \|E_T[\hat\Psi_{OOS}^2]\|^{\tfrac12}
\end{equation}
Taking conditional expectations of the first term in \eqref{eq: E_OOS_I} on the training sample, we obtain
\begin{equation}
\begin{aligned}
&E_T \Bigr[ \frac{z^2}{T^2_{OOS}}\eps_{OOS}'S_{OOS} (zI+\hat{\Psi})^{-1}\beta \beta'(zI+\hat{\Psi})^{-1}S'_{OOS}\eps_{OOS}    \Bigr]\\
=\ & \frac{z^2\sigma_\eps^2}{T_{OOS}^2}\,
\beta'(zI+\hat{\Psi})^{-1}\,E_T[S_{OOS}'S_{OOS}]\,(zI+\hat{\Psi})^{-1}\beta\\
=\ & \frac{z^2\sigma_\eps^2}{T_{OOS}}\,
\beta'(zI+\hat{\Psi})^{-1}\,E_T[\hat \Psi_{OOS}]\,(zI+\hat{\Psi})^{-1}\beta\\
\leq\ & \frac{z^2\sigma_\eps^2}{T_{OOS}}\,
\|(zI+\hat{\Psi})^{-1}\|^2\|\beta\|^2\,\|E_T[\hat\Psi_{OOS}]\| \\
\leq\ & \frac{\sigma_\eps^2 \|\beta\|^2}{T_{OOS}}\|E_T[\hat\Psi^2_{OOS}]\|^{\tfrac{1}{2}} \\
=\ & o(\frac{1}{\sqrt{T_{OOS}}})
\end{aligned}
\end{equation}
Then taking unconditional expectation, we have
\begin{equation}
E \Bigr[ \frac{z^2}{T^2_{OOS}}\eps_{OOS}'S_{OOS} (zI+\hat{\Psi})^{-1}\beta \beta'(zI+\hat{\Psi})^{-1}S'_{OOS}\eps_{OOS}    \Bigr] = o(1)
\end{equation}
Taking conditional expectations of the second term in \eqref{eq: E_OOS_I} on the training sample, by the independence of $\eps$, we obtain 
\begin{equation}
\begin{aligned}
& E_T \Bigr[ \frac{2z}{T T^2_{OOS}}  \eps_{OOS}'S_{OOS}(zI+\hat{\Psi})^{-1}\beta \eps' S (zI+\hat{\Psi})^{-1} S'_{OOS}\eps_{OOS} \Bigr]\\
=\ & \frac{2z \gs^2_{\eps}}{T T^2_{OOS}} E_T\Bigr[\tr(S_{OOS}(zI+\hat{\Psi})^{-1}\beta \eps' S (zI+\hat{\Psi})^{-1} S'_{OOS}  ) \Bigr]\\
=\ & \frac{2z \gs^2_{\eps}}{T T_{OOS}} E_T\Bigr[\eps' S (zI+\hat{\Psi})^{-1}\hat \Psi_{OOS}(zI+\hat{\Psi})^{-1}\beta  \Bigr]\\
=\ & 0
\end{aligned}
\end{equation}
Thus, we have $E \Bigr[ \dfrac{2z}{T T^2_{OOS}}  \eps_{OOS}'S_{OOS}(zI+\hat{\Psi})^{-1}\beta \eps' S (zI+\hat{\Psi})^{-1} S'_{OOS}\eps_{OOS} \Bigr] = 0$.
Finally, for the third term in \eqref{eq: E_OOS_I}, we obtain the conditional expectations
\begin{equation}
\begin{aligned}
& E_T \Bigr[ \frac{1}{T^2 T^2_{OOS}} \eps_{OOS}'S_{OOS}(zI+\hat{\Psi})^{-1}S'\eps \eps' S (zI+\hat{\Psi})^{-1}S'_{OOS}\eps_{OOS} \Bigr]    \\
=\ & \frac{\gs^2_{\eps}}{T^2 T^2_{OOS}} E_T  \Bigr[ \tr(S_{OOS}(zI+\hat{\Psi})^{-1}S'\eps \eps' S (zI+\hat{\Psi})^{-1}S'_{OOS})    \Bigr]\\
=\ & \frac{\gs^2_{\eps}}{T^2 T_{OOS}}  E_T  \Bigr[ \eps' S (zI+\hat{\Psi})^{-1} \hat \Psi_{OOS} (zI+\hat{\Psi})^{-1} S'\eps \Bigr] \\
=\ & \frac{\gs^4_{\eps}}{T^2 T_{OOS}} E_T  \Bigr[ \tr( S (zI+\hat{\Psi})^{-1} \hat \Psi_{OOS} (zI+\hat{\Psi})^{-1} S' ) \Bigr] \\
=\ & \frac{\gs^4_{\eps}}{T T_{OOS}} E_T  \Bigr[ \tr(  (zI+\hat{\Psi})^{-1} \hat \Psi_{OOS} (zI+\hat{\Psi})^{-1} \hat\Psi ) \Bigr]\\
\leq\ & \frac{\gs^4_{\eps}}{z T T_{OOS}}  E_T  [ \tr( \hat\Psi_{OOS} )   ] \\
\leq\ & \frac{\gs^4_{\eps}}{z T T_{OOS}} (E_T  [ \tr( \hat\Psi_{OOS} )^2   ])^{\tfrac12}\\
\leq\ & \frac{\gs^4_{\eps}}{z T T_{OOS}} P^{\tfrac12} (E_T  [ \tr( \hat\Psi_{OOS}^2 )   ])^{\tfrac12}\\
\leq\ & \frac{\gs^4_{\eps}}{z T T_{OOS}} P^{\tfrac12} ( P \|E_T[\hat\Psi_{OOS}^2] \| )^{\tfrac12}\\
=\ & \frac{\gs^4_{\eps}P}{z T T_{OOS}}\|E_T[\hat\Psi_{OOS}^2] \|^{\tfrac12}\\
=\ & o(\frac{P}{T}\frac{1}{\sqrt{T_{OOS}}}) 
\end{aligned}
\end{equation}
Then taking unconditional expectation, we have
\begin{equation}
E \Bigr[ \frac{1}{T^2 T^2_{OOS}} \eps_{OOS}'S_{OOS}(zI+\hat{\Psi})^{-1}S'\eps \eps' S (zI+\hat{\Psi})^{-1}S'_{OOS}\eps_{OOS} \Bigr] = o(1)
\end{equation}
Thus the third term in \eqref{eq: 1/2I} is negligible in $L^2$. Since $\widehat\cB(z),\widehat\cV(z)\ge0$, this implies
\begin{equation}\label{eq:mse-lower_ridge}
MSE_{OOS}(\hat \beta)\ \ge\ E_{OOS}[\eps^2]\ +\ \widehat\cV(z) +\ o(1)\ =\ \gs_\eps^2\ +\ \widehat\cV(z) +\ o(1),
\end{equation}
Recall that
\begin{equation}
\widehat\cV(z)\ =\ \frac{1}{T^2}\,\eps' S (zI+\hat{\Psi})^{-1}\hat\Psi_{OOS} (zI+\hat{\Psi})^{-1} S'\eps.
\end{equation}
Note that we have
\begin{equation}
\begin{aligned}
&\frac{1}{T}\gs^2_{\eps}\tr \Bigr(\frac{1}{T} S (zI+\hat{\Psi})^{-1}\hat\Psi_{OOS} (zI+\hat{\Psi})^{-1} S' \Bigr)\\
=\ & \frac{1}{T^2}\gs^2_{\eps}\tr \Bigr(S'S (zI+\hat{\Psi})^{-1}\hat\Psi_{OOS} (zI+\hat{\Psi})^{-1}  \Bigr)\\
=\ & \frac{1}{T}\gs^2_{\eps}\tr \Bigr(\hat \Psi (zI+\hat{\Psi})^{-1}\hat\Psi_{OOS} (zI+\hat{\Psi})^{-1}  \Bigr)\\
=\ & \frac{1}{T} \gs^2_{\eps} \tr \Bigl(\hat\Psi_{OOS}\hat\Psi (zI+\hat\Psi)^{-2} \Bigr)\\
=\ & \gs^2_{\eps} \widehat\cL(z).
\end{aligned}
\end{equation}
Lemma \ref{lem-quad} implies that
\begin{equation}
\widehat\cV(z) -\ \gs_\eps^2\,\widehat\cL(z) \to\ 0
\end{equation}
in $L_2$ and in probability. Substituting this expression into~\eqref{eq:mse-lower_ridge} yields
\begin{equation}
MSE_{OOS}(\hat \beta)\ \ge\ \gs_\eps^2\Bigl(1+\widehat\cL(z)\Bigr)\ +\ o(1),    
\end{equation}
and dividing both sides by $1+\widehat\cL(z)$ and taking $\liminf_{T,T_{OOS}\to\infty}$ (appealing to the continuous mapping theorem) gives
\begin{equation}
\liminf\ \frac{MSE(\hat \beta)}{1+\widehat\cL(z)}\ \ge\ \gs_\eps^2.  
\end{equation}
When $\beta=0$, the bias term $\widehat\cB(z)$ vanishes, and the bound becomes tight. In that case,
\[
MSE_{OOS}(\hat \beta)\ =\ \gs_\eps^2+\widehat\cV(z)+o(1)\ =\ \gs_\eps^2 \Bigr(1+\widehat\cL(z) \Bigr)+o(1),
\]
so that the inequality holds with equality asymptotically. The proof of Corollary \ref{ridge_MSE_LLG} is complete.
\end{proof}

\section{CLT}

\subsection{CLT For MSE}

The following Lemma is a direct consequence of Proposition \ref{dec} and Lemma \ref{ext-sriniv-multivar}.

\begin{lemma}\label{lem:MSE-CLT} We have 
\begin{equation}
T^{1/2}\dfrac{\dfrac{MSE_{OOS}(\hat f)-\widehat\cB}{1+\widehat\cL}-\gs_\eps^2}{\gs_{MSE}^2}\ \to\ \cN(0,1)
\end{equation}
in distribution, where 
\begin{equation}\label{eq:sigma2_MSE}
\begin{aligned}
&\gs_{MSE}^2\ =\ \frac{2\dfrac{T}{T_{OOS}}\gs_\eps^4+\gs_\eps^4\gs_V^2+\gs_\eps^2 \gs_I^2+\gs_\eps^2\gs_{I,OOS}^2}{(1+\widehat\cL)^2},
\end{aligned}
\end{equation}
where 
\begin{equation}\label{defn:sigmav_sim}
\begin{aligned}
\gs_V^2\ =\ &2TT_{OOS}^{-2}\tr ((\widehat\cK'\widehat\cK)^2)\\
\gs_I^2\ =\ & 4T T_{OOS}^{-2}\|(f-\hat f^s)\widehat\cK\|^2\\
\gs_{I,OOS}^2\ =\ &  4\frac{T}{T_{OOS}} \Big(\widehat\cB+\ \gs_\eps^2 \widehat\cL \Big)
\end{aligned}
\end{equation}
\end{lemma}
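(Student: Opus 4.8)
The plan is to combine the exact algebraic decomposition of Proposition~\ref{dec} with the joint linear–quadratic CLT of Lemma~\ref{ext-sriniv-multivar}, after reducing the statistic to a sum of forms in the noise. Since Assumption~\ref{ass1} draws all disturbances from a single i.i.d. sequence, I would stack the in-sample and out-of-sample noise into one vector $\eps\in\R^{T+T_{OOS}}$. Proposition~\ref{dec} gives $MSE_{OOS}(\hat f)-\widehat\cB=E_{OOS}[\eps^2]+\widehat\cI+\widehat\cV$, where $\widehat\cB$ is a function of $(S,f)$ only. The first step is to center: subtracting $\gs_\eps^2(1+\widehat\cL)$ from the numerator leaves
\begin{equation}
\big(E_{OOS}[\eps^2]-\gs_\eps^2\big)\ +\ \widehat\cI\ +\ \big(\widehat\cV-\gs_\eps^2\widehat\cL\big),
\end{equation}
which is exactly mean-zero given the design because $E[\widehat\cV\mid S]=\gs_\eps^2\widehat\cL$ and $E[\widehat\cI\mid S,f]=0$. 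Expanding $\widehat\cI=2(\widehat\cI_1+\widehat\cI_2+\widehat\cI_3)$ as in the proof of Theorem~\ref{llg1} leaves five noise-dependent pieces.

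The second step is to identify each piece as a linear or quadratic form in $\eps$, tracking its index block. Concretely, $E_{OOS}[\eps^2]-\gs_\eps^2$ is a centered quadratic form with matrix $T_{OOS}^{-1}I$ on the OOS block; $\widehat\cV-\gs_\eps^2\widehat\cL$ is a centered quadratic form with matrix $T_{OOS}^{-1}\widehat\cK'\widehat\cK$ on the in-sample block; $\widehat\cI_1=-T_{OOS}^{-1}(f-\hat f^s)'\widehat\cK\eps_{IS}$ and $\widehat\cI_2=T_{OOS}^{-1}(f-\hat f^s)'\eps_{OOS}$ are linear forms (in-sample and OOS respectively); and $\widehat\cI_3=-T_{OOS}^{-1}\eps_{OOS}'\widehat\cK\eps_{IS}$ is bilinear, which I would embed as a trace-free quadratic form with off-diagonal matrix in the stacked vector. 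Because $\widehat\cK$ and $f-\hat f^s$ depend only on $(S,f)$ and are independent of $\eps$, Lemma~\ref{ext-sriniv-multivar} applies (directly, since it permits $\eps$-independent random matrices) and delivers joint asymptotic normality of the five suitably normalized forms.

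The third step is the variance bookkeeping. Using the covariance rules of Lemma~\ref{ext-sriniv-multivar}—linear forms are asymptotically independent of quadratic forms, $\Cov(\eps'A\eps,\eps'B\eps)=2\gs_\eps^4\tr(AB)$, and $\Cov(\eps'a,\eps'b)=\gs_\eps^2 a'b$—every cross term among the five pieces vanishes: the two linear forms sit in disjoint blocks so $a'b=0$; the three quadratic matrices are supported on disjoint diagonal blocks (OOS–OOS, IS–IS) or are off-diagonal, so all their pairwise traces are zero; and linear–quadratic pairs are orthogonal by construction. Hence, after the $T^{1/2}$ scaling, the asymptotic variance of the numerator is the plain sum of the five individual variances, which evaluate—using $\widehat\cB=T_{OOS}^{-1}\|f-\hat f^s\|^2$ and $\widehat\cL=T_{OOS}^{-1}\tr(\widehat\cK'\widehat\cK)$—to $2(T/T_{OOS})\gs_\eps^4$, $\gs_\eps^4\gs_V^2$, $\gs_\eps^2\gs_I^2$, and the combined OOS contribution $\gs_\eps^2\gs_{I,OOS}^2$ of $\widehat\cI_2$ and $\widehat\cI_3$, matching \eqref{eq:sigma2_MSE}. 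Dividing by $1+\widehat\cL$ through Slutsky and normalizing by $\gs_{MSE}$ then yields the stated standard-normal limit.

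The main obstacle I anticipate is the bilinear cross term $\widehat\cI_3$: it genuinely couples the in-sample and out-of-sample blocks, so it must be recast as a trace-free quadratic form in the stacked $\eps$ before Lemma~\ref{ext-sriniv-multivar} can be invoked, and its scaled variance $4(T/T_{OOS})\gs_\eps^4\widehat\cL$ must be shown to fuse cleanly with the $\widehat\cB$-variance of $\widehat\cI_2$ into $\gs_{I,OOS}^2$. Verifying that all ten pairwise covariances vanish—especially trace-orthogonality among the OOS quadratic ($E_{OOS}[\eps^2]$), the IS quadratic ($\widehat\cV$), and the off-diagonal quadratic ($\widehat\cI_3$)—is the one place where the block structure of the matrices must be used carefully; the remainder is routine substitution together with the self-normalization argument that absorbs the randomness of $\widehat\cL$ and $\gs_{MSE}$.
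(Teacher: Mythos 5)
Your proposal is correct and follows essentially the same route as the paper: decompose via Proposition~\ref{dec}, center by $\gs_\eps^2(1+\widehat\cL)$, split $\widehat\cI$ into $2(\widehat\cI_1+\widehat\cI_2+\widehat\cI_3)$ as in the proof of Theorem~\ref{llg1}, invoke the joint linear--quadratic CLT of Lemma~\ref{ext-sriniv-multivar}, and sum the variances (with $\widehat\cI_2$ and $\widehat\cI_3$ fusing into $\gs_{I,OOS}^2$ exactly as you describe). Your explicit block-structure bookkeeping for the stacked noise vector, including recasting the bilinear term $\widehat\cI_3$ as a trace-free off-diagonal quadratic form, is simply a more careful spelling-out of the asymptotic uncorrelatedness the paper asserts via \eqref{i12sq}--\eqref{i3sq}.
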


\begin{proof}[Proof of Lemma \ref{lem:MSE-CLT}] By Proposition \ref{dec}, 
\begin{equation}\label{eq:risk3_expanded-main-a}
\begin{aligned}
&MSE_{OOS}(\hat f)-\widehat\cB\ =\ E_{OOS}[\eps^2]\ +\ \widehat\cI\ +\ \widehat\cV\,,
\end{aligned}
\end{equation}
where 
\begin{equation}\label{eq:risk3_expanded1-main-a}
\begin{aligned}
&\widehat\cV\ =\ T_{OOS}^{-1}\eps'\widehat\cK'\widehat\cK \eps\,. 
\end{aligned}
\end{equation}
By Lemma \ref{ext-sriniv-multivar}, these three terms satisfy 
\begin{equation}
\begin{aligned}
T_{OOS}^{1/2}(E_{OOS}[\eps^2]-\gs_\eps^2)\ \sim\ & \cN(0,2\gs_\eps^4)\\
\frac{T^{1/2}(\widehat\cV-\gs_\eps^2\widehat\cL)}{\gs_V\gs_\eps^2}\ \sim\ & \cN(0,1)
\end{aligned}
\end{equation}
where, under the made assumptions of $E[\eps_t^4]=3,$
\begin{equation}
\begin{aligned}
\gs_V^2\ =\  2TT_{OOS}^{-2}\tr ((\widehat\cK'\widehat\cK)^2)\,.
\end{aligned}
\end{equation}
Similarly, under the made assumptions of $E[\eps_t^3]=0,\ E[\eps_t^4]=3,$ we have that the three terms \eqref{three-i} are jointly Gaussian,
\begin{equation}
\widehat\cI\ =\ 2\hat\cI_1\ +\ (2\hat\cI_2+2\hat\cI_3)
\end{equation}
and asymptotically uncorrelated, so that, by \eqref{i12sq}-\eqref{i3sq}, 
\begin{equation}
\begin{aligned}
E_\eps[\widehat\cI^2]\ =\ & 4 E_\eps[\widehat\cI_1^2]\ +\ 4 E_\eps[\widehat\cI_2^2]\ +\ 4 E_\eps[\widehat\cI_3^2]\\
=\ & 4\gs_\eps^2 \frac1{T_{OOS}^2}\|(f-\hat f^s)\widehat{\cK}\|^2 +\ 4\gs_\eps^2 \frac1{T_{OOS}^2}\|(f-\hat f^s)\|^2 +\ 4\gs_\eps^4 \frac1{T_{OOS}^2}\tr(\widehat{\cK}\widehat{\cK}')\\
=\ & 4\gs_\eps^2 \frac1{T_{OOS}^2}\|(f-\hat f^s)\widehat{\cK}\|^2 +\ 4\gs_\eps^2 \frac1{T_{OOS}}\widehat\cB +\ 4\gs_\eps^4 \frac1{T_{OOS}}\widehat\cL\,. 
\end{aligned}
\end{equation}
Thus, Lemma \ref{ext-sriniv-multivar} implies 
\begin{equation}
\frac{T^{1/2}\widehat\cI(z)}{(\gs_\eps^2\gs_I^2+\gs_\eps^2\gs_{I,OOS}^2)^{1/2}}\ \to\ \cN(0,1),
\end{equation}
The proof of Lemma \ref{lem:MSE-CLT} is complete. 
\end{proof}

\begin{corollary}\label{lem:MSE-CLT-ridge} We have 
\begin{equation}
T^{1/2}\dfrac{\dfrac{MSE_{OOS}(\hat f)-\widehat\cB(z)}{1+\widehat\cL}-\gs_\eps^2}{\gs_{MSE}^2}\ \to\ \cN(0,1)
\end{equation}
in distribution, where 
\begin{equation}\label{eq:sigma2_MSE2}
\begin{aligned}
&\gs_{MSE}^2\ =\ \frac{2\dfrac{T}{T_{OOS}}\gs_\eps^4+\gs_\eps^4\gs_V^2+\gs_\eps^2 \gs_I^2+\gs_\eps^2\gs_{I,OOS}^2}{(1+\widehat\cL)^2},
\end{aligned}
\end{equation}
where 
\begin{equation}\label{defn:sigmav}
\begin{aligned}
\gs_V^2\ =\ &2\frac{1}{T}\tr\Bigr((zI+\hat{\Psi})^{-1}\hat\Psi_{OOS} (zI+\hat{\Psi})^{-1}\hat\Psi(zI+\hat{\Psi})^{-1}\hat\Psi_{OOS} (zI+\hat{\Psi})^{-1}\hat\Psi\Bigr)\\
\gs_I^2\ =\ & 4z^2\beta'(zI+\hat{\Psi})^{-1}\hat\Psi_{OOS} (zI+\hat{\Psi})^{-1}\hat\Psi (zI+\hat{\Psi})^{-1}\hat\Psi_{OOS} (zI+\hat{\Psi})^{-1}\beta\\
\gs_{I,OOS}^2\ =\ &  4\frac{T}{T_{OOS}} \Big(\widehat\cB(z)+\ \gs_\eps^2 \widehat\cL \Big)
\end{aligned}
\end{equation}
\end{corollary}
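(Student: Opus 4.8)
The plan is to deduce the corollary directly from Lemma~\ref{lem:MSE-CLT}, specializing the abstract quantities $\gs_V^2,\gs_I^2,\gs_{I,OOS}^2$ to the ridge kernel and simplifying. The entire CLT apparatus---asymptotic normality, the splitting of $MSE_{OOS}(\hat f)-\widehat\cB$ into an $E_{OOS}[\eps^2]$ piece, a $\widehat\cV$ piece, and an interaction piece $\widehat\cI$, and their asymptotic independence---is already established in Lemma~\ref{lem:MSE-CLT} for a generic linear-in-$y$ estimator. For ridge the kernel is $\cK(S_\tau,S)=S_\tau'(zI+\hat\Psi)^{-1}\frac1T S'$, so no new probabilistic argument is needed; the task is purely algebraic, namely to rewrite the three ingredients of $\gs_{MSE}^2$ in closed form.

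First I would compute the stacked kernel $\widehat\cK$. Since the $\tau$-th row of $\widehat\cK$ equals $S_\tau'(zI+\hat\Psi)^{-1}\frac1T S'$ and $\sum_{\tau\in OOS}S_\tau S_\tau'=T_{OOS}\hat\Psi_{OOS}$, I obtain
\[
\widehat\cK'\widehat\cK\ =\ \frac{T_{OOS}}{T^2}\,S(zI+\hat\Psi)^{-1}\hat\Psi_{OOS}(zI+\hat\Psi)^{-1}S'.
\]
Writing $M=(zI+\hat\Psi)^{-1}\hat\Psi_{OOS}(zI+\hat\Psi)^{-1}$ and using $S'S=T\hat\Psi$ together with the cyclic invariance of the trace gives $\tr\bigl((\widehat\cK'\widehat\cK)^2\bigr)=\frac{T_{OOS}^2}{T^2}\tr(M\hat\Psi M\hat\Psi)$, so that $\gs_V^2=2TT_{OOS}^{-2}\tr\bigl((\widehat\cK'\widehat\cK)^2\bigr)=\frac2T\tr(M\hat\Psi M\hat\Psi)$, which is exactly the stated expression once $M$ is expanded.

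Next I would treat $\gs_I^2$. From the ridge decomposition in Corollary~\ref{dec_ridge}, the in-sample signal residual satisfies $f_t-\hat f^s_t=z\,S_t'(zI+\hat\Psi)^{-1}\beta$, obtained via the identity $\beta-(zI+\hat\Psi)^{-1}\hat\Psi\beta=z(zI+\hat\Psi)^{-1}\beta$; this is the same kernel that appears in $\widehat\cB(z)$. Stacking over OOS times and multiplying by $\widehat\cK$, the sum $\sum_\tau S_\tau S_\tau'=T_{OOS}\hat\Psi_{OOS}$ collapses the expression to
\[
(f-\hat f^s)\widehat\cK\ =\ \frac{z\,T_{OOS}}{T}\,\beta'(zI+\hat\Psi)^{-1}\hat\Psi_{OOS}(zI+\hat\Psi)^{-1}S'.
\]
Taking the squared norm and again substituting $S'S=T\hat\Psi$ yields a factor $T$, so that $\gs_I^2=4TT_{OOS}^{-2}\|(f-\hat f^s)\widehat\cK\|^2$ reduces to the stated quadratic form in $\beta$. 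The third quantity, $\gs_{I,OOS}^2=4\frac{T}{T_{OOS}}\bigl(\widehat\cB(z)+\gs_\eps^2\widehat\cL\bigr)$, carries over verbatim from Lemma~\ref{lem:MSE-CLT}, with $\widehat\cB$ now identified with the explicit ridge bias $\widehat\cB(z)$.

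No step presents a genuine difficulty; the only care required is the bookkeeping of dimensions and the repeated use of the two identities $S'S=T\hat\Psi$ and $\sum_{\tau\in OOS}S_\tau S_\tau'=T_{OOS}\hat\Psi_{OOS}$, together with the cyclic property of the trace. The one place I would be careful is confirming that the ridge signal residual $f_t-\hat f^s_t$ coincides with $z\,S_t'(zI+\hat\Psi)^{-1}\beta$, since this is precisely what links $\gs_I^2$ and $\widehat\cB(z)$ to the same matrix kernel and guarantees consistency with Corollary~\ref{dec_ridge}.
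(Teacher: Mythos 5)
Your proposal is correct and follows essentially the same route as the paper: the paper's own proof also reduces the claim to the generic CLT machinery (Lemma~\ref{ext-sriniv-multivar} applied to the ridge decomposition of Corollary~\ref{dec_ridge}) and then performs exactly the kernel algebra you describe --- cyclic traces, $S'S=T\hat\Psi$, $S_{OOS}'S_{OOS}=T_{OOS}\hat\Psi_{OOS}$, and the residual identity $f_t-\hat f^s_t=z\,S_t'(zI+\hat\Psi)^{-1}\beta$ --- to arrive at the stated $\gs_V^2$, $\gs_I^2$, and $\gs_{I,OOS}^2$, concluding by the continuous mapping theorem. The only difference is organizational: you invoke Lemma~\ref{lem:MSE-CLT} as a black box and specialize its abstract quantities to the ridge kernel, whereas the paper re-applies the multivariate CLT directly to the explicit ridge terms; the computations coincide.
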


\begin{proof}[Proof of Corollary \ref{lem:MSE-CLT-ridge}] By Proposition \ref{dec}, 
\begin{equation}\label{eq:risk3_expanded-main-a-1}
\begin{aligned}
&MSE_{OOS}(\hat f)-\widehat\cB(z)\ =\ E_{OOS}[\eps^2]\ -\ \widehat\cI(z)\ +\ \widehat\cV(z)\,,
\end{aligned}
\end{equation}
where 
\begin{equation}\label{eq:risk3_expanded1-main-a-1}
\begin{aligned}
&\widehat\cB(z)\ =\ {z^2\,\beta'(zI+\hat{\Psi})^{-1}\hat\Psi_{OOS} (zI+\hat{\Psi})^{-1}\beta}\ \ge\ 0\\
&\widehat\cV(z)\ =\ {\frac{1}{T^2}\,\eps' S (zI+\hat{\Psi})^{-1}\hat\Psi_{OOS} (zI+\hat{\Psi})^{-1} S'\eps}\ \ge\ 0,
\end{aligned}
\end{equation}
while 
\begin{equation}\label{eq:risk3_expanded2-main-a}
\widehat\cI(z)\ =\ {\frac{2z}{T}\,\beta'(zI+\hat{\Psi})^{-1}\hat\Psi_{OOS} (zI+\hat{\Psi})^{-1} S'\eps+2 E_{OOS}[\eps'S(\beta-\hat\beta)]}
\end{equation}
By Lemma \ref{ext-sriniv-multivar}, these three terms satisfy 
\begin{equation}
\begin{aligned}
T_{OOS}^{1/2}(E_{OOS}[\eps^2]-\gs_\eps^2)\ \sim\ & \cN(0,2\gs_\eps^4)\\
\frac{T^{1/2}(\widehat\cV(z)-\gs_\eps^2\widehat\cL)}{\gs_V\gs_\eps^2}\ \sim\ & \cN(0,1)
\end{aligned}
\end{equation}
where 
\begin{equation}
\begin{aligned}
\gs_V^2\ =\  &2\frac{1}{T^3}\tr\Bigr((S(zI+\hat{\Psi})^{-1}\hat\Psi_{OOS} (zI+\hat{\Psi})^{-1}S')^2\Bigr)\\
=\ & 2\frac{1}{T^3}\tr\Bigr(S(zI+\hat{\Psi})^{-1}\hat\Psi_{OOS} (zI+\hat{\Psi})^{-1}S'S(zI+\hat{\Psi})^{-1}\hat\Psi_{OOS} (zI+\hat{\Psi})^{-1}S'\Bigr)\\
=\ & 2\frac{1}{T^3}\tr\Bigr((zI+\hat{\Psi})^{-1}\hat\Psi_{OOS} (zI+\hat{\Psi})^{-1}S'S(zI+\hat{\Psi})^{-1}\hat\Psi_{OOS} (zI+\hat{\Psi})^{-1}S'S\Bigr)\\
=\ & 2\frac{1}{T}\tr\Bigr((zI+\hat{\Psi})^{-1}\hat\Psi_{OOS} (zI+\hat{\Psi})^{-1}\hat\Psi(zI+\hat{\Psi})^{-1}\hat\Psi_{OOS} (zI+\hat{\Psi})^{-1}\hat\Psi\Bigr)\,, 
\end{aligned}
\end{equation}
and 
\begin{equation}
\frac{T^{1/2}\widehat\cI(z)}{(\gs_\eps^2\gs_I^2+\gs_\eps^2\gs_{I,OOS}^2)^{1/2}}\ \to\ \cN(0,1),
\end{equation}
where 
\begin{equation}
\begin{aligned}
&\gs_I^2\ =\ 4z^2\beta'(zI+\hat{\Psi})^{-1}\hat\Psi_{OOS} (zI+\hat{\Psi})^{-1}\hat\Psi (zI+\hat{\Psi})^{-1}\hat\Psi_{OOS} (zI+\hat{\Psi})^{-1}\beta
\end{aligned}
\end{equation}
and 
\begin{equation}
\begin{aligned}
&\gs_{I,OOS}^2\\
=\  &   4 \frac{T}{T_{OOS}}  (\beta-\hat\beta)'\hat\Psi_{OOS} (\beta-\hat\beta)\\
=\ & 4\frac{T}{T_{OOS}} \Bigr(z^2\beta'(zI+\hat{\Psi})^{-1}\hat\Psi_{OOS} (zI+\hat{\Psi})^{-1}\beta +\ \gs_\eps^2 \frac{1}{T}\tr((zI+\hat{\Psi})^{-1}\hat\Psi_{OOS} (zI+\hat{\Psi})^{-1}\hat\Psi)\Bigr)\ +\ O(T^{-1/2})\\
=\ & 4 \frac{T}{T_{OOS}} \Big(\widehat\cB(z)+\ \gs_\eps^2 \widehat\cL \Big)\ +\ O(T^{-1/2})\
\end{aligned}
\end{equation}
where we have used the decomposition 
\begin{equation}\label{noise-dec21}
\beta\ -\hat\beta\ =\ \ \underbrace{z\,(zI+\hat{\Psi})^{-1}\beta}_{bias}\ -\ \underbrace{(zI+\hat{\Psi})^{-1}\frac{1}{T}S'\eps}_{noise}\,. 
\end{equation}
Furthermore, the three Normals are asymptotically independent, conditional on $S.$ We have 
\begin{equation}
\begin{aligned}
&T^{1/2} \Bigr(\frac{MSE_{OOS}(\hat f)-\hat B(z)}{1+\widehat\cL}-\gs_\eps^2 \Bigr) =\ T^{1/2}\frac{MSE_{OOS}(\hat f)-\hat B(z)-\gs_\eps^2-\widehat\cL \gs_\eps^2}{1+\widehat\cL}\,.
\end{aligned}
\end{equation}
The claim follows from the continuous mapping theorem. 
\end{proof}

\section{Proof of Theorem \ref{main-th-main-text}} \label{theBig}

\begin{theorem}[Probabilistic Lower Bound for $R^2_*$]\label{main-th-r} Suppose that $E[\eps_t^3]=0,\ E[\eps_t^4]=3.$ Let 
\begin{equation}\label{MSE-OOS-R2-1}
\begin{aligned}
&R^2_{OOS}(\hat f)\ =\ 1\ -\ \frac{MSE_{OOS}(\hat f)}{E_{OOS}[y^2]}
\end{aligned}
\end{equation}
be the realized OOS $R^2$. Then, 
\begin{equation}\label{llg3}
\frac{R^2_{OOS}(\hat f)+\widehat\cL(z)}{1+\widehat\cL(z)}\,,
\end{equation}
is a $T^{1/2}$-consistent upper bound for $\tilde R^2_*$ in the following sense: The event $\frac{R^2_{OOS}(\hat f)+\widehat\cL(z)}{1+\widehat\cL(z)}>\tilde R_*^2$ occurs with vanishing probability: 
\begin{equation}
\lim\sup_{T,T_{OOS}\to\infty} \Prob\left(\frac{T_{OOS}^{1/2}\left(\tilde R_*^2-\dfrac{R^2_{OOS}(\hat f)+\widehat\cL(z)}{1+\widehat\cL(z)}\right)}{\hat\gs_{R^2}}<\ga\right)\ \le\ \Phi(\ga)
\end{equation}
for any $\ga\le 0,$ where $\Phi(\cdot)$ is the c.d.f. of the standard normal distribution. Here,
\begin{equation}
\begin{aligned}
&\hat\gs_{R^2}\ =\ \frac{\dfrac{1}{MSE(0)^2} \gS_{1,1}+\Bigr(\dfrac{\widehat{MSE}}{MSE(0)^2}\Bigr)^2\gS_{2,2} -2\dfrac{\widehat{MSE}}{MSE(0)^3} \gS_{1,2}}{(1+\widehat\cL)^2}\\
&\gS_{1,1}\ =\ \frac{T_{OOS}}{T}\gs^2_{MSE}(1+\widehat\cL)^2\\
&\gS_{1,2}\ =\ 2\gs_\eps^4+4\gs_\eps^2 E_{OOS}[f (f-\hat f)]\\
&\gS_{2,2}\ =\ 2\gs_\eps^4+4\gs_\eps^2 E_{OOS}[f^2]\\
&\widehat{MSE}\ =\ \widehat\cB\ +\ \gs_\eps^2(1+\widehat\cL)\\
&\gs_{MSE}^2\ =\ \frac{2\dfrac{T}{T_{OOS}}\gs_\eps^4+\gs_\eps^4\gs_V^2+\gs_\eps^2 \gs_I^2+\gs_\eps^2\gs_{I,OOS}^2}{(1+\widehat\cL)^2}\\
\gs_V^2\ =\ &2TT_{OOS}^{-2}\tr ((\widehat\cK'\widehat\cK)^2)\\
\gs_I^2\ =\ & 4T T_{OOS}^{-2}\|(f-\hat f^s)\widehat\cK\|^2\\
\gs_{I,OOS}^2\ =\ &  4\frac{T}{T_{OOS}} \Big(\widehat\cB+\ \gs_\eps^2 \widehat\cL \Big)
\end{aligned}
\end{equation}
\end{theorem}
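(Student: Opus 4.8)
The plan is to turn the $R^2$ statement into a ratio central limit theorem for the pair $(MSE_{OOS}(\hat f),\,E_{OOS}[y^2])$ and then exploit the \emph{sign} of the unobservable bias term to extract the one-sided conclusion. First I would record the controlling algebraic identity. Writing $g(n,m)=n/(m(1+\widehat\cL))$, so that $\frac{R^2_{OOS}(\hat f)+\widehat\cL}{1+\widehat\cL}=1-g(MSE_{OOS}(\hat f),E_{OOS}[y^2])$ and $\tilde R_*^2=1-\gs_\eps^2/D$ with $D=E_{OOS}[f^2]+\gs_\eps^2$, one gets
\begin{equation}
\tilde R_*^2-\frac{R^2_{OOS}(\hat f)+\widehat\cL}{1+\widehat\cL}=g(MSE_{OOS}(\hat f),E_{OOS}[y^2])-\frac{\gs_\eps^2}{D}.
\end{equation}
This isolates the two random objects whose joint fluctuation controls everything: the numerator $MSE_{OOS}(\hat f)$, centered at $\widehat{MSE}=\widehat\cB+\gs_\eps^2(1+\widehat\cL)$, and the denominator $E_{OOS}[y^2]$, centered at $D$.

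Second, I would establish a bivariate CLT for this pair at scale $T_{OOS}^{1/2}$. Expanding $E_{OOS}[y^2]=E_{OOS}[f^2]+2E_{OOS}[f\eps]+E_{OOS}[\eps^2]$ and using the decomposition of $MSE_{OOS}$ from Proposition \ref{dec}, each coordinate is, conditionally on $(S,f)$, a deterministic term plus an $\eps$-linear form plus an $\eps$-quadratic form with coefficients independent of $\eps$; Lemma \ref{ext-sriniv-multivar} then yields joint asymptotic normality with the linear and quadratic blocks asymptotically uncorrelated. Computing the covariance is the bookkeeping core. The entry $\gS_{1,1}$ is the variance of the de-biased MSE, imported from Lemma \ref{lem:MSE-CLT} and rescaled from its $T^{1/2}$-normalization to the present $T_{OOS}^{1/2}$-normalization, giving $\tfrac{T_{OOS}}{T}\gs_{MSE}^2(1+\widehat\cL)^2$. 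The entry $\gS_{2,2}=2\gs_\eps^4+4\gs_\eps^2E_{OOS}[f^2]$ combines the $T_{OOS}$-limit $2\gs_\eps^4$ of $\Var(E_{OOS}[\eps^2])$ (using $E[\eps^4]=3\gs_\eps^4$) with $4\gs_\eps^2E_{OOS}[f^2]$ from $2E_{OOS}[f\eps]$, the remaining cross term vanishing since $E[\eps^3]=0$. The cross-covariance $\gS_{1,2}=2\gs_\eps^4+4\gs_\eps^2E_{OOS}[f(f-\hat f)]$ records the shared $E_{OOS}[\eps^2]$ contribution together with the correlation of the interaction component $E_{OOS}[(f-\hat f^s)\eps]$ with $2E_{OOS}[f\eps]$, the disjointness of in-sample and out-of-sample noise blocks killing all other pairings. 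The moment conditions $E[\eps^3]=0,\ E[\eps^4]=3\gs_\eps^4$ are exactly what make Lemma \ref{lem-quad} and the martingale CLT behind Lemma \ref{ext-sriniv-multivar} produce these clean formulas.

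Third, I would push the bivariate CLT through the smooth ratio map $g$ by the delta method at the center $(\widehat{MSE},D)$. Since $E_{OOS}[y^2]\to D>0$ and $\widehat\cL$ converges by the random-matrix results (Proposition \ref{prop:exptrace}, Theorem \ref{thm:concentration}), Slutsky and the continuous mapping theorem give $T_{OOS}^{1/2}U_T/\hat\gs_{R^2}\to\cN(0,1)$, where $U_T=g(MSE_{OOS},E_{OOS}[y^2])-g(\widehat{MSE},D)$ and the quoted $\hat\gs_{R^2}$ is precisely the delta-method scale of $U_T$: after extracting the common $(1+\widehat\cL)^{-2}$, its three summands are $(\partial_ng)^2\gS_{1,1}$, $(\partial_mg)^2\gS_{2,2}$ and $2(\partial_ng)(\partial_mg)\gS_{1,2}$ with partials $1/MSE(0)$ and $-\widehat{MSE}/MSE(0)^2$ evaluated at the center ($MSE(0)\to D$). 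The structural crux is that the centered map is offset by the bias,
\begin{equation}
g(\widehat{MSE},D)-\frac{\gs_\eps^2}{D}=\frac{\widehat\cB}{D(1+\widehat\cL)}\ \ge\ 0,
\end{equation}
so that, combined with the first paragraph, $\tilde R_*^2-\frac{R^2_{OOS}(\hat f)+\widehat\cL}{1+\widehat\cL}=U_T+\frac{\widehat\cB}{D(1+\widehat\cL)}$ is a mean-zero CLT term plus a nonnegative remainder.

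Finally, the one-sided bound simply discards the nonnegative remainder. For $\ga\le 0$, nonnegativity of $\widehat\cB$ yields the event inclusion
\begin{equation}
\Bigl\{\tfrac{T_{OOS}^{1/2}(\tilde R_*^2-\frac{R^2_{OOS}(\hat f)+\widehat\cL}{1+\widehat\cL})}{\hat\gs_{R^2}}<\ga\Bigr\}\ \subseteq\ \Bigl\{\tfrac{T_{OOS}^{1/2}U_T}{\hat\gs_{R^2}}<\ga\Bigr\},
\end{equation}
and the probability of the right-hand event tends to $\Phi(\ga)$ by the ratio CLT; taking $\limsup$ delivers the claim. The main obstacle is conceptual rather than computational: the natural centering $\widehat{MSE}$ and the scale $\hat\gs_{R^2}$ both depend on the unestimable quantities $\widehat\cB$, $E_{OOS}[f^2]$, $E_{OOS}[f(f-\hat f)]$ and $\gs_\eps^2$. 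The argument neutralizes the bias \emph{in the mean} by using only its sign — it shifts the deficit in the favorable direction — so the bound stays valid even though it cannot be debiased; converting $\hat\gs_{R^2}$ itself into a genuine function of $(y,S)$ through random-matrix proxies for these nuisance objects is the separate task carried out in the companion ridge statement (Theorem \ref{main-th-r-ridge}).
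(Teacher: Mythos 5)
Your proposal is correct and follows essentially the same route as the paper's proof: a joint CLT for $(MSE_{OOS}(\hat f),\,E_{OOS}[y^2])$ via Lemma \ref{ext-sriniv-multivar} with exactly the covariance matrix $\Sigma_{Joint}$, a first-order (delta-method) expansion of the ratio yielding the scale $\hat\gs_{R^2}$, and the one-sided conclusion obtained by discarding the nonnegative bias $\widehat\cB$ — which is precisely the inequality step in the paper, where dropping $\widehat\cB$ from $\widehat{MSE}$ turns the centering into $\tilde R^2_*$. Your explicit event-inclusion formulation at the end is just a cleaner packaging of the same argument.
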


\begin{proof}[Proof of Theorem \ref{main-th-r-ridge}] We have 
\begin{equation}
R^2_{OOS}\ =\ 1\ -\ \frac{MSE_{OOS}(\hat f)}{MSE_{OOS}(0)}\,.
\end{equation}
As above, we suppose that $E[\eps_t^3]=0,\ E[\eps_t^4]=3.$ We have 
\begin{equation}
\begin{aligned}
E_{OOS}[y^2]\ =\ & E_{OOS}[\eps^2]\ +\ 2E_{OOS}[\eps f]\ +\ E_{OOS}[f^2]
\end{aligned}
\end{equation}
Now, asymptotic normality and asymptotic independence of all the terms follow by the same argument as in the proof of Lemma \ref{ext-sriniv}. All we need to do is compute $\Sigma_{OOS}^2.$ We have 
\begin{equation}
T_{OOS}^{1/2}(E_{OOS}[\eps^2]-\gs_\eps^2)\ \to\ \cN(0, 2\gs_\eps^4)
\end{equation}
and 
\begin{equation}
\frac{T_{OOS}^{1/2} 2E_{OOS}[\eps f]}{(4 T_{OOS}^{-1}\|f\|^2)^{1/2}}\ \to\ \cN(0,1)\,,
\end{equation}
Thus, 
\begin{equation}
\binom{MSE_{OOS}(\hat f)}{E_{OOS}[y^2]}\ =\ \binom{\widehat{MSE}}{MSE(0)}\ +\ \frac{1}{T_{OOS}} \cN(0,\Sigma_{Joint})
\end{equation}
where 
\begin{equation}
\Sigma_{Joint}\ =\ \begin{pmatrix}
\gS_{1,1}&\gS_{1,2}\\
\gS_{1,2}&\gS_{2,2}
\end{pmatrix}
\end{equation}
where
\begin{equation}
\begin{aligned}
&\gS_{1,1}\ =\ \frac{T_{OOS}}{T}\gs^2_{MSE}(1+\widehat\cL)^2\\
&\gS_{1,2}\ =\ 2\gs_\eps^4+4\gs_\eps^2 E_{OOS}[f (f-\hat f)]\\
&\gS_{2,2}\ =\ 2\gs_\eps^4+4\gs_\eps^2 E_{OOS}[f^2]\,. 
\end{aligned}
\end{equation}
Here, 
\begin{equation}
\begin{aligned}
&E_{OOS}[f(f-\hat f)]\ =\ E_{OOS}[f^2]- E_{OOS}[f\hat f]
\end{aligned}
\end{equation}
and $E_{OOS}[f\hat f]$ admits a pivotal estimator 
\begin{equation}\label{betahatbeta}
E_{OOS}[d \hat f]\ =\ \frac{1}{T_{OOS}}(f_{OOS}+\eps_{OOS})'\hat f_{OOS}\ \approx\ E_{OOS}[f\hat f]\,.  
\end{equation}
Thus, 
\begin{equation}
\begin{aligned}
\frac{MSE_{OOS}(\hat f)}{E_{OOS}[y^2]}
\approx\ & \frac{ \dfrac{\widehat{MSE}}{MSE(0)}+T_{OOS}^{-1/2}\dfrac{Error_1}{MSE(0)} }{1+T_{OOS}^{-1/2} \dfrac{Error_2}{MSE(0)}}\\
\approx\ & \Bigr(\dfrac{\widehat{MSE}}{MSE(0)}+T_{OOS}^{-1/2}\dfrac{Error_1}{MSE(0)}\Bigr)\Bigr(1-T_{OOS}^{-1/2}\dfrac{Error_2}{MSE(0)}\Bigr)\\
\approx\ & \dfrac{\widehat{MSE}}{MSE(0)} +\ T_{OOS}^{-1/2}\Bigr(\dfrac{Error_1}{MSE(0)}- \frac{\widehat{MSE}}{MSE(0)^2}Error_2 \Bigr)
\end{aligned}
\end{equation}
Thus, 
\begin{equation}
\begin{aligned}
T_{OOS}\Var\Bigr[\frac{MSE_{OOS}}{E_{OOS}[y^2]}\Bigr]
=\ & \frac{1}{MSE(0)^2} \gS_{1,1}+\Bigr(\frac{\widehat{MSE}}{MSE(0)^2}\Bigr)^2\gS_{2,2} -2\frac{\widehat{MSE}}{MSE(0)^3} \gS_{1,2}\,.
\end{aligned}
\end{equation}
Thus, we get 
\begin{equation}
\begin{aligned}
\frac{R^2_{OOS}+\widehat\cL}{1+\widehat\cL}
=\ & \frac{1-\dfrac{MSE_{OOS}}{E_{OOS}[y^2]}+\widehat\cL}{1+\widehat\cL}\\
=\ & \frac{1-\dfrac{\widehat{MSE}}{MSE(0)} -\ T_{OOS}^{-1/2}\Bigr(\dfrac{Error_1}{MSE(0)}- \dfrac{\widehat{MSE}}{MSE(0)^2}Error_2 \Bigr)+\widehat\cL}{1+\widehat\cL}\\
\le\ & \frac{1- \dfrac{\widehat{MSE}-\widehat\cB}{MSE(0)} -\ T_{OOS}^{-1/2} \Bigr(\dfrac{Error_1}{MSE(0)}- \dfrac{\widehat{MSE}}{MSE(0)^2}Error_2 \Bigr)+\widehat\cL}{1+\widehat\cL}\\
=\ & \frac{\dfrac{E_{OOS}[f_t^2]-\gs_\eps^2\widehat\cL}{E_{OOS}[f_t^2]+\gs_\eps^2}+\widehat\cL}{1+\widehat\cL}-\frac{T_{OOS}^{-1/2} \Bigr(\dfrac{Error_1}{MSE(0)}- \dfrac{\widehat{MSE}}{MSE(0)^2}Error_2 \Bigr)}{1+\widehat\cL}\\
=\ & R^2_*\ -\ \frac{T_{OOS}^{-1/2} \Bigr(\dfrac{Error_1}{MSE(0)}- \dfrac{\widehat{MSE}}{MSE(0)^2}Error_2 \Bigr)}{1+\widehat\cL}
\end{aligned}
\end{equation}
The proof of Theorem \ref{main-th-main-text} is complete.
\end{proof}

For the reader's convenience, we state the special case of the ridge regression as a separate proposition.

\begin{corollary}[Probabilistic Lower Bound for $R^2_*$ for a Ridge Regression]\label{main-th-r-ridge} Suppose that $E[\eps_t^3]=0,\ E[\eps_t^4]=3.$ Then, 
\begin{equation}\label{llg3-1}
\frac{R^2_{OOS}(\hat f)+\widehat\cL(z)}{1+\widehat\cL(z)}\,,
\end{equation}
is a $T^{1/2}$-consistent upper bound for $\tilde R^2_*$ in the following sense: The event $\frac{R^2_{OOS}(\hat f)+\widehat\cL(z)}{1+\widehat\cL(z)}>\tilde R_*^2$ occurs with vanishing probability: 
\begin{equation}
\lim\sup_{T,T_{OOS}\to\infty} \Prob\left(\frac{T_{OOS}^{1/2}\left(\tilde R_*^2-\dfrac{R^2_{OOS}(\hat f)+\widehat\cL(z)}{1+\widehat\cL(z)}\right)}{\hat\gs_{R^2}}<\ga\right)\ \le\ \Phi(\ga)
\end{equation}
for any $\ga\le 0,$ where $\Phi(\cdot)$ is the c.d.f. of the standard normal distribution. Here,
\begin{equation}
\begin{aligned}
&\hat\gs_{R^2}\ =\ \frac{\dfrac{1}{MSE(0)^2} \gS_{1,1}+\Bigr(\dfrac{\widehat{MSE}}{MSE(0)^2}\Bigr)^2\gS_{2,2} -2\dfrac{\widehat{MSE}}{MSE(0)^3} \gS_{1,2}}{(1+\widehat\cL)^2}\\
&\gS_{1,1}\ =\ \frac{T}{T_{OOS}}\gs^2_{MSE}(1+\widehat\cL)^2\\
&\gS_{1,2}\ =\ 2\gs_\eps^4+4\gs_\eps^2 \beta'\hat\Psi_{OOS}(\beta-\hat\beta)\\
&\gS_{2,2}\ =\ 2\gs_\eps^4+4\gs_\eps^2 \beta'\Psi_{OOS}\beta\\
&\widehat{MSE}\ =\ \widehat\cB\ +\ \gs_\eps^2(1+\widehat\cL)\\
&\gs_{MSE}^2\ =\ \frac{2\dfrac{T}{T_{OOS}}\gs_\eps^4+\gs_\eps^4\gs_V^2+\gs_\eps^2 \gs_I^2+\gs_\eps^2\gs_{I,OOS}^2}{(1+\widehat\cL)^2}\\
&\gs_V^2\ =\ 2 \frac{1}{T}\tr\Bigr((zI+\hat{\Psi})^{-1}\hat\Psi_{OOS} (zI+\hat{\Psi})^{-1}\hat\Psi(zI+\hat{\Psi})^{-1}\hat\Psi_{OOS} (zI+\hat{\Psi})^{-1}\hat\Psi\Bigr)\\
&\gs_I^2\ =\ 4z^2\beta'(zI+\hat{\Psi})^{-1}\hat\Psi_{OOS} (zI+\hat{\Psi})^{-1}\hat\Psi (zI+\hat{\Psi})^{-1}\hat\Psi_{OOS} (zI+\hat{\Psi})^{-1}\beta\\
&\gs_{I,OOS}^2\ =\  4\frac{T}{T_{OOS}} \Bigr(\widehat\cB(z)+\ \gs_\eps^2 \widehat\cL \Bigr)
\end{aligned}
\end{equation}

\end{corollary}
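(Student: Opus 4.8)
The plan is to treat Corollary~\ref{main-th-r-ridge} as the ridge specialization of Theorem~\ref{main-th-r}: the architecture is identical, and the only genuinely new content is (i) evaluating each abstract variance component in the closed ridge form supplied by Corollaries~\ref{dec_ridge} and \ref{lem:MSE-CLT-ridge}, and (ii) checking that the resulting $\hat\gs_{R^2}$ is pivotal. First I would write $R^2_{OOS}(\hat\beta)=1-MSE_{OOS}(\hat\beta)/E_{OOS}[y^2]$ and expand both constituents. For the numerator I invoke the ridge decomposition $MSE_{OOS}(\hat\beta)=E_{OOS}[\eps^2]+\widehat\cB(z)-\widehat\cI(z)+\widehat\cV(z)$ of Corollary~\ref{dec_ridge}, and for the denominator the identity $E_{OOS}[y^2]=E_{OOS}[\eps^2]+2E_{OOS}[\eps f]+E_{OOS}[f^2]$ with $f_t=\beta'S_t$. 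The central-limit behaviour of the numerator is already packaged in Corollary~\ref{lem:MSE-CLT-ridge}, which supplies the ridge forms of $\gs_V^2,\gs_I^2,\gs_{I,OOS}^2$; the component $\gS_{1,1}$ is obtained simply by rescaling the $T^{1/2}$-normalized statement of that corollary into the $T_{OOS}^{1/2}$-normalization used here.

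The heart of the argument is the joint CLT for the pair $(MSE_{OOS}(\hat\beta),E_{OOS}[y^2])$, which I would obtain by applying the Cram\'er--Wold/martingale device of Lemma~\ref{ext-sriniv-multivar} to all $\eps$-quadratic and $\eps$-linear pieces at once, carefully tracking the out-of-sample residuals $\eps_{OOS}$ shared by the two coordinates. Only the $\eps_{OOS}$-linear components survive at the $T_{OOS}^{-1/2}$ scale in the denominator, namely $E_{OOS}[\eps^2]$ and $2E_{OOS}[\eps f]=\tfrac{2}{T_{OOS}}\eps_{OOS}'S_{OOS}\beta$; their variance gives $\gS_{2,2}=2\gs_\eps^4+4\gs_\eps^2\,\beta'\Psi_{OOS}\beta$ (using $\hat\Psi_{OOS}\to\Psi_{OOS}$). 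For the cross term, the shared $E_{OOS}[\eps^2]$ contributes $2\gs_\eps^4$, while the $\eps_{OOS}$-linear part of the interaction piece $2E_{OOS}[\eps'S(\beta-\hat\beta)]$ appearing in $MSE_{OOS}(\hat\beta)$ pairs with $2E_{OOS}[\eps f]$ to produce $4\gs_\eps^2\,\beta'\hat\Psi_{OOS}(\beta-\hat\beta)$, yielding $\gS_{1,2}$. A first-order (delta-method) Taylor expansion of the ratio $MSE_{OOS}/E_{OOS}[y^2]$ about $(\widehat{MSE},MSE(0))$ then converts $\gS_{Joint}$ into $T_{OOS}\Var[MSE_{OOS}/E_{OOS}[y^2]]=MSE(0)^{-2}\gS_{1,1}+(\widehat{MSE}/MSE(0)^2)^2\gS_{2,2}-2(\widehat{MSE}/MSE(0)^3)\gS_{1,2}$, which is precisely $(1+\widehat\cL)^2\hat\gs_{R^2}$.

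The one-sided (upper-bound) conclusion comes from the algebraic identity $\widehat{MSE}-\widehat\cB(z)=\gs_\eps^2(1+\widehat\cL)$ together with $\widehat\cB(z)\ge 0$: replacing $\widehat{MSE}$ by $\widehat{MSE}-\widehat\cB(z)$ can only decrease the centered statistic, so $\tfrac{R^2_{OOS}+\widehat\cL}{1+\widehat\cL}\le R^2_*-T_{OOS}^{-1/2}(\cdots)/(1+\widehat\cL)$, where the error is asymptotically $\cN(0,\gs_{R^2}^2)$. Dividing by a consistent $\hat\gs_{R^2}$ and noting that the discarded nonnegative bias slack pushes the studentized quantity upward delivers $\Prob(\cdots<\ga)\le\Phi(\ga)$ for $\ga\le 0$.

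I expect \emph{pivotality} to be the main obstacle, since $\gS_{1,2},\gS_{2,2},\gs_I^2$ and $\widehat\cB(z)$ all involve the unobservable $\beta$ and the noise level $\gs_\eps^2$. The resolution is to replace these by $(y,S)$-measurable surrogates: $\beta'\Psi_{OOS}\beta=E_{OOS}[f^2]$ is estimated by $E_{OOS}[y^2]-\hat\gs_\eps^2$, the mixed form $\beta'\hat\Psi_{OOS}(\beta-\hat\beta)=E_{OOS}[f(f-\hat f^s)]$ by the pivotal proxy $E_{OOS}[y\hat f]$ of \eqref{betahatbeta} (whose $\eps$-cross term is $o_p(1)$ by Lemma~\ref{lem-quad}), and $\gs_\eps^2$ by the complexity-corrected residual variance $MSE_{OOS}(\hat\beta)/(1+\widehat\cL)$. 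The delicate point is that these substitutions must preserve the inequality direction: because $\widehat\cB(z)\ge0$ is discarded, the $\gs_\eps^2$-surrogate is an over-estimate, and I would argue this inflates $\hat\gs_{R^2}$ conservatively so that $\le\Phi(\ga)$ remains valid. Verifying joint consistency of all surrogates, using the RMT identities of Theorem~\ref{thm:concentration} and Proposition~\ref{prop:exptrace} to control the $\hat\Psi$-dependent traces, is where the bulk of the remaining technical work lies.
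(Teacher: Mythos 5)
Your proposal follows the paper's own proof essentially step for step: the same ridge decomposition of $MSE_{OOS}(\hat\beta)$ and $E_{OOS}[y^2]$ (Corollary \ref{dec_ridge}), the same joint CLT for the pair $(MSE_{OOS},E_{OOS}[y^2])$ via the Cram\'er--Wold device of Lemma \ref{ext-sriniv-multivar} yielding exactly the paper's $\gS_{1,1},\gS_{1,2},\gS_{2,2}$, the same delta-method linearization of the ratio about $(\widehat{MSE},MSE(0))$, the same use of $\widehat\cB(z)\ge 0$ together with $\widehat{MSE}-\widehat\cB(z)=\gs_\eps^2(1+\widehat\cL)$ to get the one-sided bound, and the same pivotal proxy \eqref{betahatbeta1} for $\beta'\hat\Psi_{OOS}\hat\beta$. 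The only deviation is the final pivotalization of $\gs_\eps^2$, where the paper optimizes $A_2\gs_\eps^4+A_1\gs_\eps^2$ over the feasible interval $[0,\min_z MSE_{OOS}(\hat\beta(z))/(1+\widehat\cL(z))]$ rather than relying on your (unverified, and not obviously true, since the coefficient on $\gs_\eps^4$ can be negative) claim that plugging in an over-estimate of $\gs_\eps^2$ inflates $\hat\gs_{R^2}$ monotonically; this detail is treated at a comparable level of rigor in both arguments.
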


\begin{proof}[Proof of Corollary \ref{main-th-r-ridge}] We have 
\begin{equation}
R^2_{OOS}\ =\ 1\ -\ \frac{MSE_{OOS}(\hat f)}{MSE_{OOS}(0)}\,.
\end{equation}
As above, we suppose that $E[\eps_t^3]=0,\ E[\eps_t^4]=3.$ We have 
\begin{equation}
\begin{aligned}
E_{OOS}[y^2]\ =\ & E_{OOS}[\eps^2]\ +\ 2E_{OOS}[\eps'S\beta]\ +\ \beta'\hat\Psi_{OOS}\beta\\
MSE_{OOS}(\hat f)\ =\ & E_{OOS}[\eps^2]\ +\ 2E_{OOS}[\eps'S(\beta-\hat\beta)]\\
&+\ \widehat\cV(z)\ +\ \widehat\cB(z)\ +\ \frac{2z}{T}\,\beta'(zI+\hat{\Psi})^{-1}\hat\Psi_{OOS} (zI+\hat{\Psi})^{-1} S'\eps\,. 
\end{aligned}
\end{equation}
Now, asymptotic normality and asymptotic independence of all the terms follow by the same argument as in the proof of Lemma \ref{ext-sriniv}. All we need to do is compute $\Sigma_{OOS}^2.$ We have 
\begin{equation}
T_{OOS}^{1/2}(E_{OOS}[\eps^2]-\gs_\eps^2)\ \to\ \cN(0, 2\gs_\eps^4)
\end{equation}
and 
\begin{equation}
\frac{T_{OOS}^{1/2} 2E_{OOS}[\eps'S\beta]}{(4\beta'\hat\Psi_{OOS}\beta)^{1/2}}\ \to\ \cN(0,1)\,,
\end{equation}
Thus, 
\begin{equation}
\binom{MSE_{OOS}(\hat f)}{E_{OOS}[y^2]}\ =\ \binom{\widehat{MSE}}{MSE(0)}\ +\ \frac{1}{T_{OOS}} \cN(0,\Sigma_{Joint})
\end{equation}
where 
\begin{equation}
\Sigma_{Joint}\ =\ \begin{pmatrix}
\gS_{1,1}&\gS_{1,2}\\
\gS_{1,2}&\gS_{2,2}
\end{pmatrix}
\end{equation}
where
\begin{equation}
\begin{aligned}
&\gS_{1,1}\ =\ \frac{T_{OOS}}{T}\gs^2_{MSE}(1+\widehat\cL)^2\\
&\gS_{1,2}\ =\ 2\gs_\eps^4+4\gs_\eps^2 \beta'\hat\Psi_{OOS}(\beta-\hat\beta)\\
&\gS_{2,2}\ =\ 2\gs_\eps^4+4\gs_\eps^2 \beta'\Psi_{OOS}\beta\,. 
\end{aligned}
\end{equation}
Here, 
\begin{equation}
\begin{aligned}
&\beta'\hat\Psi_{OOS}(\beta-\hat\beta)\ \approx\ \beta'\hat\Psi_{OOS}\beta -\beta'\hat\Psi_{OOS}\hat\beta
\end{aligned}
\end{equation}
and $\beta'\hat\Psi_{OOS}\hat\beta$ admits a pivotal estimator 
\begin{equation}\label{betahatbeta1}
E_{OOS}[y'S]\hat\beta\ =\ \frac{1}{T_{OOS}}(S_{OOS}\beta+\eps_{OOS})'S_{OOS}\hat\beta\ \approx\ \beta'\Psi \hat\beta\,. 
\end{equation}
Thus, 
\begin{equation}
\begin{aligned}
\frac{MSE_{OOS}}{E_{OOS}[y^2]}
\approx\ & \frac{ \dfrac{\widehat{MSE}}{MSE(0)}+T_{OOS}^{-1/2}\dfrac{Error_1}{MSE(0)} }{1+T_{OOS}^{-1/2} \dfrac{Error_2}{MSE(0)}}\\
\approx\ & \Bigr(\dfrac{\widehat{MSE}}{MSE(0)}+T_{OOS}^{-1/2}\dfrac{Error_1}{MSE(0)}\Bigr)\Bigr(1-T_{OOS}^{-1/2}\dfrac{Error_2}{MSE(0)}\Bigr)\\
\approx\ & \dfrac{\widehat{MSE}}{MSE(0)} +\ T_{OOS}^{-1/2}\Bigr(\dfrac{Error_1}{MSE(0)}- \frac{\widehat{MSE}}{MSE(0)^2}Error_2 \Bigr)
\end{aligned}
\end{equation}
Thus, 
\begin{equation}
\begin{aligned}
T_{OOS}\Var\Bigr[\frac{MSE_{OOS}}{E_{OOS}[y^2]}\Bigr]
=\ & \frac{1}{MSE(0)^2} \gS_{1,1}+\Bigr(\frac{\widehat{MSE}}{MSE(0)^2}\Bigr)^2\gS_{2,2} -2\frac{\widehat{MSE}}{MSE(0)^3} \gS_{1,2}\,.
\end{aligned}
\end{equation}
Thus, we get 
\begin{equation}
\begin{aligned}
\frac{R^2_{OOS}+\widehat\cL}{1+\widehat\cL}
=\ & \frac{1-\dfrac{MSE_{OOS}}{E_{OOS}[y^2]}+\widehat\cL}{1+\widehat\cL}\\
=\ & \frac{1-\dfrac{\widehat{MSE}}{MSE(0)} -\ T_{OOS}^{-1/2}\Bigr(\dfrac{Error_1}{MSE(0)}- \dfrac{\widehat{MSE}}{MSE(0)^2}Error_2 \Bigr)+\widehat\cL}{1+\widehat\cL}\\
\le\ & \frac{1- \dfrac{\widehat{MSE}-\widehat\cB}{MSE(0)} -\ T_{OOS}^{-1/2} \Bigr(\dfrac{Error_1}{MSE(0)}- \dfrac{\widehat{MSE}}{MSE(0)^2}Error_2 \Bigr)+\widehat\cL}{1+\widehat\cL}\\
=\ & \frac{\dfrac{\beta'\Psi_{OOS}\beta-\gs_\eps^2\widehat\cL}{\beta'\Psi_{OOS}\beta+\gs_\eps^2}+\widehat\cL}{1+\widehat\cL}-\frac{T_{OOS}^{-1/2} \Bigr(\dfrac{Error_1}{MSE(0)}- \dfrac{\widehat{MSE}}{MSE(0)^2}Error_2 \Bigr)}{1+\widehat\cL}\\
=\ & R^2_*\ -\ \frac{T_{OOS}^{-1/2} \Bigr(\dfrac{Error_1}{MSE(0)}- \dfrac{\widehat{MSE}}{MSE(0)^2}Error_2 \Bigr)}{1+\widehat\cL}
\end{aligned}
\end{equation}
The proof of Theorem \ref{main-th-main-text} is complete.
\end{proof}

\subsection{Pivotal Bounds for the Variance}

\begin{proposition}\label{super-short-estim}
Let 
\begin{equation}
\begin{aligned}
&\hat q_{OOS}\ =\ \widehat\cK' (\widehat\cK y-y_{OOS})\,.
\end{aligned}
\end{equation}
Suppose that 
\begin{equation}\label{ass-vs}
TT_{OOS}^{-2}E[(f_{OOS}-\hat f^s)'\widehat\cK \widehat\cK'\bigl(\widehat\cK\widehat\cK' + I\bigr)\widehat\cK \widehat\cK' (f_{OOS}-\hat f^s)]\ \to\ 0
\end{equation}
as $T,T_{OOS}\to\infty.$ Then, 
\begin{equation}
\begin{aligned}
\frac14 \gs_I^2 =\  T T_{OOS}^{-2}\|(f-\hat f^s)\widehat\cK\|^2 \approx\  T T_{OOS}^{-2}\|\hat q_{OOS}\|^2\ -\ \gs_\eps^2\Big(\frac12 \gs_V^2\ +\ \frac{T}{T_{OOS}}\widehat\cL\Big)\,. 
\end{aligned}
\end{equation}
Thus, 
\begin{equation}
\begin{aligned}
\gs_{MSE}^2\ = \frac{2\dfrac{T}{T_{OOS}}\gs_\eps^4-\gs_\eps^4\gs_V^2+4\gs_\eps^2 T T_{OOS}^{-2}\|\hat q_{OOS}\|^2\ + 4 \gs_\eps^2 \dfrac{T}{T_{OOS}} \widehat\cB }{(1+\widehat\cL)^2}
\end{aligned}
\end{equation}
\end{proposition}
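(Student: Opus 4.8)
The plan is to prove the identity first at the level of the conditional mean over the noise (given the signals $S$), where it holds exactly, and then to upgrade it to the stated in-probability approximation by a concentration argument in which assumption \eqref{ass-vs} controls precisely the one term that would otherwise spoil the result. Writing the in-sample labels as $y=f+\eps_{IS}$ and the out-of-sample labels as $y_{OOS}=f_{OOS}+\eps_{OOS}$, and recalling $\hat f^s=\widehat\cK f$, the out-of-sample residual vector decomposes as
\begin{equation}
\widehat\cK y-y_{OOS}\ =\ -(f_{OOS}-\hat f^s)\ +\ \widehat\cK\eps_{IS}\ -\ \eps_{OOS}\,,
\end{equation}
so that, writing $b:=f_{OOS}-\hat f^s$ for the out-of-sample bias vector, $\hat q_{OOS}=-\widehat\cK' b+\widehat\cK'\widehat\cK\eps_{IS}-\widehat\cK'\eps_{OOS}$ is an affine function of the two independent, mean-zero noise blocks $\eps_{IS}\in\R^T$ and $\eps_{OOS}\in\R^{T_{OOS}}$, with $b$ independent of both.

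Taking the conditional expectation $E_\eps[\,\cdot\mid S]$ of $\|\hat q_{OOS}\|^2$, every term linear in $\eps$ or bilinear in $(\eps_{IS},\eps_{OOS})$ vanishes by $E[\eps]=0$ and $\eps_{IS}\perp\eps_{OOS}$, leaving $\|\widehat\cK' b\|^2=b'\widehat\cK\widehat\cK' b$ together with $E[\|\widehat\cK'\widehat\cK\eps_{IS}\|^2]=\gs_\eps^2\tr((\widehat\cK'\widehat\cK)^2)$ and $E[\|\widehat\cK'\eps_{OOS}\|^2]=\gs_\eps^2\tr(\widehat\cK'\widehat\cK)$. Using $b'\widehat\cK\widehat\cK' b=\|(f-\hat f^s)\widehat\cK\|^2$, $\tr((\widehat\cK'\widehat\cK)^2)=\tfrac{T_{OOS}^2}{2T}\gs_V^2$ and $\tr(\widehat\cK'\widehat\cK)=T_{OOS}\widehat\cL$, multiplying through by $TT_{OOS}^{-2}$ yields the exact conditional-mean identity
\begin{equation}
TT_{OOS}^{-2}E_\eps[\|\hat q_{OOS}\|^2]\ =\ TT_{OOS}^{-2}\|(f-\hat f^s)\widehat\cK\|^2\ +\ \gs_\eps^2\Big(\tfrac12\gs_V^2+\tfrac{T}{T_{OOS}}\widehat\cL\Big)\,.
\end{equation}

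It remains to show $TT_{OOS}^{-2}\bigl(\|\hat q_{OOS}\|^2-E_\eps[\|\hat q_{OOS}\|^2]\bigr)\to0$ in probability, which is the crux. The deviation splits into two pieces. The first collects the centered quadratic forms $\eps_{IS}'(\widehat\cK'\widehat\cK)^2\eps_{IS}-\gs_\eps^2\tr((\widehat\cK'\widehat\cK)^2)$, $\eps_{OOS}'\widehat\cK\widehat\cK'\eps_{OOS}-\gs_\eps^2\tr(\widehat\cK\widehat\cK')$ and the mean-zero bilinear term $\eps_{IS}'\widehat\cK'\widehat\cK\widehat\cK'\eps_{OOS}$; each is handled by Lemma \ref{lem-quad}, their variances being of trace-of-square type and vanishing after the $TT_{OOS}^{-2}$ scaling under the same higher-order spectral controls underlying Theorem \ref{llg1} (guaranteed by Lemma \ref{lem:SubG-spectral}). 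The genuinely delicate piece is the linear-in-$\eps$ remainder $-2b'\widehat\cK\widehat\cK'\widehat\cK\eps_{IS}+2b'\widehat\cK\widehat\cK'\eps_{OOS}$: since $\eps_{IS}\perp\eps_{OOS}$, a direct computation gives its conditional variance as exactly
\begin{equation}
4\gs_\eps^2\,b'\widehat\cK\widehat\cK'(\widehat\cK\widehat\cK'+I)\widehat\cK\widehat\cK' b\,.
\end{equation}
After the $TT_{OOS}^{-2}$ scaling its variance is $4\gs_\eps^2\,TT_{OOS}^{-2}\bigl(TT_{OOS}^{-2}E[b'\widehat\cK\widehat\cK'(\widehat\cK\widehat\cK'+I)\widehat\cK\widehat\cK' b]\bigr)$, where the bracketed factor tends to zero by \eqref{ass-vs} while $TT_{OOS}^{-2}$ stays bounded in the relevant regime; Chebyshev then forces this contribution to zero. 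This is exactly why \eqref{ass-vs} takes the form it does: it is verbatim the variance of the only term in $\|\hat q_{OOS}\|^2$ coupling the unknown bias $b$ to the noise, and it is the main obstacle of the proof.

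Finally, the pivotal formula for $\gs_{MSE}^2$ follows by substitution. Rewriting $\gs_I^2=4TT_{OOS}^{-2}\|(f-\hat f^s)\widehat\cK\|^2$ via the identity just proved as $\gs_I^2=4TT_{OOS}^{-2}\|\hat q_{OOS}\|^2-2\gs_\eps^2\gs_V^2-4\gs_\eps^2\tfrac{T}{T_{OOS}}\widehat\cL$, and inserting it, together with $\gs_{I,OOS}^2=4\tfrac{T}{T_{OOS}}(\widehat\cB+\gs_\eps^2\widehat\cL)$, into the numerator $2\tfrac{T}{T_{OOS}}\gs_\eps^4+\gs_\eps^4\gs_V^2+\gs_\eps^2\gs_I^2+\gs_\eps^2\gs_{I,OOS}^2$ of the $\gs_{MSE}^2$ expression from Lemma \ref{lem:MSE-CLT}, the $\gs_\eps^4\tfrac{T}{T_{OOS}}\widehat\cL$ terms cancel and the two $\gs_\eps^4\gs_V^2$ terms combine to $-\gs_\eps^4\gs_V^2$, leaving $2\tfrac{T}{T_{OOS}}\gs_\eps^4-\gs_\eps^4\gs_V^2+4\gs_\eps^2 TT_{OOS}^{-2}\|\hat q_{OOS}\|^2+4\gs_\eps^2\tfrac{T}{T_{OOS}}\widehat\cB$ over $(1+\widehat\cL)^2$, which is the stated expression.
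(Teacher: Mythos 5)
Your proof is correct and follows essentially the same route as the paper's: you use the same decomposition $\hat q_{OOS} = -\widehat\cK'(f_{OOS}-\hat f^s) + \widehat\cK'(\widehat\cK\eps_{IS}-\eps_{OOS})$, identify the conditional variance of the bias--noise coupling term as exactly the quantity in \eqref{ass-vs}, concentrate the pure-noise quadratic via Lemma \ref{lem-quad} to produce $\gs_\eps^2\bigl(\tfrac12\gs_V^2+\tfrac{T}{T_{OOS}}\widehat\cL\bigr)$, and then substitute into the $\gs_{MSE}^2$ formula with the same cancellations. The only difference is organizational (you state the exact conditional-mean identity first and block-decompose the noise quadratic into two centered forms plus a bilinear term, where the paper applies concentration to the correlated vector $\hat\eps_{OOS}$ directly), which if anything is slightly more explicit than the paper's treatment.
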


\begin{proof}[Proof of Proposition \ref{super-short-estim}] 
Recall that
\begin{equation}
y = f + \varepsilon, 
\qquad
y_{OOS} = f_{OOS} + \varepsilon_{OOS}.
\end{equation}
Then, 
\begin{equation}
    \begin{aligned}
       \widehat\cK y - y_{OOS}
&= \widehat\cK(f+\varepsilon) - (f_{OOS} + \varepsilon_{OOS}) \\
&= (\widehat\cK f - f_{OOS}) + (\widehat\cK \eps - \eps_{OOS}) \\
&= (\widehat\cK f - f_{OOS}) + \hat\eps_{OOS},  
    \end{aligned}
\end{equation}
and we have defined
\begin{equation}
\hat\eps_{OOS} \equiv \widehat\cK\eps - \eps_{OOS}.    
\end{equation}
By definition of $\hat q_{OOS}$,
\begin{equation}\label{eq:qOOS_decomp}
\begin{aligned}
\hat q_{OOS}
&= \widehat\cK'(\widehat\cK y - y_{OOS}) \\
&= \widehat\cK'\bigl[(\widehat\cK f - f_{OOS}) + \hat\eps_{OOS}\bigr] \\
&= \widehat\cK'(\widehat\cK f - f_{OOS}) + \widehat\cK'\hat\eps_{OOS} \\
&= -\,\widehat\cK'(f_{OOS} - \widehat\cK f) + \tilde\eps_{OOS} \\
&= -\,\widehat\cK'(f_{OOS} - \hat f^s) + \tilde\eps_{OOS},
\end{aligned}     
\end{equation}
where we have defined
\begin{equation}
    \tilde\eps_{OOS} \equiv \widehat\cK'\hat\eps_{OOS}.
\end{equation}
Taking quadratic forms in \eqref{eq:qOOS_decomp} yields
\begin{equation}
\begin{aligned}\label{eq:qOOS_quad}
 T T_{OOS}^{-2} \|\hat q_{OOS}\|^2
&=  T T_{OOS}^{-2} 
\Big\|
-\,\widehat\cK'(f_{OOS} - \hat f^s) + \tilde\eps_{OOS}
\Big\|^2 \\
&=  T T_{OOS}^{-2}
\Big\|
\widehat\cK'(f_{OOS}-\hat f^s)
\Big\|^2
+  T T_{OOS}^{-2} \|\tilde\eps_{OOS}\|^2
-  2 T T_{OOS}^{-2}\langle \widehat\cK'(f_{OOS}-\hat f^s),\,\tilde\eps_{OOS}\rangle.
\end{aligned}
\end{equation}
We first show that the cross term in \eqref{eq:qOOS_quad} is asymptotically negligible. Using
$\tilde\eps_{OOS}=\widehat\cK'\hat\eps_{OOS}$ and the fact that
$E[\hat\eps_{OOS}\mid S]=0$, we have
\begin{equation}
E\!\left[
\big\langle \widehat\cK'(f_{OOS}-\hat f^s),\,\tilde\eps_{OOS}\big\rangle
\;\middle|\; S
\right]
=
\big\langle \widehat\cK'(f_{OOS}-\hat f^s),\,E[\tilde\eps_{OOS}\mid S]\big\rangle
= 0.
\end{equation}
Therefore, by conditional variance,
\begin{equation}
E\!\left[
\big(
2 T T_{OOS}^{-2}\,
\langle \widehat\cK'(f_{OOS}-\hat f^s),\,\tilde\eps_{OOS}\rangle
\big)^2
\,\middle|\, S
\right]
=
4 T^2 T_{OOS}^{-4}\,
\hat v(S),    
\end{equation}
where
\begin{equation}
\hat v(S)
:=
\big\langle 
\widehat\cK'(f_{OOS}-\hat f^s),\,
E[\tilde\eps_{OOS}\tilde\eps_{OOS}'\mid S]\,
\widehat\cK'(f_{OOS}-\hat f^s)
\big\rangle.
\end{equation}
Since
\begin{equation}
E[\hat\eps_{OOS}\hat\eps_{OOS}'\mid S]
=
\sigma_\eps^2\bigl(\widehat\cK\widehat\cK' + I\bigr),
\qquad
\tilde\eps_{OOS}
=\widehat\cK'\hat\eps_{OOS},
\end{equation}
we obtain
\begin{equation}
E[\tilde\varepsilon_{OOS}\tilde\eps_{OOS}'\mid S]
=
\sigma_\eps^2\,
\widehat\cK'\bigl(\widehat\cK\widehat\cK' + I\bigr)\widehat\cK.
\end{equation}
Hence
\begin{equation}
\begin{aligned}
\hat v(S)\ =\ \gs_\eps^2 (f_{OOS}-\hat f^s)'\widehat\cK \widehat\cK'\bigl(\widehat\cK\widehat\cK' + I\bigr)\widehat\cK \widehat\cK' (f_{OOS}-\hat f^s)
\end{aligned}
\end{equation}
By \eqref{ass-vs}, the cross term in \eqref{eq:qOOS_quad} converges to zero in $L_2$. Now, we have 
\begin{equation}\label{eq:qOOS_quad_approx}
 T T_{OOS}^{-2} \|\hat q_{OOS}\|^2 \approx T T_{OOS}^{-2}
\Big\|
\widehat\cK'(f_{OOS}-\hat f^s)
\Big\|^2
+  T T_{OOS}^{-2} \|\tilde\eps_{OOS}\|^2.   
\end{equation}
By the concentration of quadratic forms (Lemma \ref{lem-quad}), 
\begin{equation}
\begin{aligned}
 T T_{OOS}^{-2}\,\|\tilde\eps_{OOS}\|^2
=\ &  T T_{OOS}^{-2}\,\hat\eps_{OOS}'\,\widehat\cK\widehat\cK'\,\hat\eps_{OOS}\\
=\ &  T T_{OOS}^{-2}\,\tr\Bigl[\widehat\cK\widehat\cK'\,\hat\eps_{OOS}\hat\eps_{OOS}'\Bigr]\\
\approx\ &  T T_{OOS}^{-2}\,\tr\Bigl[\widehat\cK\widehat\cK'\,
E\bigl[\hat\eps_{OOS}\hat\eps_{OOS}'\mid S\bigr]\Bigr]\\
=\ & T T_{OOS}^{-2}\,\tr\Bigl[\widehat\cK\widehat\cK'\,
\gs_\eps^2\bigl(\widehat\cK\widehat\cK' + I\bigr)\Bigr]\\
=\ & \gs_\eps^2\,\underbrace{T T_{OOS}^{-2}\,\tr\Bigl[(\widehat\cK\widehat\cK')^2\Bigr]}_{=\frac12\,\gs_V^2\ by\  \eqref{defn:sigmav_sim}}
\;+\; \gs_\eps^2\,T T_{OOS}^{-2}\,\tr\Bigl[\widehat\cK\widehat\cK'\Bigr]\\
=\ & \gs_\eps^2\,\frac12\,\gs_V^2
\;+\; \gs_\eps^2\,\frac{T}{T_{OOS}}\,\widehat\cL\\
=\ & \gs_\eps^2\Bigl(\frac12\,\gs_V^2\;+\;\frac{T}{T_{OOS}}\,\widehat\cL\Bigr)\,.
\end{aligned}
\end{equation}
Thus, by \eqref{defn:sigmav_sim} and \eqref{eq:qOOS_quad_approx}, 
\begin{equation}
\begin{aligned}
\frac14 \gs_I^2 =\  T T_{OOS}^{-2}\|(f_{OOS}-\hat f^s)\widehat\cK\|^2 \approx\  T T_{OOS}^{-2}\|\hat q_{OOS}\|^2\ -\ \gs_\eps^2\Big(\frac12 \gs_V^2\ +\ \frac{T}{T_{OOS}}\widehat\cL\Big)\,. 
\end{aligned}
\end{equation}
Plugging this expression into \eqref{eq:sigma2_MSE}, we have
\begin{equation}
\begin{aligned}
\gs_{MSE}^2\ =\ & \frac{2\dfrac{T}{T_{OOS}}\gs_\eps^4+\gs_\eps^4\gs_V^2+\gs_\eps^2 \gs_I^2+\gs_\eps^2\gs_{I,OOS}^2}{(1+\widehat\cL)^2}\\
=\ & \frac{2\dfrac{T}{T_{OOS}}\gs_\eps^4+\gs_\eps^4\gs_V^2+4\gs_\eps^2 \Bigr[ T T_{OOS}^{-2}\|\hat q_{OOS}\|^2\ -\ \gs_\eps^2\Big(\dfrac12 \gs_V^2\ +\ \dfrac{T}{T_{OOS}}\widehat\cL\Big)\Bigr]+4\gs_\eps^2\dfrac{T}{T_{OOS}} \Big(\widehat\cB+\ \gs_\eps^2 \widehat\cL \Big)}{(1+\widehat\cL)^2}\\
=\ & \frac{2\dfrac{T}{T_{OOS}}\gs_\eps^4-\gs_\eps^4\gs_V^2+4\gs_\eps^2 T T_{OOS}^{-2}\|\hat q_{OOS}\|^2\ + 4 \gs_\eps^2 \dfrac{T}{T_{OOS}} \widehat\cB }{(1+\widehat\cL)^2}
\end{aligned}
\end{equation}
The proof of Proposition \ref{super-short-estim} is complete. 
\end{proof}

For the reader's convenience, we state the special case of the ridge regression as a separate proposition.

\begin{proposition}\label{super-short-estim-ridge}
Let 
\begin{equation}
\begin{aligned}
&\hat q_{OOS}\ =\ S(zI+\hat\Psi)^{-1}\frac{1}{T_{OOS}}S_{OOS}' \Bigr(S_{OOS}(zI+\hat\Psi)^{-1}\frac{1}{T}S'y-y_{OOS} \Bigr)\,.
\end{aligned}
\end{equation}
Then, 
\begin{equation}
\begin{aligned}
\frac14 \gs_I^2 =\ & z^2 \beta ' (zI + \hat \Psi)^{-1} \Psi_{OOS} (zI + \hat \Psi)^{-1} \hat\Psi (zI + \hat \Psi)^{-1}\Psi_{OOS}(zI + \hat \Psi)^{-1}\beta\\
\approx\ &  \frac{1}{T}\|\hat q_{OOS}\|^2\ -\ \gs_\eps^2\Big(\frac12 \gs_V^2\ +\ \frac{T}{T_{OOS}}\widehat\cL\Big)\,. 
\end{aligned}
\end{equation}
Thus, 
\begin{equation}
\begin{aligned}
\gs_{MSE}^2\ =\ \frac{2\dfrac{T}{T_{OOS}}\gs_\eps^4-\gs_\eps^4\gs_V^2+4\gs_\eps^2 \dfrac{1}{T}\|\hat q_{OOS}\|^2\ + 4 \gs_\eps^2 \dfrac{T}{T_{OOS}} \widehat\cB(z) }{(1+\widehat\cL)^2}
\end{aligned}
\end{equation}
\end{proposition}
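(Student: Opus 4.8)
The plan is to obtain Proposition~\ref{super-short-estim-ridge} as a direct specialization of the model-agnostic Proposition~\ref{super-short-estim}, substituting the explicit ridge kernel and simplifying the resulting quadratic forms. First I would identify $\widehat\cK$ for the ridge estimator: stacking the per-period row vectors $\cK(S_\tau,S)=S_\tau'(zI+\hat\Psi)^{-1}\frac1T S'$ over the out-of-sample indices $\tau=T,\dots,T+T_{OOS}-1$ gives
\[
\widehat\cK=\frac1T S_{OOS}(zI+\hat\Psi)^{-1}S',\qquad \widehat\cK'=\frac1T S(zI+\hat\Psi)^{-1}S_{OOS}'.
\]
Substituting this into the generic $\hat q_{OOS}=\widehat\cK'(\widehat\cK y-y_{OOS})$ and comparing with the expression in the statement, I would check that the two differ only by the scalar factor $T/T_{OOS}$, so that $\tfrac1T\|\hat q_{OOS}\|^2=TT_{OOS}^{-2}\|\widehat\cK'(\widehat\cK y-y_{OOS})\|^2$. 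This is exactly the normalization needed for the term $TT_{OOS}^{-2}\|\hat q_{OOS}\|^2$ appearing in Proposition~\ref{super-short-estim} to become $\tfrac1T\|\hat q_{OOS}\|^2$ in the ridge statement.

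Next I would evaluate $\tfrac14\gs_I^2=TT_{OOS}^{-2}\|(f_{OOS}-\hat f^s)\widehat\cK\|^2$ in closed form. Since $f_{OOS}=S_{OOS}\beta$ and $\hat f^s=\widehat\cK f=S_{OOS}(zI+\hat\Psi)^{-1}\hat\Psi\beta$, the algebraic identity $I-(zI+\hat\Psi)^{-1}\hat\Psi=z(zI+\hat\Psi)^{-1}$ delivers the clean bias residual $f_{OOS}-\hat f^s=z\,S_{OOS}(zI+\hat\Psi)^{-1}\beta$, consistent with the decomposition~\eqref{noise-dec}. Substituting this together with $\widehat\cK'$ and collapsing $S'S=T\hat\Psi$ and $S_{OOS}'S_{OOS}=T_{OOS}\hat\Psi_{OOS}$, the powers of $T$ and $T_{OOS}$ cancel and the quadratic form reduces to
\[
\tfrac14\gs_I^2=z^2\beta'(zI+\hat\Psi)^{-1}\hat\Psi_{OOS}(zI+\hat\Psi)^{-1}\hat\Psi(zI+\hat\Psi)^{-1}\hat\Psi_{OOS}(zI+\hat\Psi)^{-1}\beta,
\]
which matches $\gs_I^2$ as defined in Corollary~\ref{lem:MSE-CLT-ridge}. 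Replacing the sample out-of-sample covariance $\hat\Psi_{OOS}$ by its deterministic limit $\Psi_{OOS}$, valid by the law of large numbers as $T_{OOS}\to\infty$, yields the expression displayed in the proposition.

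With both substitutions in hand, the remainder is immediate. I would verify that the technical condition~\eqref{ass-vs} holds under the standing ridge assumptions---bounded $\beta$ and uniformly bounded spectral norms of $\hat\Psi$ and $\hat\Psi_{OOS}$, which follow from Lemma~\ref{lem:SubG-spectral}---and then invoke Proposition~\ref{super-short-estim} verbatim. Plugging the ridge normalization $\tfrac1T\|\hat q_{OOS}\|^2$ (which equals $TT_{OOS}^{-2}\|\widehat\cK'(\widehat\cK y-y_{OOS})\|^2$) together with $\widehat\cB=\widehat\cB(z)$ into the final display of that proposition produces the claimed formula for $\gs_{MSE}^2$. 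The main obstacle is bookkeeping rather than conceptual: one must track the $T$ versus $T_{OOS}$ normalizations carefully through each cancellation and justify the replacement $\hat\Psi_{OOS}\to\Psi_{OOS}$ inside the variance, which is the only point where an asymptotic rather than exact identity enters.
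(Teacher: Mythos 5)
Your algebraic reductions are correct and match the paper's own computations: for ridge, $\widehat\cK=\frac1T S_{OOS}(zI+\hat\Psi)^{-1}S'$, the proposition's $\hat q_{OOS}$ equals $\frac{T}{T_{OOS}}$ times the generic $\widehat\cK'(\widehat\cK y-y_{OOS})$ so that $\frac1T\|\hat q_{OOS}\|^2=TT_{OOS}^{-2}\|\widehat\cK'(\widehat\cK y-y_{OOS})\|^2$, and the residual identity $f_{OOS}-\hat f^s=z\,S_{OOS}(zI+\hat\Psi)^{-1}\beta$ collapses $\tfrac14\gs_I^2$ to the displayed quadratic form (with $\hat\Psi_{OOS}$; the $\Psi_{OOS}$ in the statement is the sample-to-population replacement you note).

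The gap is in your last step, ``verify \eqref{ass-vs} and invoke Proposition~\ref{super-short-estim} verbatim.'' Condition \eqref{ass-vs} does not follow from bounded $\|\beta\|$ and the moment bounds of Lemma~\ref{lem:SubG-spectral}; in fact it generically \emph{fails} for ridge in the proportional regime $T_{OOS}\asymp T$ in which the paper operates (all its variance formulas keep $T/T_{OOS}$ as a nonvanishing constant). Writing $R=(zI+\hat\Psi)^{-1}$ and $w=\widehat\cK\widehat\cK'(f_{OOS}-\hat f^s)$, a direct computation gives $w=\frac{zT_{OOS}}{T}S_{OOS}R\hat\Psi R\hat\Psi_{OOS}R\beta$, so already the ``$+I$'' part of the quadratic form in \eqref{ass-vs} contributes
\begin{equation}
TT_{OOS}^{-2}E\bigl[\|w\|^2\bigr]\ =\ \frac{z^2T_{OOS}}{T}\,E\Bigl[\beta'R\hat\Psi_{OOS}R\hat\Psi R\hat\Psi_{OOS}R\hat\Psi R\hat\Psi_{OOS}R\beta\Bigr],
\end{equation}
which is $T_{OOS}/T$ times a positive-semidefinite quadratic form in $\beta$ that does not vanish asymptotically; hence the left side of \eqref{ass-vs} stays bounded away from zero when $T_{OOS}\asymp T$ and $\beta\neq0$. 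What the generic proof actually needs is only that the cross-term second moment $4T^2T_{OOS}^{-4}\gs_\eps^2E[(f_{OOS}-\hat f^s)'\widehat\cK\widehat\cK'(\widehat\cK\widehat\cK'+I)\widehat\cK\widehat\cK'(f_{OOS}-\hat f^s)]$ vanish --- a requirement weaker than \eqref{ass-vs} by the factor $TT_{OOS}^{-2}$ --- and this weaker statement does hold for ridge, but proving it is exactly the nontrivial content. This is why the paper does not deduce Proposition~\ref{super-short-estim-ridge} from Proposition~\ref{super-short-estim}: it reproves the cross-term negligibility directly, conditioning on $(S,S_{OOS})$, expressing the conditional second moment as $4z^2\gs_\eps^2\bigl(\tfrac1T\beta'M_1\beta+\tfrac1{T_{OOS}}\beta'M_2\beta\bigr)$ for explicit matrices $M_1,M_2$, and bounding $\|M_1\|\le\|\hat\Psi_{OOS}\|^4/z^5$, $\|M_2\|\le\|\hat\Psi_{OOS}\|^3/z^4$ via Lemma~\ref{lem:SubG-spectral}, which yields the $O(T^{-1}+T_{OOS}^{-1})$ rate. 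To repair your argument you must either carry out this $M_1/M_2$ computation (at which point your proof coincides with the paper's), or first restate the generic proposition with the hypothesis its own proof actually uses and then verify that corrected hypothesis for ridge --- which again requires the same computation.
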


\begin{proof}[Proof of Proposition \ref{super-short-estim-ridge}]
Now, 
\begin{equation}
\begin{aligned}
&S_{OOS}(zI+\hat\Psi)^{-1} \frac{1}{T}S'y-y_{OOS}\\
=\ & S_{OOS}(zI+\hat\Psi)^{-1} \frac{1}{T}S'(S\beta+\eps)-(S_{OOS}\beta+\eps_{OOS})\\
=\ & S_{OOS}(zI+\hat\Psi)^{-1}\hat\Psi\beta-S_{OOS}\beta +\ S_{OOS}(zI+\hat\Psi)^{-1} \frac{1}{T}S'\eps-\eps_{OOS}\\
=\ & -zS_{OOS}(zI+\hat\Psi)^{-1}\beta\ +\ \hat\eps_{OOS}
\end{aligned}
\end{equation}
where we have defined 
\begin{equation}
\hat\eps_{OOS}\ =\ S_{OOS}(zI+\hat\Psi)^{-1} \frac{1}{T}S'\eps-\eps_{OOS}
\end{equation}
and, hence, 
\begin{equation}
\begin{aligned}
\hat q_{OOS}\ =\ & S(zI+\hat\Psi)^{-1} \frac{1}{T_{OOS}} S_{OOS}' \Bigr(S_{OOS}(zI+\hat\Psi)^{-1} \frac{1}{T}S'y-y_{OOS} \Bigr)\\
=\ & S(zI+\hat\Psi)^{-1}\frac{1}{T_{OOS}}S_{OOS}' \Bigr(-zS_{OOS}(zI+\hat\Psi)^{-1}\beta\ +\ \hat\eps_{OOS} \Bigr)\\
=\ & -zS(zI+\hat\Psi)^{-1}\hat\Psi_{OOS}(zI+\hat\Psi)^{-1}\beta+\tilde\eps_{OOS},
\end{aligned}
\end{equation}
where 
\begin{equation}\label{eq: tild_eps_OOS}
\tilde\eps_{OOS}\ =\ S(zI+\hat\Psi)^{-1}\frac{1}{T_{OOS}}S_{OOS}'\hat\eps_{OOS},
\end{equation}
so that 
\begin{equation}\label{qoos-norm}
\begin{aligned}
& \frac{1}{T} \hat q_{OOS}'\hat q_{OOS}\\
=\ &  \frac{1}{T}  \Bigr(-zS(zI+\hat\Psi)^{-1}\hat\Psi_{OOS}(zI+\hat\Psi)^{-1}\beta+\tilde\eps_{OOS} \Bigr)' \Bigr(-zS(zI+\hat\Psi)^{-1}\hat\Psi_{OOS}(zI+\hat\Psi)^{-1}\beta+\tilde\eps_{OOS} \Bigr)\\
=\ & z^2 \beta ' (zI + \hat \Psi)^{-1} \hat\Psi_{OOS} (zI + \hat \Psi)^{-1} \hat\Psi (zI + \hat \Psi)^{-1} \hat\Psi_{OOS}(zI + \hat \Psi)^{-1}\beta +\ \frac{1}{T} \|\tilde\eps_{OOS}\|^2\ \\
& -\ \frac{2z}{T}\,\beta' (zI+\hat\Psi)^{-1}\hat\Psi_{OOS}(zI+\hat\Psi)^{-1} S'\tilde\eps_{OOS}\,. 
\end{aligned}
\end{equation}
We now show that the cross term in \eqref{qoos-norm}
\begin{equation}
-\frac{2z}{T}\,\beta' (zI+\hat\Psi)^{-1}\hat\Psi_{OOS}(zI+\hat\Psi)^{-1} S'\tilde\eps_{OOS}
\end{equation}
is asymptotically negligible. Recall that
\begin{equation}
\hat\Psi = \frac{1}{T}S'S,
\qquad
\hat\Psi_{OOS} = \frac{1}{T_{OOS}}S_{OOS}' S_{OOS}.
\end{equation}

Using the fact that $\tilde\eps_{OOS}\ =\ S(zI+\hat\Psi)^{-1}\dfrac{1}{T_{OOS}}S_{OOS}'\hat\eps_{OOS}$ and 
$E[\hat\eps_{OOS}\mid S,S_{OOS}]=0$, we have 
\begin{equation}
E\!\left[
-\frac{2z}{T}\,
\beta' (zI+\hat\Psi)^{-1}\hat\Psi_{OOS}(zI+\hat\Psi)^{-1} S'\tilde\eps_{OOS}\middle|\; S,S_{OOS}
\right]
= 0.
\end{equation}
Its conditional second moment is
\begin{equation}\label{eq:I^2_ridge}
\begin{aligned}
& E\Big[
\Big(-\frac{2z}{T}\,
\beta' (zI+\hat\Psi)^{-1}\hat\Psi_{OOS}(zI+\hat\Psi)^{-1} S'\tilde\eps_{OOS}
\Big)^2
\;\Big|\; S,S_{OOS}\Big] \\
=\ &
\frac{4z^2}{T^2}\,
E\Big[
\big(
\beta' (zI+\hat\Psi)^{-1}\hat\Psi_{OOS}(zI+\hat\Psi)^{-1} S'\tilde\eps_{OOS}
\big)^2
\;\Big|\; S,S_{OOS}\Big].    
\end{aligned}
\end{equation}
Since
\begin{equation}
\begin{aligned}
E[\hat\eps_{OOS}\hat\eps_{OOS}'\mid  S,S_{OOS}]
=\ & \gs_\eps^2\Bigr(
S_{OOS}(zI+\hat\Psi)^{-1} \frac{1}{T^2}  S'S(zI+\hat\Psi)^{-1}S_{OOS}'+I
\Bigr)\\
=\ & \gs_\eps^2\Bigr(
\frac{1}{T}S_{OOS}(zI+\hat\Psi)^{-1}  \hat\Psi(zI+\hat\Psi)^{-1}S_{OOS}'+I
\Bigr)
\end{aligned}
\end{equation} and by \eqref{eq: tild_eps_OOS}, we obtain
\begin{equation}
\begin{aligned}
& E[\tilde\eps_{OOS}\tilde\eps_{OOS}'\mid S,S_{OOS}] \\
=\ & \frac{\gs_\eps^2}{T^2_{OOS}} S(zI+\hat\Psi)^{-1}S_{OOS}'     \Bigr(
\frac{1}{T}S_{OOS}(zI+\hat\Psi)^{-1}  \hat\Psi(zI+\hat\Psi)^{-1}S_{OOS}'+I
\Bigr)  S_{OOS}(zI+\hat\Psi)^{-1}S' \\
=\ & \frac{\gs_\eps^2}{T T^2_{OOS}} S(zI+\hat\Psi)^{-1}S_{OOS}'S_{OOS}(zI+\hat\Psi)^{-1} \hat\Psi(zI+\hat\Psi)^{-1}S_{OOS}'S_{OOS}(zI+\hat\Psi)^{-1}S'  \\
& + \frac{\gs_\eps^2}{T^2_{OOS}} S(zI+\hat\Psi)^{-1}S_{OOS}'  S_{OOS}(zI+\hat\Psi)^{-1}S'  \\
=\ & \frac{\gs_\eps^2}{T } S(zI+\hat\Psi)^{-1}\hat\Psi_{OOS}(zI+\hat\Psi)^{-1} \hat\Psi(zI+\hat\Psi)^{-1}\hat\Psi_{OOS}(zI+\hat\Psi)^{-1}S' \\
&+ \frac{\gs_\eps^2}{T_{OOS}} S(zI+\hat\Psi)^{-1}\hat\Psi_{OOS}(zI+\hat\Psi)^{-1}S'
\end{aligned}
\end{equation}
For simplicity, let $R:= (zI+\hat\Psi)^{-1}$. Thus,
\begin{equation}
\begin{aligned}
& E\Big[
\big(
\beta' (zI+\hat\Psi)^{-1}\hat\Psi_{OOS}(zI+\hat\Psi)^{-1} S'\tilde\eps_{OOS}
\big)^2
\;\Big|\; S,S_{OOS}\Big] \\
=\ &
\beta' R \hat\Psi_{OOS}R
S'\,
E[\tilde\eps_{OOS}\tilde\eps_{OOS}'\mid S,S_{OOS}]\,
S
R\hat\Psi_{OOS}R\beta \\
=\ & \beta' R\hat\Psi_{OOS}R
S'\, \frac{\gs_\eps^2}{T } SR\hat\Psi_{OOS}R \hat\Psi R \hat\Psi_{OOS} R S' S
R \hat\Psi_{OOS} R \beta  + \beta' R \hat\Psi_{OOS} R
S'\, \frac{\gs_\eps^2}{T_{OOS}} R \hat\Psi_{OOS} R S' S
R \hat\Psi_{OOS} R \beta \\
=\ & \gs_\eps^2 T\,\beta' M_1 \beta
\;+\; \gs_\eps^2 \frac{T^2}{T_{OOS}}\,\beta' M_2 \beta,
\end{aligned}
\end{equation}
where 
\begin{equation}
\begin{aligned}
M_1 &= R\hat\Psi_{OOS}R\hat\Psi R \hat\Psi_{OOS} R\hat\Psi R \hat\Psi_{OOS} R\hat\Psi R\hat\Psi_{OOS}R,\\
M_2 &= R\hat\Psi_{OOS}R\hat\Psi R \hat\Psi_{OOS} R\hat\Psi R\hat\Psi_{OOS}R.   
\end{aligned}
\end{equation}
We now bound $\beta'M_1\beta$ and $\beta'M_2\beta$ explicitly.  
Since $M_1$ and $M_2$ are symmetric, for any vector $\beta$, we have
\begin{equation}
E[\beta' M_i \beta] \;\le\; \|\beta\|^2\,E[\|M_i\|],
\qquad i=1,2.
\end{equation}
Note that, if $\lambda_i$ are the eigenvalues of $\hat\Psi$, then the eigenvalues
of $\hat\Psi R$ are $\dfrac{\lambda_i}{z+\lambda_i} \in[0,1)$, so $\|R\hat\Psi\|\le\;1.$ From the structure of $M_1$ and $M_2$ we then obtain
\begin{equation}\label{eq:M1M2-op-bd}
\begin{aligned}
\|M_1\|
&\;\le\; \|R\hat\Psi_{OOS}\|
\|R\hat\Psi\|\|R\hat\Psi_{OOS}\|\|R\hat\Psi\|
\|R\hat\Psi_{OOS}\| \|R\hat\Psi\| \|R\hat\Psi_{OOS}R\| \le \frac{\|\hat\Psi_{OOS}\|^4}{z^5},\\
\|M_2\|
&\;\le\; \|R\hat\Psi_{OOS}\|
\|R\hat\Psi\|\|R\hat\Psi_{OOS}\|\|R\hat\Psi\|
 \|R\hat\Psi_{OOS}R\| \le  \frac{\|\hat\Psi_{OOS}\|^3}{z^4}.    
\end{aligned}
\end{equation}

Under our sub-Gaussian assumptions on the rows of $S_{OOS}$, Lemma~\ref{lem:SubG-spectral}
applied to $S_{OOS}$ with $k = 3, 4,$ implies that 
\begin{equation}
E [\|\hat\Psi_{OOS}\|^3 ] \le C_3 ,
\qquad E [\|\hat\Psi_{OOS}\|^4 ] \le C_4 
\end{equation}
where $C_3$ and $C_4$ depend only on the sub-Gaussian parameter $K$ but not on $T_{OOS}$ or $P$. Hence,
\begin{equation}\label{eq:betaM1M2}
E[\beta'M_1\beta] \;\le\; \frac{C_4}{z^5}\|\beta\|^2,
\qquad
E[\beta'M_2\beta] \;\le\; \frac{C_3}{z^4}\|\beta\|^2.
\end{equation}
Plugging these bounds into the expression for the conditional second moment
and then taking expectations, we obtain
\begin{equation}
E\Big[
\Big(-\frac{2z}{T}\,
\beta' (zI+\hat\Psi)^{-1}\hat\Psi_{OOS}(zI+\hat\Psi)^{-1} S'\tilde\eps_{OOS}
\Big)^2
\Big] \le\;
4\gs_\eps^2 \|\beta\|^2
\Bigg[
\frac{1}{T}\,\frac{C_4}{z^3}
+ \frac{1}{T_{OOS}}\,\frac{C_3}{z^2}
\Bigg].
\end{equation}
In particular, if $T\to\infty$ and $T_{OOS}\to\infty$, the right-hand side converges
to zero, so the cross term in \eqref{qoos-norm} converges to zero in $L_2$. Now, we have
\begin{equation}\label{qoos-norm_approx}
\frac{1}{T} \hat q_{OOS}'\hat q_{OOS}
\approx\  z^2 \beta ' (zI + \hat \Psi)^{-1} \hat\Psi_{OOS} (zI + \hat \Psi)^{-1} \hat\Psi (zI + \hat \Psi)^{-1} \hat\Psi_{OOS}(zI + \hat \Psi)^{-1}\beta +\ \frac{1}{T} \|\tilde\eps_{OOS}\|^2 .
\end{equation}
Here, by the concentration of quadratic forms (Lemma \ref{lem-quad}), 
\begin{equation}
\begin{aligned}
& \frac{1}{T}  \|\tilde\eps_{OOS}\|^2\\
=\ &  \frac{1}{T_{OOS}^2}   \hat\eps_{OOS}'S_{OOS}(zI+\hat\Psi)^{-1} \frac{1}{T}  S'S(zI+\hat\Psi)^{-1}S_{OOS}'\hat\eps_{OOS}\\
=\ &  \frac{1}{T_{OOS}^2}   \tr [S_{OOS}(zI+\hat\Psi)^{-1} \frac{1}{T}  S'S(zI+\hat\Psi)^{-1}S_{OOS}'\hat\eps_{OOS}\hat\eps_{OOS}']\\
\approx\ &  \frac{1}{T_{OOS}^2}   \tr \Bigr[S_{OOS}(zI+\hat\Psi)^{-1} \frac{1}{T}  S'S(zI+\hat\Psi)^{-1}S_{OOS}'
\gs_\eps^2\Bigr(
S_{OOS}(zI+\hat\Psi)^{-1} \frac{1}{T^2}  S'S(zI+\hat\Psi)^{-1}S_{OOS}'+I
\Bigr)
\Bigr]\\
=\ & \gs_\eps^2  \frac{1}{T_{OOS}^2}   \tr  \Bigr[S_{OOS}(zI+\hat\Psi)^{-1} \frac{1}{T}  S'S(zI+\hat\Psi)^{-1}S_{OOS}'
\Bigr(
S_{OOS}(zI+\hat\Psi)^{-1} \frac{1}{T^2}  S'S(zI+\hat\Psi)^{-1}S_{OOS}'
\Bigr)
\Bigr]\\
&+\ \gs_\eps^2  \frac{1}{T_{OOS}^2}   \tr [S_{OOS}(zI+\hat\Psi)^{-1} \frac{1}{T}  S'S(zI+\hat\Psi)^{-1}S_{OOS}'
]\\
=\ & \gs_\eps^2 \underbrace{ \frac{1}{T}  \tr [(zI+\hat\Psi)^{-1}\hat\Psi(zI+\hat\Psi)^{-1}\hat\Psi_{OOS}(zI+\hat\Psi)^{-1}\hat\Psi(zI+\hat\Psi)^{-1}\hat\Psi_{OOS}
]}_{=0.5\gs_V^2\ by\ \eqref{defn:sigmav}}\\
&+\ \gs_\eps^2  \frac{1}{T_{OOS}}   \tr [(zI+\hat\Psi)^{-1}\hat\Psi(zI+\hat\Psi)^{-1}\hat\Psi_{OOS}
]\\
=\ & \gs_\eps^2\Big(\frac12 \gs_V^2\ +\ \frac{T}{T_{OOS}}\widehat\cL\Big)\,. 
\end{aligned}
\end{equation}
Thus, by \eqref{defn:sigmav} and \eqref{qoos-norm}, 
\begin{equation}\label{qoos-norm1}
\begin{aligned}
\frac14 \sigma^2_I =\ & z^2 \beta ' (zI + \hat \Psi)^{-1} \hat\Psi_{OOS} (zI + \hat \Psi)^{-1} \hat\Psi (zI + \hat \Psi)^{-1} \hat\Psi_{OOS}(zI + \hat \Psi)^{-1}\beta\\
\approx\ &  \frac{1}{T} \|\hat q_{OOS}\|^2-\frac{1}{T} \|\tilde\eps_{OOS}\|^2\,. 
\end{aligned}
\end{equation}
Plugging this expression into \eqref{eq:sigma2_MSE2}, we have
\begin{equation}
\begin{aligned}
\gs_{MSE}^2\ =\ & \frac{2\dfrac{T}{T_{OOS}}\gs_\eps^4+\gs_\eps^4\gs_V^2+\gs_\eps^2 \gs_I^2+\gs_\eps^2\gs_{I,OOS}^2}{(1+\widehat\cL)^2}\\
=\ & \frac{2\dfrac{T}{T_{OOS}}\gs_\eps^4+\gs_\eps^4\gs_V^2+4\gs_\eps^2 \Bigr[ \dfrac{1}{T}\|\hat q_{OOS}\|^2\ -\ \gs_\eps^2\Big(\dfrac12 \gs_V^2\ +\ \dfrac{T}{T_{OOS}}\widehat\cL\Big)\Bigr]+4\gs_\eps^2\dfrac{T}{T_{OOS}} \Big(\widehat\cB(z)+\ \gs_\eps^2 \widehat\cL \Big)}{(1+\widehat\cL)^2}\\
=\ & \frac{2\dfrac{T}{T_{OOS}}\gs_\eps^4-\gs_\eps^4\gs_V^2+4\gs_\eps^2 \dfrac{1}{T}\|\hat q_{OOS}\|^2\ + 4 \gs_\eps^2 \dfrac{T}{T_{OOS}} \widehat\cB(z) }{(1+\widehat\cL)^2}
\end{aligned}
\end{equation}
The proof of Proposition \ref{super-short-estim-ridge} is complete. 
\end{proof}

\subsection{The Final Pivotal Estimator} 

By the continuous mapping theorem, in estimating $\gs_{R^2},$ we can replace any quantity with its consistent estimator, and we can replace $\gs_{R^2}$ with any consistent upper bound. We use this observation repeatedly below. We have  
\begin{equation}
\begin{aligned}
\gS_{1,2}\ =\ & 2\gs_\eps^4+4\gs_\eps^2 \beta'\hat\Psi_{OOS}(\beta-\hat\beta)\\
\underbrace{\approx}_{\eqref{betahatbeta}}\ &2\gs_\eps^4+4\gs_\eps^2 \Bigr(\beta'\hat\Psi_{OOS}\beta-E_{OOS}[y'S]\hat\beta \Bigr)\\
\approx\ & 2\gs_\eps^4+4\gs_\eps^2 \Bigr(MSE_{OOS}(0)-\gs_\eps^2-E_{OOS}[y'S]\hat\beta \Bigr)\\
=\ & -2\gs_\eps^4+4\gs_\eps^2 \Bigr(MSE_{OOS}(0)-E_{OOS}[y'S]\hat\beta \Bigr)\\
\gS_{2,2}\ =\ & 2\gs_\eps^4+4\gs_\eps^2 \beta'\Psi_{OOS}\beta\\
\approx\ & 2\gs_\eps^4+4\gs_\eps^2 \Bigr(MSE_{OOS}(0)-\gs_\eps^2 \Bigr)\\
\approx\ & -2\gs_\eps^4+4\gs_\eps^2 MSE(0)\\
\widehat\cB(z)\ \approx\ & \widehat{MSE}-(1+\widehat\cL)\gs_\eps^2
\end{aligned}
\end{equation}
Thus, 
\begin{equation}
\begin{aligned}
&(1+\widehat\cL)^2\hat\gs_{R^2}MSE(0)^4\\
=\ & MSE(0)^2\gS_{1,1}+(\widehat{MSE})^2\gS_{2,2}-2MSE(0)\widehat{MSE} \gS_{1,2}\\
=\ & MSE(0)^2 \frac{T_{OOS}}{T}\gs^2_{MSE}(1+\widehat\cL)^2+(\widehat{MSE})^2 \Bigr(-2\gs_\eps^4+4\gs_\eps^2 MSE(0) \Bigr)\\
&-2MSE(0)\widehat{MSE} \Bigr[-2\gs_\eps^4+4\gs_\eps^2 \Bigr(MSE_{OOS}(0)-E_{OOS}[y'S]\hat\beta \Bigr) \Bigr]\\
=\ & MSE(0)^2\frac{T_{OOS}}{T}\Bigr[
2\frac{T}{T_{OOS}}\gs_\eps^4-\gs_\eps^4\gs_V^2+4\gs_\eps^2  \frac{1}{T}  \|\hat q_{OOS}\|^2\ +4\gs_\eps^2\frac{T}{T_{OOS}} \Big(
\widehat{MSE}-(1+\widehat\cL)\gs_\eps^2
\Big)\Bigr]\\
&+(\widehat{MSE})^2 \Bigr(-2\gs_\eps^4+4\gs_\eps^2 MSE(0) \Bigr) -2MSE(0)\widehat{MSE} \Bigr[-2\gs_\eps^4+4\gs_\eps^2 \Bigr(MSE_{OOS}(0)-E_{OOS}[y'S]\hat\beta \Bigr)\Bigr]\\
=\ & A_2\gs_\eps^4\ +\ A_1\gs_\eps^2
\end{aligned}
\end{equation}
where
\begin{equation}
\begin{aligned}
A_2\ =\ & MSE(0)^2\frac{T}{T_{OOS}}\Big(
2\frac{T}{T_{OOS}}-\gs_V^2-4\frac{T}{T_{OOS}} 
(1+\widehat\cL)
\Big)\\ 
&-\ 2(\widehat{MSE})^2\ +\ 4MSE(0)\widehat{MSE};\\
A_1\ =\ & MSE(0)^2\frac{T}{T_{OOS}}\Bigr(4 \frac{1}{T}\|\hat q_{OOS}\|^2
+4\frac{T}{T_{OOS}}
\widehat{MSE}\Bigr)\\
&+4(\widehat{MSE})^2 MSE(0)\ -\ 8MSE(0)\widehat{MSE} \Bigr(MSE_{OOS}(0)-E_{OOS}[y'S]\hat\beta \Bigr)\,.
\end{aligned}
\end{equation}
We can now build an asymptotically consistent, pivotal upper bound on $\gs_{R^2}^2$ as 
\begin{equation}
\gs_{R^2}^2\ \le\ \min_{\gs_\eps^2\in [0,\min_z MSE_{OOS}(\hat\beta(z))/(1+\hat\cL(z))]}(A_2\gs_\eps^4\ +\ A_1\gs_\eps^2)\ +\ O(T^{-1/2})\,.
\end{equation}

\section{Proof of Proposition \ref{prop:theoldllg}}

Our goal is to show that 
\begin{equation}
\widehat\cL=\dfrac1T\tr \Bigr(\Psi\hat\Psi(zcI+\hat\Psi)^{-2}\Bigr)\,
\end{equation}
and 
\begin{equation}\label{final-form-a}
\widetilde\cL\ =\ \frac{\dfrac1T\tr \bigl((zcI+SS'/T)^{-2} \bigr)}{\Bigr(\dfrac1T\tr \bigl((zcI+SS'/T)^{-1} \bigr)\Bigr)^2}-1\,. 
\end{equation}
satisfy $\widehat\cL\approx\widetilde\cL\,.$ From Proposition \ref{prop:exptrace}, we know that 
\begin{equation}
\dfrac1T\tr \Bigr(\Psi(zcI+\hat\Psi)^{-1}\Bigr)\ \approx\ \hat\xi(z;c)\ =\ -1\ +\ \frac{1}{1-c+cz\hat m(-z)}\,.
\end{equation}
and
\begin{equation}
-\dfrac1T\tr \Bigr(\Psi(zcI+\hat\Psi)^{-2}\Bigr)\ =\ \hat\xi'(z;c)\ =\ -\ \frac{c(\hat m(-z)-z\hat m'(-z))}{(1-c+cz\hat m(-z))^2}\,,
\end{equation}
so that 
\begin{equation}
\begin{aligned}
1+\widehat\cL &=\ 1+\hat\xi(z;c)+z\hat\xi'(z;c)\\
&\approx \frac{1}{1-c+cz\hat m(-z)}-\frac{cz(\hat m(-z)-z\hat m'(-z))}{(1-c+cz\hat m(-z))^2}\\
&=\ \frac{1-c+cz^2\hat m'(-z)}{(1-c+cz\hat m(-z))^2}\,.
\end{aligned}
\end{equation}
Let 
\begin{equation}
\tilde m(z)\ =\ (1-c)z^{-1}+c \hat m(-z)\,. 
\end{equation}
Then, 
\begin{equation}
\frac{d}{dz}\tilde m(-z)\ =\ -(1-c)z^{-2}-c\hat m'(-z)
\end{equation}
and, hence, we get 
\begin{equation}
\begin{aligned}
&1+\widehat\cL \approx\ \frac{-\frac{d}{dz}\tilde m(-z)}{\tilde m(-z)^2}\,. 
\end{aligned}
\end{equation}
Now, by direct calculation, the matrices $SS'$ and $S'S$ have the same eigenvalues, up to $P-T$ zero eigenvalues. As a result, 
\begin{equation}
\tilde m(-z)\ =\ (1-c)z^{-1}+c  P^{-1}\tr((zI+S'S/T)^{-1})\ =\ T^{-1}\tr((zI+SS'/T)^{-1})\,,
\end{equation}
and the claim follows.

\section{Bayesian Risk and Optimal Ridge}
\label{bayes}

In this section, we provide Bayesian foundation for ridge regression and discuss when ridge and its modifications are Bayes-optimal and, hence, cannot be dominated by any other machine learning model (linear or nonlinear). What is special about our setting is that, in high dimensions, due to concentration phenomena (where random objects in high dimensions become non-random), our bounds hold almost surely and not just in expectation. 

\subsection{Bayesian Optimality}

Suppose that nature samples a parameter vector $\theta\in \R^D$ from a distribution $p(\theta)d\theta.$ The agent observes i.i.d. samples $X_t\sim p(X|\theta),\ \bX=(X_t)_{t=1}^T.$ His objective is to solve a utility optimization problem 
\begin{equation}
\max_{\pi(\bX)}E[U(\pi(\bX),\theta)]\ =\ \max_{\pi(\bX)}\int E[U(\pi(\bX),\theta)|\theta]p(\theta)d\theta\,. 
\end{equation}
In the real world, the the dimension $D$ is large enough, no agent can know (or have any reasonable algorithm to finy) the true distribution $p(\theta)$ from which nature samples $\theta.$ A rational agent is thus forced to choose a prior $p^{subjective}(\theta),$ and then learn from the observations of $\bX.$ When $D$ is large relative to $T$, this learning will not converge due to limits to learning. This convergence breaks down {\it even if the agent has the optimal prior $p(\theta)$.} In this section, we discuss the implications of these fundamental results for limits to learning. 

Suppose that the agent does have the (infeasible) optimal prior $p(\theta)$. By the law of iterated expectations, we can rewrite his objective as 
\begin{equation} \label{eq-iter}
E[U(\pi(\bX),\theta)]\ =\ E[E[U(\pi(\bX),\theta)|\bX]]\,.
\end{equation}
Let 
\begin{equation}
\pi_*(\bX)\ =\ \arg\max_{\pi}E[U(\pi,\theta)|\bX]\ =\ \arg\max_{\pi}\int U(\pi,\theta)p(\theta|\bX)d\theta
\end{equation}
be the optimal Bayesian policy. Then, we get the simple, classical result. 

\begin{theorem}[Bayesian Policies are Optimal]\label{th1} Suppose that 
\begin{equation} \label{eq: opt-assum}
E[|E[U(\pi_*(\bX),\theta)|\bX]|]\ <\ \infty\,.
\end{equation}
Then, 
\begin{equation}
\pi_*(\bX)\ \in\ \arg\max_{\pi(\bX)}E[U(\pi(\bX),\theta)]
\end{equation}
\end{theorem}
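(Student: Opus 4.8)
The plan is to establish the pointwise (in $\bX$) optimality of $\pi_*$ inside the conditional expectation and then lift it to the unconditional objective via the law of iterated expectations already recorded in \eqref{eq-iter}. The argument is the classical ``Bayes rules are optimal'' proof, and its only substantive ingredients are the monotonicity of expectation and the integrability hypothesis \eqref{eq: opt-assum}, which guarantees the relevant iterated expectations are well-defined.

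First I would fix an arbitrary admissible policy $\pi(\bX)$ and invoke the defining property of $\pi_*$. By construction, for $p$-almost every realization of the data $\bX$,
\begin{equation}
E[U(\pi(\bX),\theta)\mid \bX]\ \le\ \max_{a} E[U(a,\theta)\mid \bX]\ =\ E[U(\pi_*(\bX),\theta)\mid \bX]\,,
\end{equation}
since $\pi_*(\bX)$ attains the inner maximum conditional on $\bX$. This step uses only the definition of $\pi_*$ as the pointwise maximizer of the posterior-expected utility; crucially, the comparison is valid separately for each $\bX$, so no cross-realization interaction is needed.

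Next I would take the outer expectation over $\bX$ on both sides. Monotonicity of the (conditional) expectation preserves the inequality, and the finiteness assumption \eqref{eq: opt-assum} ensures the right-hand side is integrable, so the expectations exist. Applying the law of iterated expectations \eqref{eq-iter} to both sides then yields
\begin{equation}
E[U(\pi(\bX),\theta)]\ =\ E\bigl[E[U(\pi(\bX),\theta)\mid \bX]\bigr]\ \le\ E\bigl[E[U(\pi_*(\bX),\theta)\mid \bX]\bigr]\ =\ E[U(\pi_*(\bX),\theta)]\,.
\end{equation}
Because $\pi$ was arbitrary, $\pi_*$ dominates every admissible policy, which is precisely the claim that $\pi_*(\bX)\in\arg\max_{\pi(\bX)}E[U(\pi(\bX),\theta)]$.

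The only genuine technical obstacle is measurability: for the outer expectation of $E[U(\pi_*(\bX),\theta)\mid\bX]$ to make sense, the selection $\bX\mapsto\pi_*(\bX)$ must be a measurable function of the data. I would handle this by appealing to a standard measurable-selection theorem (e.g. the measurable maximum theorem), noting that under mild regularity of $U$ and of the posterior $p(\theta\mid\bX)$ such a measurable maximizer exists; the integrability condition \eqref{eq: opt-assum} then upgrades this to integrability so that the iterated-expectation identity is legitimate. Everything else is a routine application of the tower property, and I do not expect any hidden difficulty beyond this measurability/integrability bookkeeping.
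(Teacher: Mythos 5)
Your proof is correct and is precisely the classical tower-property argument the paper intends: indeed, the paper states Theorem~\ref{th1} without any written proof, calling it a ``simple, classical result,'' having already recorded the key ingredient \eqref{eq-iter} immediately beforehand. Your argument---pointwise conditional optimality of $\pi_*(\bX)$, monotonicity of expectation, the law of iterated expectations, and the measurability/integrability bookkeeping justified by \eqref{eq: opt-assum}---supplies exactly the missing details, so there is nothing to compare beyond noting that you have filled the gap the paper left implicit.
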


\subsection{Bayesian Optimality and The Best Feasible Policy}

In this subsection, we apply Theorem \ref{th1} to the linear predictive setting studied in the main body of the paper. 

\begin{lemma}[Bayesian updating]\label{lem1} Suppose that 
\begin{equation}
y_t\ =\ \beta'S_{t-1}\ +\ \eps_t\,.
\end{equation}
Suppose also that the agent's prior about $\beta$ is $\beta\sim N(0,\Sigma_\beta)$. 
Let $y=(y_\tau)_{\tau=1}^t,$ and $S=(S_\tau)_{\tau=0}^{t-1}\in \R^{t\times P}.$ The agent's posterior distribution is Gaussian, $\beta|\cF_t\ \sim\ N(\hat\beta_t,\ \hat\Sigma_t),$ with 
\begin{equation}
\begin{aligned}
&\hat \beta_{1,t}\ =\ (\gs_\eps^2\Sigma_\beta^{-1}+ S' S)^{-1}S'y\\
&\hat\Sigma_{1,t}\ =\ (\gs_\eps^2 \gS_\beta^{-1}+S'S)^{-1}\gs_\eps^2\,.
\end{aligned}
\end{equation}
\end{lemma}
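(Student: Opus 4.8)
The plan is to recognize this as the textbook conjugate Gaussian update and to obtain the posterior by completing the square. Since the prior $\beta\sim N(0,\gS_\beta)$ is Gaussian and—under the maintained assumption that the innovations $\eps_t$ are Gaussian—the likelihood $y\mid\beta,S \sim N(S\beta,\gs_\eps^2 I_t)$ is Gaussian in $\beta$, Bayes' rule produces a Gaussian posterior whose mean and covariance I can read off directly from the exponent.

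First I would write the unnormalized posterior density as the product of prior and likelihood,
\[
p(\beta\mid\cF_t)\ \propto\ \exp\Big(-\tfrac12\,\beta'\gS_\beta^{-1}\beta\Big)\,\exp\Big(-\tfrac{1}{2\gs_\eps^2}(y-S\beta)'(y-S\beta)\Big).
\]
Expanding $(y-S\beta)'(y-S\beta)=y'y-2\beta'S'y+\beta'S'S\beta$ and discarding the terms independent of $\beta$, the exponent collapses to $-\tfrac12\bigl(\beta'A\beta-2\beta'b\bigr)$ with
\[
A\ =\ \gS_\beta^{-1}+\gs_\eps^{-2}S'S,\qquad b\ =\ \gs_\eps^{-2}S'y.
\]

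Next I would complete the square, $\beta'A\beta-2\beta'b=(\beta-A^{-1}b)'A(\beta-A^{-1}b)-b'A^{-1}b$, which exhibits the posterior as exactly $N(A^{-1}b,\,A^{-1})$. The remaining step is purely cosmetic rescaling: multiplying $A$ through by $\gs_\eps^2$ gives $\gs_\eps^2 A=\gs_\eps^2\gS_\beta^{-1}+S'S$, so that
\[
\hat\Sigma_{1,t}=A^{-1}=\gs_\eps^2(\gs_\eps^2\gS_\beta^{-1}+S'S)^{-1},\qquad \hat\beta_{1,t}=A^{-1}b=(\gs_\eps^2\gS_\beta^{-1}+S'S)^{-1}S'y,
\]
which are precisely the claimed expressions.

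There is no genuine analytical obstacle here; the content is entirely in correctly tracking the algebra of the quadratic form. The only two points requiring care are, first, that the errors must be taken Gaussian—otherwise the posterior need not be Gaussian and one could only characterize its mode/mean as the generalized-ridge estimator rather than the full law—and, second, that $A=\gS_\beta^{-1}+\gs_\eps^{-2}S'S$ is invertible. The latter holds because $\gS_\beta^{-1}\succ0$ while $S'S\succeq0$, so $A\succ0$ regardless of whether $P>t$; this is exactly the invertibility that makes the Bayesian regularization well-posed in the over-parametrized regime that is central to the paper.
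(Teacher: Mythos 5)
Your proof is correct, but it takes a genuinely different route from the paper. You work in parameter space: write the unnormalized posterior density as prior times likelihood and complete the square in $\beta$, which directly delivers the posterior precision $A=\gS_\beta^{-1}+\gs_\eps^{-2}S'S$ and hence the stated mean and covariance in one step. The paper instead works in observation space: it applies the Gaussian projection theorem to the jointly Gaussian pair $(\beta,y)$ given $S$, obtaining first the ``dual'' representations $\hat\beta_{1,t}=\Sigma_\beta S'(\gs_\eps^2 I_{t\times t}+S\Sigma_\beta S')^{-1}y$ and the corresponding covariance, and then converts these to the stated ``primal'' ridge form via the push-through identity $\Sigma_\beta S'(\gs_\eps^2 I_{t\times t}+S\Sigma_\beta S')^{-1}=(\gs_\eps^2\Sigma_\beta^{-1}+S'S)^{-1}S'$ (proved by substituting $\tilde S=S\Sigma_\beta^{1/2}$). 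Your argument is shorter and more elementary, but the paper's detour is not wasted work: the $t\times t$ dual form and the push-through identity are exactly what underlie the kernel-ridge equivalence exploited later (Lemma \ref{lem-equiv} and the Bayes-optimality of ridge with $z=\gs_\eps^2$), and the projection-theorem formulas remain meaningful even when $\Sigma_\beta$ is singular, whereas completing the square requires $\gS_\beta^{-1}$ to exist from the outset (as does the final stated formula, so this costs nothing here). Your two points of care are well taken; in particular, you are right that Gaussianity of $\eps_t$ must be assumed (it is not part of Assumption \ref{ass1}), a hypothesis the paper's proof also uses implicitly, since without it the projection theorem yields only the best linear predictor rather than the conditional law.
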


\begin{proof}[Proof of Lemma \ref{lem1}] In the vector form, we have 
\begin{equation}
d\ =\ S\beta\ +\ \eps
\end{equation}
The agent believes that the vector $\beta\in \R^P$ is sampled at time zero from $\mathcal{N}(0,\Sigma_\beta)$. Signals have a covariance matrix $\psi_t.$ By the Gaussian projection theorem: 
\begin{equation}
\begin{aligned}
&E[\beta|d]\ =\ \Cov[\beta,d|S]\Cov[d|S]^{-1}d\\
&\Var[\beta|d]\ =\ \Var[\beta]\ -\ \Cov[\beta,d|S] \Cov[d|S]^{-1} \Cov[\beta,d|S]'
\end{aligned}
\end{equation}
We have 
\begin{equation}
\begin{aligned}
&\Cov[d|S]\ =\ (\gs_\eps^2I_{t\times t}+S\Sigma_\beta S')\\
&\Cov[\beta,d|S]\ =\ E[\beta d']\ =\ \Sigma_\beta S'\\
\end{aligned}
\end{equation}
and hence
\begin{equation}
\Var[\beta|d]\ =\ \Sigma_\beta\ -\ \Sigma_\beta S' (\gs_\eps^2I_{t\times t}+S\Sigma_\beta S')^{-1} S\Sigma_\beta.
\end{equation}
Thus, his posterior after $t$ observations is thus $\beta\approx N(\hat\beta_{1,t},\hat\Sigma_{1,t}),$ where 
\begin{equation}
\begin{aligned}
&\hat \beta_{1,t}\ =\ \Sigma_\beta S'(\gs_\eps^2I_{t\times t}+S\Sigma_\beta S')^{-1}d\\
&\hat\Sigma_{1,t}\ =\ \Sigma_\beta\ -\ \Sigma_\beta S'(\gs_\eps^2I_{t\times t}+S\Sigma_\beta S')^{-1}S\Sigma_\beta
\end{aligned}
\end{equation}
Now, define $\tilde S=S\Sigma_\beta^{1/2}.$ Then, 
\begin{equation}\label{sigma-beta1}
\begin{aligned}
&\Sigma_\beta S'(\gs_\eps^2I_{t\times t}+S\Sigma_\beta S')^{-1}\\
=\ & \Sigma_\beta^{1/2}\tilde S'(\gs_\eps^2I_{t\times t}+\tilde S\tilde S')^{-1}S'\\
=\ & \Sigma_\beta^{1/2}(\gs_\eps^2I_{P\times P}+\tilde S'\tilde S)^{-1}\tilde S'\\
=\ & (\gs_\eps^2\Sigma_\beta^{-1}+ S' S)^{-1}S'
\end{aligned}
\end{equation}
so that 
\begin{equation}
\begin{aligned}
&\hat \beta_{1,t}\ =\ (\gs_\eps^2\Sigma_\beta^{-1}+ S' S)^{-1}S'y\\
&\hat\Sigma_{1,t}\ =\ \Sigma_\beta\ -\ \Sigma_\beta S'(\gs_\eps^2I_{t\times t}+S\Sigma_\beta S')^{-1}S\Sigma_\beta=\ (\gs_\eps^2 \gS_\beta^{-1}+S'S)^{-1}\gs_\eps^2\,.
\end{aligned}
\end{equation}
\end{proof}

We can now prove the following result.
\begin{proposition}\label{prop:optimal prior} Suppose that $\beta$ is sampled at time zero from $N(0,\gS_\beta).$ Then, 
\begin{equation}
\pi_t^*\ =\ S_t'(\gs_\eps^2\Sigma_\beta^{-1}+ S' S)^{-1}S'y
\end{equation}
is Bayes optimal in the sence that 
\begin{equation}
E_\beta[(y_{t+1}-\pi_t^*)^2]\ \le\ E_\beta[(y_t-\Pi(S,y))^2]
\end{equation}
for any map $\Pi:\R^{t(P+1)}\to\R.$
\end{proposition}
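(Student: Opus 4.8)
The plan is to recognize $\pi_t^* = S_t'\hat\beta_{1,t}$ as a conditional expectation and then apply Theorem~\ref{th1} specialized to quadratic loss. Concretely, I would cast the prediction problem in the framework of Theorem~\ref{th1} by choosing the utility $U(\pi,\beta) = -(y_{t+1}-\pi)^2$, so that maximizing expected utility is equivalent to minimizing mean-squared prediction error. Let $\cG$ denote the $\sigma$-algebra generated by the in-sample data $(S,y)$ together with the current signal $S_t$. Theorem~\ref{th1} then says the globally optimal policy is the Bayes policy $\pi_*(\bX) = \arg\max_\pi E[U(\pi,\beta)\mid\cG]$, i.e. the minimizer of $E[(y_{t+1}-\pi)^2\mid\cG]$ over $\cG$-measurable $\pi$.

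The core of the argument is the orthogonal (bias--variance) decomposition: for any $\cG$-measurable candidate $\pi$,
\[
E[(y_{t+1}-\pi)^2\mid\cG] \;=\; E[(y_{t+1}-E[y_{t+1}\mid\cG])^2\mid\cG] \;+\; (E[y_{t+1}\mid\cG]-\pi)^2,
\]
where the cross term vanishes because $y_{t+1}-E[y_{t+1}\mid\cG]$ is conditionally mean-zero and both $\pi$ and $E[y_{t+1}\mid\cG]$ are $\cG$-measurable. This is minimized by $\pi=E[y_{t+1}\mid\cG]$, the $L^2$ projection of $y_{t+1}$ onto $L^2(\cG)$. Since the data-generating process gives $y_{t+1}=\beta'S_t+\eps_{t+1}$ with $S_t$ being $\cG$-measurable, $\eps_{t+1}$ independent and mean-zero, and $S_t$ independent of $\beta$ conditional on $(S,y)$, we obtain $E[y_{t+1}\mid\cG] = S_t'E[\beta\mid S,y]$. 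Lemma~\ref{lem1} identifies the posterior mean as $E[\beta\mid S,y]=\hat\beta_{1,t}=(\gs_\eps^2\Sigma_\beta^{-1}+S'S)^{-1}S'y$, so $E[y_{t+1}\mid\cG]=S_t'\hat\beta_{1,t}=\pi_t^*$. Finally, any competitor $\Pi(S,y)$ is a function of $(S,y)$ and hence $\cG$-measurable, so the projection inequality $E[(y_{t+1}-\pi_t^*)^2]\le E[(y_{t+1}-\Pi(S,y))^2]$ follows by taking unconditional expectations of the decomposition.

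I expect essentially no genuine obstacle, since this is the classical characterization of the posterior mean as the $L^2$-optimal predictor; the result is a direct specialization of Theorem~\ref{th1} plus Lemma~\ref{lem1}. The only point requiring care is the measurability bookkeeping around the out-of-sample signal $S_t$: the stated competitor is written as $\Pi(S,y)$ with domain of dimension $t(P+1)$ (the in-sample data only), whereas the optimal $\pi_t^*$ additionally uses $S_t$. Working throughout with the richer information set $\cG=\sigma(S,y,S_t)$ resolves this cleanly, because every $\Pi(S,y)$ is automatically $\cG$-measurable and the projection inequality therefore covers competitors that do or do not exploit $S_t$. I would also flag the evident typo in the statement (the right-hand side should read $y_{t+1}$ rather than $y_t$), and verify that the conditional-mean identity $E[\beta\mid S,y,S_t]=E[\beta\mid S,y]$ indeed holds, which it does since signals are exogenous to $\beta$ and $S_t$ contributes no likelihood information about $\beta$.
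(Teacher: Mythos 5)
Your proposal is correct and follows exactly the route the paper intends: the paper offers no separate proof of Proposition~\ref{prop:optimal prior}, treating it as an immediate specialization of Theorem~\ref{th1} (Bayes policies are optimal) under quadratic loss combined with the posterior-mean formula of Lemma~\ref{lem1}, which is precisely your conditional-expectation/$L^2$-projection argument. Your additional care on the two subtle points---that $E[\beta\mid S,y,S_t]=E[\beta\mid S,y]$ because $S_t$ carries no likelihood information about $\beta$, and that the competitor $\Pi(S,y)$ (and the typo $y_t$ versus $y_{t+1}$) is handled by working with the enlarged information set $\sigma(S,y,S_t)$---is a useful tightening of what the paper leaves implicit.
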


The proposition \ref{prop:optimal prior} states a classic result: If an economic agent knows the true optimal prior from which the $\beta$ vector is sampled, then the Bayes rule is optimal on average: No other machine learning algorithm $\Pi(\cdot,\cdot)$ can beat Bayesian learning with an optimally chosen prior. 

By direct calculation, \eqref{sigma-beta1} implies the following result. 

\begin{lemma}\label{lem-equiv} Prediction $\pi_t^*$ coincides with that in a linear ridge regression with $z=\gs_\eps^2$ and $S_t$ replaced by $\tilde S_t=\Sigma_\beta^{1/2}S_t:$
\begin{equation}
\pi_t^*\ =\ \tilde S_t'(\gs_\eps^2I+ \tilde S' \tilde S)^{-1}\tilde S'y
\end{equation}
\end{lemma}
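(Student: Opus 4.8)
The plan is to substitute the definitions $\tilde S = S\Sigma_\beta^{1/2}$ (so that its rows are $\tilde S_\tau' = S_\tau'\Sigma_\beta^{1/2}$) and $\tilde S_t = \Sigma_\beta^{1/2}S_t$ directly into the claimed right-hand side and then collapse it to $\pi_t^*$ by a single linear-algebra identity. Since $\Sigma_\beta^{1/2}$ is symmetric, the substitution gives $\tilde S_t' = S_t'\Sigma_\beta^{1/2}$, $\tilde S'\tilde S = \Sigma_\beta^{1/2}S'S\Sigma_\beta^{1/2}$, and $\tilde S' = \Sigma_\beta^{1/2}S'$, so the right-hand side of the lemma becomes
\[
S_t'\Sigma_\beta^{1/2}\bigl(\gs_\eps^2 I + \Sigma_\beta^{1/2}S'S\Sigma_\beta^{1/2}\bigr)^{-1}\Sigma_\beta^{1/2}S'y.
\]

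First I would record the conjugation identity that drives everything: for symmetric invertible $A$ and invertible $B$ one has $A B^{-1} A = (A^{-1}BA^{-1})^{-1}$, verified directly by $(AB^{-1}A)(A^{-1}BA^{-1}) = A B^{-1} B A^{-1} = I$. Applying this with $A=\Sigma_\beta^{1/2}$ and $B = \gs_\eps^2 I + \Sigma_\beta^{1/2}S'S\Sigma_\beta^{1/2}$, and noting $A^{-1}BA^{-1} = \gs_\eps^2\Sigma_\beta^{-1} + S'S$, yields
\[
\Sigma_\beta^{1/2}\bigl(\gs_\eps^2 I + \Sigma_\beta^{1/2}S'S\Sigma_\beta^{1/2}\bigr)^{-1}\Sigma_\beta^{1/2} = \bigl(\gs_\eps^2\Sigma_\beta^{-1} + S'S\bigr)^{-1}.
\]
Sandwiching this between $S_t'$ and $S'y$ recovers exactly $\pi_t^* = S_t'(\gs_\eps^2\Sigma_\beta^{-1}+S'S)^{-1}S'y$, closing the argument.

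Alternatively, matching the paper's pointer to \eqref{sigma-beta1}, I would avoid the conjugation identity and instead apply the push-through identity $(\lambda I + M'M)^{-1}M' = M'(\lambda I + MM')^{-1}$ with $M=\tilde S$. Using $\tilde S_t'\tilde S' = S_t'\Sigma_\beta S'$ and $\tilde S\tilde S' = S\Sigma_\beta S'$, this rewrites the right-hand side as $S_t'\Sigma_\beta S'(\gs_\eps^2 I + S\Sigma_\beta S')^{-1}y$; the factor $\Sigma_\beta S'(\gs_\eps^2 I + S\Sigma_\beta S')^{-1}$ is then replaced by $(\gs_\eps^2\Sigma_\beta^{-1}+S'S)^{-1}S'$ via the already-established \eqref{sigma-beta1}, again producing $\pi_t^*$.

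There is no substantive analytic obstacle: the statement is a pure matrix identity with no asymptotics or probability involved. The only point requiring care is bookkeeping — keeping straight that the symmetric root $\Sigma_\beta^{1/2}$ passes through as a symmetric factor on both sides, and that the in-sample design $\tilde S$ and the out-of-sample vector $\tilde S_t$ are transformed consistently by $\Sigma_\beta^{1/2}$. Once the substitution is written out explicitly, either the conjugation identity or one application of push-through finishes the proof in a single line.
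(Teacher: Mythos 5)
Your proof is correct and follows essentially the paper's own route: the paper proves the lemma by citing \eqref{sigma-beta1}, whose chain of equalities is exactly your push-through step $\tilde S'(\gs_\eps^2 I+\tilde S\tilde S')^{-1}=(\gs_\eps^2 I+\tilde S'\tilde S)^{-1}\tilde S'$ combined with the conjugation by $\Sigma_\beta^{1/2}$ that turns $(\gs_\eps^2 I+\Sigma_\beta^{1/2}S'S\Sigma_\beta^{1/2})^{-1}$ into $(\gs_\eps^2\Sigma_\beta^{-1}+S'S)^{-1}$. Both of your variants reduce to this same algebra (your first route just reads the conjugation identity in the other direction), with the only implicit hypothesis—invertibility of $\Sigma_\beta$—already built into the definition of $\pi_t^*$.
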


Lemma \ref{lem-equiv} combined with Theorem \ref{llg1} implies the following result.

\begin{theorem}\label{llg1-feas} For any Machine Learning model $\Pi(S,y)$, we have 
\begin{equation}
E_\beta[(y_{T+1}-\Pi(S,y))^2]\ \ge\ \underbrace{E_\beta[\lim\inf (y_{T+1}-\pi_T^*)^2]}_{best\ feasible\ MSE}\ \ge\ (1+\cL_\beta(z;c))\gs_\eps^2\,,
\end{equation}
where 
\begin{equation}
\cL_\beta(z;c)\ =\ \lim \frac{ \frac{1}{T}  \tr((\gs_\eps^2I+\tilde S\tilde S')^{-2}))}{( \frac{1}{T}  \tr((\gs_\eps^2I+\tilde S\tilde S')^{-1}))^2}-1\,.
\end{equation}
Furthermore, the best feasible MSE asymptotics is given by 
\begin{equation}\label{287}
E_T[(y_{T+1}-\pi_T^*)^2]\ - \underbrace{(1+\cL_\beta(z;c))((Z_*^\beta)^2\tr\Bigr(\tilde\Psi(\tilde\Psi+Z_*^\beta I)^{-2}\Sigma_\beta\Bigr)+\gs_\eps^2)}_{best\ feasible\ asymptotic\ MSE}\ \to\ 0
\end{equation}
in probability, where $Z_*^\beta$ is defined with $\tilde S$ instead of $S,$ and where $\tilde\Psi\ =\ \Sigma_\beta^{1/2}\Psi\Sigma_\beta^{1/2}=E[\tilde S'\tilde S].$
\end{theorem}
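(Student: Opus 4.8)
The plan is to collapse the statement onto the ridge machinery already in hand and then evaluate the resulting limiting risk by Random Matrix Theory, using three ingredients: Bayes optimality (Proposition \ref{prop:optimal prior}), the ridge representation of the Bayes rule (Lemma \ref{lem-equiv}), and the deterministic equivalents of Proposition \ref{prop:exptrace}, Theorem \ref{thm:concentration}, and Proposition \ref{prop:theoldllg}. For the two inequalities, I would first note that by Proposition \ref{prop:optimal prior}, for every fixed $T$ the Bayes policy $\pi_T^*$ minimizes expected squared error, so $E_\beta[(y_{T+1}-\Pi(S,y))^2]\ge E_\beta[(y_{T+1}-\pi_T^*)^2]$ for every measurable $\Pi$. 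Passing to $\liminf_T$ and applying Fatou's lemma (the squared errors are nonnegative) gives $\liminf_T E_\beta[(y_{T+1}-\pi_T^*)^2]\ge E_\beta[\liminf_T (y_{T+1}-\pi_T^*)^2]$, which is the first inequality. For the second, Lemma \ref{lem-equiv} rewrites $\pi_T^*$ as a scalar ridge regression in the whitened signals $\tilde S_t=\Sigma_\beta^{1/2}S_t$, with whitened coefficient $\tilde\beta=\Sigma_\beta^{-1/2}\beta\sim \cN(0,I)$ and effective penalty $\gs_\eps^2$. Writing $y_{T+1}-\pi_T^*=(\tilde\beta-\hat{\tilde\beta}_T)'\tilde S_T+\eps_{T+1}$ and taking expectation over the fresh, independent draw $(\tilde S_T,\eps_{T+1})$ yields the population decomposition $E_T[(y_{T+1}-\pi_T^*)^2]=\widehat\cB+\gs_\eps^2\widehat\cL+\gs_\eps^2$, exactly as in Corollary \ref{ridge_MSE_LLG} but with out-of-sample covariance equal to the true $\tilde\Psi$; dropping the nonnegative bias $\widehat\cB$ and using $\widehat\cL\to\cL_\beta$ delivers $\liminf\ge (1+\cL_\beta)\gs_\eps^2$.

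The identification $\widehat\cL\to\cL_\beta$ is the next step. Here the variance term is $\gs_\eps^2\widehat\cL$ with $\widehat\cL=\tfrac1T\tr(\tilde\Psi\,\hat{\tilde\Psi}(zI+\hat{\tilde\Psi})^{-2})$, the true-covariance LLG of Proposition \ref{prop:excess}. Since $\tilde S_t=\Sigma_\beta^{1/2}\Psi^{1/2}X_t$ has design covariance $\tilde\Psi=\Sigma_\beta^{1/2}\Psi\Sigma_\beta^{1/2}$, Proposition \ref{prop:theoldllg} applies verbatim to the whitened signals and converts this trace into the scale-invariant Herfindahl ratio $\cL_\beta$ of the statement; the factors of $T$ relating $\gs_\eps^2 I+\tilde S\tilde S'$ to $zI+\hat{\tilde\Psi}$ cancel between numerator and denominator. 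This is the same computation as in the excess-volatility application, now run at the Bayes-optimal penalty.

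It remains to sharpen the bias into the exact asymptotics \eqref{287}. The noise-plus-variance part already equals $(1+\cL_\beta)\gs_\eps^2$, so the only outstanding quantity is the limit of $\widehat\cB=z^2\tilde\beta'(zI+\hat{\tilde\Psi})^{-1}\tilde\Psi(zI+\hat{\tilde\Psi})^{-1}\tilde\beta$. I would evaluate this through the anisotropic deterministic equivalent $z(zI+\hat{\tilde\Psi})^{-1}\approx Z_*^\beta(Z_*^\beta I+\tilde\Psi)^{-1}$ of Theorem \ref{thm:concentration}, applied to a product of two resolvents. The correct way to obtain the two-resolvent equivalent is to differentiate the one-resolvent law in the spectral parameter, which is precisely how $\xi'$ enters in Proposition \ref{prop:exptrace}. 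Because $1+\cL_\beta=Z_*'=1+\xi+z\xi'$ (the identity already used in the proof of Proposition \ref{prop:theoldllg}), this differentiation produces exactly the prefactor $(1+\cL_\beta)$ multiplying the limiting bias trace, and concentrating the quadratic form against the prior covariance $\tilde\beta\tilde\beta'$ yields the trace $(Z_*^\beta)^2\tr(\tilde\Psi(\tilde\Psi+Z_*^\beta I)^{-2}\Sigma_\beta)$ recorded in \eqref{287}.

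The main obstacle is this final step: the deterministic equivalent of the product $(zI+\hat{\tilde\Psi})^{-1}\tilde\Psi(zI+\hat{\tilde\Psi})^{-1}$ is \emph{not} obtained by naively substituting the single-resolvent equivalent into both factors — that substitution misses a fluctuation correction, and it is exactly this correction that supplies the factor $(1+\cL_\beta)=Z_*'$. Getting this constant right (rather than some uncontrolled $O(1)$ multiple) requires either carefully differentiating the validated one-resolvent law in $z$ while justifying the interchange of limit and derivative, or invoking a genuine two-resolvent local law; tracking $\Sigma_\beta$ through the whitening and back is a layer of bookkeeping on top. By contrast, the two inequalities follow immediately from Bayes optimality, Fatou's lemma, and nonnegativity of the bias, so essentially all of the analytic difficulty is concentrated in this RMT evaluation of the bias constant.
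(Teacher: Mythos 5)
Your proposal is correct and follows essentially the same route as the paper: Bayes optimality (Proposition \ref{prop:optimal prior}) plus the whitened-ridge representation (Lemma \ref{lem-equiv}) reduce the claim to the LLG machinery of Theorem \ref{llg1}, with $\cL_\beta$ identified through the scale-invariant Herfindahl ratio of Proposition \ref{prop:theoldllg} (your observation that the $T$-normalizations cancel is exactly why $\gs_\eps^2 I+\tilde S\tilde S'$ may replace $zcI+\tilde S\tilde S'/T$ in that ratio). If anything, you are more careful than the paper's own three-line proof, which simply asserts the $(1+\cL_\beta)$-corrected bias asymptotics ``from the proof of Theorem \ref{llg1}'' and then concentrates $\beta'\tilde\Psi(\tilde\Psi+Z_*^\beta I)^{-2}\beta$ against the prior via Lemma \ref{lem-quad}: your two-resolvent deterministic equivalent, obtained by differentiating the one-resolvent law so that the prefactor $Z_*'=1+\xi+z\xi'=1+\cL_\beta$ emerges (rather than naively squaring the single-resolvent equivalent), is precisely the step the paper leaves implicit, and your flagged $\Sigma_\beta$-bookkeeping concern is warranted, since the paper's final trace $\tr\bigl(\tilde\Psi(\tilde\Psi+Z_*^\beta I)^{-2}\Sigma_\beta\bigr)$ depends on whether the quadratic form is taken in $\beta\sim\cN(0,\Sigma_\beta)$ or in the whitened $\tilde\beta=\Sigma_\beta^{-1/2}\beta\sim\cN(0,I)$.
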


\begin{proof}[Proof of Theorem \ref{llg1-feas}] From the proof of Theorem \ref{llg1}, we know that the asymptotic MSE is given by 
\begin{equation}
(1+\cL_\beta(z;c))(Z_*^2\beta'\Sigma_\beta^{1/2}\Psi\Sigma_\beta^{1/2}(\Sigma_\beta^{1/2}\Psi\Sigma_\beta^{1/2}+Z_*^\beta I)^{-2}\beta+\gs_\eps^2)\,. 
\end{equation}
By the concentration of quadratic forms, the first term is converging to 
\begin{equation}
\beta'\tilde\Psi(\tilde\Psi+Z_*^\beta I)^{-2}\beta\approx\ 
\tr\Bigr(\tilde\Psi(\tilde\Psi+Z_*^\beta I)^{-2}\Sigma_\beta\Bigr)
\end{equation}
where we have defined 
\begin{equation}
\tilde\Psi\ =\ \Sigma_\beta^{1/2}\Psi\Sigma_\beta^{1/2}\,. 
\end{equation}

\end{proof}

It would be great to develop techniques for computing the best feasible MSE in \eqref{287}. However, this would require estimating $\Sigma_\beta$ and this is a highly complex task that we leave for future research. 

\section{GARCH Simulation}\label{app:garch}

We draw innovations
\begin{equation} \label{eq: GARCH1}
z_t \sim \mathcal{N}(0,1), \qquad t = 1,\dots,T,    
\end{equation}
and compute the conditional variance according to
\begin{equation}\label{eq: GARCH2}
\sigma_t^2
\;=\;
\omega \,+\, \alpha\, y_{t-1}^2 \,+\, \beta\, \sigma_{t-1}^2,
\qquad t \ge 2,    
\end{equation}
with an initialization such as
\begin{equation}\label{eq: GARCH3}
\sigma_1^2 \;=\; \frac{\omega}{1 - \alpha - \beta}.    
\end{equation}
The GARCH(1,1) observations are then generated as
\begin{equation}\label{eq: GARCH4}
y^{GARCH(1,1)}_t \;=\; \sigma_t\, z_t, 
\qquad t = 1,\dots,T.    
\end{equation}
We use $\omega = 0.5, \alpha = 0.05, \beta = 0.9$.

\section{Recursive Ridge}\label{app:rec-ridge}

We proceed following the approach of \cite{yan2017fundamental, chen2023high, li2025machine}: Given the basic \cite{welch2008comprehensive} signals transformed using the Procedure \ref{proc:construction}, we build pairwise sums and products of these variables and their non-linear transformations. Then, we pre-select the most powerful signals based on their in-sample importance using a modification of the empirical Bayes methodology of \cite{chen2023high}. And then, we run a ridge regression on these pre-selected signals and follow the same procedure as with the random feature ridge regression from the main text.

\section{Plots with $z_{ref}=1$}
\label{plots:z_ref=1}

In the main text, we use \eqref{zref} with $z_{ref}=0.01$ because, to achieve a large $\widehat\cL,$ we need a small $z.$ Here, we report the results with $z_{ref}=1$ in \eqref{zref} to show how $R^2_{OOS}$ improves, while the LLG-correction becomes negligible.

\begin{figure}
\centering
\includegraphics[width=1\linewidth]{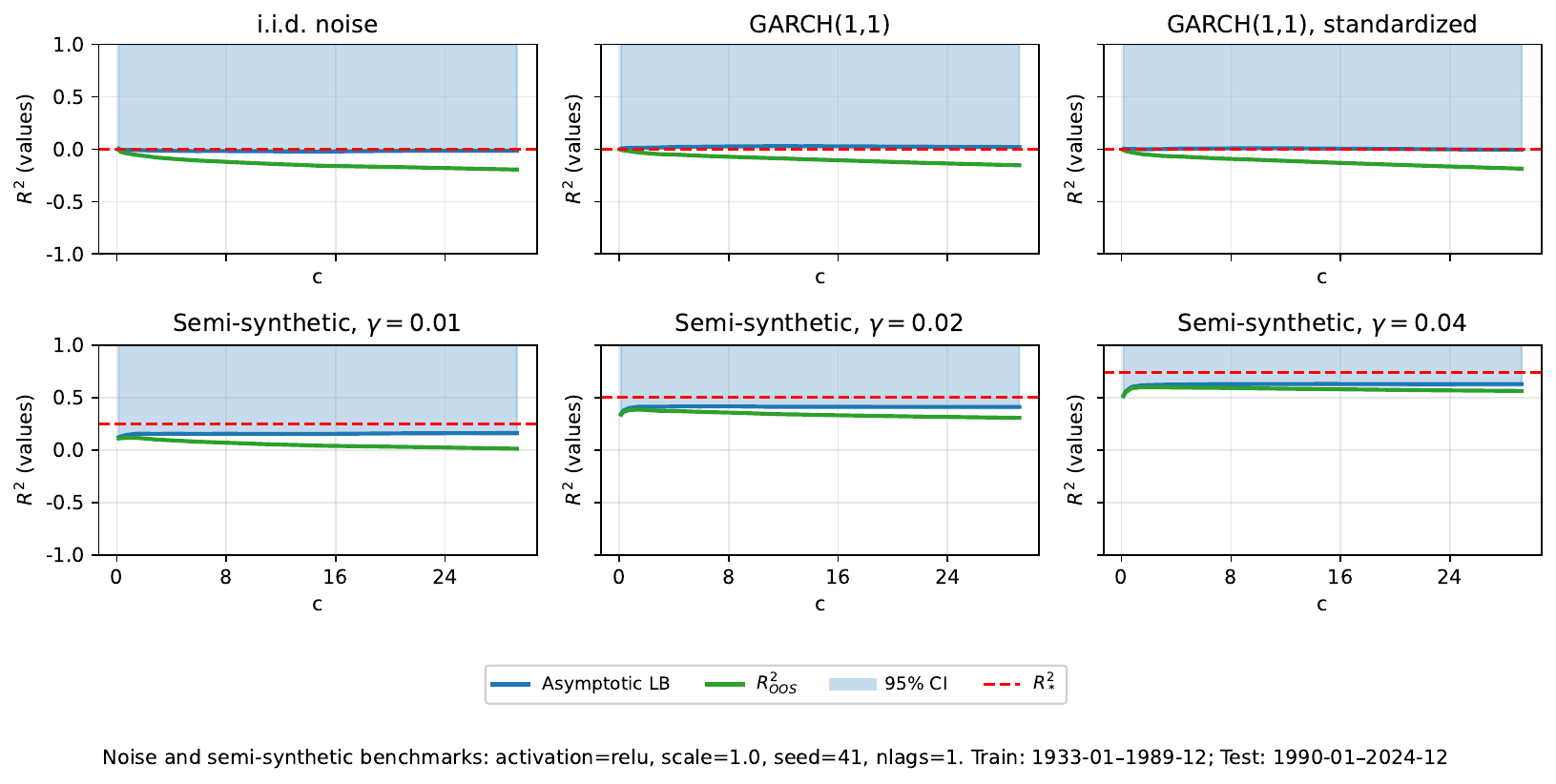}
\caption{Semi-synthetic simulation \eqref{semi-synth}, with activation=$\text{ReLu}$ and $z_{ref}=1$ in \eqref{zref}. In-sample period is 1933-01 to 1989-12; OOS period is 1990-01 to 2024-12. i.i.d. noise has $y_{t+1}=\eps_{t+1}\sim \cN(0,1)$. GARCH(1,1) has $y_{t+1}=\eps_{t+1}$ being a GARCH(1,1) noise defined in Appendix \ref{app:garch}. Asymptotic Lower Bound is given by \eqref{llg3-first}. The shaded region is the one-sided confidence band for $R^2_*.$ The lower bound of the shaded region is \eqref{low-conf}. Horizontal axis is statistical complexity $c=P_1/T,$ where $P_1$ is the number of random features \eqref{def:random-feat}, increasing from $P_1=100$ to $P_1=20000.$ 
$R^2_*$ is computed in \eqref{semi-r2}. Values of $R^2_{OOS}<-1$ are not shown. $\gamma$ values are selected to achieve $R^2_*$ of $0,\ 0.25,\ 0.5,\ 0.75$, respectively.}
\label{fig:semi1_z_ref=1}
\end{figure}

\begin{figure}
\centering
\includegraphics[width=1\linewidth]{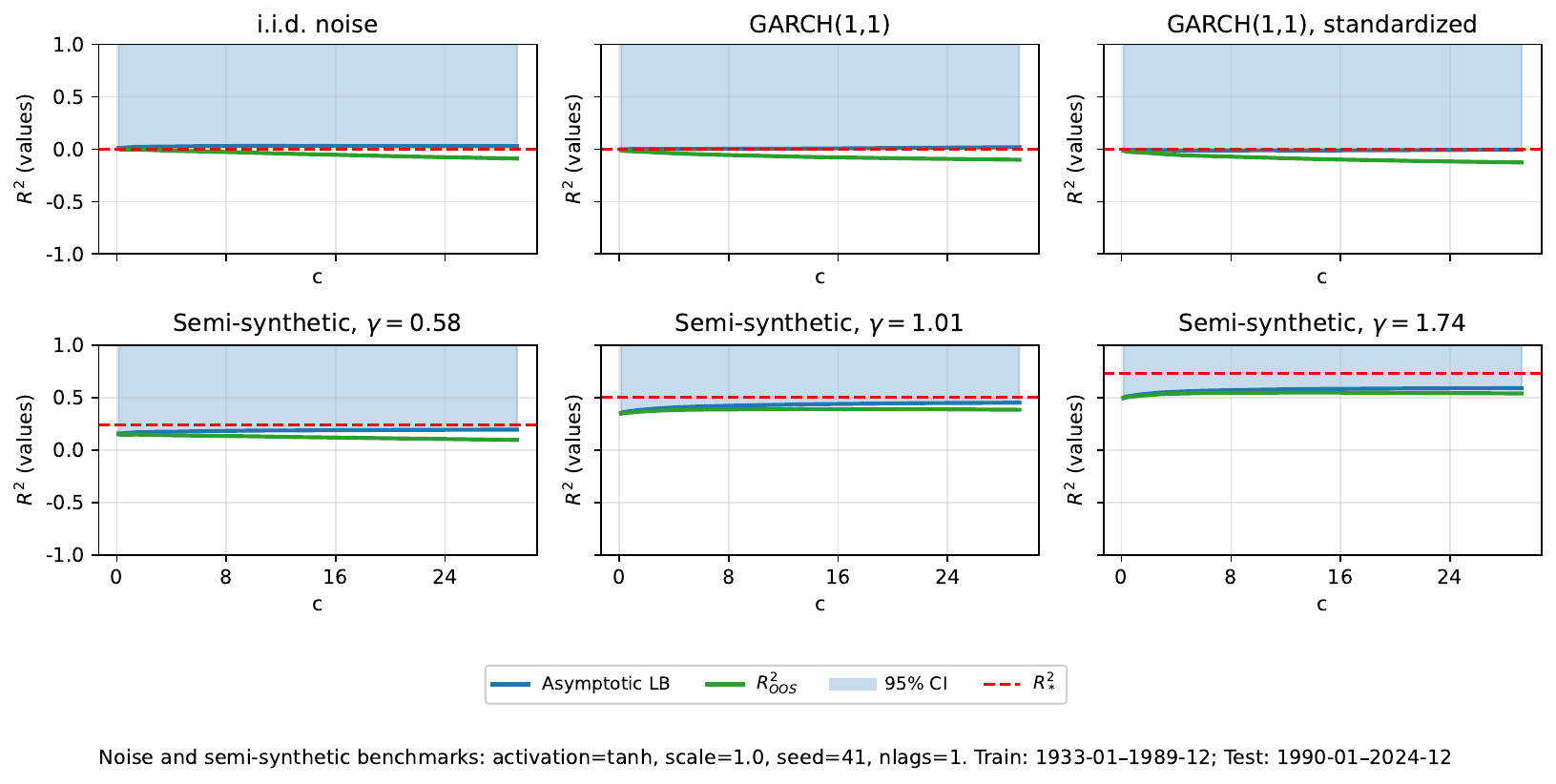}
\caption{Semi-synthetic simulation \eqref{semi-synth}, with activation=$\text{tanh}$ and $z_{ref}=1$ in \eqref{zref}. In-sample period is 1933-01 to 1989-12; OOS period is 1990-01 to 2024-12. i.i.d. noise has $y_{t+1}=\eps_{t+1}\sim \cN(0,1)$. GARCH(1,1) has $y_{t+1}=\eps_{t+1}$ being a GARCH(1,1) noise defined in Appendix \ref{app:garch}. Asymptotic Lower Bound is given by \eqref{llg3-first}. The shaded region is the one-sided confidence band for $R^2_*.$ The lower bound of the shaded region is \eqref{low-conf}. Horizontal axis is statistical complexity $c=P_1/T,$ where $P_1$ is the number of random features \eqref{def:random-feat}, increasing from $P_1=100$ to $P_1=20000.$ 
$R^2_*$ is computed in \eqref{semi-r2}. Values of $R^2_{OOS}<-1$ are not shown. $\gamma$ values are selected to achieve $R^2_*$ of $0,\ 0.25,\ 0.5,\ 0.75$, respectively.}
\label{fig:semi2_z_ref=1}
\end{figure}

\begin{figure}
\centering
\includegraphics[width=1.\linewidth]{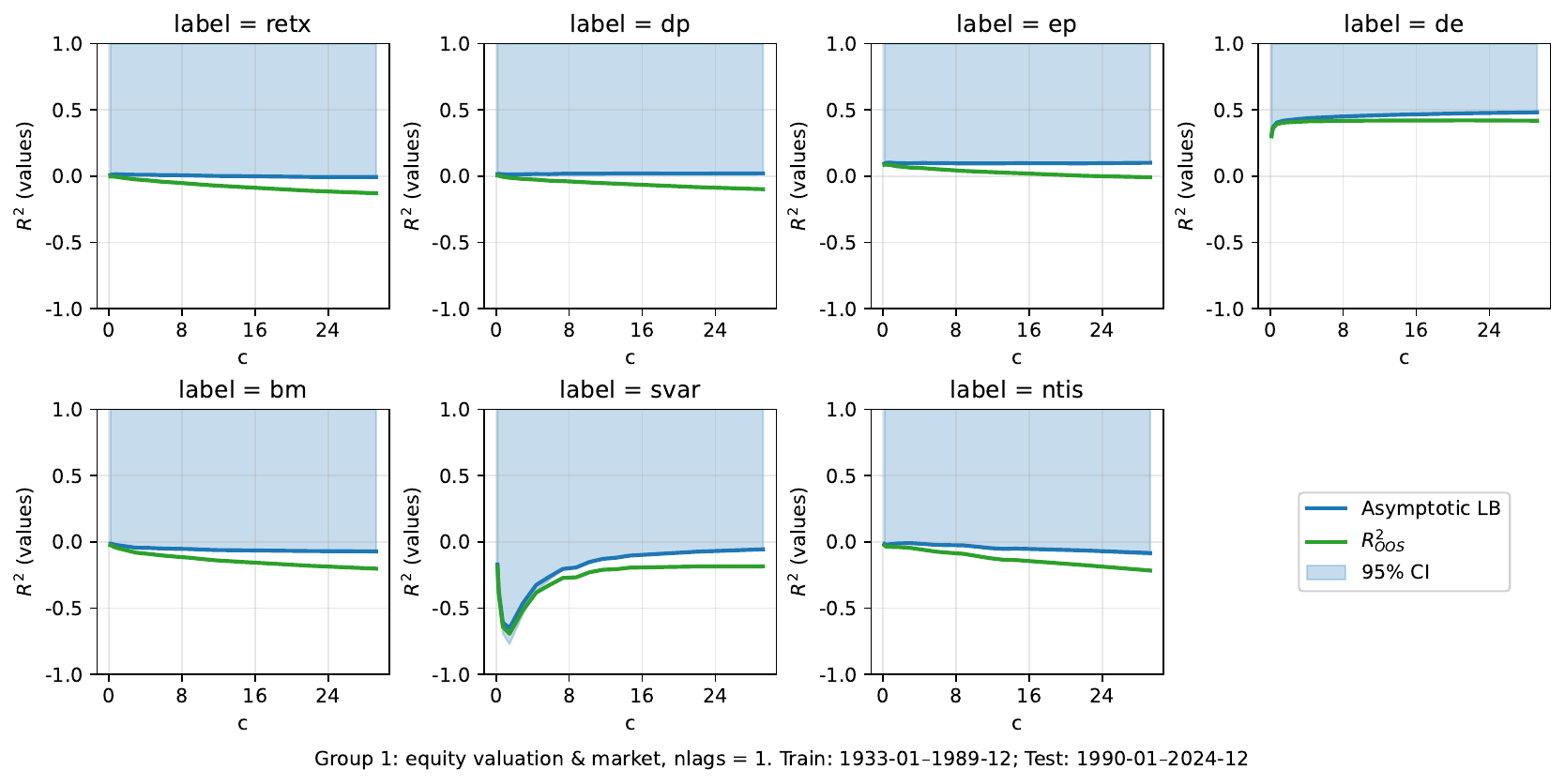}
\caption{Predicting \cite{welch2008comprehensive} variables from {\bf Group one} processed according to Procedure \ref{proc:construction}. Signals are the \cite{welch2008comprehensive} 14 variables and excess returns.  In-sample period is 1933-01 to 1989-12; OOS period is 1990-01 to 2024-12. Asymptotic Lower Bound is given by \eqref{llg3-first}. The shaded region is the one-sided confidence band for $R^2_*.$ The lower bound of the shaded region is \eqref{low-conf}. Horizontal axis is statistical complexity $c=P_1/T,$ where $P_1$ is the number of random features \eqref{def:random-feat} with {\bf activation=}$\text{tanh}$ and $z_{ref}=1$, increasing from $P_1=100$ to $P_1=20000.$ Values of $R^2_{OOS}<-1$ are not shown.}
\label{fig:g1-tanh_z_ref=1}
\end{figure}

\begin{figure}
\centering
\includegraphics[width=1.\linewidth]{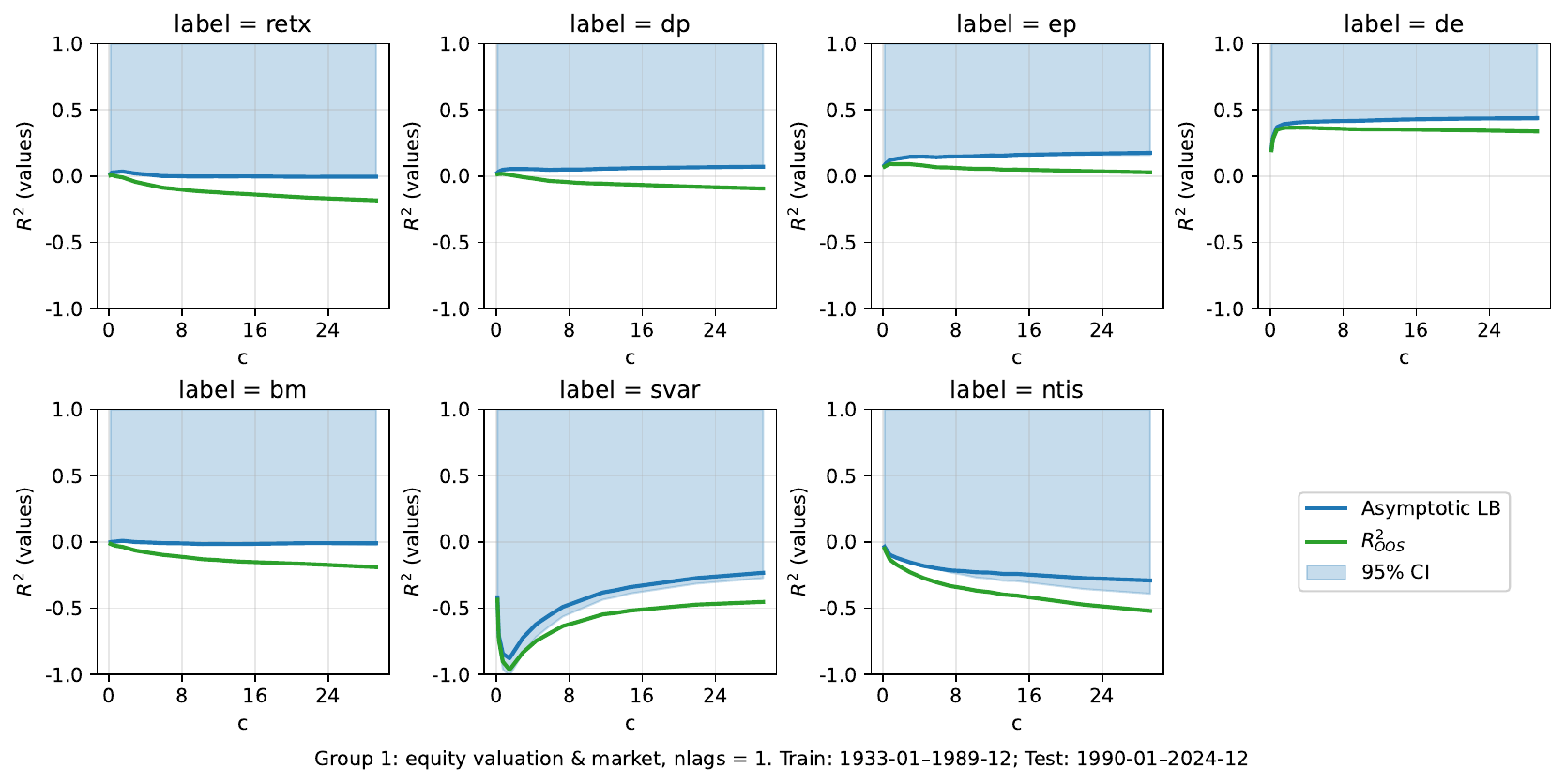}
\caption{Predicting \cite{welch2008comprehensive} variables from {\bf Group one} processed according to Procedure \ref{proc:construction}.  Signals are the \cite{welch2008comprehensive} 14 variables and excess returns.  In-sample period is 1933-01 to 1989-12; OOS period is 1990-01 to 2024-12. Asymptotic Lower Bound is given by \eqref{llg3-first}. The shaded region is the one-sided confidence band for $R^2_*.$ The lower bound of the shaded region is \eqref{low-conf}. Horizontal axis is statistical complexity $c=P_1/T,$ where $P_1$ is the number of random features \eqref{def:random-feat} with {\bf activation=}$\text{ReLu}$ and $z_{ref}=1$, increasing from $P_1=100$ to $P_1=20000.$ Values of $R^2_{OOS}<-1$ are not shown.}
\label{fig:g1-relu_z_ref=1}
\end{figure}

\begin{figure}
\centering
\includegraphics[width=1.\linewidth]{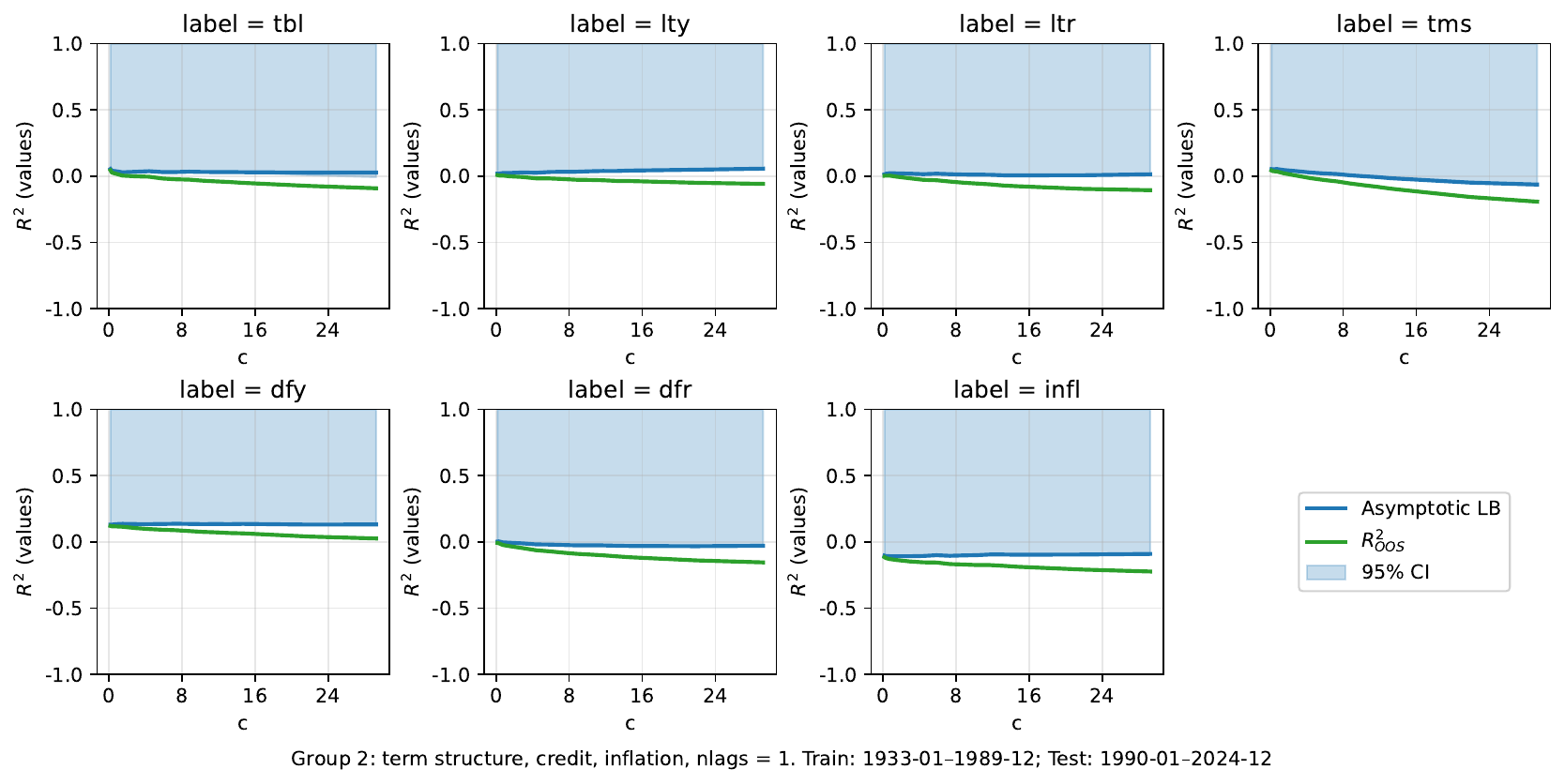}
\caption{Predicting \cite{welch2008comprehensive} variables from {\bf Group two} processed according to Procedure \ref{proc:construction}.  Signals are the \cite{welch2008comprehensive} 14 variables and excess returns.  In-sample period is 1933-01 to 1989-12; OOS period is 1990-01 to 2024-12. Asymptotic Lower Bound is given by \eqref{llg3-first}. The shaded region is the one-sided confidence band for $R^2_*.$ The lower bound of the shaded region is \eqref{low-conf}. Horizontal axis is statistical complexity $c=P_1/T,$ where $P_1$ is the number of random features \eqref{def:random-feat} with {\bf activation=}$\text{tanh}$ and $z_{ref}=1$, increasing from $P_1=100$ to $P_1=20000.$ Values of $R^2_{OOS}<-1$ are not shown.}
\label{fig:g2-tanh_z_ref=1}
\end{figure}

\begin{figure}
\centering
\includegraphics[width=1.\linewidth]{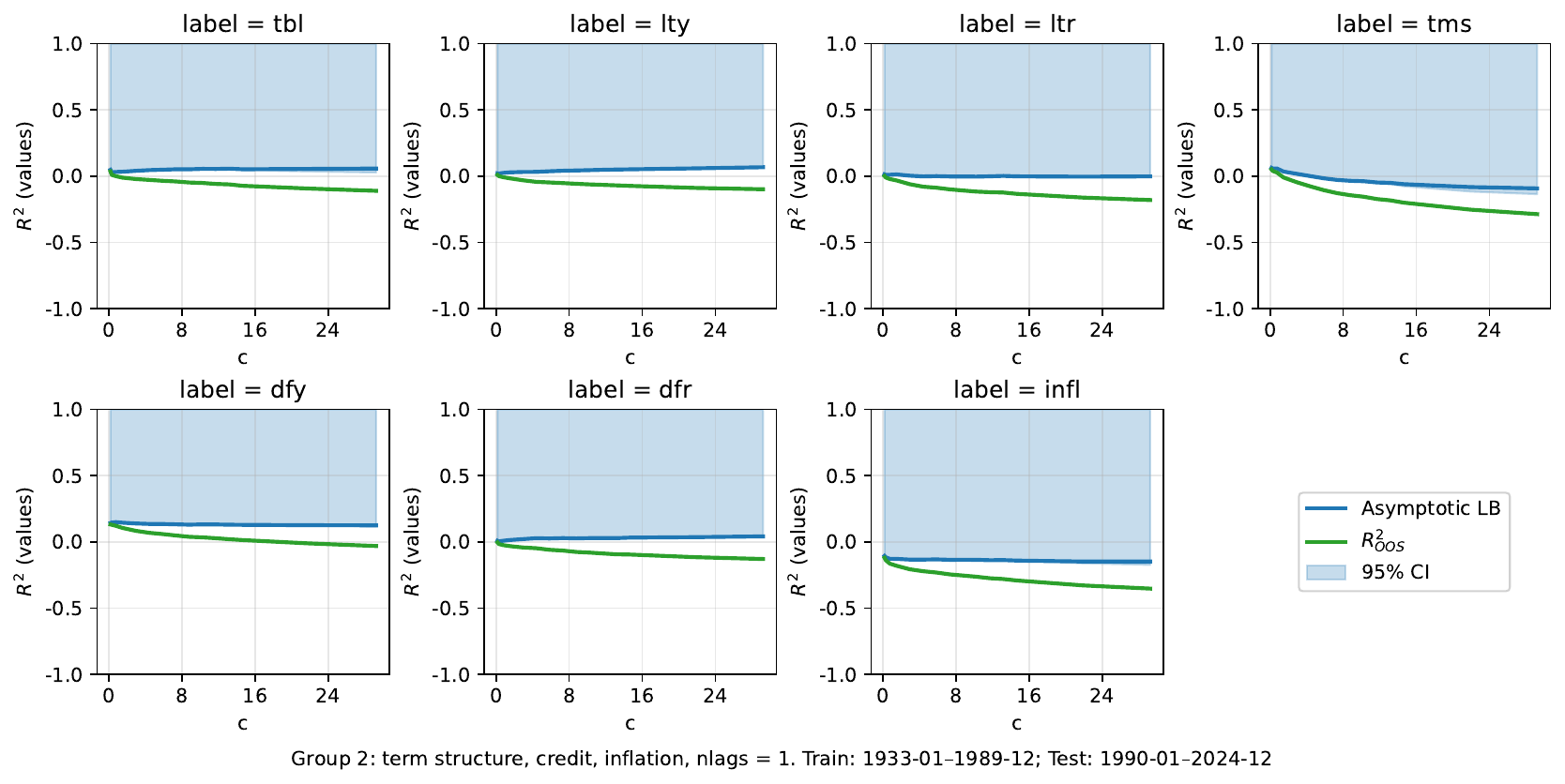}
\caption{Predicting \cite{welch2008comprehensive} variables from {\bf Group two} processed according to Procedure \ref{proc:construction}.  Signals are the \cite{welch2008comprehensive} 14 variables and excess returns.  In-sample period is 1933-01 to 1989-12; OOS period is 1990-01 to 2024-12. Asymptotic Lower Bound is given by \eqref{llg3-first}. The shaded region is the one-sided confidence band for $R^2_*.$ The lower bound of the shaded region is \eqref{low-conf}. Horizontal axis is statistical complexity $c=P_1/T,$ where $P_1$ is the number of random features \eqref{def:random-feat} with {\bf activation=}$\text{ReLu}$ and $z_{ref}=1$, increasing from $P_1=100$ to $P_1=20000.$ Values of $R^2_{OOS}<-1$ are not shown.}
\label{fig:g2-relu_z_ref=1}
\end{figure}

\end{document}